\documentclass[11pt]{article}

\usepackage[margin=1in]{geometry}
\usepackage[T1]{fontenc}
\usepackage{lmodern}
\usepackage{amsmath,amssymb,amsthm,mathtools}
\usepackage{enumitem,booktabs,microtype}
\usepackage{xcolor}
\usepackage{algorithm}
\usepackage[noend]{algpseudocode}
\usepackage[round]{natbib}
\usepackage{hyperref}
\hypersetup{colorlinks=true,citecolor=blue!70!black,linkcolor=black,urlcolor=blue!70!black}
\usepackage{aliascnt}
\usepackage[capitalise,nameinlink]{cleveref}

\newtheorem{theorem}{Theorem}[section]
\newaliascnt{lemma}{theorem}
\newtheorem{lemma}[lemma]{Lemma}
\aliascntresetthe{lemma}
\newaliascnt{proposition}{theorem}
\newtheorem{proposition}[proposition]{Proposition}
\aliascntresetthe{proposition}
\newaliascnt{corollary}{theorem}
\newtheorem{corollary}[corollary]{Corollary}
\aliascntresetthe{corollary}
\theoremstyle{definition}
\newaliascnt{definition}{theorem}
\newtheorem{definition}[definition]{Definition}
\aliascntresetthe{definition}
\newaliascnt{assumption}{theorem}

\aliascntresetthe{assumption}
\theoremstyle{remark}
\newaliascnt{remark}{theorem}
\newtheorem{remark}[remark]{Remark}
\aliascntresetthe{remark}

\crefname{theorem}{Theorem}{Theorems}
\crefname{lemma}{Lemma}{Lemmas}
\crefname{proposition}{Proposition}{Propositions}
\crefname{corollary}{Corollary}{Corollaries}
\crefname{definition}{Definition}{Definitions}
\crefname{assumption}{Assumption}{Assumptions}
\crefname{remark}{Remark}{Remarks}
\crefname{algorithm}{Algorithm}{Algorithms}
\crefname{equation}{}{}
\Crefname{equation}{Equation}{Equations}
\crefname{section}{Section}{Sections}
\crefname{subsection}{Section}{Sections}
\crefname{subsubsection}{Section}{Sections}
\crefname{appendix}{Appendix}{Appendices}

\newenvironment{restatement}[1]%
  {\par\medskip\noindent\textbf{#1, restated.}\ \itshape}%
  {\par\medskip}

\newcommand{\cX}{\mathcal{X}}
\newcommand{\cH}{\mathcal{H}}
\newcommand{\cD}{\mathcal{D}}
\newcommand{\cS}{\mathcal{S}}
\newcommand{\cP}{\mathcal{P}}
\newcommand{\cF}{\mathcal{F}}
\newcommand{\cG}{\mathcal{G}}
\newcommand{\cV}{\mathcal{V}}
\newcommand{\cI}{\mathcal{I}}
\newcommand{\cA}{\mathcal{A}}
\newcommand{\cB}{\mathcal{B}}
\newcommand{\E}{\mathbb{E}}
\newcommand{\R}{\mathbb{R}}
\newcommand{\MC}{\mathrm{MC}}
\newcommand{\SMC}{\mathrm{SMC}}
\newcommand{\VC}{\mathrm{VC}}
\newcommand{\wtO}{\widetilde{O}}
\newcommand{\Reg}{\mathrm{Reg}}
\newcommand{\Root}{\mathrm{Root}}
\newcommand{\clip}{\mathrm{clip}}
\newcommand{\Hpm}{\cH^{\pm}}
\newcommand{\bh}{\bar h}
\newcommand{\fs}{f^{\star}}
\newcommand{\Qbar}{\overline{Q}}
\newcommand{\vlo}{v^{\mathrm{lo}}}
\newcommand{\vhi}{v^{\mathrm{hi}}}
\newcommand{\ylo}{y^{\mathrm{lo}}}
\newcommand{\yhi}{y^{\mathrm{hi}}}
\newcommand{\ncf}{n_{\mathrm{conf}}}
\newcommand{\npt}{n_{\mathrm{part}}}
\newcommand{\ncv}{n_{\mathrm{cov}}}
\newcommand{\Shad}{\cS_{\mathrm{Had}}}
\newcommand{\Scube}{\cS_{\mathrm{cube}}}
\newcommand{\Ber}{\mathrm{Ber}}
\newcommand{\KL}{\mathrm{KL}}
\newcommand{\gs}{\mathsf{G}}
\newcommand{\cN}{\mathcal{N}}
\newcommand{\cL}{\mathcal{L}}
\newcommand{\DIS}{\mathrm{DIS}}
\newcommand{\dis}{\mathrm{dis}}
\newcommand{\Rcls}{R^{\mathrm{cls}}}
\DeclareMathOperator{\sgn}{sgn}
\DeclareMathOperator*{\argmax}{arg\,max}

\title{\bf The Sample Complexity of Multicalibration with an ERM Oracle}
\author{Princewill Okoroafor\\Harvard University}
\date{}

\begin{document}
\maketitle

\begin{abstract}
A predictor is multicalibrated with respect to a class of subpopulations
(formally, of functions on the inputs, called auditors) if, for every
subpopulation in the class and every value the predictor outputs, the
average outcome among the members of that subpopulation who receive that
value is close to the value itself.  For finite classes, about
$\varepsilon^{-3}$ samples are necessary and sufficient to learn a predictor
with multicalibration error $\varepsilon$, but the algorithms that attain
this rate enumerate the class, which is impossible for the large classes of
interest.  We ask how many samples suffice when the learner accesses the
class only through an empirical risk minimization (ERM) oracle.  We give an
algorithm that uses $\wtO(d/\varepsilon^{3})$ samples and polynomially many
ERM calls, where $d$ is the VC dimension of the class.  Its dependence on $\varepsilon$ is optimal for all learners.  When $d$ is at least about
$1/\varepsilon$, every learner that can only access the class through
polynomially many ERM calls needs $\widetilde\Omega(d/\varepsilon^{3})$
samples.  Learners with unlimited computation need only
$\wtO(\varepsilon^{-3}+d\varepsilon^{-2})$ samples, so the extra factor
$1/\varepsilon$ on the dimension term is the price of oracle efficiency.
For swap multicalibration, which allows a different function from the class
for each predicted value, $\widetilde\Theta(d/\varepsilon^{3})$ samples are
necessary and sufficient.  For estimating the error of a given predictor,
the ordinary notion is the harder one: the swap error can be estimated with an ERM
oracle, whereas the ordinary error is NP-hard to approximate even for
classes with linear-time ERM.  Consequently, there are classes for which it is
computationally easier to learn a multicalibrated predictor than to compute
the multicalibration error of a given predictor.
\end{abstract}

\section{Introduction}
\label{sec:intro}

Consider a risk score that a hospital uses to decide which patients receive
a follow-up visit.  Calibration asks that among patients with score $0.3$,
about $30\%$ have the adverse outcome.  This is a guarantee about the
population as a whole; it says nothing about older patients, patients from a
particular clinic, or patients at the intersection of several such
attributes, and the score may be systematically too high on one of these
groups and too low on another.  Recalibrating separately on each group of
interest fails as soon as the groups overlap or are too numerous to list,
which is the typical situation when the groups are described by, say, all
small decision trees over the patient features.

Multicalibration, introduced by \citet{hebert2018multicalibration}, asks for
calibration simultaneously on every group in a rich class.  Following the
literature, we describe each group by an \emph{auditor}
$h:\cX\to\{-1,+1\}$, for instance the function that is $+1$ on the group
and $-1$ off it, and write $\cH$ for the class of auditors.  We ask for a
predictor $p:\cX\to[0,1]$ taking finitely many values such that
\begin{equation}
  \label{eq:intro-mc}
  \sup_{h\in\cH}\ \sum_{v}
  \Bigl|\,\E\bigl[h(X)\,\mathbf 1\{p(X)=v\}\,(v-Y)\bigr]\Bigr|
  \ \le\ \varepsilon .
\end{equation}
Here $(X,Y)$ is drawn from an unknown distribution over inputs and outcomes
in $[0,1]$, the sum ranges over the values of $p$, and each term measures,
with each input weighted by $h$, how far the outcome among inputs that
receive prediction $v$ is from $v$.  We call the left side of~\eqref{eq:intro-mc} the
\emph{multicalibration error} of $p$.  Multicalibration is a strong
guarantee: it implies that the predictor can be post-processed into
near-optimal decisions for a broad family of loss functions
\citep{gopalan2022omnipredictors}, and it is equivalent to a form of
indistinguishability from the true outcome distribution
\citep{dwork2021outcome}.

Three properties matter for an algorithm that learns such a predictor.  The
first is \emph{data}: the number of labeled samples should be as small as
possible, since labels are often the scarce resource.  The second is
\emph{computation}: the class $\cH$ is usually exponentially large or
infinite, so the algorithm should access it only through a learning
primitive.  Following the standard model, we call an algorithm
\emph{oracle-efficient} if it accesses $\cH$ only through an oracle for
weighted empirical risk minimization (ERM) over $\cH$, and makes a number of
oracle calls and other operations polynomial in $1/\varepsilon$ and the VC
dimension $d$ of $\cH$.  The output should also be efficient to use: its
prediction at a given input should be computable in polynomial time from
polynomially many auditors returned by the oracle, together with a
polynomial amount of stored data.  The third is \emph{determinism}: the output should
be a fixed function of the input, so that the same patient always receives
the same score and the guarantee~\eqref{eq:intro-mc} applies to the score
that is actually reported rather than to a random draw.  (The learning
algorithm itself may use randomness.)

\citet{collina2026sample} recently showed that, for finite classes,
$\wtO(\varepsilon^{-3}\log|\cH|)$ samples suffice and that
$\widetilde\Omega(\varepsilon^{-3})$ samples are necessary, but their
algorithm outputs a randomized predictor and enumerates the class.
\citet{noarov2026optimal} then gave a deterministic predictor with
$\wtO(\varepsilon^{-3}+\varepsilon^{-2}\log|\cH|)$ samples, and handled
infinite classes through a supplied finite cover; their algorithm also runs
in time polynomial in the size of the class or cover.  Earlier
oracle-efficient algorithms, based on boosting, use $\wtO(\varepsilon^{-7})$
samples or more (\cref{sec:related}).  This leaves the following questions
open.
\begin{enumerate}[label=(\arabic*),leftmargin=2em]
\item \emph{Can an oracle-efficient algorithm learn a multicalibrated
  predictor with the optimal $\varepsilon^{-3}$ dependence on the accuracy,
  and with a deterministic output?}
\item \emph{Does access to the class only through an ERM oracle cost
  samples, compared with enumerating the class?}
\item \emph{How do the answers change for the stronger notion of swap
  multicalibration, and for estimating the multicalibration error of a
  given predictor?}
\end{enumerate}

We address all three.  For the first, we give an oracle-efficient algorithm
whose sample complexity has the optimal $\varepsilon^{-3}$ dependence on the
accuracy, and which can output a deterministic predictor under a mild
assumption on the inputs.  For the second,
our algorithm uses $\wtO(d\varepsilon^{-3})$ samples, more than the
$\wtO(\varepsilon^{-3}+d\varepsilon^{-2})$ achievable by enumerating a cover
\citep{noarov2026optimal}; we show that, when $d$ is between about
$1/\varepsilon$ and a polynomial in $1/\varepsilon$, the bound
$\wtO(d\varepsilon^{-3})$ is optimal up to logarithmic factors for learners that access the class only
through polynomially many oracle calls.  For the
third, we determine the sample complexity of swap multicalibration, and show
that estimating the multicalibration error of a given predictor can be
computationally harder than learning a multicalibrated predictor.  \Cref{sec:related}
discusses related work in detail.

\subsection{Main results}
\label{sec:main-results}

Let $\cH\subseteq\{-1,+1\}^{\cX}$ have VC dimension $d\ge1$, and let $\cD$ be
a distribution over $\cX\times[0,1]$.  For an integer $K\ge1$, our
predictors take values in the grid $\{0,1/K,\ldots,1\}$.  A \emph{randomized
predictor} $Q$ assigns to each input $x$ a distribution $Q_x$ over the grid;
it is used by drawing a fresh value from $Q_x$ at each query, and its
multicalibration error is defined as in~\eqref{eq:intro-mc} with
$\mathbf 1\{p(X)=v\}$ replaced by the probability $Q_X(v)$
(\cref{def:mc}).  A \emph{weighted ERM oracle} for $\cH$ receives points
$x_1,\ldots,x_n$ with real weights $u_1,\ldots,u_n$ and returns an auditor
$h\in\cH$ maximizing $\sum_i u_ih(x_i)$, breaking ties in an arbitrary way
(\cref{def:erm}).  A \emph{black-box ERM learner} accesses
$\cH$ only through such an oracle, with an unknown rule for breaking ties,
but otherwise has unlimited computation (\cref{def:black-box});
\cref{alg:main} is such a learner.  Our learning algorithms also work when
the oracle is only $\xi$-approximate for some $\xi\le1$: on the weights
that these algorithms submit, it may return any auditor whose weighted sum
is within $\xi$ of the maximum (\cref{def:erm,thm:hybrid}).

In our theorems, samples are independent draws from an arbitrary unknown
distribution over $\cX\times[0,1]$, and a learner must succeed for every
such distribution; unless stated otherwise, it succeeds with probability at
least $2/3$.  The notations $\wtO$, $\widetilde\Omega$, and $\widetilde\Theta$ hide factors polylogarithmic in $d$,
$1/\varepsilon$, $1/\delta$, and, in \cref{thm:audit}, $m$.

\begin{theorem}[Oracle-efficient multicalibration]
\label{thm:main}
\mbox{}
\begin{enumerate}[label=(\alph*),leftmargin=2em]
\item \emph{Upper bound.} There is an algorithm (\cref{alg:main}) that,
  given $\varepsilon,\delta\in(0,1)$ and a weighted ERM oracle for $\cH$,
  uses $\wtO((d+\log(1/\delta))/\varepsilon^{3})$ samples and
  $\wtO((d+\log(1/\delta))/\varepsilon^{6})$ oracle calls, and with
  probability at least $1-\delta$ outputs a randomized predictor with
  multicalibration error at most $\varepsilon$.  If $\cX$ carries a
  computable total order (\cref{rem:order}), the output can instead be a
  deterministic predictor, with the same guarantees.
\item \emph{Lower bound.} When $\widetilde\Omega(1/\varepsilon)\le
  d\le\mathrm{poly}(1/\varepsilon)$, let $A$ be a black-box ERM learner that
  makes $\mathrm{poly}(d/\varepsilon)$ oracle calls and, for every class of
  VC dimension at most $d$, outputs a randomized predictor with
  multicalibration error at most $\varepsilon$.  Then $A$ uses
  $\widetilde\Omega(d/\varepsilon^{3})$ samples.
\end{enumerate}
In part~(b), the constants hidden by $\widetilde\Omega$ depend on the degrees
of the polynomials; \cref{cor:erm-lb} gives the explicit form.
\end{theorem}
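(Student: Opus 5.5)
For part (a) I plan to analyze \cref{alg:main} as an online-to-batch, no-regret reduction in which the learner and the auditor play against each other. The algorithm keeps a grid predictor with $K=\Theta(1/\varepsilon)$ levels. At each round it draws a fresh batch of samples and, for each grid value $v$, forms weights proportional to $\mathbf 1\{p(x_i)=v\}(v-y_i)$ together with the negated weights, so that the signed auditor class $\Hpm$ is covered. The ERM oracle is then called to find an auditor $h_{v}$ with large correlation on each level set, and the predictor is patched by moving mass on $\{h_v=\pm1\}$ within level $v$ by $\pm1/K$.

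The potential is squared loss, $\E[(p(X)-Y)^2]$. Each accepted patch of correlation $\gamma$ on a level set of mass $\mu_v$ lowers it by $\Omega(\gamma/K)$. Aggregated over levels, the bound becomes $\Omega(\varepsilon^2)$ once the multicalibration error exceeds $\varepsilon$, which gives $O(1/\varepsilon^2)$ rounds. The sample count is where the $d/\varepsilon^3$ comes from. Sampling level $v$ accurately enough costs $d/(\varepsilon^2\mu_v)$ samples, since per-level uniform convergence uses VC dimension $d$ and only weighted level masses matter. Summing this $\mu_v$-weighted over the $K$ levels and reusing samples across rounds gives $\wtO(d K/\varepsilon^2)=\wtO(d/\varepsilon^3)$. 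The reuse is justified by a stability or adaptive data analysis argument, or more simply by noting that the updates form a tree of depth $O(1/\varepsilon^2)$ whose composed predictors lie in a class of VC dimension $\wtO(d/\varepsilon^2)$. I expect the simpler route overshoots, so I would instead use a fresh-sample-per-epoch schedule with a doubling trick on level masses.

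The oracle-call count is $\wtO(d/\varepsilon^6)$, because each of the $1/\varepsilon^2$ rounds scans $K$ levels with $d/\varepsilon^3$ sample points. Randomization comes in because the patched predictor is defined only on sampled levels, so the algorithm mixes over the epochs. To obtain a deterministic predictor under \cref{rem:order}, I will round each $Q_x$ using the total order on $\cX$: this discretizes the mixture into a threshold pattern along the order, preserving each level's conditional mean up to $\varepsilon$. Tolerance to a $\xi$-approximate oracle only adds $\xi/n$ slack to each round's progress.

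For part (b), I will embed a hard instance. Split $\cX$ into $K\approx1/\varepsilon$ blocks, and inside each block plant a hidden class of VC dimension $d/K$ whose ERM answers reveal nothing unless the queried weights correlate with an unknown planted sign vector. The target labels are $\mathrm{Ber}(v\pm\varepsilon)$ biases on the block assigned to level $v$. A black-box learner with polynomially many calls gains information only through samples, since ties are broken adversarially, so the oracle effectively acts as an indicator that the learner already found the planted auditor. Multicalibration at level $v$ forces the learner to identify the $d/K$ planted bits on a block of mass $\varepsilon$ with bias $\varepsilon$; a Fano/KL bound then requires $(d/K)/(\varepsilon\cdot\varepsilon^2)=d/\varepsilon^2$ samples per block. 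The $K$ blocks must be learned in parallel rather than shared, which gives $d/\varepsilon^3$ in total.

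The main obstacle is formalizing that the oracle leaks no information beyond the samples. I would first try a hybrid argument, replacing the oracle with a simulator that answers from the samples alone, and show that polynomially many queries distinguish the two with negligible probability when $d\ge 1/\varepsilon$.
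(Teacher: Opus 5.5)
\textbf{Part (a).} Your boosting route does not reach $\wtO(d\varepsilon^{-3})$, even if the squared-loss potential gives $O(\varepsilon^{-2})$ rounds. With $n$ samples, the empirical bias at level $v$ deviates by about $\sqrt{d\mu_v/n}$ uniformly over $\cH$. Summed over $K$ levels this is up to $\sqrt{dK/n}$. So every round, not only the last, needs $n\gtrsim dK/\varepsilon^2\approx d\varepsilon^{-3}$ to see the aggregate bias $\varepsilon$ that your $\Omega(\varepsilon^2)$ progress step relies on.

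Your three ways of reusing samples do not remove the factor from the number of rounds. Fresh samples per epoch cost $d\varepsilon^{-5}$. Uniform convergence over composed predictors, whose complexity grows like $d$ times the number of rounds, costs about as much. Generic adaptive-data-analysis arguments typically save only a square root of the number of rounds. The doubling trick on level masses handles nonuniform $\mu_v$, not adaptivity. The earlier boosting-based oracle-efficient algorithms indeed need $\varepsilon^{-7}$ or more (\cref{sec:related}).

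The paper gets $\varepsilon^{-3}$ from an online-to-batch conversion that uses each example once. \cref{alg:main} forecasts with the root rule (\cref{lem:root}) on coefficients aggregated by exponential weights over fewer than $4m$ Hadamard tests (\cref{lem:hadamard}, distortion $\sqrt m$). Each test has its own Gaussian follow-the-perturbed-leader learner over $\cH$ (\cref{thm:hybrid}). Residuals are measured at the forecast mean, which keeps the grid term $1/(2K)$ outside the distortion (\cref{lem:adjacent}). The online error is therefore $\wtO(T/K+\sqrt{TKd})$, and \cref{lem:martingale} turns it into a bound on $\Qbar$.

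Your oracle count is also off: an ERM call counts once however many points it receives. The $\varepsilon^{-6}$ is $T=\wtO(d\varepsilon^{-3})$ rounds times $L=\wtO(\varepsilon^{-3})$ draws per round, used to estimate the learners' mixed predictions. In the paper, randomness comes from the root rule's two-point forecasts and the average over rounds. Your rounding along the order fails at atoms (\cref{prop:two-atoms}), because a cell cannot split an atom. Matching each level's conditional mean also does not control biases uniformly over $h\in\cH$. The paper therefore restricts training forecasts to confidence intervals at repeated inputs (\cref{prop:batch-hints}) and bounds one-seed-per-cell rounding by chaining (\cref{lem:cell-tests,thm:vc-rounding}).

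\textbf{Part (b).} Your count loses exactly the factor you need. The expression $(d/K)/(\varepsilon\cdot\varepsilon^2)$ is already a total sample count: dividing by the block mass $\varepsilon$ converts the $(d/K)\varepsilon^{-2}$ observations a block needs into total draws. All blocks are fed by the same draws, so the requirement is $d\varepsilon^{-2}$, not $K$ times that. Your instance hides only about $d$ bits, which a learner can estimate from about $d\varepsilon^{-2}$ samples whatever the oracle does, after which it knows the whole instance. No leakage argument can push such an instance to $d\varepsilon^{-3}$.

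The missing idea is to hide about $d/\varepsilon$ bits inside a class of VC dimension about $d$. The paper uses $|\cX|=JD\approx d/\varepsilon$ inputs, each with its own hidden sign $\tau_x$, but only $16$ planted auditors $h^{(\ell)}(x)=\tau_x\theta^{(\ell)}_{j(x)}$. Each planted auditor spans all $J\approx1/\varepsilon$ blocks with random block signs. A single such auditor sums bias over all values and detects error $\Omega(\beta)$ by Paley--Zygmund (\cref{lem:plant-error}). By contrast, an auditor varying on one block collects only block mass times gap, about $\varepsilon^2$. Anchors force predictions near the block centers (\cref{lem:anchor}).

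The no-leakage property you simply posit is realized by a random code of $e^{\Theta(d)}$ sign vectors. For every query independent of the hidden signs, the code contains a word at least as good as every planted auditor (\cref{lem:code,lem:exposure}). The code also sets the VC dimension to $\Theta(D\log|\cX|)$. The coupling with a baseline oracle (\cref{lem:coupling}) is the rigorous form of your hybrid argument, and it leaves $|\cX|/\beta^2\approx d\varepsilon^{-3}$ samples up to logarithms.
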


The best exponent of $1/\varepsilon$ achieved by earlier oracle-efficient
algorithms that we are aware of is $7$ (\cref{sec:related}); part~(a)
improves it to $3$, which cannot be improved \citep{collina2026sample}.  The
dimension, however, enters
multiplicatively, whereas enumeration achieves
$\wtO(\varepsilon^{-3}+d\varepsilon^{-2})$ \citep{noarov2026optimal}.
Part~(b) shows that this is the price of oracle efficiency: for $d$ between
about $1/\varepsilon$ and a polynomial in $1/\varepsilon$, part~(a) is
optimal up to polylogarithmic factors among learners that make polynomially
many oracle calls.  For $d=\varepsilon^{-2}$, for example, such learners
need $\widetilde\Theta(\varepsilon^{-5})$ samples, whereas
$\wtO(\varepsilon^{-4})$ suffice with enumeration.  The hard classes in part~(b) depend on hidden
parameters, so the lower bound concerns access to the class through the
oracle and does not assert computational hardness for any explicitly
represented class (\cref{rem:erm-lb-scope}).

The output of part~(a) is efficient to evaluate.  The randomized predictor
is the uniform average of $\wtO(d\varepsilon^{-3})$ simpler predictors, one
per sample, each computed from $\wtO(\varepsilon^{-3})$ auditors returned by
the oracle.  A prediction at an input is drawn by evaluating one of them,
chosen uniformly at random, and the full distribution of predictions at an
input takes $\wtO(d\varepsilon^{-6})$ auditor evaluations.  The
deterministic predictor is also computable in polynomial time, but besides
auditors returned by the oracle it uses the order on $\cX$ and $\wtO(\varepsilon^{-3}+d\varepsilon^{-2})$ stored sample
inputs, so it is not a function of members of $\cH$ alone
(\cref{rem:representation}).  

Our second result concerns \emph{swap multicalibration}
\citep{gopalan2023swap}, which lets the auditor depend on the predicted
value: it requires
$\sum_v\sup_{h\in\cH}|\E[h(X)\mathbf 1\{p(X)=v\}(v-Y)]|\le\varepsilon$.  It
is at least as strong as~\eqref{eq:intro-mc}.

\begin{theorem}[Sample complexity of swap multicalibration]
\label{thm:swap}
Let $\varepsilon,\delta\in(0,1)$.
\begin{enumerate}[label=(\alph*),leftmargin=2em]
\item \emph{Sample complexity.} There is an algorithm
  (\cref{alg:swap-cover}), running in time exponential in $d$, that uses
  $\wtO((d+\log(1/\delta))/\varepsilon^{3})$ samples and with probability at
  least $1-\delta$ outputs a randomized predictor with swap
  multicalibration error at most $\varepsilon$.  Conversely, there is an absolute
  constant $c>0$ such that, if $\varepsilon\le10^{-6}$, there exist a class
  of VC dimension at most $d$ with linear-time weighted ERM and a family of
  distributions on which
  every learner that achieves swap multicalibration error at most
  $\varepsilon$ uses at least $c\,d/\varepsilon^{3}$ samples.
\item \emph{Oracle-efficient upper bound.} There is an algorithm
  (\cref{alg:swap}) that, given $\varepsilon$, $\delta$, and a weighted ERM
  oracle for $\cH$, uses $\wtO((d+\log(1/\delta))/\varepsilon^{4})$ samples
  and $\wtO((d+\log(1/\delta))/\varepsilon^{7})$ oracle calls, and with
  probability at least $1-\delta$ outputs a randomized predictor with swap
  multicalibration error at most $\varepsilon$.
\end{enumerate}
\end{theorem}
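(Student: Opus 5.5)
The proof has three parts. For the upper bound in~(a) we reduce swap multicalibration to a finite-class problem through a cover. For the lower bound in~(a) we run a packing argument. For~(b) we rerun the main learner with the auditor allowed to depend on the value.

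\emph{Upper bound in (a).} We draw an unlabeled sample of size $\wtO(d/\varepsilon^{2})$ and form the set of distinct labelings of $\cH$ on it. By Sauer's lemma this set has at most $N=(en/d)^{d}$ elements, so $\log N=\wtO(d)$. Standard uniform convergence shows that these labelings form an $\varepsilon/(4(K+1))$-cover in $L_1(\cD_X)$. The cover may need to be finer per value; since each value's term is weighted by $\Pr[p(X)=v]$, we can take an $\varepsilon$-cover in a weighted sense. We then run the finite-class swap multicalibration algorithm on the cover $\cH'$. This is an online-to-batch conversion of a swap-regret calibration procedure, as in \citet{collina2026sample}, applied with $K+1=O(1/\varepsilon)$ buckets and a separate best-response for each bucket. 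Its cost is $\wtO(\log|\cH'|/\varepsilon^{3})=\wtO(d/\varepsilon^{3})$. The swap sum $\sum_v\sup_h$ transfers from $\cH'$ to $\cH$ with additive loss at most $\sum_v\|h-h'\|_{L_1(\cD_X\mid\text{bucket }v)}\Pr[v]\le\varepsilon/4$. The confidence term $\log(1/\delta)$ enters through the concentration step in the online-to-batch conversion. Enumerating $\cH'$ costs time $n^{O(d)}$, which is exponential in $d$.

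\emph{Lower bound in (a).} We split the domain into $m=\Theta(1/\varepsilon)$ disjoint blocks, each carrying $d'=\Theta(d\varepsilon)$ points, with each block weighted so the total mass is $1$. This requires $d\gtrsim1/\varepsilon$, and that condition is absorbed into the constant. We take $\cH$ to be products of sign patterns on each block, so $\cH$ is a union of independent small cubes. Its VC dimension is at most $d$, and weighted ERM decomposes coordinatewise, so ERM runs in linear time.

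The hard distributions are indexed by $\sigma\in\{\pm1\}^{d}$. On point $j$, $Y\sim\Ber(1/2+\sigma_j\varepsilon')$ with $\varepsilon'=\Theta(\varepsilon)$, and the mean is offset per block so that different blocks naturally receive different predicted values. The swap sum lets each predicted value pick its own sign pattern, so the swap error is at least a constant times the average over points of $|\text{bias misreported}|$. To get error below $\varepsilon$, the predictor must therefore recover a constant fraction of the signs $\sigma_j$. Each point has mass $\Theta(\varepsilon/d)$, so a sample of size $n$ places about $n\varepsilon/d$ draws on each point. Identifying a bias of size $\varepsilon$ needs $\varepsilon^{-2}$ draws, which gives $n\gtrsim d/\varepsilon^{3}$. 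We formalize this with Assouad's lemma, using a KL bound of $O(\varepsilon^{2})$ per draw on a point.

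The main obstacle is showing that the swap error forces sign recovery and is not dodged by mixing predictions. A randomized predictor that splits mass across values must still pay $\Omega(\varepsilon'\Pr[X=x])$ on each point whose sign it gets wrong. We argue this by taking, for each $v$, the auditor that matches $\sigma$ on the block. The condition $\varepsilon\le10^{-6}$ leaves room for the constants.

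\emph{Part (b).} We adapt \cref{alg:main}. Each round finds, for each value $v$ separately, an approximately maximizing auditor via an ERM call with weights $\mathbf 1\{p(x_i)=v\}(v-y_i)$, and we patch the predictor on whichever bucket is violated. The number of patches is still $O(\varepsilon^{-2})$, by a potential argument on squared error. However, each round must now certify all $K+1$ buckets at accuracy $\varepsilon/(K+1)$ per bucket weighted by mass. That costs one extra factor of $1/\varepsilon$ in samples relative to \cref{thm:main}(a), giving $\wtO(d/\varepsilon^{4})$ samples. The oracle calls grow by the same factor, giving $\wtO(d/\varepsilon^{7})$.
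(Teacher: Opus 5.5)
\textbf{The lower bound in (a) cannot give $d/\varepsilon^3$.} You use $\Theta(1/\varepsilon)$ blocks of $\Theta(d\varepsilon)$ points and $\sigma\in\{\pm1\}^d$, so the domain has only $\Theta(d)$ points. A typical point therefore has mass $\Theta(1/d)$, not $\Theta(\varepsilon/d)$, and Assouad gives only $n\gtrsim d/\varepsilon^2$.

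Re-weighting cannot repair this. For a deterministic $p$, $\SMC(p)\le\E|p(X)-\fs(X)|$. Plugging in empirical conditional means gives error about $\sum_x w_x(nw_x)^{-1/2}\le\sqrt{|\cX|/n}$. So every distribution on $|\cX|$ inputs is swap-learnable from $\wtO(|\cX|/\varepsilon^2)$ samples, and your family is learnable from $\wtO(d\varepsilon^{-2})$. A $d/\varepsilon^3$ bound needs $\Theta(d/\varepsilon)$ inputs carrying independent hidden signs, with VC dimension only $d$. That rules out a product of cubes, which would have VC dimension $\Theta(d/\varepsilon)$.

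The missing idea is to exploit that the swap error picks a separate auditor at each value. In \cref{thm:swap-lb} the construction is as follows:
\begin{itemize}
\item There are $J=\Theta(1/\varepsilon)$ blocks of $d$ inputs each, with centers $c_j$ spaced $\Theta(\varepsilon)$ apart.
\item The class consists of thresholds and of \emph{local} auditors, which are arbitrary on one block and $+1$ elsewhere; its VC dimension is $d$.
\item The thresholds force most predictions into disjoint windows around $c_{j(X)}$ (\cref{lem:decode}).
\item At each value near $c_j$, the local auditor of block $j$ then charges the absolute residuals of that block, so small swap error decodes most of the $Jd$ signs.
\end{itemize}
Your restriction $d\gtrsim1/\varepsilon$ also cannot be absorbed into a constant, since the statement holds for every $d\ge1$. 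The paper uses $d-1$ for odd $d$ and thresholds alone for $d=1$ (\cref{prop:swap-lb-vc1}).

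\textbf{The upper bounds also have gaps.} In (a) the cover step is fine, with two corrections. The $\wtO(d\varepsilon^{-2})$ unlabeled points come from the $\varepsilon$-net theorem for disagreement sets at scale $\rho\asymp\varepsilon/m$, not from uniform convergence, which would need $d/\rho^2$ points. No per-value weighting is needed, since $\E[|h-h'|Q_X(v)]\le2\cD_X(h\ne h')$.

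The real problem in (a) is that the finite-class routine you cite does not cost $\wtO(\varepsilon^{-3}\log|\cH'|)$. The batch swap bound of \citet{collina2026sample} is $\wtO(\varepsilon^{-3}\log^{3/2}|\cH|)$, which on your cover gives $\wtO(d^{3/2}\varepsilon^{-3})$. The linear dependence is exactly what \cref{thm:swap-finite} proves:
\begin{itemize}
\item Run exponential weights at each grid value with a common rate.
\item Forecast by the root rule applied to the per-value mixed predictions; this makes $\sum_v\bar g_{t,v}(X_t)Z_{t,v}\le0$.
\item Feed back $Z_{t,v}=Q_t(X_t)(v)(\mu_t-Y_t)$, measured at the forecast mean.
\item Use the second-order regret bound $\log N/\eta+\frac\eta2\sum_tZ_{t,v}^2$. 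It sums over values to $\sqrt{2Tm\log N}$ because $\sum_vZ_{t,v}^2\le1$ on every round. Per-value first-order bounds would give $m\sqrt{T\log N}$ instead.
\end{itemize}

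In (b), your patching scheme is a different algorithm from \cref{alg:swap}, and its sample count is asserted rather than derived. The patches depend on the data, so you need either fresh samples per round or uniform convergence over the growing class of patched predictors. Fresh samples cost about $\varepsilon^{-2}$ rounds times $\wtO(md\varepsilon^{-2})$ samples each (\cref{thm:swap-audit}), i.e.\ $\wtO(d\varepsilon^{-5})$. The ``extra factor $1/\varepsilon$ relative to \cref{thm:main}(a)'' has no basis, because that rate comes from an online-to-batch argument, not from patching.

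The paper instead does the following:
\begin{itemize}
\item It runs one Gaussian follow-the-perturbed-leader learner over $\Hpm$ per grid value (\cref{thm:hybrid}).
\item It forecasts by the root rule on sampled estimates of their mixed predictions.
\item It sums the $m$ regret bounds $\wtO(\sqrt{Td})$ and converts to a batch bound with \cref{lem:martingale}(b).
\end{itemize}
Then $m\sqrt{d/T}\le\varepsilon$ gives $T=\wtO(d\varepsilon^{-4})$, and $2Tm(L+1)$ calls with $L=\wtO(\varepsilon^{-2})$ give $\wtO(d\varepsilon^{-7})$.
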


Part~(a) shows that swap multicalibration requires
$\widetilde\Theta(d\varepsilon^{-3})$ samples even with unlimited
computation.  The upper bound uses known techniques: we sharpen the
finite-class bound $\wtO(\varepsilon^{-3}\log^{3/2}|\cH|)$ of
\citet{collina2026sample} to $\wtO(\varepsilon^{-3}\log|\cH|)$ and extend it
to VC classes by a cover.  The lower bound is new, and shows that the
resulting dependence on $d$ is tight.  For ordinary multicalibration, by
contrast, $\wtO(\varepsilon^{-3}+d\varepsilon^{-2})$ samples suffice for a
learner that may enumerate the class.  For learners that access the class only through
polynomially many ERM calls, however, \cref{thm:main} shows that ordinary
multicalibration also requires $\widetilde\Theta(d\varepsilon^{-3})$ samples
when $d$ is between about $1/\varepsilon$ and a polynomial in
$1/\varepsilon$, so the statistical separation
between the two notions relies on the ability to enumerate the class.
Part~(b) gives an oracle-efficient swap algorithm at the cost of a factor
$1/\varepsilon$ in the sample complexity (\cref{rem:quartic}).

Our third result concerns \emph{error estimation}: estimating the
multicalibration error of a given predictor from samples.  This is how a
regulator or a downstream user would check a predictor that someone else
trained.

\begin{theorem}[Error estimation]
\label{thm:audit}
Let $\varepsilon,\delta\in(0,1)$, and consider randomized predictors with at
most $m$ values, given through the probabilities they assign to each value
at each input.  There are absolute constants $c,c_0>0$ such that the
following hold.
\begin{enumerate}[label=(\alph*),leftmargin=2em]
\item \emph{Swap error.} With $\wtO((md+\log(1/\delta))/\varepsilon^2)$
  samples and $2m$ weighted ERM calls, one can estimate the swap
  multicalibration error within $\varepsilon$ with probability at least
  $1-\delta$.  If $d$ is even, $\varepsilon\le1/1200$, and
  $m\le1/(1200\varepsilon)$, then $\Omega(\sqrt{md}/\varepsilon^2)$ samples
  are necessary.
\item \emph{Ordinary error.} With $\wtO((d+m+\log(1/\delta))/\varepsilon^2)$
  samples and $2^m$ weighted ERM calls, one can estimate the
  multicalibration error within $\varepsilon$ with probability at least
  $1-\delta$, and with $\wtO((d+\log(m/\delta))/\varepsilon^2)$ samples and
  fewer than $4m$ calls, within a factor $\sqrt m$ up to an additive
  $\varepsilon$.  If $d$ is even and $\varepsilon\le1/1200$, then
  $\Omega(\sqrt d/\varepsilon^2)$ samples are necessary.  Unless
  $\mathrm{NP}\subseteq\mathrm{BPP}$, no algorithm running in time
  polynomial in $m$ and $d$ estimates the error within a factor $1\pm c_0$,
  or within an additive $c/m$, even for deterministic predictors and classes
  with linear-time weighted ERM.
\end{enumerate}
\end{theorem}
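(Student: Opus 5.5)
The theorem has four kinds of claims: two upper bounds, two sample lower bounds, and an NP-hardness result. I would prove them separately.

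\textbf{Swap upper bound.} The swap error is $\sum_v \sup_h |\E[h(X)\,Q_X(v)(v-Y)]|$. Write $a_v(x,y)=Q_x(v)(v-y)\in[-1,1]$. Then
\[
\sup_h |\E[h\,a_v]| = \max\bigl(\sup_h \E[h\,a_v],\ \sup_h \E[-h\,a_v]\bigr).
\]
Each of these two suprema is one weighted ERM call on the sample, with weights $a_v(x_i,y_i)$ and $-a_v(x_i,y_i)$. That gives $2m$ calls in total. For accuracy, the class $\{(x,y)\mapsto h(x)a_v(x,y)\}$ has Rademacher complexity $O(\sqrt{d/n})$ by contraction, and each $|a_v|\le Q_x(v)$. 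Summing the $m$ deviations naively costs a factor $m$. To get the stated $md/\varepsilon^2$ I would instead bound the supremum over $h_1,\dots,h_m$ jointly. The function $(x,y)\mapsto \sum_v \sigma_v h_v(x)a_v(x,y)$ ranges over a class of pseudo-dimension $O(md\log m)$ and is bounded by $\sum_v Q_x(v)\le 1$. Uniform convergence of this class at rate $\sqrt{md/n}$ gives the bound, and a union over signs is absorbed into the same class.

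\textbf{Ordinary upper bounds.} The ordinary error is $\sup_h\sum_v|\E[h\,a_v]|$. For each sign pattern $s\in\{\pm1\}^m$ it equals $\max_s\sup_h \E[h\sum_v s_v a_v]$. That is one ERM call per $s$, so $2^m$ calls in total. The functions $\sum_v s_v a_v$ are bounded by $1$, so a union bound over the $2^m$ patterns plus VC uniform convergence gives $(d+m)/\varepsilon^2$.

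For the $\sqrt m$ approximation I would use a small set of fewer than $4m$ sign patterns. The candidates are the $2m$ patterns $\pm e_v$ together with $\pm$ the rows of a Hadamard-type design. By Khintchine-type averaging, some such pattern captures a $1/\sqrt m$ fraction of $\sum_v|c_v|$, where $c_v=\E[h^*a_v]$ for the optimal $h^*$. The hard part is that these patterns must not depend on the unknown $h^*$. The single-coordinate patterns alone give a factor $m$, so the $\sqrt m$ factor needs the fixed Hadamard family: for any vector $c$, the sign rows satisfy $\max_s|\langle s,c\rangle|\ge\|c\|_2\ge\|c\|_1/\sqrt m$. The number of Hadamard rows is at most $2m$, so with $\pm$ the total stays below $4m$. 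Because this candidate set has size $O(m)$, the union bound only costs $\log m$, which gives the $(d+\log(m/\delta))/\varepsilon^2$ sample bound.

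\textbf{Lower bounds.} I would use a shattered set of $d$ points with a fixed predictor. Under the null, labels are unbiased. Under the alternatives, each point gets a random bias of order $\varepsilon/\sqrt d$ on a hidden sign vector. The error distinguishes the two: it is $0$ under the null and about $\varepsilon$ under the alternatives, since $h$ matches the signs. Distinguishing the mixture from the null needs about $\sqrt d/\varepsilon^2$ samples, by the chi-square / Ingster-type bound for uniform-versus-perturbed testing. For swap, I would use $m$ independent blocks, one per value, each a copy of this instance; the effective dimension becomes $md$ and gives $\sqrt{md}/\varepsilon^2$. The conditions $d$ even and $m\le1/(1200\varepsilon)$ keep the biases in $[0,1]$.

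\textbf{NP-hardness.} I would reduce from an NP-hard problem with a constant approximation gap, such as MAX-CUT. The ordinary error is $\max_s\sup_h\langle s,c(h)\rangle$. Take $\cH$ to be the set of indicators of single points (or pairs of points), which has linear-time ERM. Then the error becomes a maximization over sign vectors $s$ of a quadratic form encoded by the predictor's level sets. This is the step I expect to be the main obstacle: the structure must be designed so that the optimum over $s$ equals the cut value, up to a constant factor, with $m$ values. The additive $c/m$ version follows by normalization, since the total error scale is $\Theta(1)$ spread over $m$ values. The randomized step is only needed to estimate the error from samples, which is why the conclusion is stated against $\mathrm{NP}\subseteq\mathrm{BPP}$.
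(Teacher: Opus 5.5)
The upper bounds are essentially the paper's, and they are fine. For the swap bound you use joint uniform convergence over tuples $(h_v)_v$, exploiting $\sum_v|Q_x(v)(v-y)|\le1$; you count labelings of the sample, while the paper uses a cover of $\Hpm$ plus Azuma, at the same rate. The ordinary bounds use a union bound over $2^m$ sign patterns and the Hadamard rows with their negations. Drop the $\pm e_v$: they are not needed and would push the count past $4m$.

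The genuine gap is the NP-hardness part, and here the class you propose cannot work. Indicators of single points or pairs have constant VC dimension and polynomially many members. For an explicit distribution, $\MC(p)=\max_h\sum_v|B_h(p,v)|$ is then computed by enumerating $h$. From samples, the $n^{O(1)}$ labelings of the sample can be listed with ERM calls and the plug-in error evaluated directly. So no hardness is possible.

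Hardness needs an exponentially large class whose VC dimension grows with the instance but whose ERM decomposes. Then, for a fixed sign pattern $\sigma$, the inner maximum becomes a sum $\sum_e|\langle\sigma,c_e\rangle|$ of absolute values of linear forms.

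The paper's construction:
\begin{itemize}
\item The class is all sign functions of the edge, $x\mapsto g(e(x))$ with $g\in\{-1,+1\}^E$. It has VC dimension $|E|$, and ERM signs each edge by its total weight.
\item The grid has $K=2n_G$.
\item Each edge $e=\{u,u'\}$, $u<u'$, gets three inputs, all with outcome mean $y_e=(u+u')/(2n_G)$. They are predicted $u/n_G$, $u'/n_G$, $y_e$ and have masses $1/(2k_e|E|)$, $1/(2k_e|E|)$, $(1-1/k_e)/|E|$, where $k_e=u'-u$.
\item The factor $1/k_e$ offsets the residuals $\mp k_e/(2n_G)$, so each edge contributes $\mp g(e)/(4n_G|E|)$ at its two endpoint values.
\end{itemize}
This gives $\MC(p)=\textsc{MaxCut}(G)/(2n_G|E|)$. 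Since $\textsc{MaxCut}(G)\ge|E|/2$, the error on these instances exceeds $1/(2m)$. Its scale is therefore $\Theta(1/m)$, not $\Theta(1)$, and this is what turns an additive $c/m$ error into a constant-factor one. Gap Max-Cut, plus a polynomial-time simulation of the samples and the oracle, then yields the BPP conclusion. Your plan leaves exactly this construction as "the main obstacle", and starts from a class that provably cannot support it.

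The sample lower bounds have a scaling error. Take a single-valued predictor on a block and a class that realizes the hidden signs. Under an alternative, the error equals the average absolute per-input bias. So a bias of order $\varepsilon/\sqrt d$ gives error $\varepsilon/\sqrt d$, not about $\varepsilon$, and an $\varepsilon$-estimator need not separate null from alternative. With that bias the chi-square threshold would also be $d^{3/2}/\varepsilon^2$, which contradicts your own conclusion.

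The bias must be $\beta=\Theta(\varepsilon)$. The paper takes $\beta=3\varepsilon$, tests $\hat e>3\varepsilon/2$, and bounds $1+\chi^2\le\exp(256n^2\beta^4/(9Jd))$, giving $n>0.1\sqrt{Jd}/\beta^2$ with $J=1$ or $J=m$.

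For the swap bound you also need each block to get its own value. The class must have VC dimension $d$ while realizing arbitrary signs on each block separately, such as local auditors that are arbitrary on one block and $+1$ elsewhere. A class shattering all $md$ inputs would have VC dimension $md$.

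The hypotheses also play a different role than you state. "$d$ even" serves the paper's paired signs $\tau_{j,2i}=-\tau_{j,2i-1}$. The condition $m\le1/(1200\varepsilon)$ is $J\le1/(400\beta)$, which keeps the block means $c_j$ in $(1/4,1/2]$, so that $c_j(1-c_j)$ is bounded below in the chi-square bound.
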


For swap multicalibration, error estimation is oracle-efficient, and for
predictors with $\Theta(1/\varepsilon)$ values it uses $\wtO(d\varepsilon^{-3})$
samples, the same as learning (\cref{thm:swap}).  Estimating the
ordinary error needs fewer samples than estimating the swap error, but more
computation: the error of a given predictor
with $\Theta(1/\varepsilon)$ values can be estimated from
$\wtO(\varepsilon^{-3}+d\varepsilon^{-2})$ samples, but not
oracle-efficiently, even approximately, unless
$\mathrm{NP}\subseteq\mathrm{BPP}$.  For some classes, learning a
multicalibrated predictor with an ERM oracle (\cref{thm:main}) is thus
easier than estimating the error of a given one.

\subsection{Related work}
\label{sec:related}

Multicalibration was introduced by \citet{hebert2018multicalibration}, and
the weaker notion of multiaccuracy was studied by
\citet{hebert2018multicalibration} and \citet{kim2019multiaccuracy}.
  The consequences of multicalibration have been characterized through outcome indistinguishability
\citep{dwork2021outcome}, omniprediction
\citep{gopalan2022omnipredictors,gopalan2023loss}, and swap agnostic learning
\citep{gopalan2023swap}.  It has been extended to moments
\citep{jung2021moment}, online prediction \citep{gupta2022online}, low-degree
auditors \citep{gopalan2022lowdegree}, and multiclass outcomes
\citep{gopalan2024multiclass}.  A separate line of work studies efficient
algorithms, through game dynamics \citep{haghtalab2023unifying}, boosting
with regression oracles \citep{globusharris2023multicalibration}, and online
oracles \citep{garg2024oracle,ghuge2025improved,hu2025efficient,
luo2025improved,farina2026efficient}.

\paragraph{Sample complexity and deterministic multicalibration.}
\citet{shabat2020sample} bounded the number of samples needed for the
empirical multicalibration error of every predictor in a class to be close
to its population error.  \citet{collina2026sample} determined the sample
complexity of $\ell_1$ multicalibration (the error in~\eqref{eq:intro-mc})
for finite classes: $\wtO(\varepsilon^{-3}\log|\cH|)$ samples suffice, by an
online-to-batch conversion of an online algorithm of
\citet{noarov2025highdimensional}, and $\widetilde\Omega(\varepsilon^{-3})$
are necessary, already for classes of size $O(\log^4(1/\varepsilon))$.
Their algorithm enumerates the class and outputs a randomized predictor;
they ask whether randomization is necessary, and what the optimal joint
dependence on $\varepsilon$ and the complexity of the class is, including in
terms of VC dimension.  In their accounting, which suppresses the
logarithmic dependence on the size of the class, the oracle-efficient
algorithms of \citet{hebert2018multicalibration},
\citet{gopalan2022lowdegree}, and \citet{globusharris2023multicalibration}
need $\wtO(\varepsilon^{-7})$, $\wtO(\varepsilon^{-8})$, and
$\wtO(\varepsilon^{-10})$ samples.  \citet{noarov2026optimal} showed that
randomization can be removed: they give a deterministic predictor with
$\wtO(\varepsilon^{-3}+\varepsilon^{-2}\log|\cH|)$ samples, and the same rate
for infinite classes given a finite cover, by running exponential weights
over pairs of an auditor and a choice of sign for each grid value,
implemented efficiently in the signs by a product formula but explicitly in
the auditor.  We follow
their rounding construction for the deterministic part of \cref{thm:main}
and extend its analysis to VC classes without a cover; unlike them, our
learning algorithm itself remains randomized (only its output is
deterministic), whereas for finite classes they also derandomize the
algorithm.  \Cref{thm:main,thm:swap} address both of their questions.

\paragraph{Oracle-efficient learning.}
In the fully adversarial online setting, no algorithm with access to an
optimization oracle can have vanishing regret in time much smaller than the
square root of the number of experts \citep{hazan2016computational}, which
motivates relaxations of the adversary
\citep{rakhlin2011online,dudik2020oracle,syrgkanis2016efficient,block2022smoothed,haghtalab2022oracle};
when the future inputs are known in advance, \citet{kakade2005batch} reduce
online learning to batch ERM.  In the hybrid setting of
\citet{lazaric2009hybrid}, inputs are independent draws from an unknown
distribution while the losses are adversarial; they achieve regret
$O(\sqrt{Td\log T})$ with high probability, but not with an efficient
algorithm.  \citet{wu2024oracle} obtained regret $\wtO(T^{3/4})$ with an ERM
oracle, and \citet{buzaglo2026oracle} obtained expected regret
$O(\sqrt{Td\log(1+T/d)}\log^2T)$ with one ERM call per round, using
follow-the-perturbed-leader \citep{kalai2005efficient} with Gaussian
perturbations; we prove a high-probability version of their bound
(\cref{thm:hybrid}).  Related hybrid learners with constrained adversaries
appear in \citet{okoroafor2026hybrid}.  Online multicalibration algorithms
use other oracles: \citet{garg2024oracle} an online squared-error regression
oracle, \citet{farina2026efficient} an adversarial online learner over the
auditor class, and \citet{ghuge2025improved} an offline oracle that
maximizes jointly over an auditor and a sign vector, with expected average
error $\wtO(T^{-1/4})$ for binary-valued classes when the inputs come from a
known finite set or admit a small separator.  \Cref{thm:np-hard} shows that
this joint problem can be NP-hard even when ordinary weighted ERM over the class
runs in linear time; our algorithm calls only ordinary weighted ERM.

\paragraph{Swap multicalibration.}
\citet{gopalan2023swap} introduced swap multicalibration, showed that the
boosting algorithm of \citet{hebert2018multicalibration} achieves it, and
showed that rounding a multicalibrated predictor to a coarser grid yields
swap multicalibration with a polynomially worse error; \cref{prop:coarsen}
is the analogue of their bucketing argument for our $\ell_1$ error.
\citet{hu2025efficient} gave online swap multicalibration algorithms for
elicitable properties, with cumulative error $\wtO(T^{2/3})$ in our $\ell_1$
norm, given an adversarial online agnostic learner for the class, and
\citet{collina2026sample} convert their algorithm into a batch algorithm for
finite classes with $\wtO(\varepsilon^{-3}\log^{3/2}|\cH|)$ samples.
\citet{luo2025improved} gave efficient algorithms for linear classes, and
\citet{okoroafor2026fast} obtained fast rates for swap-agnostic learning of
proper losses through a reduction to multicalibration.  \Cref{thm:swap}
settles the sample complexity for VC classes, and gives an oracle-efficient
algorithm with a hybrid learner in place of the adversarial one.

\paragraph{Estimating calibration and multicalibration error.}
\citet{blasiok2023unifying} proposed the distance to calibration and a
property-testing view of measuring calibration, and \citet{hu2024testing}
gave a nearly-linear-time tester.  \citet{qiao2026hardness} showed that the
distance to calibration can be NP-hard to compute, and that
$\Theta(\varepsilon^{-3})$ samples are necessary and sufficient for its
empirical estimate to converge.  \citet{kearns2018preventing} showed that
auditing for their subgroup-fairness notions is computationally equivalent
to weak agnostic learning of the class of groups.  For multicalibration,
\citet{derhake2026auditability} study which error measures can be audited
from samples, and \citet{gollakota2025loss} show that nontrivial loss
prediction is equivalent to auditing for multicalibration.
\Cref{thm:audit} concerns the $\ell_1$ and swap multicalibration errors,
and shows that estimating the ordinary error can be NP-hard even when ERM
over the class runs in linear time, in contrast with the equivalence of
\citet{kearns2018preventing}.

\paragraph{Omniprediction and calibration.}
Omnipredictors \citep{gopalan2022omnipredictors} can be learned with fewer
samples than multicalibrated predictors: \citet{okoroafor2025nearoptimal}
gave oracle-efficient randomized omnipredictors with
$\wtO(\varepsilon^{-2})$ samples, times the complexity of a class derived
from the losses, \citet{gibbs2026sample} a deterministic omnipredictor for
proper losses with the same rate, \citet{lu2025sample} sample-efficient
omnipredictors for Lipschitz non-linear losses, and
\citet{balakrishnan2025panprediction} panpredictors, which serve any
downstream task and loss, with $\wtO(\varepsilon^{-3})$ samples
deterministically and $\wtO(\varepsilon^{-2})$ with randomization.  These
guarantees do not directly yield multicalibration; \cref{app:proper}
records one connection, which loses a square root.  For calibration alone,
the $\varepsilon^{-3}$ rate of batch multicalibration mirrors the $T^{2/3}$
barrier of online calibration, which \citet{dagan2025breaking} broke by a
small polynomial factor; lower bounds of $\Omega(T^{0.528})$
\citep{qiao2021stronger} and $\Omega(T^{0.54389})$ \citep{dagan2025breaking}
rule out $\sqrt T$, and for online multicalibration
\citet{collina2026online} show that $T^{2/3}$ is optimal.  The role of
randomization in calibration has a long history: the forecaster of
\citet{foster1998asymptotic} randomizes, and \citet{kakade2008deterministic}
obtain deterministic forecasts for a weaker notion of calibration.

\subsection{Technical overview}
\label{sec:overview}

We describe the ideas behind each of the three theorems in turn.

\subsubsection{\texorpdfstring{Oracle-efficient multicalibration (\cref{thm:main})}{Oracle-efficient multicalibration (Theorem 1.1)}}
\label{sec:overview-main}

\paragraph{Obstacles.}
The algorithms of \citet{collina2026sample} and \citet{noarov2026optimal}
list the auditors, which an ERM oracle does not allow.  Their
$\varepsilon^{-3}$ rate comes from converting an online algorithm into a
batch one, and online learning is hard with only an optimization oracle.  Finally, rounding a randomized predictor can destroy
its calibration when the input distribution has atoms
\citep{noarov2026optimal}.

\paragraph{Our approach.}
For a fixed auditor, the multicalibration error is a sum of absolute biases,
one for each of the $K+1$ grid values, so it equals the largest signed sum
obtained by choosing a sign for each grid value; we call such a choice a
\emph{sign pattern}.  For a fixed sign pattern, this signed sum is the
correlation of the auditor with a single feedback value per example: the
residual of the forecast, multiplied by the sign assigned to the predicted
value.  The algorithm of part~(a) has four stages: it replaces the sign
patterns by a few fixed sign vectors, which we call \emph{tests}, chooses
forecasts against these tests, converts the online guarantee into a batch
one, and, optionally, makes the output deterministic.  The first stage is
new, and the following explains why it is needed.  A natural approach runs, for each sign
pattern, an online learner over $\cH$ that receives this feedback, so that
it competes with every auditor for that pattern, and converts the online
guarantee into a batch one.  With only an ERM oracle, however, online
learning is hard: against adversarial inputs, no algorithm with an
optimization oracle has vanishing regret in time much smaller than the
square root of the size of the class \citep{hazan2016computational}.  Our
samples are independent, so the inputs are random draws from a fixed
distribution and only the feedback is adversarial.  In this \emph{hybrid}
setting \citep{lazaric2009hybrid}, the Gaussian follow-the-perturbed-leader learner of
\citet{buzaglo2026oracle} makes one ERM call per round and has expected
regret $\wtO(\sqrt{Td})$ over $T$ rounds.  The remaining problem is that
there are $2^{K+1}$ sign patterns, and hence $2^{K+1}$ learners.  We replace them by fewer than $4(K+1)$ tests, with one learner per test:
the rows of a Hadamard matrix, whose order is the smallest power of two
that is at least $K+1$, restricted to $K+1$ columns, together with their
negations.  The columns of a Hadamard matrix are orthogonal, so for any
vector $b$ of biases, one for each grid value, the squared correlations
$\langle w,b\rangle^2$ of the rows $w$ average to $\|b\|_2^2$, and some row
satisfies
$|\langle w,b\rangle|\ge\|b\|_2\ge\|b\|_1/\sqrt{K+1}$ (\cref{lem:hadamard}).
The tests therefore lose a factor $\sqrt{K+1}$ in converting back to
multicalibration error, and no fixed family of polynomially many tests does
much better (\cref{prop:distortion-lb}).  Forecasting on a grid of $K+1$ values also incurs an error of order $1/K$
per round.  If the tests are applied to the residuals at the drawn grid
values, the factor $\sqrt{K+1}$ multiplies this error too, inflating it to order $1/\sqrt K$, which leads to $\varepsilon^{-4}$
samples rather than $\varepsilon^{-3}$.

The second stage chooses the forecasts.  The regret bounds say that no
auditor correlates with the feedback of a test much more than the test's
learner does.  So it suffices to choose each forecast so that the online learners
themselves gain nothing: on every round, the predictions of the online
learners, aggregated over the tests by exponential weights, should have
nonpositive correlation with the feedback.  The aggregated predictions give a coefficient for each grid
value, and the forecast is a distribution on one or two adjacent grid values
at which this coefficient changes sign.  Forecasts of this kind go back to
the calibration algorithm of \citet{foster1998asymptotic} and are used
throughout the online approach to multicalibration
\citep{gupta2022online,noarov2025highdimensional,noarov2026optimal}.  A
single grid value cannot make the correlation nonpositive for every outcome,
but such a distribution can, if the residual is measured at the mean of the
forecast rather than at the grid value drawn from it (\cref{lem:root}).  We therefore feed the online learners the residuals at the mean.  This avoids the
$\varepsilon^{-4}$ rate described above: residuals at the mean differ
from residuals at the grid values by at most $1/(2K)$ per round, and this
difference is paid once, outside the conversion (\cref{lem:adjacent}).  In place of enumerating the class, as \citet{noarov2026optimal} do, we give
each test the hybrid learner described above (\cref{thm:online}).  The analysis uses
all the learners at once, so it needs their regret bounds to hold
simultaneously, and we show that the expected-regret bound of
\citet{buzaglo2026oracle} also holds with high probability
(\cref{thm:hybrid}).  Measuring residuals at the mean, the combination with
hybrid learners, and the high-probability bound are new.

The third stage converts the online guarantee into a batch guarantee: the
average of the forecasts is a randomized predictor.  Because each forecast
depends on the earlier samples, uniform convergence does not apply
directly; instead, for each auditor in a cover of $\cH$, which is used only
in the analysis, the deviations form a martingale, which gives
$\wtO(d\varepsilon^{-3})$ samples (\cref{cor:batch-rates}).  The fourth
stage makes the predictor deterministic by fixing its random choices once
per cell of a partition of the inputs.  When the input distribution has
atoms, fixing these choices can destroy calibration, and
\citet{noarov2026optimal} avoid this by restricting the forecasts at
repeated inputs to confidence intervals for their mean outcomes.  We use
their construction; our analysis of it for VC classes
(\cref{thm:vc-rounding}) and a version of our forecasts restricted to these
intervals are new.

\paragraph{The lower bound.}
Part~(b) must hold even for learners with unlimited computation, so it
rests on what the oracle reveals about the class.  Each oracle call is a
query: the learner chooses a weight for each of finitely many inputs, and
the oracle returns an auditor that maximizes the weighted sum of its
values.

The hard instances use a finite input space, divided into about
$1/\varepsilon$ blocks of about $d$ inputs each (up to logarithmic factors).
The mean outcome at each input is its block's center plus or minus a gap of
order $\varepsilon$, with a hidden sign at each input.  A predictor with
small multicalibration error must track most of the hidden signs.
Auditors that are constant on each block force it to predict near the
block centers.  A few \emph{planted} auditors detect any predictor near the
centers that does not track the hidden signs; each equals, at every input,
the hidden sign times a random sign shared by the input's block.  The
construction needs at least as many inputs per block as there are blocks,
which is why the bound requires $d$ to be at least about $1/\varepsilon$.

A planted auditor returned by the oracle would reveal the hidden signs at
once.  The class therefore also contains exponentially many random
auditors, a random code.  For every query whose weights do not depend on
the hidden signs, some random auditor scores at least as well as every
planted auditor, and the oracle returns it instead.  Since the answers then
do not depend on the hidden signs, by induction no query does either, so
the oracle never reveals a planted auditor.  The learner must therefore
learn the hidden signs at about $d/\varepsilon$ inputs from samples, at a
cost of about $\varepsilon^{-2}$ samples each, which gives
$\widetilde\Omega(d\varepsilon^{-3})$ samples in total (\cref{thm:erm-lb}).

\subsubsection{\texorpdfstring{Swap multicalibration (\cref{thm:swap})}{Swap multicalibration (Theorem 1.2)}}
\label{sec:overview-swap}

Swap multicalibration lets the auditor depend on the predicted value, so its
error is a sum over values of separate suprema over the class.  For a fixed
predictor this can be larger than the ordinary error by a factor equal to the
number of values, already for a class of VC dimension one
(\cref{prop:ratio}); conversely, coarsening a predictor converts ordinary
error $\varepsilon$ into swap error $O(\sqrt\varepsilon)$
(\cref{prop:coarsen}).

\paragraph{Upper bound.}
Because the suprema are separate, each grid value has its own auditor,
and including the negation of each auditor takes care of the absolute
value; so the upper bound of part~(a) needs no sign patterns.  We run exponential weights over
the class separately at each grid value, rewarding an auditor by its
correlation with the residual at the mean of the forecast, weighted by the
probability the forecast places on that value, and choose forecasts as in
\cref{thm:main}.  On each
round the residuals at the different values have total magnitude at most
one, so the second-order regret bound of exponential weights gives online
error $\wtO(\sqrt{TK\log N})$ for a class of size $N$, and $\wtO(\varepsilon^{-3}
\log N)$ samples (\cref{thm:swap-finite}).  For a VC class, the labelings of
$\wtO(d\varepsilon^{-2})$ unlabeled inputs give a cover with
$(d/\varepsilon)^{O(d)}$ elements, which yields $\wtO(d\varepsilon^{-3})$ samples in
time exponential in $d$ (\cref{thm:swap-vc}).

\paragraph{Lower bound.}
For the lower bound of part~(a), we place the inputs in about $1/\varepsilon$ blocks of $d$
inputs each.  The mean outcome in each block is the block's center plus or
minus a gap of order $\varepsilon$, with hidden signs that cancel in pairs
within the block, and the class consists of thresholds over the ordered
inputs together with all labelings of a single block.  The predictor that
outputs the block center has small ordinary error, because a single auditor
sees only one block, where the signs cancel.  A swap-multicalibrated
predictor, however, faces a different auditor in each block:
thresholds force it to predict near the centers, and the labelings of each
block then force its predictions to reveal most of the hidden signs.  There are about $d/\varepsilon$ hidden signs, each
perturbing the mean at its inputs by order $\varepsilon$, so Assouad's lemma
requires $\Omega(d\varepsilon^{-3})$ samples (\cref{thm:swap-lb}).

\paragraph{Oracle-efficient upper bound.}
Part~(b) replaces each exponential-weights learner by the hybrid learner of
\cref{thm:hybrid}, over the class and its negations, and adds up the regret
bounds over the grid values (\cref{thm:swap-poly}).  This costs a factor
$1/\varepsilon$: exponential weights at a grid value pay regret proportional
to the probability mass the forecasts place on that value, whereas the regret
of the hybrid learner grows with the number of rounds.  Running the
learner only on rounds selected by the forecast does not help, because the
selected inputs are no longer independent draws (\cref{rem:quartic}).

\subsubsection{\texorpdfstring{Error estimation (\cref{thm:audit})}{Error estimation (Theorem 1.3)}}
\label{sec:overview-audit}

An estimator receives samples and a predictor with $m$ values, and estimates
the predictor's error.  For both swap and ordinary multicalibration, we
analyze the plug-in estimate, the error of the
predictor on the empirical distribution, and the question is how many
samples make it accurate and whether it can be computed with an ERM oracle.

\paragraph{Estimating the swap error.}
The empirical swap error splits into one ERM problem per value: at each
value, two weighted ERM calls find the auditors with the largest positive
and negative bias (\cref{prop:swap-audit}).  Its accuracy requires uniform
convergence over tuples of $m$ auditors, one per value, which costs
$\wtO(md/\varepsilon^2)$ samples (\cref{thm:swap-audit}).  For
$m=\Theta(1/\varepsilon)$ this is $\wtO(d\varepsilon^{-3})$, the same as
learning.

\paragraph{Estimating the ordinary error.}
The ordinary error uses one auditor for all values, so the plug-in estimate
needs uniform convergence only over pairs of an auditor and a sign pattern.
The $2^m$ sign patterns add only $m$ to the logarithm of the number of such
pairs, so $\wtO((d+m)/\varepsilon^2)$ samples suffice
(\cref{prop:mc-audit}).
Computing it, however, is a joint maximization over the auditor and the
sign pattern.  Enumerating the $2^m$ sign patterns takes $2^m$ ERM calls, and
the Hadamard tests of \cref{lem:hadamard} give a $\sqrt m$-approximation with
fewer than $4m$ calls (\cref{prop:mc-audit-approx}).  A reduction from
\textsc{Max-Cut} shows that a constant-factor approximation is NP-hard even
when weighted ERM over the class runs in linear time: each vertex of the graph becomes a grid
value and each edge contributes three inputs, so that a sign pattern encodes
a cut and the best auditor collects the cut edges (\cref{thm:np-hard}).

\paragraph{Sample lower bounds.}
We reuse the block family of the swap lower bound with an unperturbed
alternative: an estimator must distinguish the base means from a random
perturbation whose signs cancel in pairs.  The chi-square divergence between
the two sample distributions stays small unless the number of samples is of
order $\sqrt{Jd}/\varepsilon^2$ for $J$ blocks of $d$ inputs.  With $J=m$
blocks this gives $\Omega(\sqrt{md}/\varepsilon^2)$ samples for estimating
the swap error, and with a single block it gives
$\Omega(\sqrt d/\varepsilon^2)$ for the ordinary error
(\cref{thm:audit-lb}).

\subsection{Paper organization}
\label{sec:organization}

\Cref{sec:prelims} defines multicalibration, the oracles, and hybrid
online learning, states the high-probability regret bound for Gaussian
follow-the-perturbed-leader (\cref{thm:hybrid}), and records the
probabilistic tools.  \Cref{sec:mc} develops our algorithm for
multicalibration: signed tests and the choice of forecasts
(\cref{sec:signs}), the
reduction to hybrid online learning (\cref{sec:reduction}), the conversion
to a batch guarantee (\cref{sec:batch}), and the matching lower bound for
learners that access the class only through an ERM oracle
(\cref{sec:erm-lb}).  \Cref{sec:swap} determines the sample complexity of
swap multicalibration and proves \cref{thm:swap}.  \Cref{sec:audit} studies
error estimation and proves \cref{thm:audit}.  \Cref{sec:det} constructs
deterministic predictors, gives the complete algorithm, and proves
\cref{thm:main}.  \Cref{sec:discussion} discusses open problems.

\Cref{app:hybrid} proves \cref{thm:hybrid}.  \Cref{app:mc} proves the online
and batch guarantees of \cref{sec:mc}, and \cref{app:erm-lb} proves its
lower bound (\cref{thm:erm-lb}).  \Cref{app:swap-all,app:audit,app:det}
contain the deferred proofs for \cref{sec:swap,sec:audit,sec:det}.
\Cref{app:proper} relates proper-loss guarantees to multicalibration, and
\cref{app:params} lists explicit parameter choices.

\section{Setting and definitions}
\label{sec:prelims}

\paragraph{Notation.}
We write $[n]=\{1,\ldots,n\}$, $\Delta(S)$ for the set of probability
distributions on a finite set $S$, and $\log$ for the natural logarithm.  The
letters $C,C'$ denote absolute constants whose values may change between
occurrences.  We write $\clip(a,[\ylo,\yhi])=\min\{\yhi,\max\{\ylo,a\}\}$.
The notation $\wtO(\cdot)$ hides factors polylogarithmic in $d$,
$1/\varepsilon$, $T$, and $1/\delta$.

Let $\cX$ be a set of inputs and
$\cH\subseteq\{-1,+1\}^{\cX}$ a class of auditors of VC dimension $d$.
Throughout, we assume $1\le d<\infty$.  We write $\Hpm=\cH\cup(-\cH)$; its
growth function is at most twice that of $\cH$, so $\VC(\Hpm)\le Cd$.  Let
$\cD$ be a distribution over $\cX\times[0,1]$ with marginal $\cD_X$ on
inputs, and let $\fs(x)=\E[Y\mid X=x]$ denote the conditional mean outcome.
We assume the standard measurability conditions for suprema over $\cH$
\citep[see, e.g.,][]{shalev2014understanding}.

\paragraph{Predictors.}
Let $\cV\subset[0,1]$ be a finite set of prediction values.  A \emph{randomized
predictor} is a map $Q:\cX\to\Delta(\cV)$, and $Q_x(v)$ denotes the
probability it assigns to $v$ at input $x$.  It is used by drawing, at each
query, a fresh value $V\sim Q_X$ independently of $Y$ given $X$.  A
\emph{deterministic predictor} $p:\cX\to\cV$ is identified with the randomized
predictor $Q_x=\delta_{p(x)}$.  For an integer $K\ge1$, we write
\[
  \Lambda=\Lambda_K=\{0,1/K,2/K,\ldots,1\},\qquad m=K+1=|\Lambda|,
\]
for the uniform grid.  (In \cref{sec:audit}, $m$ denotes instead the number
of values of a given predictor, which need not lie on $\Lambda$.)  A distribution on $\Lambda$ has \emph{adjacent
support} if it is supported on one grid value or on two grid values at
distance $1/K$.

\subsection{Multicalibration}
\label{sec:prelims-mc}

\begin{definition}[Ordinary and swap multicalibration error]
\label{def:mc}
For a randomized predictor $Q$ with values in $\cV$, an auditor $h$, and $v\in\cV$,
the \emph{bias} of $Q$ at $v$ with respect to $h$ is
\[
  B_h(Q,v)=\E_{(X,Y)\sim\cD}\bigl[h(X)\,Q_X(v)\,(v-Y)\bigr].
\]
The ($\ell_1$) \emph{multicalibration error} and the \emph{swap
multicalibration error} of $Q$ are
\[
  \MC(Q)=\MC(Q;\cH,\cD)=\sup_{h\in\cH}\sum_{v\in\cV}\bigl|B_h(Q,v)\bigr|,
  \qquad
  \SMC(Q)=\SMC(Q;\cH,\cD)=\sum_{v\in\cV}\sup_{h\in\cH}\bigl|B_h(Q,v)\bigr|.
\]
\end{definition}

Clearly $\MC(Q)\le\SMC(Q)$.  For a deterministic predictor, $B_h(p,v)=\E[h(X)\mathbf 1\{p(X)=v\}(v-Y)]$, so
\cref{def:mc} agrees with~\eqref{eq:intro-mc}.  For a randomized predictor,
$B_h(Q,v)=\E[h(X)\mathbf 1\{V=v\}(v-Y)]$ when $V\sim Q_X$ is drawn
independently of $Y$ given $X$; thus $\MC(Q)$ is the error of the randomized
predictor that draws a fresh value at each query.  This is the
``mean ECE'' notion of \citet{collina2026sample} and the error used by
\citet{noarov2026optimal}.  \citet{collina2026sample} use groups
$g:\cX\to\{0,1\}$ and \citet{noarov2026optimal} use group weights
$g:\cX\to[0,1]$; for $\{0,1\}$-valued groups $g$, the auditor $h=2g-1$ shows
that the two conventions agree up to a factor $3$ once the constant auditor
is included in the class.

For a sequence of inputs, outcomes, and randomized forecasts $(X_t,Y_t,Q_t)_{t\le
T}$, the cumulative errors are
\[
  \MC_T=\sup_{h\in\cH}\sum_{v\in\Lambda}
    \Bigl|\sum_{t=1}^Th(X_t)\,Q_t(X_t)(v)\,(v-Y_t)\Bigr|,
  \qquad
  \SMC_T=\sum_{v\in\Lambda}\sup_{h\in\cH}
    \Bigl|\sum_{t=1}^Th(X_t)\,Q_t(X_t)(v)\,(v-Y_t)\Bigr|,
\]
where $Q_t(x)(v)$ is the probability $Q_t$ assigns to $v$ at $x$.  These are
errors of the announced distributions, not of particular draws from them.

The following representation is the starting point of our reduction.

\begin{lemma}[Signed representation, \citealp{noarov2026optimal}]
\label{lem:signed}
For real numbers $(A_{h,v})_{h\in\cH,v\in\Lambda}$,
\[
  \sup_{h\in\cH}\sum_{v\in\Lambda}|A_{h,v}|
  =\sup_{h\in\cH,\ \sigma\in\{-1,+1\}^{\Lambda}}
  \sum_{v\in\Lambda}\sigma_vA_{h,v}.
\]
In particular,
$\MC(Q)=\sup_{h,\sigma}\E\bigl[h(X)\sum_v\sigma_vQ_X(v)(v-Y)\bigr]$.
\end{lemma}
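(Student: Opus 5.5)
The plan is to prove the identity for each fixed $h$ first and then take the supremum over $h$ on both sides. The second claim will follow by specializing the $A_{h,v}$ and using linearity of expectation.

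Fix $h\in\cH$. For every $\sigma\in\{-1,+1\}^{\Lambda}$ we have $\sigma_vA_{h,v}\le|A_{h,v}|$ term by term, so $\sum_v\sigma_vA_{h,v}\le\sum_v|A_{h,v}|$. Conversely, choose $\sigma_v=\sgn(A_{h,v})$, with $\sigma_v=+1$ when $A_{h,v}=0$. Then $\sigma_vA_{h,v}=|A_{h,v}|$ for every $v$, so the maximum over $\sigma$ is attained and equals $\sum_v|A_{h,v}|$. Because $\Lambda$ is finite, this maximum is over finitely many sign patterns. It gives
\[
  \sum_{v\in\Lambda}|A_{h,v}|=\max_{\sigma\in\{-1,+1\}^{\Lambda}}\sum_{v\in\Lambda}\sigma_vA_{h,v}.
\]
Taking $\sup_{h}$ on both sides and using that an iterated supremum $\sup_h\sup_\sigma$ equals the joint supremum over pairs $(h,\sigma)$ gives the displayed identity. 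This step needs no assumption on $\cH$, and it holds even if the suprema are $+\infty$.

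For the second claim, set $A_{h,v}=B_h(Q,v)=\E[h(X)Q_X(v)(v-Y)]$. Then $\MC(Q)=\sup_h\sum_v|A_{h,v}|$ by \cref{def:mc}. Each integrand is bounded by $1$ in absolute value and the sum over $v$ is finite, so linearity of expectation gives $\sum_v\sigma_vA_{h,v}=\E\bigl[h(X)\sum_v\sigma_vQ_X(v)(v-Y)\bigr]$.

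There is no real obstacle here. The only points needing care are the convention for the sign of zero entries and the interchange of $\sum_v$ with $\E$, which holds because there are finitely many terms, each integrable.
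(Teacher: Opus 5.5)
Your proposal is correct and takes the same approach as the paper, which also fixes $h$, maximizes coordinatewise via $\max_{\sigma_v\in\{-1,+1\}}\sigma_vA_{h,v}=|A_{h,v}|$, and then takes the supremum over $h$. Your use of linearity of expectation over the finitely many values is exactly what the ``in particular'' claim needs; the paper leaves that step implicit.
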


\begin{proof}
For fixed $h$, maximize coordinatewise:
$\max_{\sigma_v\in\{-1,+1\}}\sigma_vA_{h,v}=|A_{h,v}|$.
\end{proof}

\subsection{Oracles and hybrid online learning}
\label{sec:prelims-oracles}

\begin{definition}[Weighted ERM oracle]
\label{def:erm}
A \emph{weighted ERM oracle} for a class $\cG\subseteq\{-1,+1\}^{\cX}$
receives a finite list of pairs $(x_i,u_i)\in\cX\times\R$ and returns some
$g\in\cG$ with $\sum_iu_ig(x_i)=\max_{g'\in\cG}\sum_iu_ig'(x_i)$.  Each call
counts as one oracle call, and returned hypotheses can be stored and
evaluated at any input.
For $\xi\ge0$, a \emph{$\xi$-approximate} weighted ERM oracle may instead
return any $g\in\cG$ with
$\sum_iu_ig(x_i)\ge\max_{g'\in\cG}\sum_iu_ig'(x_i)-\xi$.
In both cases, we assume that the answer depends only on the input list
and on randomness independent of everything else.
\end{definition}

Since
$\sum_iu_ig(x_i)=\sum_i|u_i|-2\sum_i|u_i|\mathbf 1\{g(x_i)\ne\sgn(u_i)\}$,
this is ordinary ERM for weighted classification with labels $\sgn(u_i)$
and nonnegative weights $|u_i|$.  An ERM oracle for
$\Hpm$ is implemented by two calls to an ERM oracle for $\cH$, and the same
holds for $\xi$-approximate oracles.

\begin{definition}[Hybrid online learning]
\label{def:hybrid}
A \emph{hybrid online learner} for a class $\cG\subseteq\{-1,+1\}^{\cX}$
interacts with an environment for $T$ rounds.  Before round $t$ it holds a
distribution $\pi_t$ over $\cG$ that is determined by its observations in
earlier rounds and its earlier internal randomness.  In round $t$:
\begin{enumerate}[leftmargin=2em]
\item an input $X_t\sim\cD_X$ is drawn independently of everything before;
\item the environment chooses a feedback value $Z_t\in[-1,1]$ as a function
  of $X_t$, everything that happened before round $t$, and additional
  randomness that is independent of the learner's round-$t$ draw;
\item the learner draws $g_t\sim\pi_t$ using fresh internal randomness;
\item the learner observes $(X_t,Z_t)$.
\end{enumerate}
The \emph{mixed prediction} of the learner is
$\bh_t(x)=\E_{g\sim\pi_t}[g(x)]\in[-1,1]$.
\end{definition}

The additional randomness in step~2 may include independent draws from
$\pi_t$ requested by an algorithm that uses the learner; our algorithms use
such draws to estimate mixed predictions.

\paragraph{Gaussian follow-the-perturbed-leader.}
The hybrid learner we use is the Gaussian follow-the-perturbed-leader
algorithm of \citet{buzaglo2026oracle}, which builds on
\citet{kalai2005efficient}.  Fix a perturbation scale $\gamma>0$.  On round
$t$, the learner draws independent standard Gaussians $\gs_{t,1},\ldots,
\gs_{t,t-1}$ and returns
\begin{equation}
  \label{eq:ftpl}
  g_t\in\argmax_{g\in\cG}\ \sum_{s<t}\bigl(Z_s+\gamma\,\gs_{t,s}\bigr)\,g(X_s),
\end{equation}
computed by one call to a weighted ERM oracle for $\cG$ on the $t-1$ past
inputs.  With a $\xi$-approximate oracle, $g_t$ is instead any hypothesis
whose perturbed score is within $\xi$ of the maximum.  The distribution $\pi_t$ is the law of $g_t$; it is determined by
the past observations $(X_s,Z_s)_{s<t}$, and an independent additional draw
from $\pi_t$ repeats~\eqref{eq:ftpl} with fresh Gaussians at the cost of one
more ERM call.  \Citet[Theorem~2]{buzaglo2026oracle} show that this learner
has \emph{expected} regret $O(\sqrt{Td\log(1+T/d)}\log^2T)$ against
adaptively chosen losses.  Since our algorithms run $O(1/\varepsilon)$
learners at once and take a union bound over them, we need the regret bound
to hold with failure probability $\zeta$ at a cost logarithmic in
$1/\zeta$, not the factor $1/\zeta$ that Markov's inequality gives.  \Cref{thm:hybrid} shows that the same algorithm achieves this, with the
regret bound, for an absolute constant $C_0$,
\[
  \Reg(T,d_{\cG},\zeta)
  =C_0\sqrt{T\bigl(d_{\cG}\log(eT)+\log(1/\zeta)\bigr)}\,\log^2(2T)
  =\wtO\Bigl(\sqrt{T\bigl(d_{\cG}+\log(1/\zeta)\bigr)}\Bigr).
\]

\begin{theorem}[High-probability hybrid regret]
\label{thm:hybrid}
Let $\cG\subseteq\{-1,+1\}^{\cX}$ have VC dimension at most $d_{\cG}$, where
$d_{\cG}\ge1$, and let $T\ge1$.  Run Gaussian follow-the-perturbed-leader
\eqref{eq:ftpl} with $\gamma=4\sqrt{\log(2T)}$.  For every input
distribution, every environment as in \cref{def:hybrid}, and every
$\zeta\in(0,1)$, with probability at least $1-\zeta$,
\[
  \sup_{g\in\cG}\sum_{t=1}^TZ_t\bigl(g(X_t)-\bh_t(X_t)\bigr)
  \le\Reg(T,d_{\cG},\zeta).
\]
Each draw from $\pi_t$, including
independent draws in addition to $g_t$, costs one call to a weighted ERM
oracle for $\cG$.  The regret bound also holds, with a larger absolute
constant $C_0$, when every call in~\eqref{eq:ftpl}, including those for the
additional draws, is answered by a $\xi$-approximate oracle with $\xi\le1$.
\end{theorem}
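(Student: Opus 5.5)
The proof splits the regret into a part that Buzaglo et al.'s expected-regret analysis controls, plus martingale fluctuation terms that Freedman's inequality handles.

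\textbf{Step 1: decomposition.} Write $\widehat L_t(g)=\sum_{s<t}Z_s g(X_s)$ for the cumulative reward before round $t$. The mixed prediction $\bh_t$ is the mean of the FTPL choice under fresh Gaussians. The regret against a fixed $g$ splits as
\[
\sum_t Z_t\bigl(g(X_t)-\bh_t(X_t)\bigr)=\Bigl[\sum_t Z_t g(X_t)-\sum_t Z_t\tilde g_{t+1}(X_t)\Bigr]+\sum_t Z_t\bigl(\tilde g_{t+1}(X_t)-\bh_t(X_t)\bigr).
\]
Here $\tilde g_{t+1}$ is the "be-the-leader" choice that also uses round $t$; it is coupled to $g_t$ through the same Gaussians, extended by one fresh coordinate.

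\textbf{Step 2: the first bracket.} This is the usual be-the-perturbed-leader term. It is bounded deterministically, up to the Gaussian perturbation terms, by $\gamma\,\E\sup_{g}\sum_s \gs_s g(X_s)$ plus the corresponding fluctuation. By Sudakov--Dudley / Massart with the VC bound on the projection of $\cG$ onto $X_1,\dots,X_T$, the supremum of this Gaussian process is $O(\sqrt{Td_{\cG}\log(eT)})$. Gaussian concentration for Lipschitz functionals (the supremum is $\sqrt T$-Lipschitz) makes it hold with probability $1-\zeta/3$ at an extra cost of $\sqrt{T\log(1/\zeta)}$. Multiplied by $\gamma=O(\sqrt{\log T})$, this stays within $\Reg$.

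\textbf{Step 3: the stability term.} This is the main obstacle. Buzaglo et al. bound, in expectation, the one-step stability $\E[Z_t(\tilde g_{t+1}(X_t)-g_t(X_t))\mid\mathcal F_{t-1}]$. Their bound uses the fact that $X_t$ is a fresh i.i.d.\ draw independent of the past: swapping $X_t$ with a ghost sample and using the Gaussian density ratio gives a per-round bound $c_t$ whose sum is $O(\sqrt{Td_{\cG}\log T}\log^2T)$.

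I would first check that their per-round bound holds for each $\mathcal F_{t-1}$-conditional expectation and not only on average. Given that, write the stability term as $\sum_t c_t+\sum_t D_t$, where $D_t$ is a martingale difference bounded by $2$, with conditional variance $\lesssim\E[|\tilde g_{t+1}(X_t)-\bh_t(X_t)|\mid\mathcal F_{t-1}]$. Freedman's inequality then gives $\sum_t D_t\le O\bigl(\sqrt{V\log(1/\zeta)}+\log(1/\zeta)\bigr)$ with $V\le 2T$, which is within $\Reg$.

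The comparator $g$ is taken as a supremum, but that supremum is absorbed deterministically inside the leader argument of Step 2, so no union bound over $\cG$ is needed. A union bound over the three events then gives probability $1-\zeta$.

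\textbf{Step 4: extra draws and approximate oracles.} The extra independent draws do not affect $\pi_t$: by \cref{def:hybrid} they are part of the environment's independent randomness, so the same conditioning applies. For a $\xi$-approximate oracle with $\xi\le1$, each leader comparison loses at most $\xi$ per round. This is absorbed through the stability computation, where perturbing scores by $1$ costs only a constant factor in the Gaussian density ratio because $\gamma\ge1$. The result is a larger constant $C_0$.
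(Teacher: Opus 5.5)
\paragraph{The per-history stability bound in Step 3 is false.} Your Step 3 rests on a premise you flag as something to check: that the per-round stability bound of \citet{buzaglo2026oracle} holds for every $\cF_{t-1}$-conditional expectation. It does not. Their key estimate (\cref{lem:pop-dis}) is a tail bound taken jointly over the past inputs $X_1,\ldots,X_{t-1}$ and the perturbation. Your $c_t$, by contrast, is controlled by the mass of the disagreement region of the nearly leading hypotheses, averaged over the perturbation with the history fixed. That mass can be of constant order on atypical histories.

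For example, take the following setup:
\begin{itemize}
\item $\cG=\{-1,+1\}^{[d]}$ with uniform inputs;
\item $Z_s=-1$ on every round;
\item an oracle that breaks ties by returning $+1$ at every input it was not given;
\item the history in which all past inputs coincide.
\end{itemize}
Whenever $X_t$ is a new point, $g_t(X_t)=1$ while $\tilde g_{t+1}(X_t)=\sgn(-1+\gamma\gs_{t,t})$. The contribution of the old point is nonnegative. Hence $c_t\ge2\Pr\{\gs\le1/\gamma\}(d-1)/d\ge(d-1)/d$ on every round.

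So $\sum_tc_t$ admits no bound valid for every history. It can only be bounded with high probability over the inputs. Doing that by Markov's inequality on $\E\sum_tc_t$ costs exactly the factor $1/\zeta$ that the theorem is meant to avoid.

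\paragraph{What the paper does instead.} The missing idea is the bridge built in \cref{lem:cond-dis}.
\begin{itemize}
\item Fix $n<T$ and a radius $r$ in a finite grid $\Gamma$; monotonicity in the radius and the Gaussian tail cover the remaining radii.
\item Call the history and perturbation bad if the near leaders disagree on $k\le n/2$ observed inputs while their disagreement region has mass above a VC-type threshold $\mathsf s_k$. By \cref{lem:pop-dis}, this joint event has probability at most $Te^{-\mathsf u}$.
\item Markov's inequality, applied to its conditional probability given $\cF_n$, shows that this conditional probability exceeds $T^{-4}$ only on histories of total probability at most $T^5e^{-\mathsf u}$.
\item A union bound over the pairs $(n,r)$, with $\mathsf u=\log(T^6|\Gamma|/\zeta')$, gives an event of probability $1-\zeta'$ on which the conditional disagreement mass is bounded at every round simultaneously. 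The empirical disagreement count is handled by \cref{lem:emp-dis}, which does hold for every history.
\item On that event the conditional means sum to at most a deterministic $\bar A=\wtO(\sqrt{Td}+d+\log(1/\zeta))$. Only then does a martingale inequality pass to realized values.
\end{itemize}

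\paragraph{Step 2 also omits a stability term.} With independent Gaussians in each round, the be-the-leader inequality does not telescope pathwise. Even with a single shared Gaussian sequence, it leaves the term $\gamma\sum_t\gs_t\tilde g_{t+1}(X_t)$ in addition to $\gamma\sup_g|\sum_t\gs_tg(X_t)|$. This term does not average out, because $\tilde g_{t+1}$ depends on $\gs_t$. It is itself a stability quantity and needs the same history-conditional control.

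The paper has no separate Gaussian-supremum term. Because the perturbation has mean zero, the potential $\Phi_n=\E_{\gs}\max_h\Psi_{n,\gs}(h)$ satisfies $\Phi_T\ge-\cL_T(h^\circ)$. Each increment of the potential plus the learner's loss is bounded by a stability integral (\cref{lem:binary-stability}), so the whole regret reduces to stability terms.
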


\paragraph{Proof sketch.}
The proof is in \cref{app:hybrid}.  It follows the expected-regret analysis
of \citet{buzaglo2026oracle} up to the point where they take expectations.
With the losses $(1-Z_t)/2$ and $(1+Z_t)/2$ for the two labels, the
correlation regret above is twice the regret of a binary classification
problem.  A potential argument bounds this regret, pathwise, by a sum of
stability terms: the stability term of round $t$ is, roughly, the probability
over the perturbations that hypotheses whose perturbed score is nearly
maximal disagree at the fresh input $X_t$.  \citet{buzaglo2026oracle} show
that the population mass of this disagreement region has an exponential tail
over the joint randomness of the inputs and the perturbations.  We convert
this joint tail bound by Markov's inequality into a bound that holds,
conditional on the realized history, for most histories, and sum the
resulting conditional means over rounds.  Finally, Freedman's inequality
passes from the conditional means of the stability terms to their realized
values; because the conditional variance of each term is proportional to its
conditional mean, this step contributes only a lower-order term.  Since the
regret is measured against the mixed predictions $\bh_t$, the learner's own
draws $g_t$ do not enter the analysis.

We use \cref{thm:hybrid} for $\cG=\cH$ in
\cref{sec:reduction,sec:batch,sec:det} and for $\cG=\Hpm$ in
\cref{sec:swap-poly}.  The algorithm does not depend on $\zeta$, so for each $\zeta$ a single run
satisfies the bound with probability at least $1-\zeta$.  For a finite class, exponential
weights over $\cG$ satisfy the bound of \cref{thm:hybrid} pathwise with
$d_{\cG}$ replaced by $\log|\cG|$, but in time linear in $|\cG|$.

\subsection{Uniform convergence and a martingale lemma}
\label{sec:prelims-tools}

We use three standard facts about a class $\cG$ of VC dimension $d_{\cG}\ge1$.
First, for every distribution $\cD_X$ and $\rho\in(0,1/2)$, there is a
\emph{$\rho$-cover} $\cG_\rho\subseteq\cG$ such that every $g\in\cG$ has some
$g'\in\cG_\rho$ with $\cD_X(g\ne g')\le\rho$, and
$\log|\cG_\rho|\le Cd_{\cG}\log(e/\rho)$ \citep{haussler1995sphere}.  Second,
for a class of sets whose growth function on $n$ points is at most
$(en)^{k}$, the empirical frequencies of $n$ independent draws are uniformly
within $C\sqrt{n(k\log(en)+\log(1/\zeta))}$ of $n$ times the probabilities,
with probability at least $1-\zeta$ \citep{vapnik1971uniform,sauer1972density}.
Third, for a class of sets of VC dimension $d'$, a sample of size
$O((d'\log(1/\rho)+\log(1/\zeta))/\rho)$ is, with probability at least
$1-\zeta$, a \emph{$\rho$-net}: every set in the class of mass greater than
$\rho$ contains a sample point \citep{haussler1987epsilon}.  We apply this to
the disagreement sets $\{x:g(x)\ne g'(x)\}$ with $g,g'\in\cG$, whose VC
dimension is $O(d_{\cG})$, and to the open order intervals of
\cref{rem:order}, whose VC dimension is at most two.

A sequence of maps $z_t:\cX\times[0,1]\to\R$ is \emph{predictable} with
respect to independent examples $(X_1,Y_1),(X_2,Y_2),\ldots$ if each $z_t$ is
determined by $(X_s,Y_s)_{s<t}$ and by randomness independent of
$(X_s,Y_s)_{s\ge t}$.  For predictable maps we write
$\E_\cD[h(X)z_t(X,Y)]$ for the expectation over a fresh example with $z_t$
held fixed.

\begin{lemma}[Predictable multipliers]
\label{lem:martingale}
Let $(X_t,Y_t)_{t\le T}$ be independent draws from $\cD$ and let
$\zeta\in(0,1)$.
\begin{enumerate}[label=(\alph*),leftmargin=2em]
\item Let $\cI$ be a finite index set and, for each $i\in\cI$, let
  $(z_{t,i})_{t\le T}$ be predictable maps into $[-1,1]$.  With probability
  at least $1-\zeta$,
  \[
    \sup_{h\in\cH,\ i\in\cI}
    \Bigl|\sum_{t=1}^T\bigl(h(X_t)z_{t,i}(X_t,Y_t)
       -\E_\cD[h(X)z_{t,i}(X,Y)]\bigr)\Bigr|
    \le C\sqrt{T\bigl(d\log(eT)+\log(2|\cI|/\zeta)\bigr)}.
  \]
\item Let $(z_{t,v})_{t\le T,v\in\Lambda}$ be predictable maps with
  $\sum_v|z_{t,v}(x,y)|\le1$ for all $t,x,y$.  With probability at least
  $1-\zeta$,
  \begin{multline*}
    \sup_{(h_v)_{v\in\Lambda}\in(\Hpm)^{\Lambda}}
    \Bigl|\sum_{t=1}^T\sum_{v\in\Lambda}\bigl(h_v(X_t)z_{t,v}(X_t,Y_t)
       -\E_\cD[h_v(X)z_{t,v}(X,Y)]\bigr)\Bigr|\\
    \le C\sqrt{T\bigl(md\log(emT)+\log(1/\zeta)\bigr)}.
  \end{multline*}
\end{enumerate}
\end{lemma}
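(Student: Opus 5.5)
The plan is to reduce both parts to a uniform martingale deviation bound over a finite set, obtained by a cover, plus a symmetrization-free treatment of the sup over $\cH$. The difficulty is that $z_{t,i}$ depends on the past, so classical uniform convergence does not apply directly. For a \emph{fixed} $h$ and $i$, however, the increments
\[
  D_t = h(X_t)z_{t,i}(X_t,Y_t)-\E_\cD[h(X)z_{t,i}(X,Y)]
\]
form a martingale difference sequence bounded by $2$. Azuma--Hoeffding then gives $|\sum_t D_t|\le C\sqrt{T\log(1/\zeta')}$. The work is in passing to the supremum over $h\in\cH$.

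For part~(a), I will use a ghost-sample / data-dependent cover argument rather than a distribution cover. The reason is that a $\rho$-cover with respect to $\cD_X$ controls $\E|h-h'|$ but not the empirical disagreement on $X_1,\dots,X_T$. Concretely, I would use sequential symmetrization. Since the $X_t$ are i.i.d., though, a simpler route exists:
\begin{enumerate}[leftmargin=2em]
\item Draw a ghost sample $X'_1,\dots,X'_T$, so there are $2T$ points in total.
\item Take a $\rho$-cover $\cH_\rho$ with respect to the mixture $\cD_X$, with $\rho=1/T$.
\item By the VC uniform convergence fact for the disagreement sets $\{h\ne h'\}$ (VC dimension $O(d)$), every $h$ and its cover element $h'$ with $\cD_X(h\ne h')\le\rho$ disagree on at most $\rho T+C\sqrt{T(d\log(eT)+\log(1/\zeta))}$ of the $X_t$.
\end{enumerate}
Since $|z|\le1$, replacing $h$ by $h'$ changes the empirical sum by at most twice that count. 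It changes the expectation sum by at most $2\rho T$. Both are within the target bound. A union bound of Azuma over $\cH_\rho\times\cI$, which has $\log|\cH_\rho\times\cI|\le Cd\log(eT)+\log|\cI|$, finishes part~(a). Note that uniform convergence of disagreement sets is legitimate here because those sets do not involve the predictable $z$; only the $h$-part is approximated. This neatly avoids conditioning issues.

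Part~(b) follows the same template with the tuple class $(\Hpm)^\Lambda$, using the weighted increment $D_t=\sum_v(h_v z_{t,v}-\E[\cdot])$. This increment is still bounded by $2$ because $\sum_v|z_{t,v}|\le1$, so Azuma applies for each fixed tuple. Covering gives $|\cH_\rho|^m$ tuples, whose log is $Cmd\log(eT)$. For the approximation error, replacing each $h_v$ by its cover element costs at most
\[
  \sum_t\sum_v|z_{t,v}(X_t,Y_t)|\,\mathbf 1\{h_v(X_t)\ne h'_v(X_t)\}\le\sum_t\max_v\mathbf 1\{h_v(X_t)\ne h'_v(X_t)\}.
\]
This is the size of the union of $m$ disagreement sets. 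The class of such unions has growth function at most $(eT)^{Cmd}$, so uniform convergence bounds the count by $m\rho T+C\sqrt{T(md\log(emT)+\log(1/\zeta))}$. Taking $\rho=1/(mT)$ keeps $\log(1/\rho)=\log(mT)$ inside the stated bound.

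The main obstacle is making the cover step rigorous: ensuring the cover is fixed \emph{before} the data are seen, so the Azuma union bound is valid. The distribution cover $\cH_\rho$ depends only on $\cD_X$, which is fixed. The remaining care is that the empirical-disagreement bound must hold simultaneously for all pairs, which the VC fact provides.
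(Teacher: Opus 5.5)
Your proof is correct and is essentially the paper's: Azuma with a union bound over a fixed $1/T$-cover of $\cH$ with respect to $\cD_X$, times $\cI$ (for (b), a $1/(mT)$-cover of $\Hpm$ and all $m$-tuples from it), followed by uniform convergence for the disagreement sets, or for unions of $m$ of them, to pass from cover elements to arbitrary auditors. The ghost sample in your step~1 is never used and can be dropped. In (b) the replacement cost carries a factor $2$, because $|h_v-h'_v|=2\cdot\mathbf 1\{h_v\ne h'_v\}$, and it also picks up the population term $2T\cD_X(U)$; both affect only constants.
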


The proof (\cref{app:batch}) applies Azuma's inequality to the auditors in a
fixed cover of $\cH$, which is used only in the analysis, and bounds the
difference between an auditor and its representative by ordinary uniform
convergence for disagreement sets.  This avoids applying a growth-function
bound to the data-dependent multipliers.

\subsection{Remarks on the assumptions}
\label{sec:prelims-remarks}

\begin{remark}[Order on inputs]
\label{rem:order}
Our deterministic predictor (\cref{sec:det}) uses a computable total order
on $\cX$ whose order intervals are measurable, together with computable
equality tests; the lexicographic order on $\R^n$ suffices.  The order is
used only to partition the input space into cells of small
probability mass.  The randomized guarantees do not use it.
\end{remark}

\begin{remark}[Values of outcomes]
Outcomes take values in $[0,1]$.  The lower bounds of
\cref{thm:swap,thm:erm-lb,thm:audit} use binary outcomes, which makes them stronger.
\end{remark}

\section{Multicalibration}
\label{sec:mc}

This section gives an algorithm that learns a multicalibrated randomized
predictor with a weighted ERM oracle, proves its sample complexity, and
shows that this sample complexity is optimal for such algorithms.
\Cref{sec:signs} introduces signed tests, the root rule for choosing
forecasts, and the cost of measuring residuals at the forecast mean, which together let $O(K)$
tests replace the $2^{K+1}$ sign patterns.  \Cref{sec:reduction} reduces online
multicalibration with independent inputs to hybrid online learning
(\cref{thm:online}).  \Cref{sec:batch} converts this into a batch guarantee
(\cref{thm:batch,cor:batch-rates}), which proves the randomized guarantee of
\cref{thm:main}(a).  \Cref{sec:erm-lb} shows that this sample complexity is
optimal, up to logarithmic factors, among learners that access the class
only through polynomially many ERM calls, when $d$ is at least about
$1/\varepsilon$ (\cref{thm:erm-lb}); this gives \cref{thm:main}(b)
(\cref{cor:erm-lb}).  \Cref{sec:det} later makes the output deterministic.
Proofs are in \cref{app:mc,app:erm-lb}.

\subsection{Signed tests and the root rule}
\label{sec:signs}

This subsection develops the three ingredients that let a few signed tests
replace the $2^{K+1}$ sign patterns of \cref{lem:signed} without amplifying
the discretization error: test families with small distortion
(\cref{sec:signs-family}), the root rule, which chooses each forecast so
that its coefficient-weighted residual, measured at the forecast's mean, is
nonpositive for every outcome (\cref{sec:signs-root}), and virtual residuals
(\cref{sec:signs-round}).

\subsubsection{Families of signed tests}
\label{sec:signs-family}

\begin{definition}[Distortion]
\label{def:distortion}
A finite family $\cS\subseteq[-1,1]^{\Lambda}$ has \emph{distortion}
$\kappa\ge1$ if
\[
  \|b\|_1\le\kappa\max_{\sigma\in\cS}\langle\sigma,b\rangle
  \qquad\text{for every }b\in\R^{\Lambda}.
\]
\end{definition}

The definition, and the construction of $\Shad$ below, apply verbatim to
any finite index set of size $m$ in place of $\Lambda$; \cref{sec:audit}
uses them for the set of values of a predictor.  By \cref{lem:signed}, the
cube $\Scube=\{-1,+1\}^{\Lambda}$ has
$|\Scube|=2^m$ and distortion $1$.  Let $\nu=2^{\lceil\log_2m\rceil}$, so
$m\le\nu<2m$, and let $W_\nu\in\{-1,+1\}^{\nu\times\nu}$ be the Sylvester
Hadamard matrix \citep{sylvester1867thoughts}, defined by $W_1=(1)$ and
$W_{2k}=\bigl(\begin{smallmatrix}W_k&W_k\\W_k&-W_k\end{smallmatrix}\bigr)$.
Its columns are orthogonal.  Let $M\in\{-1,+1\}^{\nu\times m}$ consist of its
first $m$ columns, so that $M^{\top}M=\nu I_m$, and let $\Shad$ be the
list, with repetitions, of the $\nu$ rows of $M$ and their negations, so
that $|\Shad|=2\nu<4m$.

\begin{lemma}[Hadamard tests]
\label{lem:hadamard}
The family $\Shad$ has distortion $\sqrt m$.
\end{lemma}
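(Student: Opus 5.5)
The plan is to prove, for every $b\in\R^{\Lambda}$, the chain
\[
  \max_{\sigma\in\Shad}\langle\sigma,b\rangle\ \ge\ \|b\|_2\ \ge\ \|b\|_1/\sqrt m .
\]
The second inequality is Cauchy--Schwarz, $\|b\|_1=\langle \mathrm{sgn}(b),b\rangle\le\sqrt m\,\|b\|_2$. Multiplying through by $\sqrt m$ then gives exactly the distortion bound of \cref{def:distortion} with $\kappa=\sqrt m$. If $b=0$ there is nothing to show, since both sides vanish; assume $b\neq0$ from here on.

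For the first inequality I will use orthogonality of the columns of $M$. Let $w_1,\ldots,w_\nu\in\{-1,+1\}^{\Lambda}$ be the rows of $M$. Then
\[
  \sum_{i=1}^{\nu}\langle w_i,b\rangle^2=\|Mb\|_2^2=b^{\top}M^{\top}Mb=\nu\|b\|_2^2 ,
\]
so the average of $\langle w_i,b\rangle^2$ over the $\nu$ rows equals $\|b\|_2^2$. Hence some row satisfies $\langle w_i,b\rangle^2\ge\|b\|_2^2$, that is, $|\langle w_i,b\rangle|\ge\|b\|_2$. Because $\Shad$ contains both $w_i$ and $-w_i$, one of these two has inner product with $b$ equal to $|\langle w_i,b\rangle|\ge\|b\|_2$, which is the first inequality.

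The only point that needs checking is $M^{\top}M=\nu I_m$, since $M$ keeps only the first $m$ of the $\nu$ columns of $W_\nu$. Entry $(j,k)$ of $M^{\top}M$ is the inner product of columns $j$ and $k$ of $W_\nu$. The Sylvester recursion gives $W_\nu^{\top}W_\nu=\nu I_\nu$ by induction, since
\[
  W_{2k}^{\top}W_{2k}=\begin{pmatrix}2W_k^{\top}W_k&0\\0&2W_k^{\top}W_k\end{pmatrix}.
\]
Restricting to $m$ columns preserves these pairwise inner products, so $M^{\top}M=\nu I_m$. This step is routine, and I expect no real obstacle in the argument.
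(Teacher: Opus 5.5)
Your proof is correct and follows essentially the same route as the paper: column orthogonality ($M^{\top}M=\nu I_m$) gives $\|Mb\|_2^2=\nu\|b\|_2^2$, so some row $w_i$ has $|\langle w_i,b\rangle|\ge\|b\|_2$; the negated rows remove the absolute value; and Cauchy--Schwarz gives $\|b\|_1\le\sqrt m\|b\|_2$. Your induction verifying $W_\nu^{\top}W_\nu=\nu I_\nu$ from the Sylvester recursion fills in the one fact the paper simply asserts.
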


\begin{proof}
Column orthogonality gives
$\|b\|_2^2=\nu^{-1}\|Mb\|_2^2\le\max_{i\le\nu}|(Mb)_i|^2$.  Hence
$\|b\|_1\le\sqrt m\|b\|_2\le\sqrt m\max_i|(Mb)_i|$, and the negated rows turn
the maximum absolute value into a maximum of $\langle\sigma,b\rangle$ over
$\sigma\in\Shad$.
\end{proof}

No fixed family of polynomially many tests does substantially better.

\begin{proposition}[Distortion of small families]
\label{prop:distortion-lb}
If a family $\cS\subseteq[-1,1]^{\Lambda}$ with $N$ elements satisfies
$\|b\|_1\le\kappa\max_{\sigma\in\cS}|\langle\sigma,b\rangle|$ for all
$b\in\R^{\Lambda}$, then $\kappa\ge\sqrt{m/(2\log(4N))}$.  In particular,
every family with $N$ elements and distortion $\kappa$ satisfies
$\kappa\ge\sqrt{m/(2\log(4N))}$.
\end{proposition}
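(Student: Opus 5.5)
The plan is a probabilistic argument: a uniformly random sign vector $b\in\{-1,+1\}^{\Lambda}$ has $\|b\|_1=m$, while each fixed test correlates with it only by about $\sqrt{m\log N}$.

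\textbf{Step 1 (tail bound per test).} Let $b$ be uniform on $\{-1,+1\}^{\Lambda}$. Fix $\sigma\in\cS$. Since $\sigma\in[-1,1]^{\Lambda}$, we have $\|\sigma\|_2^2\le m$. Hoeffding's inequality gives
\[
  \Pr\bigl(|\langle\sigma,b\rangle|\ge s\bigr)\le 2\exp\bigl(-s^2/(2m)\bigr).
\]

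\textbf{Step 2 (union bound over the family).} A union bound over the $N$ elements of $\cS$ gives
\[
  \Pr\Bigl(\max_{\sigma\in\cS}|\langle\sigma,b\rangle|\ge s\Bigr)\le 2N\exp\bigl(-s^2/(2m)\bigr).
\]
Take $s=\sqrt{2m\log(4N)}$. The right-hand side is then $2N/(4N)=1/2<1$. So some sign vector $b$ satisfies $\max_{\sigma}|\langle\sigma,b\rangle|<s$.

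\textbf{Step 3 (conclude).} For this $b$, the hypothesis gives
\[
  m=\|b\|_1\le\kappa\max_\sigma|\langle\sigma,b\rangle|<\kappa\sqrt{2m\log(4N)},
\]
which rearranges to $\kappa>\sqrt{m/(2\log(4N))}$. The edge case where the maximum is $0$ is consistent with this: the hypothesis would then force $m\le0$, which is impossible, so the inequality still holds.

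\textbf{Step 4 (the "in particular" clause).} Distortion is defined with $\max_\sigma\langle\sigma,b\rangle$, and this is at most $\max_\sigma|\langle\sigma,b\rangle|$. So any family with distortion $\kappa$ satisfies the hypothesis of the first claim, and the same bound on $\kappa$ follows.

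There is no real obstacle here. The only care needed is to choose the constant inside $\log(4N)$ so that the union-bound probability falls strictly below one.
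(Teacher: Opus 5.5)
Your proposal is correct and follows the paper's proof: a uniform random sign vector $b$, Hoeffding's inequality for each test, a union bound with $u=\sqrt{2m\log(4N)}$ leaving failure probability at most $1/2$, and then $m\le\kappa u$. Your explicit handling of the ``in particular'' clause, via $\max_\sigma\langle\sigma,b\rangle\le\max_\sigma|\langle\sigma,b\rangle|$ and $\kappa\ge1$, is exactly the step the paper leaves implicit.
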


\begin{proof}
Let $b$ have independent uniform $\pm1$ coordinates, so $\|b\|_1=m$.  For a
fixed $\sigma\in[-1,1]^{\Lambda}$, Hoeffding's inequality
\citep{hoeffding1963probability} gives
$\Pr\{|\langle\sigma,b\rangle|>u\}\le2e^{-u^2/(2m)}$.  For
$u=\sqrt{2m\log(4N)}$, a union bound over $\cS$ shows that with probability at
least $1/2$ every $|\langle\sigma,b\rangle|$ is at most $u$.  For such a $b$,
$m\le\kappa u$, which is the claim.
\end{proof}

\Cref{prop:distortion-lb} limits only analyses that bound the $\ell_1$ norm
through a fixed family of tests; it is not a lower bound for multicalibration
algorithms.  A lower bound for all learners that access the class through
polynomially many ERM calls is \cref{thm:erm-lb}.

\subsubsection{The root rule}
\label{sec:signs-root}

The root rule chooses, given a coefficient for each grid value, a
distribution $P$ with adjacent support and mean $\mu$ such that
$(\mu-y)\sum_vP(v)a_v\le0$ for every outcome $y\in[0,1]$
(\cref{lem:root}); \cref{sec:det} uses a version restricted to a set of
consecutive grid values (\cref{lem:root-sub}).  Rules of this
root-finding kind go back to the randomized calibration algorithm of
\citet{foster1998asymptotic}.

\begin{lemma}[Root rule]
\label{lem:root}
Let $a\in[-1,1]^{\Lambda}$.  One can compute, in $O(m)$ time, a distribution
$P=\Root(a)\in\Delta(\Lambda)$ with adjacent support such that, with
$\mu=\sum_vP(v)\,v$,
\[
  (\mu-y)\sum_{v\in\Lambda}P(v)\,a_v\le0
  \qquad\text{for every }y\in[0,1].
\]
\end{lemma}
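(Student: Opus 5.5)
Write the grid values as $v_j=j/K$ for $j=0,\ldots,K$, with coefficients $a_j=a_{v_j}$. The plan is a discrete intermediate-value argument with three cases.

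\textbf{Case $a_0\ge0$.} Take $P=\delta_0$, so $\mu=0$. Then $(\mu-y)\,a_0=-y\,a_0\le0$ for every $y\in[0,1]$.

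\textbf{Case $a_K\le0$.} Take $P=\delta_1$, so $\mu=1$. Then $(1-y)\,a_K\le0$ for every $y\in[0,1]$.

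\textbf{Remaining case: $a_0<0<a_K$.} Scan from left to right for the first index $j$ with $a_{j+1}\ge0$. By minimality $a_j<0$, so a sign change occurs between $v_j$ and $v_{j+1}$.
\begin{itemize}
\item If $a_{j+1}=0$, take $P=\delta_{v_{j+1}}$. The product is then identically zero.
\item If $a_{j+1}>0$, put weight $q$ on $v_j$ and $1-q$ on $v_{j+1}$, with
\[
q=\frac{a_{j+1}}{a_{j+1}-a_j}\in(0,1).
\]
This choice gives $q\,a_j+(1-q)\,a_{j+1}=0$, so $\sum_vP(v)\,a_v=0$ and the product vanishes for every $y$.
\end{itemize}
In every case $P$ is supported on one grid value or on two adjacent ones, so it has adjacent support. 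The scan is a single pass, so the computation takes $O(m)$ time.

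There is no real obstacle here. The only subtlety is that in the two endpoint cases the weighted coefficient sum is not zero, and the sign condition instead comes from $\mu-y$ having a fixed sign. This happens because $\mu$ sits at an endpoint of $[0,1]$. One might worry that a single grid value cannot achieve the inequality for all $y$, since $v-y$ changes sign as $y$ varies. The proof avoids this by using a point mass only where the coefficient sum is zero, or where $\mu$ is an endpoint of the interval. The case where the sign change is interior uses a genuine two-point mixture, which is why the guarantee is stated with the residual measured at the mean $\mu$.
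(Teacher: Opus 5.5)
Your proof is correct and is essentially the paper's argument: the same three cases (point mass at $0$ if $a_0\ge0$, point mass at $1$ if the last coefficient is $\le0$, otherwise a two-point mixture on adjacent grid values at a sign change that zeroes $\sum_vP(v)a_v$). The only cosmetic difference is that you locate the sign change as $a_j<0\le a_{j+1}$ and treat $a_{j+1}=0$ as a separate point-mass case. The paper instead uses $a_v\le0<a_{v+1/K}$ with $\lambda\in(0,1]$, which absorbs that case.
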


\begin{proof}
If $a_0\ge0$, let $P$ be the point mass at $0$; then
$(0-y)a_0\le0$ since $y\ge0$.  Otherwise, if $a_1\le0$, let
$P$ be the point mass at $1$; then $(1-y)a_1\le0$.  Otherwise
$a_0<0<a_1$, so there are adjacent grid values $v,v+1/K$ in
$\Lambda$ with $a_v\le0<a_{v+1/K}$.  Choose $\lambda\in(0,1]$ with
$\lambda a_v+(1-\lambda)a_{v+1/K}=0$, and put mass $\lambda$ on $v$ and
$1-\lambda$ on $v+1/K$; the coefficient of $\mu-y$ is then zero.
\end{proof}

\subsubsection{Virtual residuals}
\label{sec:signs-round}

For a forecast $P$ with mean $\mu$ and an outcome $y$, we call
$P(v)(\mu-y)$ the \emph{virtual residual} at $v$ and $P(v)(v-y)$ the
\emph{actual residual}; an auditor with value $h\in\{-1,+1\}$ at the input
is scored by $h$ times the residual.  The online learners of our forecaster
see virtual residuals, because the root rule controls the sign of the
coefficient-weighted virtual residual, not of the actual one.  With actual
residuals, an uncontrolled term of up to $1/K$ per round would be multiplied by the distortion
$\sqrt m$, giving about $1/\sqrt K$; keeping this below $\varepsilon$
forces $K\approx\varepsilon^{-2}$, and hence $\varepsilon^{-4}$ samples
instead of $\varepsilon^{-3}$.  With virtual
residuals the discrepancy is paid once, outside the distortion, and the next
lemma bounds it by $1/(2K)$.

\begin{lemma}[Adjacent support]
\label{lem:adjacent}
If $P\in\Delta(\Lambda)$ has adjacent support and mean $\mu$, then
$\sum_vP(v)|v-\mu|\le1/(2K)$.  Consequently, for every $y\in[0,1]$ and
$h\in\{-1,+1\}$,
\[
  \sum_{v\in\Lambda}\bigl|hP(v)(v-y)-hP(v)(\mu-y)\bigr|\le\frac1{2K}.
\]
\end{lemma}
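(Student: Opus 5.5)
The plan is to treat the two support cases separately and then read off the consequence.

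If $P$ is a point mass at some $v$, then $\mu=v$ and the sum $\sum_vP(v)|v-\mu|$ vanishes. Otherwise $P$ puts mass $\lambda$ on $v$ and $1-\lambda$ on $v+1/K$, for some $\lambda\in(0,1)$. In that case $\mu=v+(1-\lambda)/K$, so $|v-\mu|=(1-\lambda)/K$ and $|v+1/K-\mu|=\lambda/K$. The sum therefore equals
\[
  \lambda\cdot\frac{1-\lambda}{K}+(1-\lambda)\cdot\frac{\lambda}{K}
  =\frac{2\lambda(1-\lambda)}{K}.
\]
Since $\lambda(1-\lambda)\le1/4$ by AM--GM, this is at most $1/(2K)$, with the maximum attained at $\lambda=1/2$.

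For the consequence, fix $y\in[0,1]$ and $h\in\{-1,+1\}$. Each summand simplifies as
\[
  \bigl|hP(v)(v-y)-hP(v)(\mu-y)\bigr|=|h|\,P(v)\,|v-\mu|=P(v)\,|v-\mu|,
\]
because the $y$ terms cancel and $|h|=1$. Summing over $v$ returns exactly the quantity bounded in the first part, which gives the claim.

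There is no real obstacle here. The only point needing care is the two-point computation: the mean sits at distance $(1-\lambda)/K$ from $v$ and $\lambda/K$ from $v+1/K$, so the weighted deviation is $2\lambda(1-\lambda)/K$ rather than something of order $1/K$.
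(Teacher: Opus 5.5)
Your proof is correct and follows the paper's argument: the same two-point computation $2\lambda(1-\lambda)/K\le 1/(2K)$, and the same use of $|h|=1$ for the consequence. Your explicit treatment of the point-mass case and of the cancellation of the $y$ terms just fills in details the paper leaves implicit.
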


\begin{proof}
For masses $\lambda$ and $1-\lambda$ on $v$ and $v+1/K$, the left side equals
$\lambda(1-\lambda)/K+(1-\lambda)\lambda/K=2\lambda(1-\lambda)/K\le1/(2K)$.
The second claim follows since $|h|=1$.
\end{proof}

\Cref{lem:adjacent} is applied to the cumulative online errors and, inside
an expectation, to population errors; because each auditor in either error
has absolute value one, it applies equally to ordinary and swap errors.

\subsection{Reduction to hybrid online learning}
\label{sec:reduction}

This subsection reduces online multicalibration with independent inputs to
hybrid online learning over $\cH$.  The reduction follows the structure of
\citet{noarov2026optimal}: exponential weights over signed tests, and a
root rule that makes the forecast's coefficient-weighted virtual residual
nonpositive.  In
place of their enumeration of the auditors, each test has its own hybrid
online learner.  \Cref{sec:reduction-alg} presents the algorithm,
\cref{sec:reduction-online} proves its online guarantee
(\cref{thm:online}), and \cref{sec:reduction-inst} instantiates it with the
Hadamard and cube families (\cref{cor:online-rates}).  The proof is in
\cref{app:reduction}.

\subsubsection{The algorithm}
\label{sec:reduction-alg}

\Cref{alg:signed} takes a family $\cS$ of signed tests and runs one hybrid
online learner $\cA_\sigma$ over $\cH$ for each test $\sigma\in\cS$, with
distributions $\pi_{t,\sigma}$ and mixed predictions $\bh_{t,\sigma}$.
Exponential weights $w_t\in\Delta(\cS)$ aggregate the tests into the
coefficient vector
\[
  A_t(x)=\sum_{\sigma\in\cS}w_t(\sigma)\,\bh_{t,\sigma}(x)\,\sigma
  \ \in[-1,1]^{\Lambda}.
\]
The algorithm estimates $A_t$ by sampling $L$ tests from $w_t$ and one
hypothesis from each sampled test's learner, and forecasts with the root rule
of \cref{lem:root}.  After the outcome is revealed, each learner receives the
virtual residual weighted by its test, and exponential weights are updated
using one fresh hypothesis from each learner.  The auxiliary hypotheses are
drawn before the round's example, so that the forecast $Q_t$ is a fixed map
of the input given the past; they are retained, since they define the
forecast.

\begin{algorithm}[t]
\caption{$\textsc{SignedTests}(\cS,K,T,\eta,L)$: signed-test forecaster}
\label{alg:signed}
\begin{algorithmic}[1]
\Require family $\cS\subseteq\{-1,+1\}^{\Lambda}$ of tests; grid size $K$;
  number of rounds $T$; learning rate $\eta$; number $L$ of auxiliary draws
\State for each $\sigma\in\cS$, start a Gaussian follow-the-perturbed-leader
  learner $\cA_\sigma$ over $\cH$ (\cref{thm:hybrid}) with horizon $T$; set
  $w_1(\sigma)=1/|\cS|$
\For{$t=1,\ldots,T$}
  \For{$i=1,\ldots,L$}
    \State draw $\sigma_t^{(i)}\sim w_t$ and
      $h_t^{(i)}\sim\pi_{t,\sigma_t^{(i)}}$
      \Comment{one ERM call}
  \EndFor
  \State define $\widehat A_t(x)=\frac1L\sum_{i=1}^Lh_t^{(i)}(x)\,
    \sigma_t^{(i)}$, \ $Q_t(x)=\Root\bigl(\widehat A_t(x)\bigr)$, \ $\mu_t(x)=\sum_vQ_t(x)(v)\,v$
  \State receive $X_t$, forecast $Q_t(X_t)$, and observe $Y_t$
  \For{each $\sigma\in\cS$}
    \State $Z_{t,\sigma}=\bigl(\mu_t(X_t)-Y_t\bigr)\sum_{v}
      \sigma_vQ_t(X_t)(v)$
    \State draw $h_{t,\sigma}\sim\pi_{t,\sigma}$ with fresh randomness and
      set $\hat s_{t,\sigma}=h_{t,\sigma}(X_t)Z_{t,\sigma}$
      \Comment{one ERM call}
    \State give $(X_t,Z_{t,\sigma})$ to $\cA_\sigma$
  \EndFor
  \State $w_{t+1}(\sigma)\propto w_t(\sigma)\exp(\eta\,\hat s_{t,\sigma})$
    for each $\sigma\in\cS$
\EndFor
\State \Return the retained draws $(\sigma_t^{(i)},h_t^{(i)})_{t\le T,i\le
  L}$, which define the maps $Q_1,\ldots,Q_T$
\end{algorithmic}
\end{algorithm}

Each feedback value satisfies $|Z_{t,\sigma}|\le1$, since
$|\mu_t-Y_t|\le1$ and $\sum_v|Q_t(X_t)(v)|=1$.  The feedback depends on the
auxiliary draws but not on the learner's round-$t$ draw $h_{t,\sigma}$, as
\cref{def:hybrid} requires.

\subsubsection{Online guarantee}
\label{sec:reduction-online}

The online guarantee bounds the cumulative error by the grid term plus the
distortion times the regret of the learners, of exponential weights, and of
the estimation step.

\begin{theorem}[Online guarantee]
\label{thm:online}
Let a family $\cS\subseteq\{-1,+1\}^{\Lambda}$ have $N$ elements and
distortion $\kappa$, and let $\delta\in(0,1]$.  Suppose the inputs $X_t$ are
independent draws from $\cD_X$ and each outcome $Y_t\in[0,1]$ is chosen as a
function of the past, of $X_t$, and of the auxiliary draws of round $t$, but
not of the draws $h_{t,\sigma}$.  Let $\alpha\in(0,1]$ be a target accuracy for the estimates $\widehat A_t$
of the coefficients $A_t$, and run
\cref{alg:signed} with $\eta=\min\{1,\sqrt{\log N/T}\}$ and
$L\ge2\alpha^{-2}\log(16mT/\delta)$.  Then, with probability at least
$1-\delta$,
\[
  \MC_T\le\frac{T}{2K}
  +\kappa\Bigl[\Reg\Bigl(T,d,\frac{\delta}{4N}\Bigr)
  +C\sqrt{T\log\frac{4N}{\delta}}+T\alpha\Bigr].
\]
The algorithm makes $N+L$ weighted ERM calls and $O(m(N+L))$ other
operations per round.
\end{theorem}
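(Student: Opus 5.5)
We first pass from actual to virtual residuals.  By \cref{lem:adjacent}, summed over rounds, replacing $Q_t(X_t)(v)(v-Y_t)$ by $Q_t(X_t)(v)(\mu_t(X_t)-Y_t)$ changes $\sum_v|\sum_t h(X_t)\cdot|$ by at most $T/(2K)$ for every $h$.  It therefore suffices to bound the virtual error
\[
  \sup_h\sum_v\Bigl|\sum_t h(X_t)b_{t,v}\Bigr|,
  \qquad b_{t,v}=Q_t(X_t)(v)\bigl(\mu_t(X_t)-Y_t\bigr).
\]
For fixed $h$, applying the distortion bound to the vector $\bigl(\sum_t h(X_t)b_{t,v}\bigr)_v$ bounds this by
\[
  \kappa\max_{\sigma\in\cS}\sum_t h(X_t)Z_{t,\sigma},
  \qquad\text{since } \langle\sigma,b_t\rangle=Z_{t,\sigma}.
\]
The supremum over $h$ and the maximum over $\sigma$ commute.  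So the task reduces to showing that, with probability at least $1-\delta$,
\[
  \max_\sigma\sup_h\sum_t h(X_t)Z_{t,\sigma}
  \le \Reg+C\sqrt{T\log(4N/\delta)}+T\alpha .
\]

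We chain three comparisons.

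\emph{(i) Hybrid learners.}  Each learner $\cA_\sigma$ sees independent inputs and feedback $Z_{t,\sigma}\in[-1,1]$.  This feedback depends on $X_t$, the past, and the auxiliary draws, but not on its round-$t$ draw, so \cref{def:hybrid} holds.  \Cref{thm:hybrid} with $\zeta=\delta/(4N)$ and a union bound over $\sigma$ give
\[
  \sup_h\sum_t h(X_t)Z_{t,\sigma}\le\sum_t\bh_{t,\sigma}(X_t)Z_{t,\sigma}+\Reg
\]
for all $\sigma$ simultaneously.

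\emph{(ii) Exponential weights.}  Exponential weights on the rewards $\hat s_{t,\sigma}\in[-1,1]$ give
\[
  \max_\sigma\sum_t\hat s_{t,\sigma}\le\sum_t\sum_\sigma w_t(\sigma)\hat s_{t,\sigma}+O\bigl(\sqrt{T\log N}\bigr)
\]
with $\eta=\min\{1,\sqrt{\log N/T}\}$.  Conditional on the past and on $(X_t,Y_t)$ and the auxiliary draws, $\hat s_{t,\sigma}$ has mean $\bh_{t,\sigma}(X_t)Z_{t,\sigma}$, because $h_{t,\sigma}$ is a fresh draw from $\pi_{t,\sigma}$.  Azuma's inequality on the $N$ sequences $\hat s_{t,\sigma}-\bh_{t,\sigma}Z_{t,\sigma}$ and on the weighted average, with a union bound at level $\delta/4$, replaces $\hat s$ by its conditional means on both sides.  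The cost is $C\sqrt{T\log(4N/\delta)}$.  This yields
\[
  \max_\sigma\sum_t\bh_{t,\sigma}Z_{t,\sigma}\le\sum_t\langle A_t(X_t),b_t\rangle+C\sqrt{T\log(4N/\delta)},
\]
using $\sum_\sigma w_t(\sigma)\bh_{t,\sigma}(X_t)Z_{t,\sigma}=\langle A_t(X_t),b_t\rangle$.

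\emph{(iii) Root rule and estimation.}  By \cref{lem:root} applied to $\widehat A_t(X_t)$,
\[
  \langle\widehat A_t(X_t),b_t\rangle=(\mu_t-Y_t)\sum_vQ_t(v)\widehat A_{t,v}\le0 .
\]
It remains to bound $\langle A_t-\widehat A_t,b_t\rangle\le\|A_t-\widehat A_t\|_\infty\|b_t\|_1\le\|A_t-\widehat A_t\|_\infty$.  The difficulty is that $X_t$ is drawn after the auxiliary draws, so we need $\|\widehat A_t(X_t)-A_t(X_t)\|_\infty\le\alpha$ at the realized input.  Here the order matters: the draws $(\sigma_t^{(i)},h_t^{(i)})$ are independent of $X_t$.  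So, conditionally on $X_t$ and the past, each coordinate $\widehat A_{t,v}(X_t)$ is an average of $L$ i.i.d.\ $[-1,1]$ variables with mean $A_{t,v}(X_t)$.  Hoeffding's inequality with $L\ge2\alpha^{-2}\log(16mT/\delta)$ and a union bound over $v\in\Lambda$ and $t\le T$ gives failure probability at most $\delta/4$, and hence the total term $T\alpha$.

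Summing the failure probabilities ($\delta/4$ each for the learners, Azuma, and Hoeffding) gives at most $\delta$.  The $\sqrt{T\log N}$ term is absorbed into $C\sqrt{T\log(4N/\delta)}$, and multiplying by $\kappa$ yields the claimed bound.  The operation count follows by inspection: there are $L$ auxiliary ERM calls and $N$ fresh ERM calls per round, and computing the $Z_{t,\sigma}$ and the root rule takes $O(m(N+L))$ time.

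The main obstacle is step (ii).  There we must justify that $\hat s_{t,\sigma}$ is conditionally unbiased even though $Y_t$ may depend on the auxiliary draws.  The fix is to condition on the $\sigma$-algebra that contains those draws and $(X_t,Y_t)$ but excludes the fresh $h_{t,\sigma}$.
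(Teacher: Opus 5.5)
Your proof is correct and follows the paper's argument essentially step for step: Hoeffding at the realized input, which is valid because the auxiliary draws precede $X_t$; the root-rule cancellation giving $\sum_\sigma w_t(\sigma)\bh_{t,\sigma}(X_t)Z_{t,\sigma}\le\alpha$; exponential weights plus Azuma over the fresh draws $h_{t,\sigma}$, conditioned on everything else; \cref{thm:hybrid} at level $\delta/(4N)$ per test; and finally the distortion and \cref{lem:adjacent}. The only differences are the order of presentation and your $\delta/4$ budget for the estimation step (the stated $L$ actually gives $\delta/8$), and either allocation keeps the total failure probability at most $\delta$.
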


\paragraph{Proof sketch.}
Let $s_{t,\sigma}=\bh_{t,\sigma}(X_t)Z_{t,\sigma}$ be the \emph{reward} of
test $\sigma$ on round $t$.  The proof has three steps.  First, the
$w_t$-weighted reward equals
$(\mu_t(X_t)-Y_t)\sum_vQ_t(X_t)(v)A_t(X_t)_v$; \cref{lem:root} makes this quantity nonpositive with $\widehat A_t$ in place
of $A_t$, and the substitution costs at most $\alpha$.
By the regret bound of exponential weights and Azuma's inequality, no single
test accumulates reward larger than $T\alpha+C\sqrt{T\log(N/\delta)}$.
Second, by \cref{thm:hybrid}, for each test $\sigma$, no auditor $h$
correlates with the feedback of $\sigma$ more than the learner of $\sigma$
does, up to the regret $\Reg$.  Third, the correlation of $h$ with the
feedback of $\sigma$ is $\langle\sigma,b_h\rangle$, where
$b_h=\bigl(\sum_th(X_t)Q_t(X_t)(v)(\mu_t(X_t)-Y_t)\bigr)_{v\in\Lambda}$ is
the vector of cumulative virtual residuals weighted by $h$; the distortion
(\cref{def:distortion}) converts $\max_\sigma\langle\sigma,b_h\rangle$ into
$\|b_h\|_1$, and \cref{lem:adjacent} converts virtual into actual residuals at a
cost of $T/(2K)$.  The full proof is in \cref{app:reduction}.

\subsubsection{Instantiations}
\label{sec:reduction-inst}

With Hadamard tests the online error is $\wtO(T^{2/3})$ for fixed $d$ with
polynomially many ERM calls; with all sign patterns the dimension term
becomes additive, at exponential cost.

\begin{corollary}[Online rates]
\label{cor:online-rates}
Under the conditions of \cref{thm:online}, with probability at least
$1-\delta$:
\begin{enumerate}[label=(\roman*),leftmargin=2em]
\item with $\cS=\Shad$ and $\alpha=1/(K\sqrt m)$, using
  $O(K^3\log(KT/\delta))$ ERM calls per round,
  \[
    \MC_T=\wtO\Bigl(\frac TK+\sqrt{TK\bigl(d+\log(1/\delta)\bigr)}\Bigr);
  \]
  choosing $K\asymp(T/(d+\log(1/\delta)))^{1/3}$ gives
  $\MC_T=\wtO\bigl((d+\log(1/\delta))^{1/3}T^{2/3}\bigr)$ whenever
  $T\ge d+\log(1/\delta)$;
\item with $\cS=\Scube$ and $\alpha=1/K$, using
  $2^{K+1}+O(K^2\log(KT/\delta))$ ERM calls per round,
  \[
    \MC_T=\wtO\Bigl(\frac TK+\sqrt{T\bigl(d+K+\log(1/\delta)\bigr)}\Bigr);
  \]
  choosing
  $K=\lceil T^{1/3}\rceil$ gives
  $\MC_T=\wtO\bigl(T^{2/3}+\sqrt{T(d+\log(1/\delta))}\bigr)$.
\end{enumerate}
\end{corollary}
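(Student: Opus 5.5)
The corollary follows by plugging the two families into \cref{thm:online}. In each case I will read off the distortion $\kappa$ and size $N$, choose $\alpha$ and $L$, substitute into the bound, and balance in $K$. The per-round ERM count is $N+L$ with $L=\lceil 2\alpha^{-2}\log(16mT/\delta)\rceil$.

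\emph{Case (i).} By \cref{lem:hadamard}, $\Shad$ has $\kappa=\sqrt m$ and $N=2\nu<4m$. With $\alpha=1/(K\sqrt m)$ we get $L=O(K^2m\log(mT/\delta))=O(K^3\log(KT/\delta))$, and this dominates $N=O(K)$. In \cref{thm:online}, the term $\kappa T\alpha=T/K$ merges with $T/(2K)$. The regret term is $\kappa\Reg(T,d,\delta/(4N))=\sqrt m\cdot\wtO(\sqrt{T(d+\log(1/\delta))})$, since $\log(4N/\delta)=O(\log(K/\delta))$ is absorbed as a logarithmic factor. The term $\kappa C\sqrt{T\log(4N/\delta)}$ is dominated by the regret term. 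Since $m=K+1\le2K$, the total is
\[
\MC_T=\wtO\bigl(T/K+\sqrt{TK(d+\log(1/\delta))}\bigr).
\]
Write $D=d+\log(1/\delta)$. Choosing $K\asymp(T/D)^{1/3}$ equalizes the two terms at $D^{1/3}T^{2/3}$. The condition $T\ge D$ ensures that this $K$ is at least a constant, so it can be taken as an integer $K\ge1$ without changing the rate.

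\emph{Case (ii).} By \cref{lem:signed}, $\Scube$ has $\kappa=1$ and $N=2^m$. With $\alpha=1/K$ we get $L=O(K^2\log(KT/\delta))$, so there are $2^{K+1}+L$ calls per round. Now $\log(4N/\delta)=O(K+\log(1/\delta))$. The regret term becomes $\wtO(\sqrt{T(d+K+\log(1/\delta))})$, the exponential-weights term $C\sqrt{T\log(4N/\delta)}$ is of the same form, and $T\alpha=T/K$. This gives the stated bound. With $K=\lceil T^{1/3}\rceil$ we have $T/K\le T^{2/3}$ and $\sqrt{TK}=O(T^{2/3})$, so the bound is $\wtO(T^{2/3}+\sqrt{TD})$.

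The only step that needs care is the bookkeeping of the hidden logarithms. The notation $\wtO$ hides polylogarithms in $T$, $K$, and $1/\delta$, but not $d$, $K$, or $\log(1/\delta)$ when these appear additively inside the square root. In case (ii) the $\log N=\Theta(K)$ contribution must stay explicit rather than be absorbed. In case (i) it is only $O(\log K)$, so absorbing it is legitimate. Everything else is direct substitution, with the probability $1-\delta$ inherited from \cref{thm:online}.
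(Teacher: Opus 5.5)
Your proposal is correct and follows the paper's proof: read off $\kappa=\sqrt m$, $N<4m$ for $\Shad$ and $\kappa=1$, $\log N=m\log 2$ for $\Scube$, substitute into \cref{thm:online}, and compute $L=O(K^2m\log(mT/\delta))$ for the call count. Your handling of the logarithms also agrees with the paper's: the $O(\log m)$ from $\log N$ is absorbed in case (i), while the $\Theta(K)$ term stays explicit inside the square root in case (ii).
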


\begin{proof}
For (i), \cref{lem:hadamard} gives $\kappa=\sqrt m$ and $N<4m$.  The
bracket in \cref{thm:online} is then at most
$\wtO(\sqrt{T(d+\log(1/\delta))})+T/(K\sqrt m)$, and multiplying by
$\sqrt m$ gives the claim; also $L=O(K^2m\log(mT/\delta))$.

For (ii), we have $\kappa=1$ and $\log N=m\log2$, so
\[
  \Reg\Bigl(T,d,\frac{\delta}{4N}\Bigr)
  =\wtO\Bigl(\sqrt{T\bigl(d+K+\log(1/\delta)\bigr)}\Bigr).
\]
\end{proof}

\subsection{From online to batch multicalibration}
\label{sec:batch}

This subsection shows that, on independent examples, the average of the
forecasts of \cref{alg:signed} is a randomized predictor with small population
multicalibration error.  \Cref{thm:batch} gives the general guarantee, and \cref{cor:batch-rates} gives the
sample complexities for the Hadamard and cube families.  The main tool is
\cref{lem:martingale}; the proofs are in \cref{app:batch}.

Given the output of \cref{alg:signed}, define the averaged randomized predictor
\[
  \Qbar_x(v)=\frac1T\sum_{t=1}^TQ_t(x)(v),
\]
which is computed from the retained draws.

\begin{theorem}[Batch guarantee]
\label{thm:batch}
Let $\cS\subseteq\{-1,+1\}^{\Lambda}$
have $N$ elements and distortion $\kappa$, and let $\delta,\alpha\in(0,1]$.
Let $(X_t,Y_t)_{t\le T}$ be independent draws from $\cD$.  Run \cref{alg:signed} with $\eta=\min\{1,\sqrt{\log N/T}\}$ and
$L\ge2\alpha^{-2}\log(16mT/\delta)$.  Then, with probability at least
$1-\delta$,
\[
  \MC(\Qbar)\le\frac1{2K}+\kappa\Bigl[\frac1T\Reg\Bigl(T,d,
  \frac{\delta}{8N}\Bigr)
  +C\sqrt{\frac{d\log(eT)+\log(8N/\delta)}{T}}+\alpha\Bigr].
\]
\end{theorem}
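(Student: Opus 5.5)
The plan is to repeat the three-step argument of \cref{thm:online}, but with population quantities in place of realized sums, passing between the two with \cref{lem:martingale}(a). For each auditor $h$ define the population bias vector of $\Qbar$,
\[
  \beta_h=\Bigl(\tfrac1T\textstyle\sum_t\E_\cD\bigl[h(X)Q_t(X)(v)(\mu_t(X)-Y)\bigr]\Bigr)_{v\in\Lambda},
\]
computed with virtual residuals. By \cref{lem:adjacent}, applied inside the expectation, $\MC(\Qbar)\le\frac1{2K}+\sup_h\|\beta_h\|_1$. The distortion then gives $\|\beta_h\|_1\le\kappa\max_{\sigma\in\cS}\langle\sigma,\beta_h\rangle$, so it suffices to bound $\sup_{h,\sigma}\langle\sigma,\beta_h\rangle$ by the bracket.

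For fixed $\sigma$, the quantity $T\langle\sigma,\beta_h\rangle=\sum_t\E_\cD[h(X)z_{t,\sigma}(X,Y)]$ involves the predictable maps $z_{t,\sigma}(x,y)=(\mu_t(x)-y)\sum_v\sigma_vQ_t(x)(v)\in[-1,1]$. This works because $Q_t$ is fixed by the past and by the auxiliary draws made before $X_t$. Note that $z_{t,\sigma}(X_t,Y_t)=Z_{t,\sigma}$. Applying \cref{lem:martingale}(a) with index set $\cI=\cS$ and failure probability $\delta/2$ shows that, uniformly over $h$ and $\sigma$,
\[
  \sum_t\E_\cD[h z_{t,\sigma}]\le\sum_th(X_t)Z_{t,\sigma}+C\sqrt{T(d\log(eT)+\log(4N/\delta))}.
\]
This step produces the middle term of the bound, and it is where a cover of $\cH$ is needed instead of a union bound over $\cH$.

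It remains to bound $\sup_{h,\sigma}\sum_th(X_t)Z_{t,\sigma}$, which is exactly what the proof of \cref{thm:online} controls before the distortion step. By \cref{thm:hybrid}, run for every $\sigma$ with $\zeta=\delta/(8N)$ and a union bound, this sum is at most $\sum_ts_{t,\sigma}+\Reg(T,d,\delta/(8N))$. Exponential weights, the estimation error of $\widehat A_t$, and the root rule then bound $\max_\sigma\sum_ts_{t,\sigma}\le T\alpha+C\sqrt{T\log(8N/\delta)}$. These steps are rerun with failure budget $\delta/2$ in place of $\delta$, which explains the $8N$ in place of $4N$. Dividing by $T$, multiplying by $\kappa$, and combining the two failure events yields the stated inequality. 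The $\sqrt{\log(8N/\delta)/T}$ term is absorbed into the martingale term.

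The main obstacle is the predictability check needed to apply \cref{lem:martingale} to $z_{t,\sigma}$. The maps $Q_t$, and hence $\mu_t$, depend on the auxiliary draws of round $t$. These draws are made before $X_t$ and independently of $(X_s,Y_s)_{s\ge t}$, so they count as allowed randomness. The fresh draws $h_{t,\sigma}$ do not enter $z_{t,\sigma}$ at all. I would verify this carefully, along with the fact that the online argument of \cref{thm:online} applies verbatim on i.i.d.\ data, where $Y_t$ is drawn from $\cD$ given $X_t$ and so is a legitimate outcome choice.
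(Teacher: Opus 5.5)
Your proposal is correct and follows the paper's proof essentially step for step: you bound $\sup_{h,\sigma}\sum_t h(X_t)Z_{t,\sigma}$ by rerunning the online argument (hybrid learners at $\zeta=\delta/(8N)$, exponential weights, estimation and root rule) with failure $\delta/2$, pass to the population with \cref{lem:martingale}(a) for $\cI=\cS$ and $\zeta=\delta/2$, and finish with the distortion and \cref{lem:adjacent} inside the expectation. One bookkeeping point: with the given $L$ the estimation event cannot be halved to $\delta/16$ and keeps its budget of $\delta/8$, so the online-side failures are $\delta/8$ each for estimation, exponential weights, and the hybrid learners, for a total of $3\delta/8\le\delta/2$, which is exactly how the paper allocates them.
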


\paragraph{Proof sketch.}
The argument of \cref{thm:online} bounds, for every auditor and test, the
cumulative correlation of the auditor with the test's feedback.  The feedback of test $\sigma$ is the
value at $(X_t,Y_t)$ of the predictable map
$z_{t,\sigma}(x,y)=(\mu_t(x)-y)\sum_v\sigma_vQ_t(x)(v)$, and
\cref{lem:martingale}(a) replaces each empirical correlation by its
expectation.  For each auditor $h$, the vector
$\bigl(\frac1T\sum_t\E[h(X)Q_t(X)(v)(\mu_t(X)-Y)]\bigr)_v$ of expected
virtual residuals, averaged over the $T$ forecasts, is then converted to the
$\ell_1$ norm by the distortion, and \cref{lem:adjacent}, applied to each
$Q_t$, gives the term $1/(2K)$.

\begin{corollary}[Sample complexity of randomized predictors]
\label{cor:batch-rates}
Under the conditions of \cref{thm:batch}, let $\varepsilon\in(0,1)$.
\begin{enumerate}[label=(\roman*),leftmargin=2em]
\item With $\cS=\Shad$, $K=\lceil C/\varepsilon\rceil$, $\alpha=\varepsilon/(C\sqrt m)$,
  and
  \[
    T=\wtO\Bigl(\frac{d+\log(1/\delta)}{\varepsilon^{3}}\Bigr),
  \]
  we have $\MC(\Qbar)\le\varepsilon$ with probability at least $1-\delta$.
  The algorithm makes
  $\wtO((d+\log(1/\delta))\varepsilon^{-6})$ ERM calls in total.
\item With $\cS=\Scube$, $K=\lceil C/\varepsilon\rceil$,
  $\alpha=\varepsilon/C$, and
  \[
    T=\wtO\Bigl(\frac1{\varepsilon^3}+\frac{d+\log(1/\delta)}{\varepsilon^2}\Bigr),
  \]
  we have $\MC(\Qbar)\le\varepsilon$ with probability at least $1-\delta$,
    with $2^{O(1/\varepsilon)}$ ERM calls per round, and hence
  $2^{O(1/\varepsilon)}(d+\log(1/\delta))$ in total.
\end{enumerate}
\end{corollary}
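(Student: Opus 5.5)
We plug the stated parameters into \cref{thm:batch} and check that each of the four terms on its right-hand side is at most a constant fraction of $\varepsilon$. Then we count oracle calls using the per-round cost in \cref{thm:online}. Throughout, write $D=d+\log(1/\delta)$, and recall that $\Reg(T,d,\zeta)=\wtO(\sqrt{T(d+\log(1/\zeta))})$.

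For part~(i), \cref{lem:hadamard} gives $\kappa=\sqrt m$ and $N<4m$, so $\log(8N/\delta)=O(\log(m/\delta))$. The term $1/(2K)$ is at most $\varepsilon/(2C)$. The term $\kappa\alpha$ equals $\varepsilon/C$. The remaining two terms combine to $\sqrt m\cdot\wtO(\sqrt{D/T})=\wtO(\sqrt{D/(\varepsilon T)})$, since $m=K+1=O(1/\varepsilon)$ and $\log m$ is polylogarithmic. We require this to be at most $\varepsilon/C$. Because the hidden logarithms involve $\log T$, this is the inequality $T\ge C'\varepsilon^{-3}D\,\mathrm{polylog}(T,1/\varepsilon,d,1/\delta)$. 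It is solved by $T=\varepsilon^{-3}D\cdot\mathrm{polylog}(d,1/\varepsilon,1/\delta)$ through the standard fact that $T\ge a\log^c T$ holds once $T\ge C a\log^c a$. Choosing $C$ large enough makes the total at most $\varepsilon$.

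For the oracle calls, \cref{thm:online} gives $N+L$ calls per round. Here $L=2\alpha^{-2}\log(16mT/\delta)=O(m\varepsilon^{-2}\log(mT/\delta))=\wtO(\varepsilon^{-3})$. Multiplying by $T$ gives $\wtO(D\varepsilon^{-6})$ calls in total.

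For part~(ii), \cref{lem:signed} gives $\kappa=1$ and $\log N=m\log 2=O(1/\varepsilon)$. The bracketed terms in \cref{thm:batch} are then $\wtO(\sqrt{(D+1/\varepsilon)/T})+\varepsilon/C$. Requiring this to be at most $\varepsilon/C$ gives $T=\wtO(\varepsilon^{-3}+D\varepsilon^{-2})$. The per-round cost is $N+L=2^{m}+O(\varepsilon^{-2}\log(mT/\delta))$, which is $2^{O(1/\varepsilon)}$. The total is $T\cdot2^{O(1/\varepsilon)}$. The polynomial factor $\varepsilon^{-3}$ is absorbed into $2^{O(1/\varepsilon)}$, which gives the stated $2^{O(1/\varepsilon)}(d+\log(1/\delta))$.

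The only mildly delicate step is the self-referential choice of $T$, since $\Reg$ and the martingale term carry $\log T$. I would handle it with the elementary lemma above, stated once for both parts. One more point needs checking: the constants $C$ in $K$ and $\alpha$ must be chosen to dominate the absolute constant in \cref{thm:batch}, so that the $1/(2K)$ and $\kappa\alpha$ terms together stay at most $\varepsilon/2$.
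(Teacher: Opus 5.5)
Your proposal is correct and follows the paper's proof. As there, you substitute $\kappa$, $N$, $K$ and $\alpha$ into \cref{thm:batch}, bound the grid and estimation terms by constant fractions of $\varepsilon$, take $T$ a polylogarithmic factor above $m(d+\log(1/\delta))/\varepsilon^2$ (respectively $(d+1/\varepsilon+\log(1/\delta))/\varepsilon^2$), and count $T(N+L)$ oracle calls; your explicit treatment of the $\log T$ factors in the choice of $T$ spells out the paper's phrase ``a sufficiently large polylogarithmic factor.''
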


\begin{proof}
For (i), the grid and estimation terms are each at most $\varepsilon/4$ for a
suitable $C$.  Since $N<4m$ and $\kappa=\sqrt m$, the remaining terms are at
most $\varepsilon/2$ once $T$ exceeds $m(d+\log(1/\delta))/\varepsilon^2$ by a
sufficiently large polylogarithmic factor.  The number of ERM calls is
$T(N+L)$ with $N=O(1/\varepsilon)$ and $L=\wtO(m/\varepsilon^2)=
\wtO(\varepsilon^{-3})$.  For (ii), $\kappa=1$ and $\log N=O(1/\varepsilon)$,
so the remaining terms are at most $\varepsilon/2$ once $T$ exceeds
$(d+1/\varepsilon+\log(1/\delta))/\varepsilon^2$ by a sufficiently large
polylogarithmic factor.
\end{proof}

\paragraph{Evaluating the averaged predictor.}
Evaluating $\Qbar_x$ requires recomputing $Q_t(x)$ for all $t$, which costs
$TL$ hypothesis evaluations and no oracle calls.  Drawing a single prediction
from $\Qbar_x$ requires only one round: choose $t$ uniformly from $[T]$ and
draw from $Q_t(x)$, at a cost of $L$ hypothesis evaluations.

\paragraph{Comparison with enumeration.}
Part (ii) attains the additive rate $\wtO(\varepsilon^{-3}+d\varepsilon^{-2})$
using ERM over $\cH$ but enumerating the $2^{K+1}$ sign patterns.
\citet{noarov2026optimal} attain the same additive rate with a deterministic
predictor by enumerating a cover of $\cH$ instead, and they evaluate the maximum over sign patterns by a product formula rather
than enumerating them.  When $\widetilde\Omega(1/\varepsilon)\le d\le\mathrm{poly}(1/\varepsilon)$,
\cref{thm:erm-lb} shows that the additive rate requires either access to
$\cH$ beyond the oracle, such as enumerating a cover, or superpolynomially
many ERM calls, such as enumerating sign patterns.

\subsection{Optimality among black-box ERM learners}
\label{sec:erm-lb}

This subsection shows that no learner that accesses the class only through
polynomially many weighted ERM calls can improve the sample complexity of
\cref{cor:batch-rates}(i) by more than polylogarithmic factors when
$\widetilde\Omega(1/\varepsilon)\le d\le\mathrm{poly}(1/\varepsilon)$.  \Cref{thm:erm-lb} states the lower
bound for an explicit family of hard instances, and \cref{cor:erm-lb}
restates it in terms of the VC dimension, which is \cref{thm:main}(b).  The
proof of \cref{thm:erm-lb} is in \cref{app:erm-lb}.

\begin{definition}[Black-box ERM learner]
\label{def:black-box}
Let $q\ge1$.  A \emph{$q$-query black-box ERM learner} receives $\varepsilon$,
$n$ independent labeled samples from $\cD$, and a weighted ERM oracle for $\cH$
(\cref{def:erm}), which it may call adaptively at most $q$ times.  It has no
other access to $\cH$, but it may use unlimited computation and randomness.
After its last oracle call it outputs a finitely supported randomized
predictor.  The oracle may break ties among maximizers by any rule, and the
learner does not know the rule.
\end{definition}

\Cref{alg:signed} with Hadamard tests, as in \cref{cor:batch-rates}(i), is a
$q$-query black-box ERM learner with
$q=\wtO(d\varepsilon^{-6})$; it evaluates the hypotheses that the oracle
returns, which \cref{def:erm} allows.

\paragraph{The construction.}
The inputs are $\cX=[J]\times[D]$ with the uniform distribution, arranged in
$J$ blocks of $D$ inputs, and the mean outcome at an input of block $j$ is a
block center $c_j$ plus or minus a gap $\beta=8\varepsilon$, with a hidden
sign $\tau_x\in\{-1,+1\}$ at each input $x$.  The class has three parts.
First, it contains all \emph{anchors}, the sign functions that are constant
on each block; they force a predictor with small multicalibration error to
put most of its mass close to the center of each block.  Second, it contains
$16$ \emph{planted auditors} $h^{(\ell)}(j,i)=\tau_{(j,i)}\theta^{(\ell)}_j$,
for $\ell\in[16]$, with random block signs $\theta^{(\ell)}\in\{-1,+1\}^J$.  A predictor whose values are near the block centers but
that does not know the hidden signs has error of order $\beta$ against a
random planted auditor.  With $n$ samples, each input is observed about
$n/|\cX|$ times, so the hidden signs remain nearly unknown unless
$n\gtrsim|\cX|/\beta^2$.  Third, the class contains a random \emph{code} of
$e^{\Theta(d)}$ sign vectors.  For every weight vector that a learner could
choose without knowing the hidden signs, the code contains, with high
probability, a hypothesis at least as good as every planted auditor, so the
oracle never reveals a planted auditor.  The block centers are spaced $\Theta(\varepsilon\log(1/\varepsilon))$
apart, so there are $J=\Theta(1/(\varepsilon\log(1/\varepsilon)))$ blocks,
and the class has VC dimension $d=\Theta(D\log|\cX|)$.  Hence
$|\cX|/\beta^2=JD/\beta^2$ is of order $d\varepsilon^{-3}$ up to logarithmic
factors.

\paragraph{Parameters.}
For $\varepsilon>0$ and a query budget $q\ge1$, let
\[
  \beta=8\varepsilon,\qquad
  \kappa_{\mathrm{anc}}=2\lceil\log_2(1/\beta)\rceil+4,\qquad
  \Delta_c=128\kappa_{\mathrm{anc}}\beta,\qquad
  J=\lfloor1/(4\Delta_c)\rfloor.
\]
Here $\beta$ is the gap, $\kappa_{\mathrm{anc}}$ is the factor that the
anchors lose in controlling the distance of a predictor from the block
centers, $\Delta_c$ is the spacing of the block centers, and $J$ is the
number of blocks.  Choose an integer $D\ge\max\{128,J,C\log(100q)\}$, the
number of inputs per block, let $\cX=[J]\times[D]$, and let
\[
  d_0=\bigl\lceil16D\log(8|\cX|)\bigr\rceil,\qquad
  \mathsf t=\log(1000|\cX|),\qquad
  \mathsf A=\kappa_{\mathrm{anc}}^2+\log(1000q).
\]
Here $d_0$ is the number of coordinates on which the code realizes every
sign pattern, which bounds the VC dimension from below, and $\mathsf t$ and
$\mathsf A$ are logarithmic factors from union bounds over the inputs and
over the queries.  We assume
\begin{equation}
  \label{eq:erm-lb-cond}
  d_0\le|\cX|/2,\qquad |\cX|\ge C\kappa_{\mathrm{anc}}^2,\qquad
  \beta\,\mathsf t\sqrt{\mathsf A}\le c,
\end{equation}
where $C$ is a sufficiently large and $c$ a sufficiently small absolute
constant; in this subsection and in \cref{app:erm-lb}, $c$ and $C$ refer to
these constants.  The binding requirement is $D\ge J$, which forces the
VC dimension to be $\widetilde\Omega(1/\varepsilon)$ (\cref{cor:erm-lb}
assumes $d\ge\varepsilon^{-1}\log(1/\varepsilon)/c_k$);
the conditions in~\eqref{eq:erm-lb-cond} hold for all sufficiently small
$\varepsilon$ when $D$ and $q$ are polynomial in $1/\varepsilon$.

\begin{theorem}[Black-box ERM lower bound]
\label{thm:erm-lb}
Under the conditions above, for every $q$-query black-box ERM learner that
uses
\[
  n\le\frac{c\,|\cX|}{\beta^2\mathsf A\,\mathsf t}
\]
labeled samples, there exist a class $\cH\subseteq\{-1,+1\}^{\cX}$ with
$d_0\le\VC(\cH)\le24d_0$, a weighted ERM oracle for $\cH$, and a distribution
over $\cX\times\{0,1\}$ with uniform input marginal, on which the output $Q$
of the learner satisfies $\Pr\{\MC(Q)>\varepsilon\}\ge2/3$.  This holds even
if the learner knows the input distribution, the bound $24d_0$ on the VC
dimension, and every part of the instance except the hidden signs and the
planted block signs.
\end{theorem}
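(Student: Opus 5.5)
The plan is to build the instance with random hidden signs $\tau\in\{-1,+1\}^{\cX}$ and random planted block signs $\theta^{(1)},\ldots,\theta^{(16)}$. I would set $\fs(j,i)=c_j+\beta\tau_{(j,i)}$, with centers $c_j=\tfrac12+(j-J/2)\Delta_c$ or any spacing $\Delta_c$ inside $[1/4,3/4]$. The class $\cH$ is the union of three parts: the anchors (all $2^J$ block-constant sign functions), the $16$ planted auditors, and a random code $\mathcal C$ of about $e^{d_0}$ uniformly random sign vectors on $\cX$ that contains every pattern on a fixed set of $d_0$ coordinates. The oracle breaks ties, and near-ties, in favour of code and anchor members. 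The VC bounds are a counting step: the code shatters $d_0$ points, and $|\cH|\le 2^J+16+e^{O(d_0)}$ gives $\VC\le 24d_0$, using $J\le D\le d_0$. The argument then has three parts: (1) the oracle is, with high probability, independent of $(\tau,\theta)$; (2) with few samples the posterior on most $\tau_x$ stays near uniform; (3) any predictor with $\MC\le\varepsilon$ must correlate with $\tau$ on a constant fraction of inputs. Together these give a contradiction.

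For (3), anchors first. Take the anchor equal to $+1$ on block $j$ and $\sgn(v-c_j)$-weighted through sign patterns (\cref{lem:signed}) over values. Combined with a dyadic peeling over the distance $|v-c_j|$, which is the source of the factor $\kappa_{\mathrm{anc}}$, this shows that small $\MC$ forces all but an $O(\kappa_{\mathrm{anc}}\varepsilon)$ fraction of the mass to lie within roughly $\Delta_c/4$ of its block center. In particular, values from distinct blocks do not mix. Next, for a planted auditor $h^{(\ell)}$ and the signs $\sigma_v=\theta^{(\ell)}_j$ on values near $c_j$, the bias sum becomes
\[
\frac1{|\cX|}\sum_{x}\tau_x\theta_j\theta_j\,\E_{V\sim Q_x}(V-\fs(x))
\approx-\beta+\frac1{|\cX|}\sum_x\tau_x\,\E_{Q_x}(V-c_j).
\]
So $\MC(Q)\le\varepsilon=\beta/8$ forces $\frac1{|\cX|}\sum_x\tau_x\,\E_{Q_x}(V-c_j)\gtrsim\beta/2$, i.e.\ a substantial correlation between the predictor's offsets and $\tau$. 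The $\theta$'s serve to let the sign choice align the blocks; one planted auditor suffices for this step, and the $16$ copies plus random $\theta$ are needed only in step (1).

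For (2), the learner's information is the samples plus the oracle answers. Conditional on (1), it is only the samples. Each input is seen about $n/|\cX|$ times, and the KL divergence per observation between $\Ber(c+\beta)$ and $\Ber(c-\beta)$ is $O(\beta^2)$. A standard Assouad/Fano-type argument then shows that the expected correlation $\E\sum_x\tau_x\,\E_{Q_x}(V-c_j)$ is at most $\sum_x O(\beta\sqrt{n\beta^2/|\cX|})\cdot|\text{offset}|$, with offsets bounded by about $\beta$ once the anchor-allowed range is accounted for. A cleaner route is to bound the posterior bias of each $\tau_x$ by $O(\beta\sqrt{n_x})$, sum over $x$, and use the offset bound $|\E_{Q_x}(V-c_j)|\lesssim\beta\sqrt{\mathsf A}$ that comes from the anchor step with a high-probability union over blocks. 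The result is that the correlation is $\ll\beta/2$ when $n\le c|\cX|/(\beta^2\mathsf A\mathsf t)$, and Markov's inequality gives failure probability $\ge 2/3$ after adding the small probability that (1) fails.

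For (1), the main obstacle, I would argue by induction over the $q$ queries. Suppose all previous answers are functions of the samples and the learner's randomness only, not of $(\tau,\theta)$ beyond what the samples reveal; I would in fact simulate the oracle on $\mathcal C\cup$anchors only. The next weight vector $u$ is then fixed given that information. One must show that $\max_{g\in\mathcal C\cup\text{anchors}}\langle u,g\rangle\ge\max_\ell\langle u,h^{(\ell)}\rangle$. The planted score is $\sum_j\theta_j\sum_i u_{(j,i)}\tau_{(j,i)}$. Since $\tau$ is only weakly revealed by the samples, this is subgaussian with variance proxy about $\|u\|_2^2$ times a posterior-bias factor, so it is at most $\|u\|_2\sqrt{2\log(100q)}$ with probability $1-1/(100q)$. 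Meanwhile the code has $e^{\Theta(d_0)}$ independent random vectors, so its best score is about $\|u\|_2\sqrt{2\cdot\Theta(d_0)}\cdot$(Gaussian-like lower tail). Here the $e^{\Theta(d_0)}$ size and $d_0\gg\log q$ matter, plus an anticoncentration argument (Paley–Zygmund on $\langle u,g\rangle$ for $u$ not too sparse). Sparse $u$ is handled by the code's shattering of $d_0$ coordinates, or by anchors when $u$ concentrates on few blocks. The delicate point is uniformity over adaptively chosen $u$; because $u$ is independent of the code's relevant randomness given the induction hypothesis, a union over the $q$ queries suffices. The posterior dependence of $u$ on $\tau$ through the samples is what the factor $\mathsf t$ and the condition $\beta\mathsf t\sqrt{\mathsf A}\le c$ absorb.
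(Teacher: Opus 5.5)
Your architecture (anchors to keep predictions near the block centers, a random code that hides the planted auditors, a coupling with a baseline oracle that ignores them, and a nearly uniform posterior on $\tau$) matches the paper's, but step (1) fails as written. The query $u$ is not independent of the code. The code is public (the theorem lets the learner know it), and in any case earlier answers are codewords, so you cannot union-bound over the $q$ queries using the code's randomness. Instead, the code must be fixed once, with a property that holds for every query vector simultaneously, and the union bound over queries must use only the randomness of $(\tau,\theta)$ given the sample. That property also cannot be ``the best codeword scores about $\|u\|_2\sqrt{d_0}$''. This fails for vectors with a few dominant coordinates. Moreover, when $u$ has a heavy head and a spread tail, the planted auditor matches the head with constant probability, so what must be compared is the gain on the tail. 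The paper's \cref{lem:code} provides this. For every set $I$ of at most $d_0$ coordinates (not one fixed set) and every sign pattern on $I$, it gives a codeword that realizes the pattern and has $\langle u,h\rangle\ge(\sqrt D/8)\|u\|_2$ for every spread $u$ supported off $I$. The proof uses an exponential-tilt lower tail at level $\sqrt D$ and a net over spread unit vectors; the net is affordable because a code of size $e^{8d_0+8D}$ fails at each net point with doubly exponentially small probability. \Cref{lem:exposure} then splits $u$ at the first level where its tail is spread. The code wins the head exactly, so the planted auditor would need a tail score above $\sqrt D/8$ times the tail norm. The random block signs make that tail score sub-Gaussian with proxy $(1+D\bar\tau_*^2)$ times the squared tail norm. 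If no tail is spread, the tails decay geometrically, and the planted auditor must match the signs of $u$ on its top $D$ coordinates, which has probability at most $e^{-D/2}$.

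Steps (2) and (3) also have gaps. The anchors do not give ``values from distinct blocks do not mix''. \Cref{lem:anchor} bounds only the displacement, by $\kappa_{\mathrm{anc}}(\varepsilon+\beta)$, so up to a $9/256$ fraction of the mass can sit away from its own center while carrying displacement of order $\kappa_{\mathrm{anc}}\beta\gg\beta$. Take one planted auditor with fixed $\theta$ and signs $\theta_j$ on values near $c_j$. The cross-block terms at those values are then not dominated by $\beta$, so your inequality forcing $\frac1{|\cX|}\sum_x\tau_x\E_{Q_x}(V-c_{j(x)})\gtrsim\beta/2$ does not follow. Write $\mathsf R$ and $\mathsf P$ for the correlation and the mass restricted to values within $\Delta_c/4$ of the block centers, as in \cref{lem:plant-error}. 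The paper uses the randomness of $\theta$ exactly here: conditioning on $\theta_j$ and applying Jensen gives $\E_\theta|\sum_{j'}\theta_{j'}\mathsf C_{j',v}|\ge|\mathsf C_{j,v}|$. Hence $\E_\theta\mathsf F\ge|\mathsf R-\beta\mathsf P|$ whatever the cross terms are. Khintchine and Paley--Zygmund then give detection with probability $1/12$ per planted auditor, and the $16$ independent copies raise this to $1-(11/12)^{16}$. So the $16$ copies serve step (3); in step (1) they only cost a factor $16$ in the union bound. Finally, Markov's inequality cannot turn a small mean of the signed quantity $\mathsf R$ into a small probability that $\mathsf R\ge\beta/2$; you need concentration over $\tau$. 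The baseline output is independent of $(\tau,\theta)$ given the sample, and the $\tau_x$ are conditionally independent with $|\bar\tau_x|\le\bar\tau_*$. So $\mathsf R$ has mean at most $\bar\tau_*\mathrm{Disp}(Q)\le\beta/16$ and variance at most $\Delta_c^2/(16|\cX|)$, and Chebyshev's inequality with $|\cX|\ge C\kappa_{\mathrm{anc}}^2$ finishes. The anchors bound only the average offset, not each $|\E_{Q_x}(V-c_j)|$, but the average is all this step needs.
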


\paragraph{Proof sketch.}
We draw the hidden signs and the planted block signs at random and show that
the learner fails with probability greater than $2/3$ on average; some fixed
instance then has the same property.  The proof has four steps.  First, a
dyadic decomposition of the sign matrix $(\sgn(v-c_j))_{j,v}$ shows that the
anchors control the total displacement of a predictor from the block
centers, with a loss of only $\kappa_{\mathrm{anc}}=O(\log(1/\varepsilon))$
(\cref{lem:anchor}).  Second, a random code of size $e^{8d_0+8D}$ contains,
for every set of at most $d_0$ coordinates and every weight vector whose
remaining mass is not concentrated on a few inputs, a hypothesis that
matches the signs of the weights on those coordinates and correlates well
with the rest (\cref{lem:code}).  Third, given the sample, the hidden signs are nearly
uniform, and every query that does not depend on them is answered by the
code with high probability; comparing the learner with a run against an
oracle that ignores the planted auditors shows that, with high probability,
the two runs are identical (\cref{lem:coupling}).  Fourth, the output of
that run does not depend on the hidden signs or the planted block signs, and
either it is far from the block centers, so that an anchor detects it, or a
random planted auditor detects error at least $2\varepsilon>\varepsilon$ with constant
probability, by the Paley--Zygmund inequality (\cref{lem:plant-error}).

In terms of the VC dimension, \cref{thm:erm-lb} gives the following
statement, which is \cref{thm:main}(b).

\begin{corollary}[Black-box lower bound in terms of the VC dimension]
\label{cor:erm-lb}
For every $k\ge1$ there is $c_k>0$ such that, for every
$\varepsilon\in(0,c_k)$ and every $d$ with
$\varepsilon^{-1}\log(1/\varepsilon)/c_k\le d\le\varepsilon^{-k}$, the following holds.  If a $\lfloor(d/\varepsilon)^k\rfloor$-query
black-box ERM learner, for every class of VC dimension at most $d$ and every
distribution, outputs a randomized predictor with multicalibration error at
most $\varepsilon$ with probability at least $2/3$, then it uses more than
$c_k\,d/(\varepsilon^{3}\log^{5}(1/\varepsilon))$ samples.
\end{corollary}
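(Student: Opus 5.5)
The corollary follows from \cref{thm:erm-lb} by choosing the parameters $D$ and $q$ as functions of $d$, $\varepsilon$ and $k$, then checking the hypotheses and translating the sample bound into $d$. Set $q=\lfloor(d/\varepsilon)^k\rfloor$ and let $\beta$, $\kappa_{\mathrm{anc}}$, $\Delta_c$, $J$ be as in the construction, so that $\kappa_{\mathrm{anc}}=\Theta(\log(1/\varepsilon))$ and $J=\Theta(1/(\varepsilon\log(1/\varepsilon)))$. Choose $D$ as the largest integer with $24d_0=24\lceil16D\log(8JD)\rceil\le d$. Then the class produced by \cref{thm:erm-lb} has VC dimension at most $d$, so the hypothesis of the corollary applies to it. Since $d\le\varepsilon^{-k}$, the log factor $\log(8JD)$ is $O_k(\log(1/\varepsilon))$. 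Hence
\[
D=\Theta_k\bigl(d/\log(1/\varepsilon)\bigr),\qquad |\cX|=JD=\Theta_k\bigl(d/(\varepsilon\log^2(1/\varepsilon))\bigr).
\]

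Next I will verify the requirements. The requirement $D\ge J$ becomes $d/\log(1/\varepsilon)\gtrsim 1/(\varepsilon\log(1/\varepsilon))$. This holds when $d\ge\varepsilon^{-1}\log(1/\varepsilon)/c_k$ with $c_k$ small. The requirement $D\ge128$ is immediate for small $\varepsilon$. The requirement $D\ge C\log(100q)$ needs $\log q\le k\log(d/\varepsilon)=O_k(\log(1/\varepsilon))$, which is far below $D$. For \eqref{eq:erm-lb-cond}:
\begin{itemize}
\item $d_0\le|\cX|/2$ reduces to $16\log(8|\cX|)\lesssim J$, which holds because $J\sim1/(\varepsilon\log(1/\varepsilon))$ dominates a logarithm.
\item $|\cX|\ge C\kappa_{\mathrm{anc}}^2$ is clear.
\item Both $\mathsf t$ and $\mathsf A$ are $O_k(\log^2(1/\varepsilon))$ at most ($\mathsf A=O_k(\log^2(1/\varepsilon))$ and $\mathsf t=O_k(\log(1/\varepsilon))$). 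Hence $\beta\mathsf t\sqrt{\mathsf A}=O_k(\varepsilon\log^2(1/\varepsilon))\le c$ for $\varepsilon<c_k$.
\end{itemize}

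\Cref{thm:erm-lb} then says that any $q$-query learner using $n\le c|\cX|/(\beta^2\mathsf A\mathsf t)$ samples fails with probability at least $2/3$ on some instance. That instance has a class of VC dimension at most $d$ and a valid ERM oracle, so the learner does not succeed with probability $2/3$ there, contradicting the hypothesis. Therefore $n$ must exceed $c|\cX|/(\beta^2\mathsf A\mathsf t)$. Substituting $|\cX|=\Theta_k(d/(\varepsilon\log^2))$, $\beta^2=64\varepsilon^2$, $\mathsf A=O_k(\log^2)$ and $\mathsf t=O_k(\log)$ gives the lower bound
\[
n>c_k\,\frac{d}{\varepsilon^3\log^5(1/\varepsilon)}.
\]

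A few details need care. The success event in the corollary is ``$\MC\le\varepsilon$ with probability $\ge 2/3$'', whereas \cref{thm:erm-lb} gives failure with probability $\ge2/3$; these are incompatible, since failure probability $\ge2/3$ leaves success probability at most $1/3$. The oracle in \cref{thm:erm-lb} is a legitimate tie-breaking rule, as \cref{def:black-box} allows. Finally, rounding $D$ down must keep $D\ge J$; this costs only a constant absorbed into $c_k$.

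The main obstacle is bookkeeping the logarithmic factors uniformly in $k$. Each of $\mathsf t$, $\mathsf A$, and the factor $\log(8|\cX|)$ in $d_0$ must be bounded by $O_k(\log(1/\varepsilon))$ (or its square for $\mathsf A$). This uses $d,q\le\mathrm{poly}_k(1/\varepsilon)$, and the combined exponent must come out exactly to $\log^5$. I would first fix explicit bounds such as $\log(8|\cX|)\le(k+3)\log(1/\varepsilon)$ and then choose $c_k$ to absorb all the constants.
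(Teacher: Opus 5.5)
Your proposal is correct and follows essentially the same route as the paper: apply \cref{thm:erm-lb} with $q=\lfloor(d/\varepsilon)^k\rfloor$ and $D=\Theta_k(d/\log(1/\varepsilon))$, check \eqref{eq:erm-lb-cond} using $d,q\le\mathrm{poly}_k(1/\varepsilon)$, and convert $c|\cX|/(\beta^2\mathsf A\,\mathsf t)$ into $c_k d/(\varepsilon^3\log^5(1/\varepsilon))$. The only difference is cosmetic. You define $D$ implicitly as the largest integer with $24d_0\le d$, whereas the paper sets $D=\lfloor d/(C_D(k+3)\log(1/\varepsilon))\rfloor$ explicitly and then verifies $24d_0\le d$.
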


\begin{proof}
Let $C'$ and $c'$ denote absolute constants, and apply \cref{thm:erm-lb}
with $\varepsilon$, $q=\lfloor(d/\varepsilon)^k\rfloor$, and
$D=\lfloor d/(C_D(k+3)\log(1/\varepsilon))\rfloor$, where $C_D$ is an absolute
constant chosen below.

\emph{Bounds on the parameters.}  Since $d\le\varepsilon^{-k}$, we have
$\log(1+q)\le k\log(d/\varepsilon)+1\le(k+1)^2\log(1/\varepsilon)$, and
$\kappa_{\mathrm{anc}}\le C'\log(1/\varepsilon)$.  The lower bound on $d$
gives $D\ge2/\varepsilon$.  The number of blocks satisfies
$J\le1/(4\Delta_c)\le1/\varepsilon\le D$ and
$J\ge1/(8\Delta_c)\ge c'/(\varepsilon\log(1/\varepsilon))$.  Hence
$|\cX|=JD\le\varepsilon^{-k-1}$, so $\mathsf t\le C'(k+2)\log(1/\varepsilon)$
and $\mathsf A\le C'(k+1)^2\log^2(1/\varepsilon)$.

\emph{Conditions of the theorem.}  For $\varepsilon<c_k$, these bounds give
$D\ge J$ and $D\ge\max\{128,\,C\log(100q)\}$.  They also give
\[
  \beta\mathsf t\sqrt{\mathsf A}
  \le C'(k+2)^2\varepsilon\log^2(1/\varepsilon)\le c
  \qquad\text{and}\qquad
  J\ge34\log(8|\cX|),
\]
and the latter gives $d_0\le|\cX|/2$.  Finally,
$|\cX|\ge D\ge2/\varepsilon\ge C\kappa_{\mathrm{anc}}^2$.

\emph{Conclusion.}  For $C_D$ large enough relative to these constants, the class that
\cref{thm:erm-lb} provides has VC dimension at most
$24d_0\le C'D\log(8|\cX|)\le d$, so the learner must succeed on it with
probability at least $2/3$.
\Cref{thm:erm-lb} shows that it fails with probability at least $2/3$ unless
it uses more than
\[
  \frac{c\,|\cX|}{\beta^2\mathsf A\,\mathsf t}
  \ge\frac{c\,c'\,D}{64\varepsilon^3\log(1/\varepsilon)\cdot C'^2(k+2)^3
    \log^3(1/\varepsilon)}
  \ge\frac{c_k\,d}{\varepsilon^3\log^5(1/\varepsilon)}
\]
samples.
\end{proof}

\begin{remark}[Scope of the lower bound]
\label{rem:erm-lb-scope}
The hard class depends on the hidden signs, and a description of the class
would reveal the planted auditors.  \Cref{thm:erm-lb} is therefore a lower
bound for algorithms that access the class only through the oracle, in the
spirit of the oracle lower bound of \citet{hazan2016computational}; it does
not assert that multicalibration is computationally hard for any explicitly
represented class.  It requires $d\ge\widetilde\Omega(1/\varepsilon)$, since
$D\ge J$, and it leaves open the regime $d\ll1/\varepsilon$, where
$d\varepsilon^{-3}$ and $\varepsilon^{-3}+d\varepsilon^{-2}$ differ by a
factor of at most $d$.  The dependence on $q$ is logarithmic.  With
$q=2^{\Theta(1/\varepsilon)}$ queries the bound of \cref{thm:erm-lb}
drops to $\widetilde\Omega(d\varepsilon^{-2})$, in line with
\cref{cor:batch-rates}(ii), which attains the additive rate with
$2^{O(1/\varepsilon)}$ ERM calls per round.
\end{remark}

\section{Swap multicalibration}
\label{sec:swap}

This section compares ordinary and swap multicalibration and determines the
sample complexity of learning swap-multicalibrated predictors.
\Cref{sec:swap-relation} shows that the two errors of a fixed predictor can
differ by a factor equal to the number of values, already for a class of VC
dimension one, but that coarsening turns small ordinary error into small
swap error with a square-root loss.  \Cref{sec:swap-upper} proves the upper bound of \cref{thm:swap}(a)
(\cref{thm:swap-vc}), \cref{sec:swap-poly} proves \cref{thm:swap}(b) with an
oracle-efficient algorithm based on the hybrid learner of \cref{thm:hybrid}
(\cref{thm:swap-poly}), and \cref{sec:swap-lb} proves the lower bound of
\cref{thm:swap}(a) (\cref{thm:swap-lb}) and then assembles the proof of
\cref{thm:swap}.  Long proofs are in
\cref{app:swap-all}.

\subsection{Relation between the two errors}
\label{sec:swap-relation}

For a fixed predictor, the swap error can exceed the ordinary error by a
factor equal to the number of values (\cref{prop:ratio}), but coarsening
converts small ordinary error into swap error with a square-root loss
(\cref{prop:coarsen}).  In the rest of this section and in \cref{sec:audit}, $m$ denotes the number
of values of the predictor at hand; for predictors on the grid $\Lambda$, it
is $K+1$ as before.

\begin{proposition}[Factor-$m$ separation]
\label{prop:ratio}
If $Q$ has $m$ values, then $\MC(Q)\le\SMC(Q)\le m\,\MC(Q)$.  For every
$m\ge2$, there are a deterministic predictor $p$ with $m$ values, a distribution
with binary outcomes, and a class of VC dimension one such that
$\MC(p)=1/(4m)$ and $\SMC(p)=1/4$, and the predictor is calibrated, that is,
its bias with respect to the constant auditor is zero at every value.
\end{proposition}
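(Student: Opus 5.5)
The inequalities are immediate from \cref{def:mc}, and for the separation I would build an explicit instance in which each value has its own auditor and the sets these auditors single out are disjoint.

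\textbf{The inequalities.} $\MC(Q)\le\SMC(Q)$ holds because a supremum of a sum is at most the sum of the suprema. For the other inequality, I would bound each of the $m$ terms of $\SMC(Q)$ by the largest one:
\[
  \SMC(Q)=\sum_v\sup_h|B_h(Q,v)|\le m\max_v\sup_h|B_h(Q,v)|.
\]
For each $v$ and $h$, $|B_h(Q,v)|\le\sum_{v'}|B_h(Q,v')|\le\MC(Q)$. Hence $\SMC(Q)\le m\,\MC(Q)$.

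\textbf{The construction.} I would take $\cX=[m]\times\{+,-\}$ with the uniform distribution, so each input has mass $1/(2m)$. Choose distinct values $v_j=\tfrac14+\tfrac{j}{2m}\in[\tfrac14,\tfrac34]$ for $j\in[m]$, and let $p(j,\pm)=v_j$. Give the outcomes binary distributions with $\fs(j,+)=v_j+\tfrac14$ and $\fs(j,-)=v_j-\tfrac14$; both lie in $[0,1]$. Then $\E[v_j-Y\mid p(X)=v_j]=0$, so the bias with respect to the constant auditor vanishes at every value, and $p$ is calibrated.

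\textbf{The class.} Let $\cH=\{h_1,\ldots,h_m\}$, where $h_j$ is $+1$ on the single input $(j,+)$ and $-1$ elsewhere.
\begin{itemize}[leftmargin=2em]
\item VC dimension. One point is shattered, for example by $h_1$ and $h_2$ at $(1,+)$. No two points are shattered, because every $h_j$ takes the value $+1$ on at most one input, so the labeling $(+1,+1)$ is never realized. Hence $\VC(\cH)=1$.
\item Bias of $h_j$ at its own value. It equals
\[
  \tfrac1{2m}\bigl(-\tfrac14\bigr)+\tfrac1{2m}\bigl(-1\bigr)\bigl(+\tfrac14\bigr)=-\tfrac1{4m}.
\]
\item Bias of $h_j$ at $v_k$ with $k\ne j$. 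Here $h_j\equiv-1$ on block $k$, so the bias is minus the calibration bias, which is $0$.
\end{itemize}
Thus $\sum_v|B_{h_j}(p,v)|=1/(4m)$ for every $j$, which gives $\MC(p)=1/(4m)$.

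\textbf{The swap error.} At each value $v_j$ we have $|B_{h_k}(p,v_j)|=0$ for $k\ne j$ and $|B_{h_j}(p,v_j)|=1/(4m)$, so $\sup_h|B_h(p,v_j)|=1/(4m)$. Summing over the $m$ values gives $\SMC(p)=1/4$.

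None of these steps is a real obstacle. The only point that needs care is choosing the class so that its VC dimension is one while its auditors remain invisible on the other blocks; "$+1$ on a single point" achieves this, and calibration makes the contribution of the other blocks vanish.
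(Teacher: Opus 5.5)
Your proof is correct and is essentially the paper's: the inequalities are argued identically. Your instance is the paper's $2m$-input construction with each auditor negated (the paper's $h_i$ is $+1$ except $-1$ at the high-mean input of pair $i$), and without the constant auditor $h_0\equiv1$, which the paper adds to the class but which the statement does not require. If you did want a constant auditor in your class, it should be $h\equiv-1$ rather than $h\equiv+1$: once $m\ge3$, $h\equiv+1$ together with $h_1,h_2,h_3$ shatters $\{(1,+),(2,+)\}$, so the VC dimension would rise to two.
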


\begin{proof}
The first inequality exchanges a supremum and a sum.  The second holds
because, for every auditor $h$ and value $v$, the bias $|B_h(Q,v)|$ is at
most $\MC(Q)$.

For the example, take $2m$ equally likely inputs
$a_1,b_1,\ldots,a_m,b_m$.  Let $p_j=1/4+j/(2(m+1))$ and
$p(a_j)=p(b_j)=p_j$, and let the outcome be Bernoulli with
mean $p_j-1/4$ at $a_j$ and $p_j+1/4$ at $b_j$.  Let $h_0\equiv1$, and for
$i\in[m]$ let $h_i$ equal $1$ except $h_i(b_i)=-1$.  This class has VC
dimension one: it shatters each $b_i$ but no auditor is $-1$ at two points.
Direct calculation gives $B_{h_0}(p,p_j)=0$ and
$B_{h_i}(p,p_j)=\mathbf 1\{i=j\}/(4m)$, so $\MC(p)=1/(4m)$ and
$\SMC(p)=1/4$; the constant auditor shows that $p$ is calibrated.
\end{proof}

\begin{proposition}[Coarsening]
\label{prop:coarsen}
Partition $[0,1]$ into $K'$ intervals of width $1/K'$, and let $Q'$ be
obtained from $Q$ by moving each value to the midpoint of its interval,
aggregating probability mass.  Then $\SMC(Q')\le K'\,\MC(Q)+1/(2K')$.  In
particular, if $\MC(Q)\le\varepsilon$, choosing $K'=\lceil\varepsilon^{-1/2}\rceil$
gives $\SMC(Q')=O(\sqrt{\varepsilon})$ while moving each prediction by at
most $1/(2K')=O(\sqrt{\varepsilon})$.
\end{proposition}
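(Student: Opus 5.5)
Fix an interval $I$ of the partition with midpoint $c_I$, and let $V_I$ be the set of values of $Q$ lying in $I$. By construction $Q'_x(c_I)=\sum_{v\in V_I}Q_x(v)$. The bias of $Q'$ at $c_I$ for any auditor $h$ then decomposes as
\[
  B_h(Q',c_I)=\sum_{v\in V_I}\E\bigl[h(X)Q_X(v)(c_I-Y)\bigr]
  =\sum_{v\in V_I}B_h(Q,v)+\sum_{v\in V_I}\E\bigl[h(X)Q_X(v)(c_I-v)\bigr].
\]
The plan is to bound the two sums separately: the first by the ordinary error, the second by the displacement.

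\emph{First sum.} For every $h\in\cH$, the triangle inequality gives $\bigl|\sum_{v\in V_I}B_h(Q,v)\bigr|\le\sum_{v\in\cV}|B_h(Q,v)|\le\MC(Q)$.

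\emph{Second sum.} Since $|h|=1$ and $|c_I-v|\le1/(2K')$ for $v\in I$, this term is at most $\frac1{2K'}\E\bigl[\sum_{v\in V_I}Q_X(v)\bigr]=\frac1{2K'}\Pr\{V\in I\}$.

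Taking the supremum over $h$ separately for each $I$ gives
\[
  \sup_h|B_h(Q',c_I)|\le\MC(Q)+\tfrac1{2K'}\Pr\{V\in I\}.
\]
We then sum over the $K'$ intervals. The first term contributes $K'\MC(Q)$. The second contributes at most $1/(2K')$, because the events $\{V\in I\}$ are disjoint and their probabilities sum to at most one. This yields $\SMC(Q')\le K'\MC(Q)+1/(2K')$.

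Two details need care. First, the partition should use half-open intervals, with the last one closed, so that every value lies in exactly one interval. Second, if two midpoints coincided, their masses would simply aggregate, which only helps the triangle inequality; with disjoint intervals of width $1/K'$ this does not happen.

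For the particular case, set $K'=\lceil\varepsilon^{-1/2}\rceil$. Then $K'\varepsilon\le\sqrt\varepsilon+\varepsilon\le 2\sqrt\varepsilon$ and $1/(2K')\le\sqrt\varepsilon/2$, so $\SMC(Q')=O(\sqrt\varepsilon)$. Each prediction moves by at most half an interval width, that is, by at most $1/(2K')$.

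None of these steps is a real obstacle. The only point to check is that the supremum over $h$, taken per interval, is applied after the triangle inequality, so that each interval costs a full $\MC(Q)$; this is exactly where the factor $K'$ comes from.
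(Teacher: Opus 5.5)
Your proposal is correct and follows the paper's proof: you split the bias at each midpoint into the sum of the original biases over the interval, bounded by $\MC(Q)$ uniformly in $h$, plus a displacement term bounded by $\Pr\{V\in I\}/(2K')$. You then take the supremum per midpoint and sum over the $K'$ intervals, exactly as the paper does.
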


\begin{proof}
For an interval $U$ with midpoint $w$, the bias of $Q'$ at $w$ is
\[
  \sum_{v\in U}B_h(Q,v)+\E\Bigl[h(X)\sum_{v\in U}Q_X(v)(w-v)\Bigr].
\]
The first term is
at most $\MC(Q)$ in absolute value, uniformly in $h$, and the second is at
most $\Pr\{V\in U\}/(2K')$ for $V\sim Q_X$.  Take the supremum over $h$ at
each midpoint and sum over the $K'$ intervals.
\end{proof}

Unlike \cref{prop:ratio}, \cref{prop:coarsen} modifies the predictor; it is
the analogue for our error of the bucketing argument of
\citet[Claim~3.4]{gopalan2023swap}.

\subsection{Sample complexity upper bounds}
\label{sec:swap-upper}

Exponential weights run separately at each grid value give a swap guarantee
for finite classes (\cref{thm:swap-finite}), and a cover built from
unlabeled inputs extends it to VC classes (\cref{thm:swap-vc}).

\begin{theorem}[Finite classes]
\label{thm:swap-finite}
Let $\cG\subseteq\{-1,+1\}^{\cX}$ be finite and closed under negation, with
$N=|\cG|$, and let $K\ge1$ and $m=K+1$.  There is a forecaster on the grid
$\Lambda$ with $O(mN)$ operations and hypothesis
evaluations per round such that, for every sequence of inputs and outcomes,
$\SMC_T\le T/(2K)+\sqrt{2Tm\log N}$ with respect to $\cG$.  On $T$
independent examples from $\cD$, the average $\Qbar$ of its forecasts
satisfies, with
probability at least $1-\delta$,
\[
  \SMC(\Qbar;\cG,\cD)\le\frac1{2K}+C\sqrt{\frac{m\log N+\log(1/\delta)}{T}}.
\]
\end{theorem}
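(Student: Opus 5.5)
We construct the forecaster from one exponential-weights learner per grid value, following the overview in \cref{sec:overview-swap}. For each $v\in\Lambda$ we keep a distribution $w_{t,v}\in\Delta(\cG)$. The mixed auditor at $v$ is $\bh_{t,v}(x)=\sum_{g}w_{t,v}(g)g(x)$. At input $X_t$ we form the coefficient vector $a_v=\bh_{t,v}(X_t)\in[-1,1]$ and forecast $Q_t(X_t)=\Root(a)$, as in \cref{lem:root}. Let $\mu_t$ be its mean. After $Y_t$ is revealed, auditor $g$ at value $v$ receives the reward $r_{t,v}(g)=g(X_t)Q_t(X_t)(v)(\mu_t-Y_t)$, the virtual residual, and each $w_{\cdot,v}$ is updated multiplicatively with a common rate $\eta$. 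The cost per round is $O(mN)$ evaluations plus the $O(m)$ cost of the root rule.

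\textbf{Online bound.} First, \cref{lem:root} gives, on every round,
\[
\sum_v\sum_g w_{t,v}(g)\,r_{t,v}(g)=(\mu_t-Y_t)\sum_vQ_t(X_t)(v)\,a_v\le0 .
\]
Second, set $\ell_{t,v}=Q_t(X_t)(v)\,|\mu_t-Y_t|$. Each reward satisfies $|r_{t,v}(g)|\le \ell_{t,v}$, and $\sum_v\ell_{t,v}\le1$. The second-order regret bound of exponential weights (Hedge) with $\eta\le1$ then gives, for every $g_v\in\cG$,
\[
\sum_t r_{t,v}(g_v)-\sum_t\langle w_{t,v},r_{t,v}\rangle\le\frac{\log N}{\eta}+\eta\sum_t\ell_{t,v}^2 .
\]
Third, sum over $v$. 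The learners' total reward is $\le0$, and $\sum_{t,v}\ell_{t,v}^2\le\sum_{t,v}\ell_{t,v}\le T$. Hence
\[
\sum_v\max_{g}\sum_tr_{t,v}(g)\le \frac{m\log N}{\eta}+\eta T .
\]
Taking $\eta=\sqrt{m\log N/T}$ yields $2\sqrt{Tm\log N}$. (If $\eta>1$, the trivial bound $T\le\sqrt{Tm\log N}$ holds anyway.) Getting the constant $\sqrt2$ in place of $2$ needs a sharper Hedge analysis, for instance a tuned rate or the inequality $e^x\le1+x+x^2/2$ applied for $x\le0$ with centering; I would settle the constant at the end. Because $\cG$ is closed under negation, $\max_g\sum_t r_{t,v}(g)=\max_g|\sum_tr_{t,v}(g)|$. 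So the left side is the virtual swap error. By \cref{lem:adjacent} (summing over $v$ and $t$ with the best $g_v$ at each $v$), switching to actual residuals costs at most $T/(2K)$. This gives $\SMC_T\le T/(2K)+\sqrt{2Tm\log N}$.

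\textbf{Batch bound.} On independent examples, the forecaster's internal state depends only on the past, and $Q_t$ is a deterministic map of the input given that past. The maps $z_{t,v}(x,y)=Q_t(x)(v)(\mu_t(x)-y)$ are therefore predictable, with $\sum_v|z_{t,v}|\le1$. We need uniform control over tuples $(g_v)_v\in\cG^\Lambda$. Since $\cG$ is finite, we avoid the VC-based \cref{lem:martingale}(b) and apply Azuma's inequality directly to each fixed tuple, then take a union bound over the $N^m$ tuples. For each tuple the increments $\sum_v g_v(X_t)z_{t,v}$ lie in $[-1,1]$, so the deviation is at most $C\sqrt{T(m\log N+\log(1/\delta))}$.

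Combining, the tuple $(g_v)$ attaining the population swap error satisfies
\[
T\sum_v|\E[g_v(X)\overline{z}_v]|\le \text{(empirical virtual)} + C\sqrt{T(m\log N+\log(1/\delta))},
\]
where $\overline z_v=\frac1T\sum_t z_{t,v}$. The empirical virtual term is at most $\sqrt{2Tm\log N}$. Negation closure absorbs the absolute values: choose $g_v$ or $-g_v$ to make each sign positive. The remaining step converts virtual to actual residuals in expectation, again by \cref{lem:adjacent}, at cost $1/(2K)$. Dividing by $T$ gives the claim.

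\textbf{Main obstacle.} The delicate step is the regret analysis in the online bound. It must exploit $\sum_v\ell_{t,v}\le1$, so that the per-value regrets add up to $\sqrt{Tm\log N}$ and not $m\sqrt{T\log N}$, using a single rate $\eta$ common to all values. The per-round loss-scaled second-order term is what delivers this.
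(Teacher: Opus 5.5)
Your proposal follows the paper's proof essentially step for step: per-value exponential weights fed virtual residuals, the root rule for the cancellation, a second-order bound summed over values using $\sum_v|Z_{t,v}|\le1$, negation closure, \cref{lem:adjacent}, and Azuma's inequality with a union bound over the $N^m$ tuples for the batch bound. The one loose end is the constant, and the fix is concrete.

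Your bound $\log N/\eta+\eta\sum_t\ell_{t,v}^2$ gives at best $2\sqrt{Tm\log N}$ for any choice of rate, so retuning $\eta$ cannot recover the stated $\sqrt{2Tm\log N}$. The paper instead applies Hoeffding's lemma to the reward $g(X_t)Z_{t,v}$ under $g\sim w_{t,v}$. This reward lies in an interval of length $2|Z_{t,v}|$, so the log-partition function increases by at most $\eta\,\bar g_{t,v}(X_t)Z_{t,v}+\eta^2Z_{t,v}^2/2$ per round, for every $\eta>0$. That halves the second-order term and removes the restriction $\eta\le1$. The choice $\eta=\sqrt{2m\log N/T}$ then gives exactly $\sqrt{2Tm\log N}$.
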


The forecaster is the loop of \cref{alg:swap-cover}, run with the given
$\cG$, $K$, and $T$ and with $\eta=\sqrt{2m\log N/T}$: exponential weights at each grid value, combined by the root rule.  The analysis uses the
second-order regret bound of exponential weights and the fact that the
virtual residuals at a round have $\ell_1$ norm at most one
(\cref{app:swap}).

With $K=\Theta(1/\varepsilon)$, \cref{thm:swap-finite} uses
$\wtO(\varepsilon^{-3}\log N+\varepsilon^{-2}\log(1/\delta))$ samples, which
for the mean ECE slightly sharpens the
$\wtO(\varepsilon^{-3}\log^{3/2}N)$ bound that \citet{collina2026sample}
derive for finite classes.

For a VC class, \cref{alg:swap-cover} runs this forecaster over a finite
cover built from unlabeled inputs.

\begin{algorithm}[t]
\caption{$\textsc{SwapCover}(\varepsilon,\delta)$: per-value exponential weights over a cover}
\label{alg:swap-cover}
\begin{algorithmic}[1]
\Require accuracy $\varepsilon$; confidence $\delta$; VC dimension $d$ of $\cH$
\State set $K=\lceil C/\varepsilon\rceil$, $m=K+1$, $\rho=\varepsilon/(8m)$,
  and $\ncv=\lceil C(d\log(1/\rho)+\log(2/\delta))/\rho\rceil$
\State draw $\ncv$ unlabeled inputs; let $\cG$ contain one auditor for each
  labeling of these inputs realized by $\cH$, together with its negation
\State set $N=|\cG|$, $T=\wtO((m\log N+\log(1/\delta))/\varepsilon^2)$, and
  $\eta=\sqrt{2m\log N/T}$; for each $v\in\Lambda$, let $w_{1,v}$ be uniform
  on $\cG$
\For{$t=1,\ldots,T$}
  \State define $\bar g_{t,v}(x)=\sum_{g\in\cG}w_{t,v}(g)\,g(x)$ for
    $v\in\Lambda$, \ $Q_t(x)=\Root\bigl((\bar g_{t,v}(x))_{v\in\Lambda}\bigr)$,
    and $\mu_t(x)=\sum_vQ_t(x)(v)\,v$
  \State receive $X_t$, forecast $Q_t(X_t)$, and observe $Y_t$
  \For{each $v\in\Lambda$}
    \State $Z_{t,v}=Q_t(X_t)(v)\bigl(\mu_t(X_t)-Y_t\bigr)$ and
      $w_{t+1,v}(g)\propto w_{t,v}(g)\exp\bigl(\eta\,g(X_t)Z_{t,v}\bigr)$ for
      $g\in\cG$
  \EndFor
\EndFor
\State \Return the randomized predictor $\Qbar=\frac1T\sum_tQ_t$
\end{algorithmic}
\end{algorithm}

\begin{theorem}[VC classes]
\label{thm:swap-vc}
For every class $\cH$ of VC dimension $d$ and every $\delta\in(0,1)$,
\cref{alg:swap-cover} uses $\wtO((d+\log(1/\delta))\varepsilon^{-3})$
independent examples and outputs, with probability at least $1-\delta$, a
randomized predictor $Q$ with
$\SMC(Q;\cH,\cD)\le\varepsilon$.  Its cover has $(d/\varepsilon)^{O(d)}$
elements, and its running time is exponential in $d$.
\end{theorem}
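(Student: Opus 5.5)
The proof runs \cref{thm:swap-finite} on the data-dependent cover $\cG$ built by \cref{alg:swap-cover}, which consists of the realized labelings of the $\ncv$ unlabeled inputs together with their negations.

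\textbf{Step 1: the cover is small.} By Sauer's lemma,
\[
  N\le2(e\ncv/d)^d,
  \qquad\text{so}\qquad
  \log N=O\bigl(d\log(d/\varepsilon)+\log\log(1/\delta)\bigr).
\]
With $\rho=\varepsilon/(8m)=\Theta(\varepsilon^2)$, this gives $N=(d/\varepsilon)^{O(d)}$ for fixed $\delta$. The running time is $O(mN)$ per round, which is exponential in $d$.

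\textbf{Step 2: the cover is a $\rho$-cover.} Apply the $\rho$-net fact to the disagreement sets $\{h\ne h'\}$, which have VC dimension $O(d)$. With probability at least $1-\delta/2$, the unlabeled sample is a $\rho$-net for these sets. Fix $h\in\cH$ and let $g\in\cG$ be the auditor agreeing with $h$ on the sample; $g$ is itself some $h'\in\cH$. Since $h$ and $h'$ agree on every sample point, the set $\{h\ne h'\}$ contains no sample point, so $\cD_X(h\ne g)\le\rho$. Negations are covered the same way.

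\textbf{Step 3: apply the finite-class bound.} Because $\cG$ is built from unlabeled inputs drawn independently of the $T$ labeled examples, we can condition on $\cG$ and apply \cref{thm:swap-finite} with confidence $\delta/2$. This gives
\[
  \SMC(\Qbar;\cG,\cD)\le\frac1{2K}+C\sqrt{\frac{m\log N+\log(1/\delta)}{T}}.
\]
With $K=\lceil C/\varepsilon\rceil$ and
\[
  T=\wtO\bigl((m\log N+\log(1/\delta))/\varepsilon^2\bigr)=\wtO\bigl((d+\log(1/\delta))/\varepsilon^3\bigr),
\]
the right side is at most $\varepsilon/2$. The total sample count is $T+\ncv$. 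Since $\ncv=\wtO((d+\log(1/\delta))/\varepsilon^2)$, it is dominated by $T$.

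\textbf{Step 4: transfer from $\cG$ to $\cH$.} This step is the main point of care: the approximation loss must be paid once per value, yet stay at most $\varepsilon/2$ in total. Fix $v$ and $h\in\cH$, and take $g\in\cG$ with $\cD_X(h\ne g)\le\rho$. Since $|Q_X(v)(v-Y)|\le1$,
\[
  |B_h(\Qbar,v)-B_g(\Qbar,v)|\le2\rho.
\]
Hence
\[
  \sup_{h\in\cH}|B_h(\Qbar,v)|\le\sup_{g\in\cG}|B_g(\Qbar,v)|+2\rho.
\]
Summing over the $m$ values gives
\[
  \SMC(\Qbar;\cH,\cD)\le\SMC(\Qbar;\cG,\cD)+2m\rho\le\frac\varepsilon2+\frac\varepsilon4<\varepsilon.
\]
This is exactly why $\rho$ is chosen as $\varepsilon/(8m)$ rather than $\varepsilon$. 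The extra factor of $m$ is harmless, since $\rho$ enters $\log N$ only logarithmically.

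A union bound over the two failure events of probability $\delta/2$ each completes the proof.
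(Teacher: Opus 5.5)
Your proposal is correct and matches the paper's proof step for step. It uses a $\rho$-net of unlabeled inputs for the disagreement sets, applies \cref{thm:swap-finite} to the resulting cover on an independent training sample with confidence $\delta/2$, and pays a transfer cost of $2\rho$ per value, for a total of $2m\rho\le\varepsilon/4$.

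There are two small points to fix. First, your running-time count covers only the forecasting loop. The paper also accounts for building the cover: it finds representatives of the realized labelings by extending partial labelings one point at a time and testing realizability with an ERM call, using $O(\ncv|\cG|)$ calls in all. Second, the $\log\log(1/\delta)$ term in your bound on $\log N$ should carry a factor $d$, which is harmless inside $\wtO$.
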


\paragraph{Proof sketch.}
With the parameters of \cref{alg:swap-cover}, $m=O(1/\varepsilon)$.  A
sample of
$\ncv=\wtO(d/\rho)=\wtO(d\varepsilon^{-2})$ unlabeled inputs is a $\rho$-net
for the disagreement sets of $\cH$, so one representative of each labeling of
this sample, together with its negation, forms a cover $\cG$ with
$\log|\cG|=O(d\log\ncv)$.  \Cref{thm:swap-finite} on this cover needs
$\wtO(md/\varepsilon^2)=\wtO(d\varepsilon^{-3})$ examples, and replacing the auditor at each of the $m$ values by its representative in
$\cG$ changes each bias by at most $2\rho$, a total of
$2m\rho\le\varepsilon/4$.

\subsection{An oracle-efficient swap algorithm}
\label{sec:swap-poly}

With the hybrid learner of \cref{thm:hybrid} for $\Hpm$, whose VC dimension
is at most $Cd$, one learner per grid value gives swap multicalibration with
$\wtO(d\varepsilon^{-4})$ samples (\cref{thm:swap-poly}).  The forecaster, \cref{alg:swap}, is the swap
analogue of \cref{alg:signed} without tests: the learner $\cB_v$ at grid
value $v$ receives the virtual residual at $v$.

\begin{algorithm}[t]
\caption{$\textsc{PerValue}(K,T,L)$: per-value hybrid learners}
\label{alg:swap}
\begin{algorithmic}[1]
\Require grid size $K$; number of rounds $T$; number $L$ of auxiliary draws
\State for each $v\in\Lambda$, start a Gaussian follow-the-perturbed-leader
  learner $\cB_v$ over $\Hpm$ (\cref{thm:hybrid}) with horizon $T$
\For{$t=1,\ldots,T$}
  \State for each $v\in\Lambda$, draw $L$ hypotheses from the current
    distribution of $\cB_v$ and let $\hat h_{t,v}(x)$ be their average
    \Comment{$2Lm$ ERM calls over $\cH$ per round}
  \State define $Q_t(x)=\Root\bigl((\hat h_{t,v}(x))_{v\in\Lambda}\bigr)$
    and $\mu_t(x)=\sum_vQ_t(x)(v)\,v$
  \State receive $X_t$, forecast $Q_t(X_t)$, and observe $Y_t$
  \State for each $v\in\Lambda$, give $\bigl(X_t,\;Q_t(X_t)(v)(\mu_t(X_t)-Y_t)\bigr)$
    to $\cB_v$ \Comment{each $\cB_v$ makes its round draw (step~3 of \cref{def:hybrid}), unused by the forecaster: $2m$ ERM calls}
\EndFor
\State \Return the retained draws, which define $Q_1,\ldots,Q_T$
\end{algorithmic}
\end{algorithm}

\begin{theorem}[Oracle-efficient swap multicalibration]
\label{thm:swap-poly}
Let $\delta,\alpha\in(0,1]$, and run \cref{alg:swap} on independent inputs,
with outcomes chosen as in \cref{thm:online} (not depending on the
learners' round draws), and with $L\ge2\alpha^{-2}\log(4mT/\delta)$ auxiliary draws, so that each average
$\hat h_{t,v}$ estimates the mixed prediction $\bh_{t,v}$ of $\cB_v$ within
$\alpha$.  Then, with
probability at least $1-\delta$,
\[
  \SMC_T\le\frac{T}{2K}+T\alpha+m\Reg\Bigl(T,Cd,\frac{\delta}{2m}\Bigr).
\]
On $T$ independent examples from $\cD$, the averaged predictor satisfies,
with probability at least $1-\delta$,
\[
  \SMC(\Qbar)\le\frac1{2K}+\alpha+\frac mT\Reg\Bigl(T,Cd,\frac{\delta}{4m}\Bigr)
  +C\sqrt{\frac{md\log(emT)+\log(1/\delta)}{T}}.
\]
Consequently, $T=\wtO((d+\log(1/\delta))\varepsilon^{-4})$ examples and
$\wtO((d+\log(1/\delta))\varepsilon^{-7})$ ERM calls suffice for
$\SMC(\Qbar)\le\varepsilon$ with probability at least $1-\delta$.
\end{theorem}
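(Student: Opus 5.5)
Three steps: an online bound summing per-value regrets, a conversion to a population bound, and a choice of parameters.

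\emph{Online bound.} For each grid value $v$, let $b_{t,v}$ be the virtual residual at $v$: $Q_t(X_t)(v)(\mu_t(X_t)-Y_t)$. Since $\Hpm$ contains each auditor and its negation, $\sup_{h\in\cH}|\sum_t h(X_t)b_{t,v}|=\sup_{g\in\Hpm}\sum_t g(X_t)b_{t,v}$. The feedback to $\cB_v$ satisfies the conditions of \cref{def:hybrid}: it lies in $[-1,1]$ and depends on the auxiliary draws, not on the round draw. So \cref{thm:hybrid} with failure probability $\delta/(2m)$ per learner gives, simultaneously for all $v$,
\[
  \sup_{g\in\Hpm}\sum_t g(X_t)b_{t,v}\le\sum_t\bh_{t,v}(X_t)b_{t,v}+\Reg\Bigl(T,Cd,\frac{\delta}{2m}\Bigr).
\]
Sum over $v$. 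The main term is $\sum_t(\mu_t-Y_t)\sum_vQ_t(v)\bh_{t,v}(X_t)$. Replacing $\bh_{t,v}$ by $\hat h_{t,v}$ makes each round nonpositive by \cref{lem:root}. The replacement costs at most $\alpha\sum_vQ_t(v)\le\alpha$ per round, provided $|\hat h_{t,v}-\bh_{t,v}|\le\alpha$ at $X_t$ for all $t,v$. This accuracy follows from Hoeffding with a union bound over $mT$ pairs, which is where the condition on $L$ enters, with the remaining $\delta/2$. Finally, \cref{lem:adjacent} converts virtual residuals to actual ones at a total cost of $T/(2K)$.

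\emph{Batch bound.} I would follow \cref{thm:batch}. The map $z_{t,v}(x,y)=Q_t(x)(v)(\mu_t(x)-y)$ is predictable, and $\sum_v|z_{t,v}|\le1$. Apply \cref{lem:martingale}(b) to the tuple $(h_v)\in(\Hpm)^\Lambda$ that is the per-value maximizer of the population virtual biases of $\Qbar$; the supremum in the lemma covers every tuple at once. This shows that $T\sum_v\sup_{h\in\Hpm}\E[h(X)\frac1T\sum_tz_{t,v}]$ is at most the empirical sum plus $C\sqrt{T(md\log(emT)+\log(1/\delta))}$. The empirical sum is bounded by the online argument above, using the population means of the $z_{t,v}$. \cref{lem:adjacent}, applied inside the expectation for each $Q_t$, then gives the $1/(2K)$ term. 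Dividing by $T$ and splitting $\delta$ among the three events, I obtain the stated bound, with $\delta/(4m)$ in the regret.

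\emph{Parameters.} Take $K\asymp1/\varepsilon$ and $\alpha=\varepsilon/4$. The regret term is $\frac mT\wtO(\sqrt{T(d+\log(m/\delta))})$, which is $\wtO(\varepsilon^{-1}\sqrt{(d+\log(1/\delta))/T})$. Making it at most $\varepsilon/4$ requires $T=\wtO((d+\log(1/\delta))\varepsilon^{-4})$. The martingale term needs only $\wtO(d\varepsilon^{-3})$, which is smaller. The oracle calls per round are $2Lm+2m$, with $L=\wtO(\varepsilon^{-2})$, so there are $\wtO(\varepsilon^{-3})$ calls per round and $\wtO((d+\log(1/\delta))\varepsilon^{-7})$ in total.

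The main obstacle is checking the conditions of \cref{thm:hybrid}. The feedback to $\cB_v$ at round $t$ depends on the other learners' auxiliary draws and on $Y_t$, which may be chosen adaptively, but not on $\cB_v$'s own round draw. That is exactly the separation \cref{def:hybrid} demands, and I would make it explicit so that the union bound over the $m$ learners is valid.
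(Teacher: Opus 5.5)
Your proposal is correct and follows the paper's proof essentially step for step. It uses the same Hoeffding bound on the auxiliary averages, one application of \cref{thm:hybrid} over $\Hpm$ per grid value with the root rule making the summed main term nonpositive, \cref{lem:adjacent} for the grid term, \cref{lem:martingale}(b) for the batch bound, and the same parameter and oracle-call accounting.
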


\paragraph{Proof sketch.}
The swap error is a sum over grid values of separate suprema, so it splits
into $m$ online problems, one per value, each handled by its own hybrid
learner, while the root rule makes the forecaster's own contribution
nonpositive.  Precisely, the root rule makes
$\sum_v\hat h_{t,v}(X_t)Q_t(X_t)(v)(\mu_t(X_t)-Y_t)$ nonpositive, and by
Hoeffding's inequality and a union bound over values and rounds, replacing
$\hat h_{t,v}$ by $\bh_{t,v}$ costs at most $\alpha$ per round.  Applying
\cref{thm:hybrid} at each value and summing bounds the \emph{virtual swap
error}, the swap error with virtual residuals in place of actual ones, and
\cref{lem:adjacent} adds $T/(2K)$.  The batch bound uses
\cref{lem:martingale}(b).  With $K,1/\alpha=\Theta(1/\varepsilon)$ the
dominant term is $m\Reg/T=\wtO(\varepsilon^{-1}\sqrt{d/T})$, which gives
$T=\wtO(d\varepsilon^{-4})$, and the number of ERM calls is
$2Tm(L+1)=\wtO(d\varepsilon^{-4}\cdot\varepsilon^{-1}\cdot\varepsilon^{-2})$.

\begin{remark}[Why the swap rate is quartic]
\label{rem:quartic}
The learner at grid value $v$ receives feedback on every round, but the
feedback is nonzero only to the extent that the forecast puts mass on $v$.
If its regret scaled with $\sqrt{d\sum_tQ_t(X_t)(v)}$ rather than
$\sqrt{dT}$, summing over values would give $\wtO(\sqrt{TKd})$ and the cubic
rate.  \Cref{thm:hybrid} does not provide this, and running the learner only
on rounds selected according to $Q_t(X_t)(v)$ does not help: the selected
inputs are no longer independent draws from $\cD_X$.  The adversarial online
learners assumed by \citet{hu2025efficient} tolerate this selection, and the
exponential weights of \cref{thm:swap-finite} obtain the aggregate rate
through their second-order regret bound.  We do not know an algorithm that
obtains it with polynomially many ERM calls.
\end{remark}

\subsection{Lower bound}
\label{sec:swap-lb}

We construct a family of distributions on which learning a
swap-multicalibrated predictor requires $\Omega(d/\varepsilon^3)$ samples,
while ordinary multicalibration is achieved by a fixed predictor
(\cref{thm:swap-lb}); a variant gives the same for VC dimension one
(\cref{prop:swap-lb-vc1}).  The same family gives the sample lower bounds for
error estimation in \cref{sec:audit-lb}.  The construction resembles that of
\cref{sec:erm-lb}, and we reuse $\beta$, $J$, $c_j$, and $\tau$ for the gap,
the number of blocks, the block centers, and the hidden signs, with the
values given below; here $\theta$ instead denotes the parameter from which
$\tau$ is built, and $k$ indexes the inputs within a block.

\paragraph{The balanced family.}
Let $d\ge2$ be even, $\beta\in(0,1/400]$, and let $J$ be an integer with
$1\le J\le1/(400\beta)$.  The inputs are the pairs $(j,k)\in[J]\times[d]$,
ordered lexicographically, with the uniform distribution; block $j$ is the
set $\{j\}\times[d]$.  Let $c_j=1/4+100j\beta$, so $c_j\in(1/4,1/2]$.  For a
parameter $\theta\in\{-1,+1\}^{J\times(d/2)}$, define signs
$\tau_{j,2i-1}=\theta_{j,i}$ and $\tau_{j,2i}=-\theta_{j,i}$, which satisfy
$\sum_k\tau_{j,k}=0$ in each block, and let $\cD_\theta$ be the distribution
in which $Y$ given $X=(j,k)$ is Bernoulli with mean
$\fs_\theta(j,k)=c_j+\beta\tau_{j,k}\in[1/8,5/8]$.  Let $\cD_\emptyset$ be the
distribution in which this mean is $c_j$.  The class $\cH_{J,d}$ consists of
\begin{enumerate}[leftmargin=2em]
\item the \emph{thresholds}, equal to $+1$ on an initial segment of the
  ordered inputs and $-1$ afterwards, including both constants; and
\item the \emph{local auditors} $h_{j,u}$ for $j\in[J]$ and $u\in\{-1,+1\}^d$,
  equal to $u_k$ at $(j,k)$ and $+1$ outside block $j$.
\end{enumerate}
Neither the class nor the input distribution depends on $\theta$.  Let
$Q^0$ be the deterministic predictor with $Q^0(j,k)=c_j$, which has $J$
values.

\begin{lemma}[The balanced family]
\label{lem:lb-class}
The class $\cH_{J,d}$ has VC dimension $d$, and weighted ERM over it takes
$O(Jd)$ time.  For every $\theta$,
$\MC(Q^0;\cH_{J,d},\cD_\theta)=\beta/J$ and
$\SMC(Q^0;\cH_{J,d},\cD_\theta)=\beta$, while both errors are zero under
$\cD_\emptyset$.
\end{lemma}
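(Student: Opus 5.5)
The plan is to verify the four claims directly from the definitions, since $Q^0$ is deterministic and constant on blocks. Every computation reduces to the block-restricted biases
\[
  B_h(Q^0,c_j)=\frac1{Jd}\sum_{k=1}^d h(j,k)\bigl(c_j-\fs_\theta(j,k)\bigr)
  =-\frac{\beta}{Jd}\sum_{k=1}^d h(j,k)\,\tau_{j,k}.
\]
This uses that the values $c_j$ are distinct, so the value $c_j$ is taken exactly on block $j$. Under $\cD_\emptyset$ the factor $c_j-\fs$ vanishes identically, so both errors are zero.

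\emph{Biases under $\cD_\theta$.}
\begin{itemize}[leftmargin=2em]
\item \emph{Constant blocks.} If $h$ is constant on block $j$, the bias at $c_j$ is zero, because $\sum_k\tau_{j,k}=0$.
\item \emph{Thresholds.} A threshold is non-constant on at most one block $j$. There it is $+1$ on $k\le r$ and $-1$ on $k>r$. The bias is then $-\frac{2\beta}{Jd}\sum_{k\le r}\tau_{j,k}$. The partial sums of the pattern $\theta,-\theta,\theta',-\theta',\ldots$ lie in $\{-1,0,1\}$, so this is at most $2\beta/(Jd)$ in absolute value.
\item \emph{Local auditors.} A local auditor $h_{j,u}$ has zero bias outside block $j$. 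Inside block $j$ its bias is $-\frac{\beta}{Jd}\langle u,\tau_{j,\cdot}\rangle$, which has absolute value at most $\beta/J$, with equality at $u=\pm\tau_{j,\cdot}$.
\end{itemize}
Since $d\ge2$, we have $2\beta/(Jd)\le\beta/J$. Hence $\MC(Q^0)=\beta/J$, attained by $h_{j,\tau_{j,\cdot}}$ for any block $j$. For the swap error, at each of the $J$ values the supremum is $\beta/J$, so $\SMC(Q^0)=J\cdot\beta/J=\beta$.

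\emph{VC dimension.} Any single block is shattered by the local auditors $h_{j,u}$, so $\VC(\cH_{J,d})\ge d$. For the upper bound, let $S$ have $d+1$ points. Then $S$ meets at least two blocks. Let $a$ be its first point and $b$ its last point in the lexicographic order, so they lie in different blocks.
\begin{itemize}[leftmargin=2em]
\item \emph{Case 1: some point $c\in S\setminus\{a,b\}$ lies outside block$(a)$.} Label $a$ and $c$ by $-1$ and $b$ by $+1$. A threshold cannot produce this, since $a=-1$ precedes $b=+1$. A local auditor equals $-1$ only inside its own block, but $a$ and $c$ lie in different blocks.
\item \emph{Case 2: otherwise.} Then $S\setminus\{b\}$ is exactly block$(a)$, which has $d\ge2$ points. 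Label $a$ and $b$ by $-1$, and the second point of block$(a)$ by $+1$. This labeling is non-monotone, so no threshold produces it. It also has $-1$'s in two blocks, so no local auditor produces it.
\end{itemize}
In either case $S$ is not shattered, so $\VC(\cH_{J,d})=d$.

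\emph{Weighted ERM in $O(Jd)$ time.} For thresholds, one pass of prefix sums of the weights in the lexicographic order gives the score of every threshold, including both constants. For local auditors, the best $h_{j,u}$ takes $u_k=\sgn(w_{j,k})$. Its score is $\sum_k|w_{j,k}|$ plus the total weight outside block $j$, and this is computable for every $j$ from the block sums. Returning the best candidate over both families takes $O(Jd)$ time.

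The step that needs the most care is the VC upper bound, specifically the degenerate Case 2, where all but one point of $S$ sit in a single block. I expect the rest to be routine bookkeeping.
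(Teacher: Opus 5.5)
Your proof is correct and follows the paper's argument. The ingredients match: the block-restricted bias formula $-\tfrac{\beta}{Jd}\sum_k h(j,k)\tau_{j,k}$, shattering of a single block together with a non-monotone labeling that puts $-1$ in two blocks for the VC bound, and the prefix-sum/block-sum ERM. There are two inessential differences. First, your case split is not needed, since the paper's single pattern $(-,+,-)$ on the first, any interior, and the last point already rules out every threshold and every local auditor. Second, your sharper threshold bound $2\beta/(Jd)$ is more than the proof requires, because the trivial bound $\beta/J$ suffices.
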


\begin{theorem}[Lower bound for swap multicalibration]
\label{thm:swap-lb}
Let $d\ge2$ be even, $\beta\in(0,1/8400]$, and $J=\lfloor1/(400\beta)\rfloor$,
so that $J\ge21$.  Suppose a learner, given $n$ independent samples from $\cD_\theta$, outputs a
finitely supported randomized predictor $Q$ with
$\SMC(Q;\cH_{J,d},\cD_\theta)\le\beta/20$ with probability at least $2/3$,
for every $\theta$.  Then
\[
  n>\frac{Jd}{6400\beta^2}=\Omega\Bigl(\frac{d}{\beta^3}\Bigr).
\]
This holds for learners with unlimited computation that know the input
distribution and the class.  By contrast, the fixed predictor $Q^0$, which
uses no samples, has $\MC(Q^0;\cH_{J,d},\cD_\theta)=\beta/J\le\beta/20$ for
every $\theta$ (\cref{lem:lb-class}).
\end{theorem}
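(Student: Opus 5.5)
We reduce to Assouad's lemma over the hypercube of parameters $\theta\in\{-1,+1\}^{J\times(d/2)}$. The plan is to show that any output $Q$ with $\SMC(Q;\cH_{J,d},\cD_\theta)\le\beta/20$ determines an estimate $\hat\theta(Q)$ that disagrees with $\theta$ in at most a small constant fraction of the $Jd/2$ coordinates. Assouad's lemma then forces $n\gtrsim Jd/\beta^2$, because two neighbors $\theta,\theta'$ differ in one pair of inputs of mass $2/(Jd)$, where the Bernoulli means differ by $2\beta$. The KL divergence between $n$-sample distributions of neighbors is therefore at most $n\cdot\frac{2}{Jd}\cdot C\beta^2$, since the means lie in $[1/8,5/8]$.

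\textbf{Step 1 (thresholds force predictions near centers).} The swap error dominates the ordinary error, so the thresholds (a VC-1 class together with the constants) give calibration-like control along the ordered inputs. I would show that the total mass $\E\sum_v Q_X(v)\,|v-c_j|$, summed over $j$ with weights, is $O(\beta/20)$ up to a constant factor. Equivalently, except for a small fraction of mass, $Q$ on block $j$ predicts within about $50\beta$ of $c_j$, so values attributable to different blocks are well separated (centers are $100\beta$ apart). The argument follows the anchor argument: for a value $v$ lying between centers, a threshold cutting between blocks compares the mass of $v$ on blocks below versus above. Since $f^\star_\theta$ on each block averages to $c_j$, and $v-c_j$ has constant sign on each side of the cut, the biases add with the same sign. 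Summing the $\sup$ over thresholds per value then bounds the total displacement mass. This is the step I expect to be the main obstacle: it needs a careful choice of thresholds per value, using the swap structure so that each $v$ may pick its own cut. That choice is exactly what makes the argument clean compared with \cref{lem:anchor}, which loses $\kappa_{\mathrm{anc}}$.

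\textbf{Step 2 (local auditors reveal signs).} For each value $v$ and block $j$, choose $u=\sgn$ of the signed residuals. The local auditor $h_{j,u}$ equals $+1$ outside block $j$, so subtract a constant-auditor term, which is controlled by the threshold that is constantly $+1$. Up to this subtraction, the local auditors give
\[
\sum_v\sum_j\sum_k\bigl|\E[\mathbf 1\{X=(j,k)\}Q_X(v)(v-f^\star_\theta(X))]\bigr|\lesssim J\cdot\SMC(Q)
\]
in aggregate. However, the swap supremum at each $v$ is a single auditor. Since values are localized to blocks by Step 1, each $v$ essentially serves one block, so summing over $v$ covers all blocks with one local auditor each, without the factor $J$. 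This yields $\sum_{j,k}\frac1{Jd}\,|\mu_Q(j,k)-c_j-\beta\tau_{j,k}|\cdot(\text{mass})\lesssim\beta/20+\text{Step 1 error}$, where $\mu_Q(j,k)=\sum_vQ_{(j,k)}(v)v$, after replacing $v$ by $c_j$ using Step 1.

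\textbf{Step 3 (decode and apply Assouad).} Define $\hat\theta_{j,i}=\sgn\bigl(\mu_Q(j,2i-1)-\mu_Q(j,2i)\bigr)$. A wrong sign at pair $(j,i)$ forces deviation at least $\beta$ at one of the two inputs, so the $\ell_1$ bound from Step 2 gives a Hamming error of at most $Jd/2\cdot(\text{small constant, e.g. }1/8)$ whenever the guarantee holds. With probability $2/3$, the expected Hamming error is therefore at most about $Jd/2\cdot(1/8+1/3)<Jd/4$. Assouad's lemma then requires the per-coordinate total variation to be at least a constant, and Pinsker with the KL bound above gives $n>Jd/(6400\beta^2)$ after tracking constants. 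Finally, $\MC(Q^0)=\beta/J\le\beta/20$ follows from \cref{lem:lb-class} and $J\ge21$.
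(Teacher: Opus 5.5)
Your architecture matches the paper's: per-value thresholds for transport, one local auditor per block applied only to values near that block's center, sign decoding, and Assouad with per-example KL at most $128\beta^2/(Jd)$. But the passage from Step~2 to Step~3 does not go through as written.

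\textbf{Step~1.} Step~1 cannot give a displacement bound of order $\varepsilon_Q=\SMC(Q)$, and your justification for such a bound is wrong. In the bias of the threshold $x\mapsto\sgn(v-c_{j(x)})$ at $v$, the term $-\beta\,\E[Q_X(v)\tau_X\sgn(v-c_{j(X)})]$ does not vanish. The reason is that $Q_X(v)$ varies within a block, so it does not matter that $\fs_\theta$ averages to $c_j$. The correct bound is $\E|V-c_{j(X)}|\le\varepsilon_Q+\beta$. The additive $\beta$ is unavoidable: the Bayes predictor $\fs_\theta$ has swap error $0$ and displacement exactly $\beta$. So Step~1 only tells you that little \emph{mass} lies outside the windows $I_j$.

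\textbf{The decoder.} Your decoder uses the full means $\mu_Q(j,k)=\sum_vQ_{(j,k)}(v)v$. In these, out-of-window mass is weighted by its distance from $\fs(j,k)$, not just by its mass. Step~2 controls only $\psi=\frac1{Jd}\sum_{j,k}\sum_{v\in I_j}Q_{(j,k)}(v)|v-\fs(j,k)|$. The corrected Step~1 bounds the out-of-window part of $\frac1{Jd}\sum_{j,k}|\mu_Q(j,k)-\fs(j,k)|$ only by a quantity of order $\beta$. A wrong pair certifies a deviation of only $2\beta$ over its two inputs, so this gives no nontrivial bound on the fraction of wrong pairs. Hence your fraction $1/8$ does not follow.

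\textbf{Two ways to close the gap.}

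The paper decodes only from in-window mass, clipped:
$\Delta_{j,k}=\sum_{v\in I_j}Q_{(j,k)}(v)\clip(v-c_j,[-\beta,\beta])$ and $\hat\tau_{j,k}=\sgn\Delta_{j,k}$.
The pointwise inequality $|a-\beta\tau|\ge\beta-\tau\clip(a,[-\beta,\beta])$ shows that a mistaken input has in-window loss at least $\beta Q_{(j,k)}(I_j)$. Outliers then cost only their mass, and the mistake fraction is at most $\psi/\beta+\Pr\{V\notin I_{j(X)}\}\le0.0815$. Your pair-difference rule also works if you apply it to the in-window sums $\sum_{v\in I_j}Q_{(j,k)}(v)(v-c_j)$; no clipping is needed then.

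Alternatively, you can keep full means, but then you must retain the $\tau$ term in Step~1. Write $a=V-c_{j(X)}$ and $\tau=\tau_X$. Then $|a|-\beta\tau\sgn(a)\ge-|a-\beta\tau|$ inside the windows and $\ge|a|-\beta$ outside them. Summing the per-value threshold biases therefore gives
$\E[(|V-c_{j(X)}|-\beta)\mathbf 1\{V\notin I_{j(X)}\}]\le\varepsilon_Q+\psi$.
This bounds the outliers' total distance by $O(\varepsilon_Q+\psi)$. This coupling between the threshold bound and the local-auditor bound is what your proposal is missing.

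\textbf{Smaller points.}
\begin{itemize}
\item The per-value threshold you flag as the main obstacle is immediate: $x\mapsto\sgn(v-c_{j(x)})$ is a threshold because $c_j$ increases with $j$.
\item In Step~2, subtracting the constant auditor can cost a factor $2$. Choosing the overall sign of $u$ to agree with the other blocks' contribution gives $\psi\le\varepsilon_Q$ directly.
\item To reach the constant $6400$ you need the sharper accounting $\frac23\cdot\frac18+\frac13<0.45$, since your $\frac18+\frac13$ exceeds $0.45$.
\end{itemize}
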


\paragraph{Proof sketch.}
Let $Q$ have swap error at most $\beta/20$, and let $j(X)$ denote the block of
$X$.  The thresholds that separate the centers $c_j$ below and above a value
$v$ show that $Q$ rarely predicts far from $c_{j(X)}$
(\cref{lem:decode}).  Near $c_j$, the local auditors of block $j$ can match
the sign of the residual at every input of the block, so the swap error is at
least the mean absolute error of predictions near the centers.  Hence the
sign of the deviation of the prediction from $c_j$ recovers $\tau_{j,k}$ at
all but a small constant fraction of the inputs.  Changing one coordinate of
$\theta$ changes the mean outcome at two of the $Jd$ inputs by $2\beta$, so
with fewer than $Jd/(6400\beta^2)$ samples Assouad's lemma
\citep{assouad1983deux,yu1997assouad} forces a larger constant fraction of
errors in recovering the coordinates of $\theta$, a contradiction.  The full proof is
in \cref{app:lb}.

\begin{proposition}[VC dimension one]
\label{prop:swap-lb-vc1}
With $d=2$, the conclusion of \cref{thm:swap-lb} holds for the class of
thresholds alone, which has VC dimension one.
\end{proposition}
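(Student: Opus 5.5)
The plan is to rerun the proof of \cref{thm:swap-lb} with $d=2$ and check that every use of a local auditor $h_{j,u}$ can be replaced by a threshold. First, the class change. Thresholds on a totally ordered set have VC dimension one. No auditor in this class is $-1$ at a point and $+1$ at a later point, so no two points are shattered. Weighted ERM takes linear time by a prefix sum. Neither the class nor the input distribution depends on $\theta$, and the Assouad part of the argument uses only the family $\cD_\theta$. Hence the information-theoretic half (changing one coordinate $\theta_{j}$ moves the means at the two inputs of block $j$ by $2\beta$) carries over verbatim. The same holds for \cref{lem:decode}, since it uses only the thresholds separating consecutive centers. It therefore remains to redo the step where the local auditors of block $j$ are used to lower-bound the swap error near $c_j$ by the mean absolute residual there.

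With $d=2$, block $j$ consists of $(j,1)<(j,2)$. All four sign patterns on the block are realized by thresholds:
\begin{itemize}[leftmargin=2em]
\item $(+,+)$ and $(-,-)$ by the two constants;
\item $(+,-)$ by the threshold $t_j$ that is $+1$ up to $(j,1)$ and $-1$ afterwards;
\item $(-,+)$ by its negation, which is available because the swap error takes an absolute value.
\end{itemize}
The difference from $h_{j,u}$ lies off block $j$. The auditor $h_{j,u}$ is $+1$ there, whereas $t_j$ is $+1$ on earlier blocks and $-1$ on later ones. For a value $v$ within the window around $c_j$ used in the proof of \cref{thm:swap-lb}, write the bias as
\[
B_h(Q,v)=\E\bigl[h(X)\mathbf 1\{j(X)=j\}Q_X(v)(v-Y)\bigr]+\E\bigl[h(X)\mathbf 1\{j(X)\ne j\}Q_X(v)(v-Y)\bigr].
\]
Whatever the signs of $h$ off the block, the second term is at most $\E[\mathbf 1\{j(X)\ne j\}Q_X(v)]$ in absolute value. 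The same bound was already needed for $h_{j,u}$, so it suffices to check that the original proof bounds this off-block term by its absolute value rather than exploiting that $h_{j,u}=+1$ there. Summing over $v$ in the window around $c_j$ and then over $j$ gives a total off-block contribution of at most the probability that $Q$ predicts at distance greater than about $50\beta$ from $c_{j(X)}$. Since the centers are $100\beta$ apart, the windows can be taken disjoint, and \cref{lem:decode} makes this probability a small multiple of the swap error. The inequality ``swap error $\ge$ mean absolute residual near the centers, minus a small constant times $\beta$'' then holds with thresholds. The rest of the proof follows as in \cref{thm:swap-lb}: the sign of the deviation of the prediction from $c_j$ recovers $\tau_{j,k}$ on most inputs, and Assouad's lemma applies.

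The main obstacle is the bookkeeping of that off-block term. One must verify that the constants used in \cref{thm:swap-lb} still close, possibly with the target $\beta/20$ and the constant $6400$ unchanged, once the off-block contribution is charged with an absolute value. If the original argument used a slack tailored to $h_{j,u}=+1$, I would rebalance the window width against the spacing $100\beta$ so that \cref{lem:decode} absorbs the extra loss, adjusting only absolute constants.
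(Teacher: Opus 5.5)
\textbf{There is a genuine gap} in the one step you leave to redo, the replacement of \eqref{eq:near}.

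The original proof never bounds the off-block contribution. Every local auditor $h_{j,u}$ equals $+1$ off block $j$, so the off-block part $C_v$ of the bias at $v$ is the same for all $u$. Hence $\max_u\bigl|C_v+(Jd)^{-1}\sum_ku_kQ_{(j,k)}(v)(v-\fs(j,k))\bigr|\ge(Jd)^{-1}\sum_kQ_{(j,k)}(v)|v-\fs(j,k)|$ with no loss. So the check you propose fails: the original argument does exploit $h_{j,u}=+1$ off the block.

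Your substitutes for the local auditors differ off block $j$: the two constants, and $t_j$ with its negation. You therefore really pay $\max_h|O_h(v)|$ at each $v$, and your bound for this payment has the wrong order. Summing $\E[\mathbf 1\{j(X)\ne j\}Q_X(v)]$ over $v\in I_j$ and $j$ gives at most $\Pr\{V\notin I_{j(X)}\}$. By \eqref{eq:outlier}, this is at most $3(\varepsilon_Q+\beta)/(100\beta)\approx0.03$. That is a dimensionless constant, not a small multiple of $\beta$ (recall $\beta\le1/8400$), so you only get $\varepsilon_Q\ge\psi-0.03$, which is vacuous.

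Charging the off-block residuals at their true size does not rescue the step. They are of order $|V-c_{j(X)}|$, and \eqref{eq:transport} bounds $\E[\mathbf 1\{V\notin I_{j(X)}\}|V-c_{j(X)}|]$ only by about $\varepsilon_Q+\beta$. The decoding step, however, needs $\psi$ below roughly $0.14\beta$ for the average coordinate error to stay under $0.45$. Rebalancing the window width against the spacing cannot help either: the off-block residual grows with the spacing exactly as the Markov probability shrinks.

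\textbf{The missing idea} is to use only thresholds that agree off block $j$. Take the three that cut just before $(j,1)$, between $(j,1)$ and $(j,2)$, and just after $(j,2)$. They share the off-block contribution $C$. Let $u_a,u_b$ be the in-block contributions with auditor value $+1$. The three biases at $v$ are then $C-u_a-u_b$, $C+u_a-u_b$, and $C+u_a+u_b$. Their consecutive differences are $2u_a$ and $2u_b$, so the largest absolute bias is at least $\max\{|u_a|,|u_b|\}\ge(|u_a|+|u_b|)/2$, whatever $C$ is.

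This gives $\psi\le2\varepsilon_Q$ in place of \eqref{eq:near}. The mistake fraction is then at most $2/20+3(1.05)/100=0.1315$. The average coordinate error is at most $(2/3)(0.1315)+1/3<0.422<0.45$, so the Assouad step closes with the constant $6400$ unchanged.

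You should also verify the ``by contrast'' claim for thresholds alone. A threshold cutting block $j$ between its two inputs has bias $\beta/J$ at $c_j$, and each threshold cuts at most one block. Hence $\MC(Q^0)=\beta/J$ and $\SMC(Q^0)=\beta$ still hold.
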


With $d=2$, each block has two inputs, and the thresholds that cut a block
between them play the role of its local auditors; the proof is in
\cref{app:lb}.

\begin{remark}[The hard family is computationally easy]
On this family the swap rate is attained in polynomial time: estimating each
of the $Jd$ conditional means to accuracy $O(\beta)$ uses $\wtO(Jd/\beta^2)$
samples, and a deterministic predictor $p$ satisfies
$\SMC(p)\le\E|p(X)-\fs_\theta(X)|$, since each value's bias is bounded by the
absolute error at the inputs that receive it.
\end{remark}

We restate \cref{thm:swap} and prove it.

\begin{restatement}{\Cref{thm:swap}}
Let $\varepsilon,\delta\in(0,1)$.
\begin{enumerate}[label=(\alph*),leftmargin=2em]
\item \emph{Sample complexity.} There is an algorithm
  (\cref{alg:swap-cover}), running in time exponential in $d$, that uses
  $\wtO((d+\log(1/\delta))/\varepsilon^{3})$ samples and with probability at
  least $1-\delta$ outputs a randomized predictor with swap
  multicalibration error at most $\varepsilon$.  Conversely, there is an absolute
  constant $c>0$ such that, if $\varepsilon\le10^{-6}$, there exist a class
  of VC dimension at most $d$ with linear-time weighted ERM and a family of
  distributions on which
  every learner that achieves swap multicalibration error at most
  $\varepsilon$ uses at least $c\,d/\varepsilon^{3}$ samples.
\item \emph{Oracle-efficient upper bound.} There is an algorithm
  (\cref{alg:swap}) that, given $\varepsilon$, $\delta$, and a weighted ERM
  oracle for $\cH$, uses $\wtO((d+\log(1/\delta))/\varepsilon^{4})$ samples
  and $\wtO((d+\log(1/\delta))/\varepsilon^{7})$ oracle calls, and with
  probability at least $1-\delta$ outputs a randomized predictor with swap
  multicalibration error at most $\varepsilon$.
\end{enumerate}
\end{restatement}

\begin{proof}
\emph{Part~(a).}  The upper bound is \cref{thm:swap-vc}.  For the lower
bound, let $\varepsilon\le10^{-6}$ and $\beta=20\varepsilon$, so that
$\beta\le2\cdot10^{-5}\le1/8400$.  If $d\ge2$ is even, apply
\cref{thm:swap-lb} with this $d$; if $d\ge3$ is odd, apply it with $d-1\ge
d/2$, noting that a class of VC dimension $d-1$ has VC dimension at most
$d$; if $d=1$, apply \cref{prop:swap-lb-vc1}.  In each case the required
sample size exceeds $Jd'/(6400\beta^2)$ with $d'\ge d/2$ and
$J\ge1/(800\beta)$, which is at least $c\,d/\varepsilon^3$.

\emph{Part~(b).}  This is the last statement of \cref{thm:swap-poly}; an
ERM oracle for $\Hpm$ is implemented by two calls to an ERM oracle for
$\cH$.
\end{proof}

\section{Estimating the multicalibration error}
\label{sec:audit}

An estimator receives a predictor and samples from the distribution, and
estimates the predictor's multicalibration error.  This section determines
how many samples error estimation needs for the two notions, and when it can be done
with an ERM oracle.  \Cref{sec:audit-def} defines the problem and bounds the
deviation of empirical biases (\cref{lem:audit-deviation}).
\Cref{sec:audit-swap} shows that estimating the swap error is oracle-efficient with
$\wtO(md/\varepsilon^2)$ samples for predictors with $m$ values
(\cref{thm:swap-audit}).  \Cref{sec:audit-ordinary} shows that estimating
the ordinary error can be done with $\wtO((d+m)/\varepsilon^2)$ samples if
computation is unrestricted (\cref{prop:mc-audit}), that Hadamard tests give an
oracle-efficient $\sqrt m$-approximation (\cref{prop:mc-audit-approx}), and
that a constant-factor approximation is NP-hard even when weighted ERM runs in linear time
(\cref{thm:np-hard}).  \Cref{sec:audit-lb} proves sample lower bounds for
both notions (\cref{thm:audit-lb}), and \cref{sec:audit-thm} proves
\cref{thm:audit}.  Long proofs are in \cref{app:audit}.

\subsection{The estimation problem}
\label{sec:audit-def}

We formalize the problem of estimating the error of a given predictor, and show
that empirical biases concentrate uniformly over the relevant tests
(\cref{lem:audit-deviation}).

\begin{definition}[Error estimator]
\label{def:auditor}
Let $\cV\subset[0,1]$ be a set of $m$ values.  An \emph{$\varepsilon$-estimator}
for ordinary (respectively, swap) multicalibration receives $\varepsilon$, $\delta$, access to a randomized predictor $Q$ with
values in $\cV$ through evaluations of the probabilities $Q_x(v)$, and
independent samples from an unknown distribution $\cD$.  It outputs an estimate $\hat e$ such that
$|\hat e-\MC(Q)|\le\varepsilon$ (respectively,
$|\hat e-\SMC(Q)|\le\varepsilon$) with probability at least $1-\delta$.  For
$\kappa\ge1$, a \emph{$(\kappa,\varepsilon)$-approximate estimator} for ordinary
multicalibration instead guarantees
$\MC(Q)/\kappa-\varepsilon\le\hat e\le\MC(Q)+\varepsilon$.  An estimator is
\emph{oracle-efficient} if it accesses $\cH$ only through a weighted ERM
oracle, and its number of oracle calls and other operations is polynomial in
$m$, $d$, $1/\varepsilon$, $\log(1/\delta)$, and its number of samples.
\end{definition}

Given samples $(X_i,Y_i)_{i\le n}$, the \emph{empirical bias} of $Q$ at $v$
with respect to $h$ is
\[
  \widehat B_h(Q,v)=\frac1n\sum_{i=1}^nh(X_i)\,Q_{X_i}(v)\,(v-Y_i).
\]

\begin{lemma}[Deviation of empirical biases]
\label{lem:audit-deviation}
Let $Q$ be a fixed randomized predictor with $m$ values, and let $n\ge2$.
\begin{enumerate}[label=(\alph*),leftmargin=2em]
\item For a family $\cS\subseteq\{-1,+1\}^{\cV}$ of $N$ sign vectors, with
  probability at least $1-\delta$,
  \[
    \sup_{h\in\cH,\ \sigma\in\cS}\Bigl|\sum_{v}\sigma_v
    \bigl(\widehat B_h(Q,v)-B_h(Q,v)\bigr)\Bigr|
    \le C\sqrt{\frac{d\log(en)+\log(2N/\delta)}{n}}.
  \]
\item With probability at least $1-\delta$,
  \[
    \sum_v\sup_{h\in\cH}\bigl|\widehat B_h(Q,v)-B_h(Q,v)\bigr|
    \le C\sqrt{\frac{md\log(emn)+\log(1/\delta)}{n}}.
  \]
\end{enumerate}
\end{lemma}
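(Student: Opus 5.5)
The plan is to derive both parts from \cref{lem:martingale}. With $Q$ fixed, every multiplier is a deterministic function, so the examples $(X_i,Y_i)$ are i.i.d.\ and each relevant multiplier is trivially predictable; we take $T=n$ and divide the resulting bounds by $n$.

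\emph{Part (a).} For each $\sigma\in\cS$ define the fixed map
\[
  z_\sigma(x,y)=\sum_{v}\sigma_vQ_x(v)(v-y).
\]
It takes values in $[-1,1]$, because $\sum_vQ_x(v)=1$ and $|v-y|\le1$. The quantity inside the supremum is then
\[
  \frac1n\sum_{i}\bigl(h(X_i)z_\sigma(X_i,Y_i)-\E[h(X)z_\sigma(X,Y)]\bigr).
\]
Applying \cref{lem:martingale}(a) with index set $\cI=\cS$, $|\cI|=N$, and constant-in-$t$ maps $z_{t,\sigma}=z_\sigma$ gives the bound $C\sqrt{n(d\log(en)+\log(2N/\delta))}$. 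Dividing by $n$ yields exactly the claim. No additional step is needed: the lemma was stated for arbitrary finite index families, and it only uses that $\cV$ is a finite index set rather than $\Lambda$. The grid appears only in part~(b) of the lemma, and there only through its cardinality $m$.

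\emph{Part (b).} For each $v$ set $z_v(x,y)=Q_x(v)(v-y)$. These satisfy $\sum_v|z_v|\le\sum_vQ_x(v)=1$. For each $v$ choose $h_v\in\cH$ that nearly attains $\sup_h|\widehat B_h(Q,v)-B_h(Q,v)|$, and choose a sign $s_v\in\{\pm1\}$ matching the sign of the deviation. Then $s_vh_v\in\Hpm$, and
\[
  \sum_v\sup_h\bigl|\widehat B_h-B_h\bigr|
\]
equals, up to an arbitrarily small slack, $\frac1n\sum_i\sum_v\bigl(s_vh_v(X_i)z_v(X_i,Y_i)-\E[s_vh_v(X)z_v]\bigr)$. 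This is bounded by the supremum over tuples in $(\Hpm)^{\cV}$ controlled by \cref{lem:martingale}(b), with $m=|\cV|$. Because the bound of that lemma holds simultaneously for all tuples, the data-dependent choice of $(h_v,s_v)$ is harmless. Dividing by $n$ gives $C\sqrt{(md\log(emn)+\log(1/\delta))/n}$.

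The only point requiring care is that \cref{lem:martingale}(b) is stated for the index set $\Lambda$ with $m=K+1$. I would check that its proof uses only that the index set has $m$ elements, via covering $(\Hpm)^m$ with $m\cdot Cd\log$ in the exponent. If that is unclear, I would instead reprove part~(b) directly: take a $\rho$-cover of $\Hpm$ of size $(e/\rho)^{Cd}$, union-bound Hoeffding over the $m$-tuples of the cover (log-size $mCd\log(e/\rho)$), and control the replacement error by uniform convergence over disagreement sets. This last step is where I would expect the only real work to lie.
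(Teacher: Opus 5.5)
Your proposal is correct and matches the paper's proof step for step. The paper also applies \cref{lem:martingale} with the fixed (hence predictable) maps $z_\sigma(x,y)=\sum_v\sigma_vQ_x(v)(v-y)$ for~(a) and $z_v(x,y)=Q_x(v)(v-y)$ for~(b). For~(b) it likewise rewrites the left side as a supremum over tuples in $(\Hpm)^{\cV}$, using linearity in $h$ and closure of $\Hpm$ under negation, and then divides by $n$. The point you flag, that the proof of \cref{lem:martingale}(b) uses $\Lambda$ only through its cardinality $m$, is exactly what the paper asserts when it says that proof holds verbatim for any set of $m$ values.
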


\begin{proof}
Maps that do not depend on the sample are predictable, so
\cref{lem:martingale} applies; its proof holds verbatim with $\Lambda$
replaced by any set of $m$ values.  For~(a), use the maps
$z_\sigma(x,y)=\sum_v\sigma_vQ_x(v)(v-y)$, which take values in $[-1,1]$, and
divide by $n$.  For~(b), use $z_v(x,y)=Q_x(v)(v-y)$, which satisfy
$\sum_v|z_v|\le\sum_vQ_x(v)=1$.  The deviation at $v$ is linear in $h$ and
$\Hpm$ is closed under negation, so the left side of~(b) equals the supremum
over $(h_v)\in(\Hpm)^{\cV}$ of
$\sum_v\bigl(\widehat B_{h_v}(Q,v)-B_{h_v}(Q,v)\bigr)$, which
\cref{lem:martingale}(b) bounds after dividing by $n$.
\end{proof}

\subsection{Estimating the swap error}
\label{sec:audit-swap}

Estimating the swap error splits into one ordinary ERM problem per value
(\cref{prop:swap-audit}), and the plug-in estimate is accurate from
$\wtO(md/\varepsilon^2)$ samples (\cref{thm:swap-audit}).

\begin{proposition}[The empirical swap error reduces to ERM]
\label{prop:swap-audit}
Given a finite distribution with known probabilities, or an empirical sample,
and a randomized predictor with $m$ values, the swap multicalibration error
and an auditor attaining it at each value can be computed with $2m$ weighted
ERM calls over $\cH$.
\end{proposition}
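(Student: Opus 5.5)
The plan is to exploit the fact that in $\SMC$ the supremum sits inside the sum over values, so the computation decouples value by value. Fix the finite distribution, with atoms $x_1,\dots,x_n$ of probabilities $\pi_i$ and conditional means $\fs(x_i)$; for an empirical sample we take $\pi_i=1/n$ and use the observed $Y_i$ in place of $\fs(x_i)$, with repeated inputs listed separately. For each value $v\in\cV$, the bias is linear in the auditor's values on the support:
\[
  B_h(Q,v)=\sum_{i}u_{i,v}\,h(x_i),\qquad u_{i,v}=\pi_i\,Q_{x_i}(v)\,\bigl(v-\fs(x_i)\bigr).
\]
The weights $u_{i,v}$ are computable from the given probabilities and predictor evaluations, and they do not depend on $h$.

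Next we use $\sup_h|B_h(Q,v)|=\max\{\sup_h B_h(Q,v),\ \sup_h(-B_h(Q,v))\}$. The first term is one weighted ERM call over $\cH$ on the list $(x_i,u_{i,v})_i$. The second term is one call on the list $(x_i,-u_{i,v})_i$. By \cref{def:erm}, each call returns an auditor attaining the corresponding maximum over all of $\cH$, and we evaluate the returned auditors on the support to read off both values. The larger of the two values is $\sup_h|B_h(Q,v)|$, and the auditor that achieved it attains the supremum at $v$. The supremum is in fact a maximum: only the restriction of $h$ to the finite support matters, and there are finitely many such restrictions.

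Summing over the $m$ values gives $\SMC(Q)=\sum_v\sup_h|B_h(Q,v)|$, together with a maximizing auditor at each value, using $2m$ calls in total. No step is a real obstacle. The points to check are that the swap supremum separates across values, which is immediate from \cref{def:mc}, and that the absolute value reduces to two signed maximizations. The latter holds because negating the weights is allowed, with no need for closure of $\cH$ under negation. Arbitrary tie-breaking by the oracle does not matter, since any maximizer attains the same value.
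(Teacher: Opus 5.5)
Your proposal is correct and takes essentially the same approach as the paper's proof. At each value you maximize and minimize the linear weighted bias with two ERM calls (the second on negated weights), keep the larger absolute value together with its auditor, and sum over the $m$ values.
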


\begin{proof}
At each value $v$, maximize and minimize the weighted sum
$\sum_ih(x_i)\,\Pr(x_i)\,Q_{x_i}(v)(v-y_i)$ over $h\in\cH$ with two ERM calls,
keep the larger absolute value, and sum over $v$.
\end{proof}

\begin{theorem}[Oracle-efficient estimation of the swap error]
\label{thm:swap-audit}
Let $Q$ have $m$ values, and let
\[
  n\ge C\,\frac{md\log(emd/\varepsilon)+\log(1/\delta)}{\varepsilon^2}
\]
for a sufficiently large absolute constant $C$.  The estimate
$\hat e=\sum_v\max_{h\in\cH}|\widehat B_h(Q,v)|$, computed from $n$ samples
with $2m$ weighted ERM calls, satisfies $|\hat e-\SMC(Q)|\le\varepsilon$ with
probability at least $1-\delta$.  On the same event, the maximizing auditors
$h_v$ satisfy $\sum_v|B_{h_v}(Q,v)|\ge\SMC(Q)-2\varepsilon$.  For
$m=\Theta(1/\varepsilon)$ the sample size is $\wtO(d\varepsilon^{-3})$.
\end{theorem}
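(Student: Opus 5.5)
The plan is to combine the computational reduction of \cref{prop:swap-audit} with the uniform deviation bound of \cref{lem:audit-deviation}(b). First, by \cref{prop:swap-audit} applied to the empirical sample, the plug-in estimate $\hat e=\sum_v\max_{h\in\cH}|\widehat B_h(Q,v)|$ is computed exactly with $2m$ weighted ERM calls. At each $v$, one call maximizes and one call minimizes $\sum_ih(X_i)Q_{X_i}(v)(v-Y_i)$. Let $h_v$ denote the auditor attaining the larger absolute value.

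Next comes accuracy. Let $E$ be the event of \cref{lem:audit-deviation}(b), on which $\Delta:=\sum_v\sup_{h}|\widehat B_h(Q,v)-B_h(Q,v)|\le C\sqrt{(md\log(emn)+\log(1/\delta))/n}$. For each $v$, the elementary inequality $|\sup_h|a_h|-\sup_h|b_h||\le\sup_h|a_h-b_h|$ gives
\[
  \bigl|\sup_h|\widehat B_h(Q,v)|-\sup_h|B_h(Q,v)|\bigr|\le\sup_h|\widehat B_h(Q,v)-B_h(Q,v)|.
\]
Summing over $v$ yields $|\hat e-\SMC(Q)|\le\Delta$. For the auditors, I will chain three inequalities on $E$:
\[
  \sum_v|B_{h_v}(Q,v)|\ \ge\ \sum_v|\widehat B_{h_v}(Q,v)|-\Delta\ =\ \hat e-\Delta\ \ge\ \SMC(Q)-2\Delta .
\]
So it suffices to show that $\Delta\le\varepsilon$.

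The remaining step, which I expect to be the only delicate point, is resolving the implicit dependence on $n$ inside $\log(emn)$. I need the stated lower bound on $n$ to imply $C'(md\log(emn)+\log(1/\delta))/n\le\varepsilon^2$. I will use the standard fact that $n\ge A\log(Bn)$ holds whenever $n\ge 2A\log(2AB)$. Taking $A\asymp md/\varepsilon^2$ and $B=em$ gives $\log(AB)=O(\log(emd/\varepsilon))$. Since $\log(1/\delta)/n\le\varepsilon^2/C$ directly, a sufficiently large $C$ in the hypothesis gives $\Delta\le\varepsilon$. A minor side condition is that \cref{lem:audit-deviation} requires $n\ge2$; this holds automatically for large $C$.

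Finally, for $m=\Theta(1/\varepsilon)$ the bound reads $n=\wtO(d\varepsilon^{-3}+\varepsilon^{-2}\log(1/\delta))$, which is $\wtO(d\varepsilon^{-3})$ with the $\delta$-dependence suppressed as in the statement.
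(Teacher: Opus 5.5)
Your proof is correct and follows the paper's argument. The paper likewise applies \cref{lem:audit-deviation}(b) together with $|\sup_h|a_h|-\sup_h|b_h||\le\sup_h|a_h-b_h|$ to obtain $|\hat e-\SMC(Q)|\le\mathrm{dev}$ and $\sum_v|B_{h_v}(Q,v)|\ge\hat e-\mathrm{dev}\ge\SMC(Q)-2\,\mathrm{dev}$; your explicit handling of the implicit $\log(emn)$ term via $n\ge 2A\log(2AB)\Rightarrow n\ge A\log(Bn)$ spells out the step the paper leaves as ``for the stated $n$.''
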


\begin{proof}
Let $\mathrm{dev}$ be the left side of \cref{lem:audit-deviation}(b).  Since
$|\sup_h|a_h|-\sup_h|b_h||\le\sup_h|a_h-b_h|$ for real numbers $a_h,b_h$, we
have $|\hat e-\SMC(Q)|\le\mathrm{dev}$ and
$\sum_v|B_{h_v}(Q,v)|\ge\hat e-\mathrm{dev}\ge\SMC(Q)-2\,\mathrm{dev}$.  For
the stated $n$, \cref{lem:audit-deviation}(b) gives
$\mathrm{dev}\le\varepsilon$ with probability at least $1-\delta$.
\end{proof}

\subsection{Estimating the ordinary error}
\label{sec:audit-ordinary}

The plug-in estimate of the ordinary error is accurate from fewer samples
than in the swap case (\cref{prop:mc-audit}), but computing it requires a
joint search over auditors and sign patterns.  Fixed families of tests give
oracle-efficient approximations (\cref{prop:mc-audit-approx}), and
\cref{thm:np-hard} shows that a constant-factor approximation is NP-hard even
when weighted ERM runs in linear time.

\begin{proposition}[Statistical estimation of the ordinary error]
\label{prop:mc-audit}
Let $Q$ have $m$ values, and let
\[
  n\ge C\,\frac{d\log(ed/\varepsilon)+m+\log(1/\delta)}{\varepsilon^2}
\]
for a sufficiently large absolute constant $C$.  The empirical error
\[
  \hat e=\max_{h\in\cH,\,\sigma\in\{-1,+1\}^{\cV}}\sum_v\sigma_v\widehat B_h(Q,v),
\]
computed from $n$ samples with $2^m$ weighted ERM calls, satisfies
$|\hat e-\MC(Q)|\le\varepsilon$ with probability at least $1-\delta$.  For
$m=\Theta(1/\varepsilon)$ the sample size is
$\wtO(\varepsilon^{-3}+d\varepsilon^{-2})$.
\end{proposition}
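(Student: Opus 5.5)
The plan is to apply \cref{lem:audit-deviation}(a) with the full cube $\cS=\{-1,+1\}^{\cV}$, so that $N=2^m$ and $\log(2N/\delta)=m\log2+\log(2/\delta)$. By \cref{lem:signed}, both the population error and the plug-in estimate are suprema of the same linear functional. Indeed, $\MC(Q)=\sup_{h,\sigma}\sum_v\sigma_vB_h(Q,v)$ and $\hat e=\max_{h,\sigma}\sum_v\sigma_v\widehat B_h(Q,v)$, both over $h\in\cH$ and $\sigma\in\{-1,+1\}^{\cV}$. The key inequality is $|\sup_i a_i-\sup_i b_i|\le\sup_i|a_i-b_i|$, applied with index $i=(h,\sigma)$. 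It gives
\[
  |\hat e-\MC(Q)|\le\sup_{h\in\cH,\ \sigma\in\{-1,+1\}^{\cV}}\Bigl|\sum_v\sigma_v\bigl(\widehat B_h(Q,v)-B_h(Q,v)\bigr)\Bigr|,
\]
and \cref{lem:audit-deviation}(a) bounds the right side by $C\sqrt{(d\log(en)+m+\log(2/\delta))/n}$ with probability at least $1-\delta$.

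Next I will check that the stated $n$ makes this at most $\varepsilon$. The only subtlety is the $\log(en)$ factor, since $n$ appears on both sides. Substituting $n\asymp(d\log(ed/\varepsilon)+m+\log(1/\delta))/\varepsilon^2$, the quantity $\log(en)$ is $O(\log(ed/\varepsilon)+\log(m+\log(1/\delta)))$. Extra $\log m$ or $\log\log(1/\delta)$ terms multiplied by $d$ can be absorbed as follows. The ratio $d\log(en)/n$ is decreasing in $n$ once $n\ge e$, and the choice of $n$ can be checked at its minimal value. When $m$ or $\log(1/\delta)$ dominates, $d\log(\cdot)$ is lower order against $m+\log(1/\delta)$ (using $d\log x\le d\log(ed/\varepsilon)+x$-type bounds). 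Taking $C$ large then yields deviation at most $\varepsilon$. This bookkeeping is the only mildly delicate step; I would handle it with the standard fact that $n\ge Ca\log(Ca)$ implies $n\ge a\log n$.

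For the computation, for each fixed $\sigma$ the maximum over $h$ of $\sum_v\sigma_v\widehat B_h(Q,v)=\frac1n\sum_ih(X_i)\sum_v\sigma_vQ_{X_i}(v)(v-Y_i)$ is a single weighted ERM problem. Its weights are $u_i=\frac1n\sum_v\sigma_vQ_{X_i}(v)(v-Y_i)$. Enumerating the $2^m$ sign vectors therefore uses $2^m$ calls; no negation is needed because $-\sigma$ is also enumerated. Finally, with $m=\Theta(1/\varepsilon)$ the bound reads $\wtO(d\varepsilon^{-2}+\varepsilon^{-3})$.
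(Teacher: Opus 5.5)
Your proposal is correct and follows essentially the same route as the paper's proof: \cref{lem:signed} writes both $\hat e$ and $\MC(Q)$ as suprema over $(h,\sigma)$, \cref{lem:audit-deviation}(a) with $N=2^m$ bounds their difference, and each of the $2^m$ sign vectors costs one weighted ERM call. Your explicit check that the stated $n$ absorbs the $d\log(en)$ term, via bounds like $d\log x\le d\log(ed/\varepsilon)+x$, fills in bookkeeping the paper leaves implicit.
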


\begin{proof}
For each $\sigma$, one ERM call with weight
$n^{-1}\sum_v\sigma_vQ_{X_i}(v)(v-Y_i)$ at $X_i$ maximizes over $h$.  By
\cref{lem:signed}, $\hat e$ and $\MC(Q)$ are the suprema of
$\sum_v\sigma_v\widehat B_h(Q,v)$ and of $\sum_v\sigma_vB_h(Q,v)$ over $h$ and
$\sigma$, so they differ by at most the left side of
\cref{lem:audit-deviation}(a) with $N=2^m$.
\end{proof}

\begin{proposition}[Oracle-efficient approximation of the ordinary error]
\label{prop:mc-audit-approx}
Let $\cS\subseteq\{-1,+1\}^{\cV}$ have $N$ elements and distortion $\kappa$
(\cref{def:distortion}), and let
$n\ge C(d\log(ed/\varepsilon)+\log(N/\delta))/\varepsilon^2$.  The estimate
$\hat e=\max_{h\in\cH,\,\sigma\in\cS}\sum_v\sigma_v\widehat B_h(Q,v)$, computed
with $N$ weighted ERM calls, satisfies
$\MC(Q)/\kappa-\varepsilon\le\hat e\le\MC(Q)+\varepsilon$ with probability at
least $1-\delta$.  In particular, Hadamard tests $\cS=\Shad$ give a
$(\sqrt m,\varepsilon)$-approximate estimator with fewer than $4m$ ERM calls and
$O((d\log(ed/\varepsilon)+\log(m/\delta))/\varepsilon^2)$ samples.
\end{proposition}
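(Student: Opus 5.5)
The argument mirrors \cref{prop:mc-audit}, with the cube replaced by $\cS$ and the distortion used for the lower side.

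\emph{Computing the estimate.} For each $\sigma\in\cS$, one weighted ERM call with weight $n^{-1}\sum_v\sigma_vQ_{X_i}(v)(v-Y_i)$ at $X_i$ returns an $h$ maximizing $\sum_v\sigma_v\widehat B_h(Q,v)$. Taking the best of the $N$ answers gives $\hat e$ exactly.

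\emph{Uniform deviation.} Let $\mathrm{dev}$ denote the left side of \cref{lem:audit-deviation}(a) for this family $\cS$. For the stated $n$ and a large enough $C$, \cref{lem:audit-deviation}(a) gives $\mathrm{dev}\le\varepsilon$ with probability at least $1-\delta$. There is one small point here: $\log(en)$ must be absorbed into $\log(ed/\varepsilon)$. This is routine, because $n$ need only be taken of order $(d\log(ed/\varepsilon)+\log(N/\delta))/\varepsilon^2$, so $\log n=O(\log(d/\varepsilon)+\log\log(N/\delta))$.

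\emph{Upper side.} Each $\sigma\in\cS$ lies in $\{-1,+1\}^{\cV}$, so \cref{lem:signed} gives $\sum_v\sigma_vB_h(Q,v)\le\MC(Q)$. Hence
\[
\hat e\le\sup_{h,\sigma\in\cS}\sum_v\sigma_vB_h(Q,v)+\mathrm{dev}\le\MC(Q)+\varepsilon.
\]

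\emph{Lower side.} Fix any $h$ and let $b=(B_h(Q,v))_v$. By \cref{def:distortion}, $\|b\|_1\le\kappa\max_{\sigma\in\cS}\langle\sigma,b\rangle$. It follows that
\[
\max_{\sigma\in\cS}\sum_v\sigma_v\widehat B_h(Q,v)\ge\|b\|_1/\kappa-\mathrm{dev}.
\]
Taking the supremum over $h$ yields $\hat e\ge\MC(Q)/\kappa-\varepsilon$.

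\emph{Hadamard case.} Apply \cref{lem:hadamard}, whose construction works verbatim for the index set $\cV$ of size $m$. It gives $\kappa=\sqrt m$ and $N=2\nu<4m$, so $\log(N/\delta)=O(\log(m/\delta))$ and the number of ERM calls is $N<4m$.

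There is no real obstacle beyond the log-absorption bookkeeping in the deviation bound.
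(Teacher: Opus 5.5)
Your proof is correct and follows the paper's argument. You compute $\hat e$ with one ERM call per test, bound the deviation by \cref{lem:audit-deviation}(a), get the upper side from \cref{lem:signed} because $\cS$ lies in the cube, get the lower side $\MC(Q)\le\kappa(\hat e+\mathrm{dev})$ from the distortion, and use \cref{lem:hadamard} for $\Shad$. On your log-absorption remark: the deviation bound is decreasing in $n$, so it suffices to check it at the smallest allowed $n$, which the paper leaves implicit.
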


\begin{proof}
Let $\mathrm{dev}\le\varepsilon$ be the deviation in
\cref{lem:audit-deviation}(a).  Since
$\sum_v\sigma_vB_h(Q,v)\le\sum_v|B_h(Q,v)|$, we have
$\hat e\le\MC(Q)+\mathrm{dev}$.  By the distortion of $\cS$, for every $h$,
\[
  \sum_v|B_h(Q,v)|\le\kappa\max_{\sigma\in\cS}\sum_v\sigma_vB_h(Q,v)
  \le\kappa(\hat e+\mathrm{dev});
\]
take the supremum over $h$.  For $\Shad$ use \cref{lem:hadamard}.
\end{proof}

By \cref{prop:distortion-lb}, no family of $N$ fixed tests yields an
approximation factor better than $\sqrt{m/(2\log(4N))}$ through this
argument.  The next theorem shows that, unless $\mathrm{NP}\subseteq\mathrm{BPP}$, no
oracle-efficient estimator, whether or not it uses fixed tests,
approximates the ordinary error within a factor close to $1$.

\begin{theorem}[The ordinary error is hard to approximate]
\label{thm:np-hard}
There are absolute constants $c,c_0>0$ such that the following holds unless
$\mathrm{NP}\subseteq\mathrm{BPP}$.  No algorithm that runs in time
polynomial in $m$ and $d$, with access to independent samples and to a
weighted ERM oracle for $\cH$, outputs with probability at least $2/3$ an
estimate $\hat e$ with $|\hat e-\MC(p)|\le c/m$ on every instance; nor one
with $(1-c_0)\MC(p)\le\hat e\le(1+c_0)\MC(p)$ on every instance.  This holds
even for instances with finite input spaces, binary outcomes, deterministic
predictors $p$ whose $m$ values lie on a grid, and classes $\cH$ whose
weighted ERM takes linear time.  Moreover, computing $\MC(p)$ exactly for an
explicitly given finite distribution is NP-hard for such classes.
\end{theorem}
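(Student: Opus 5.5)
We reduce from \textsc{Max-Cut}, following the outline in \cref{sec:overview-audit}, and use the known APX-hardness of \textsc{Max-Cut} on graphs of bounded degree: there are constants $\gamma>0$ and a degree bound $\Delta$ such that distinguishing cubic (or degree-$\le\Delta$) graphs with maximum cut at least $s\,|E|$ from those with maximum cut at most $(1-\gamma)s\,|E|$ is NP-hard.

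\textbf{The instance.} Given a graph $G=([m],E)$ with $|E|=O(m)$, we create one grid value $v_a$ per vertex $a$, spaced on a grid in $[1/4,3/4]$. Each edge $e=\{a,b\}$ contributes three inputs $x_e^a,x_e^b,x_e^0$. The predictor maps $x_e^a\mapsto v_a$ and $x_e^b\mapsto v_b$, and $x_e^0$ to a neutral value or one of the two. Outcome means are set so that the residual $v-\fs(x)$ is $+\beta$ at $x_e^a$ and $-\beta$ at $x_e^b$ (or the reverse, according to a fixed orientation). Binary outcomes realize these means exactly on a known finite distribution.

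The class $\cH$ consists, for each edge $e$, of auditors that are $-1$ everywhere except on the gadget of $e$, where they take patterns making the pair $(x_e^a,x_e^b)$ ``agree'' or ``disagree.'' Alternatively, and more usefully, each auditor is a union of per-edge gadgets with a local choice per edge. Weighted ERM then decomposes edge by edge, so it is linear-time: each edge's gadget is optimized independently.

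\textbf{The key computation.} For a sign pattern $\sigma$ (a cut), $\max_h\sum_v\sigma_vB_h(p,v)$ is a sum over edges of a local maximum. We want this local maximum to equal a baseline $b_0$ plus $\beta/|\cX|$ when $\sigma_a\ne\sigma_b$, and the baseline alone otherwise. The third input $x_e^0$, whose contribution is independent of the auditor's choice on the edge, exists to enforce this. Tying the auditor's values at $x_e^a$ and $x_e^b$ through the gadget is what prevents it from simply matching each sign independently; that freedom would make every edge count. So the gadget restricts $h$ on $\{x_e^a,x_e^b\}$ to patterns whose contributions cancel unless the signs differ. By \cref{lem:signed}, $\MC(p)=\max_\sigma\max_h\sum_v\sigma_vB_h$, which gives
\[
\MC(p)=\frac{\beta}{3|E|}\bigl(\alpha_0|E|+\mathrm{maxcut}(G)\bigr)
\]
for a constant $\alpha_0$. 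Since $\mathrm{maxcut}\ge|E|/2$, a $(1\pm c_0)$-approximation of $\MC(p)$ yields a constant-factor gap for \textsc{Max-Cut}, which is NP-hard. Exact computation is then NP-hard as a special case. The grid values must be distinct and the predictor has $m$ values, with $\beta$ a constant, so $\MC(p)=\Theta(1)$ up to the normalization. If normalization makes $\MC(p)=\Theta(1/m)$ (since $|E|=\Theta(m)$ and each edge has mass $1/|\cX|$), then an additive $c/m$ error is also a constant-factor error. This covers both claims.

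\textbf{From estimation to BPP.} An estimator using samples and the oracle is simulated in polynomial time, because we know the distribution (sample it) and can answer ERM queries in linear time. A randomized polynomial algorithm meeting the guarantee with probability $2/3$ would therefore decide gap-\textsc{Max-Cut} in BPP. The VC dimension of the per-edge product class is $\Theta(|E|)$, which is polynomial, as required.

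The main obstacle is designing the three-input gadget so that the per-edge optimum over $h$ is exactly ``baseline $+$ indicator of a cut edge.'' This must hold while $h$ remains $\{\pm1\}$-valued, the class stays decomposable for linear-time ERM, and the gadget respects that each value's bias is aggregated across all edges incident to the vertex. I would first try letting the gadget allow $h$ to be either $(+,+)$ or $(-,-)$ on $(x_e^a,x_e^b)$, with residuals of opposite sign. Then the contribution is $h\beta(\sigma_a-\sigma_b)/2$-like, with value $\beta$ on a cut edge and $0$ otherwise, and $x_e^0$ may turn out to be needed only to fix parity or offsets.
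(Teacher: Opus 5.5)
Your final gadget is exactly the paper's construction, and it works: take the class of all functions of the edge alone, so $h$ is constant on each edge's inputs, with residuals of opposite sign at the two endpoint values. Then \cref{lem:signed} makes $\MC(p)$ proportional to $\textsc{MaxCut}(G)$, weighted ERM reduces to matching each edge's sign to its total weight in linear time, and simulating the estimator gives the BPP algorithm as you describe.

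Two clean-ups are needed. First, the third input plays no role in the cancellation; tying $h$ across the two endpoints does all the work, so give it zero residual or drop it. Second, your normalization is not $\Theta(1/m)$. With equal masses and residuals $\pm\beta$ you get $\MC(p)=\Theta(\beta\,\textsc{MaxCut}(G)/|E|)=\Theta(\beta)$. So no degree bound is needed, and you in fact obtain hardness for a constant additive error, which implies the $c/m$ claim. The paper normalizes differently: it puts the outcome mean at the midpoint of the two endpoint values, scales the endpoint masses inversely to their distance, and pads each edge to mass $1/|E|$ with a zero-residual middle input, which gives $\MC(p)=\textsc{MaxCut}(G)/(2n_G|E|)=\Theta(1/m)$.
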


\paragraph{Proof idea.}
The reduction is from \textsc{Max-Cut}: three inputs for each edge of a
graph, and one grid value for each vertex.  The sign pattern encodes a cut,
and the best auditor collects the cut edges (\cref{app:audit}).  Since
\textsc{Max-Cut} is NP-hard to approximate within some constant factor
\citep{hastad2001some,trevisan2000gadgets}, so is the ordinary error.

\paragraph{Consequences.}
Take $\varepsilon=c/m$.  An $\varepsilon$-estimator for the ordinary error
then estimates it within $c/m$, and a running time polynomial in $m$, $d$,
and $1/\varepsilon$ is polynomial in $m$ and $d$.  Hence, unless
$\mathrm{NP}\subseteq\mathrm{BPP}$, no oracle-efficient
(\cref{def:auditor}) $\varepsilon$-estimator with polynomially many samples
exists for predictors with $m=c/\varepsilon$ values.  In this regime the
exhaustive search of \cref{prop:mc-audit} uses $2^{O(1/\varepsilon)}$ ERM
calls.  The theorem concerns estimating the error of a given predictor; it does not
rule out learning a multicalibrated predictor with an ERM oracle, which
\cref{thm:main}(a) achieves.

\subsection{Sample lower bounds}
\label{sec:audit-lb}

A testing argument shows that error estimation needs $\Omega(\sqrt{md}/\varepsilon^2)$
samples for swap multicalibration and $\Omega(\sqrt d/\varepsilon^2)$ for
ordinary multicalibration (\cref{thm:audit-lb}).  It uses the balanced
family of \cref{sec:swap-lb} and the values of its errors given by
\cref{lem:lb-class}.

\begin{theorem}[Sample lower bounds for error estimation]
\label{thm:audit-lb}
In the balanced family of \cref{sec:swap-lb}, for every even $d\ge2$, every
$\beta\in(0,1/400]$, and every integer $J$ with $1\le J\le1/(400\beta)$,
every algorithm that outputs $0$ with probability at
least $2/3$ under $\cD_\emptyset$ and $1$ with probability at least $2/3$
under every $\cD_\theta$ uses more than $0.1\sqrt{Jd}/\beta^2$ samples, even
with unlimited computation and knowledge of the input distribution, the
class, and $Q^0$.  Consequently, for every even $d\ge2$ and every
$\varepsilon\le1/1200$:
\begin{enumerate}[label=(\alph*),leftmargin=2em]
\item for every $m\le1/(1200\varepsilon)$, every $\varepsilon$-estimator for swap
  multicalibration with $\delta=1/3$, applied to predictors with $m$ values,
  uses more than $\sqrt{md}/(90\varepsilon^2)$ samples;
\item every $\varepsilon$-estimator for ordinary multicalibration with
  $\delta=1/3$ uses more than $\sqrt d/(90\varepsilon^2)$ samples, already for
  constant predictors.
\end{enumerate}
\end{theorem}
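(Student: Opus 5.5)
We argue by a chi-square (second-moment) bound between the null $\cD_\emptyset^{\otimes n}$ and the mixture $\bar P=\E_\theta\cD_\theta^{\otimes n}$ with $\theta$ uniform. Since a test that outputs correctly with probability $2/3$ under both hypotheses forces $\mathrm{TV}(\cD_\emptyset^{\otimes n},\bar P)\ge1/3$, and $\mathrm{TV}\le\frac12\sqrt{\chi^2}$, it suffices to show $\chi^2(\bar P\,\|\,\cD_\emptyset^{\otimes n})<4/9$ when $n\le0.1\sqrt{Jd}/\beta^2$. To simplify, we Poissonize. With $\mathrm{Poi}(n')$ samples, $n'$ slightly larger than $n$, the counts at the $Jd$ inputs are independent; then a learner with $n$ samples that succeeds yields one with $\mathrm{Poi}(2n)$ samples that succeeds with probability only slightly smaller, at the cost of a constant we absorb. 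With Poissonization, the observation at input $x$ is the pair of counts of $Y=1$ and $Y=0$, which are independent Poissons with means $\lambda f(x)$ and $\lambda(1-f(x))$, where $\lambda=n/(Jd)$.

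The standard identity gives
\[
1+\chi^2=\E_{\theta,\theta'}\prod_x\E_{\emptyset}\Bigl[\tfrac{dP_\theta}{dP_\emptyset}\tfrac{dP_{\theta'}}{dP_\emptyset}\Bigr](x).
\]
For Poisson products this factor equals
\[
\exp\Bigl(\lambda\sum_{y}\tfrac{(f_\theta(x,y)-f_\emptyset(x,y))(f_{\theta'}(x,y)-f_\emptyset(x,y))}{f_\emptyset(x,y)}\Bigr),
\]
where $f(x,1)=f(x)$ and $f(x,0)=1-f(x)$. The deviations are $\pm\beta\tau_x$ and $\mp\beta\tau_x$, so the exponent is $\lambda\beta^2\tau_x\tau'_x\bigl(1/c_j+1/(1-c_j)\bigr)\le 8\lambda\beta^2\tau_x\tau'_x$, since $c_j\in(1/4,1/2]$. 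Each pair $(2i-1,2i)$ has $\tau_x\tau'_x=\theta_{j,i}\theta'_{j,i}$ on both inputs, so
\[
1+\chi^2=\E\exp\Bigl(a\sum_{j,i}\theta_{j,i}\theta'_{j,i}\Bigr),\qquad a=2\lambda\beta^2\bigl(1/c_j+1/(1-c_j)\bigr)\le16\lambda\beta^2.
\]
The products $\theta_{j,i}\theta'_{j,i}$ are $Jd/2$ independent Rademacher variables, so $1+\chi^2=\prod\cosh a\le\exp(Jd\,a^2/4)\le\exp(64Jd\lambda^2\beta^4)$. With $\lambda=n/(Jd)$ this equals $\exp(64n^2\beta^4/(Jd))$, which is at most $\exp(0.64)$ when $n\le0.1\sqrt{Jd}/\beta^2$. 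That is too large: the target was $\chi^2<4/9$. The constants must therefore be tightened. Plausibly one uses $\log\cosh a\le a^2/2$ together with the sharper bound $1/c+1/(1-c)\le16/3$, or replaces $\chi^2$ by a Hellinger bound. I expect this constant bookkeeping, together with the Poissonization slack, to be the main obstacle. If Poissonization is troublesome, I would instead compute the chi-square with fixed $n$ directly: the per-sample likelihood ratio under the uniform $X$ gives $1+\chi^2=\E(1+\frac1{Jd}\sum_x \tau_x\tau'_x\beta^2 w_x)^n\le\E\exp(\frac{n}{Jd}\sum\cdots)$, which yields the same bound without Poissonization.

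For the consequences, \cref{lem:lb-class} gives that under $\cD_\emptyset$ both errors of $Q^0$ are $0$, while under $\cD_\theta$ we have $\SMC(Q^0)=\beta$ and $\MC(Q^0)=\beta/J$. For (a), take $J=m$ blocks, so $Q^0$ has $m$ values, and set $\beta=3\varepsilon$; the condition $\beta\le1/400$ holds by $\varepsilon\le1/1200$, and $J\le1/(400\beta)$ holds by $m\le1/(1200\varepsilon)$. An $\varepsilon$-estimator outputting $\hat e$ is converted into a test that outputs $1$ iff $\hat e>\beta/2$; this test succeeds under both hypotheses, so $n>0.1\sqrt{md}/(9\varepsilon^2)\ge\sqrt{md}/(90\varepsilon^2)$. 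For (b), take $J=1$, so that $Q^0$ is constant and $\MC=\beta$, and set $\beta=3\varepsilon$, which gives $n>\sqrt d/(90\varepsilon^2)$.
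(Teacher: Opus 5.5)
Your argument is the paper's: a second-moment ($\chi^2$) bound between $\cD_\emptyset^{\otimes n}$ and the uniform mixture over $\theta$, computed with fixed $n$ via $1+\chi^2=\E_{\theta,\theta'}\bigl(1+\E_{P_\emptyset}[r_\theta r_{\theta'}]\bigr)^n$, the pairwise cancellation of the signs, and $\cosh z\le e^{z^2/2}$, followed by the same reductions with $\beta=3\varepsilon$ and $J=m$ (swap) or $J=1$ (ordinary). The constant closes with the fix you name: $1/(c_j(1-c_j))\le 16/3$ gives $1+\chi^2\le\exp\bigl(256n^2\beta^4/(9Jd)\bigr)\le e^{0.285}$, so $\chi^2<0.34<4/9$. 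Drop the Poissonization detour: the fixed-$n$ computation already gives this bound directly, and converting back from Poisson to fixed sample sizes would change the constant $0.1$ and the success probability $2/3$.
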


\paragraph{Proof sketch.}
Under $\cD_\emptyset$ the mean outcome in block $j$ is $c_j$; under a
uniformly random $\theta$ it is perturbed by $\pm\beta$, with signs that
cancel in pairs within each block.  The
chi-square divergence between the law of $n$ samples under $\cD_\emptyset$
and the mixture over $\theta$ is at most $\exp(256n^2\beta^4/(9Jd))-1$, which
is small unless $n\gtrsim\sqrt{Jd}/\beta^2$.  For~(a), take $J=m$ and
$\beta=3\varepsilon$; by \cref{lem:lb-class}, an $\varepsilon$-estimator applied
to $Q^0$ distinguishes swap error $0$ from $3\varepsilon$.  For~(b), take
$J=1$, for which the ordinary error of $Q^0$ is also $\beta$ under every
$\cD_\theta$ (\cref{lem:lb-class}).  The full proof is in \cref{app:audit}.

\subsection{The error-estimation theorem}
\label{sec:audit-thm}

We restate \cref{thm:audit} and prove it from the results of this section.

\begin{restatement}{\Cref{thm:audit}}
Let $\varepsilon,\delta\in(0,1)$, and consider randomized predictors with at
most $m$ values, given through the probabilities they assign to each value
at each input.  There are absolute constants $c,c_0>0$ such that the
following hold.
\begin{enumerate}[label=(\alph*),leftmargin=2em]
\item \emph{Swap error.} With $\wtO((md+\log(1/\delta))/\varepsilon^2)$
  samples and $2m$ weighted ERM calls, one can estimate the swap
  multicalibration error within $\varepsilon$ with probability at least
  $1-\delta$.  If $d$ is even, $\varepsilon\le1/1200$, and
  $m\le1/(1200\varepsilon)$, then $\Omega(\sqrt{md}/\varepsilon^2)$ samples
  are necessary.
\item \emph{Ordinary error.} With $\wtO((d+m+\log(1/\delta))/\varepsilon^2)$
  samples and $2^m$ weighted ERM calls, one can estimate the
  multicalibration error within $\varepsilon$ with probability at least
  $1-\delta$, and with $\wtO((d+\log(m/\delta))/\varepsilon^2)$ samples and
  fewer than $4m$ calls, within a factor $\sqrt m$ up to an additive
  $\varepsilon$.  If $d$ is even and $\varepsilon\le1/1200$, then
  $\Omega(\sqrt d/\varepsilon^2)$ samples are necessary.  Unless
  $\mathrm{NP}\subseteq\mathrm{BPP}$, no algorithm running in time
  polynomial in $m$ and $d$ estimates the error within a factor $1\pm c_0$,
  or within an additive $c/m$, even for deterministic predictors and classes
  with linear-time weighted ERM.
\end{enumerate}
\end{restatement}

\begin{proof}
For part~(a), the upper bound is \cref{thm:swap-audit} and the lower bound is
\cref{thm:audit-lb}(a).  For part~(b), the exhaustive estimate is
\cref{prop:mc-audit}, the $\sqrt m$-approximation is
\cref{prop:mc-audit-approx} with $\cS=\Shad$, the sample lower bound is
\cref{thm:audit-lb}(b), and the hardness statement is \cref{thm:np-hard}.
\end{proof}

\section{Deterministic predictors and the main theorem}
\label{sec:det}

This section makes the output deterministic, following
\citet{noarov2026optimal}, and assembles the proof of \cref{thm:main}.
\Cref{sec:det-obstruction} recalls why rounding can fail at atoms of the input
distribution.  \Cref{sec:det-rounding} analyzes rounding with one seed per
cell for a VC class (\cref{thm:vc-rounding}), and shows that it suffices on
its own when there are no atoms (\cref{cor:atomless}).
\Cref{sec:det-intervals} constructs the confidence intervals and the
partition, bounds the resulting variance (\cref{lem:budget}), and shows
that training with forecasts restricted to the intervals keeps the
guarantee of \cref{thm:batch} (\cref{prop:batch-hints}).
\Cref{sec:det-main} gives the complete algorithm and proves both parts of
\cref{thm:main}, using \cref{cor:erm-lb} for part~(b).
Proofs are in \cref{app:det}.

\subsection{The obstruction at atoms}
\label{sec:det-obstruction}

A randomized predictor can be exactly multicalibrated even though every
deterministic predictor with values in its support is far from it; the
example below is the obstruction identified by \citet{noarov2026optimal}.

\begin{proposition}[Two atoms, cf.\ \citealp{noarov2026optimal}]
\label{prop:two-atoms}
Let $\cH=\{1\}$ contain only the constant auditor, let the input be $a$ or $b$
with probability $1/2$ each, and let $Y=0$ at $a$ and $Y=1$ at $b$.  Let
$Q_a(1/3)=2/3$, $Q_a(2/3)=1/3$, $Q_b(1/3)=1/3$, and $Q_b(2/3)=2/3$.  Then
$\MC(Q)=0$, but every deterministic $p$ with values in $\{1/3,2/3\}$ has
$\MC(p)\ge1/6$, and the predictor $x\mapsto\sum_vQ_x(v)v$ has error $4/9$.
\end{proposition}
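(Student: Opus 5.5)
The plan is to verify the three claims directly from \cref{def:mc}. Since $\cH=\{1\}$, the multicalibration error reduces to $\sum_v|\E[Q_X(v)(v-Y)]|$, a finite sum over the two atoms with weight $1/2$ each. No earlier machinery is needed beyond \cref{lem:signed}, which here is trivial.

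First, I compute $\MC(Q)$. At $v=1/3$ the bias is
\[
\tfrac12\bigl[\tfrac23(\tfrac13-0)+\tfrac13(\tfrac13-1)\bigr]=\tfrac12\bigl[\tfrac29-\tfrac29\bigr]=0,
\]
and at $v=2/3$ it is
\[
\tfrac12\bigl[\tfrac13\cdot\tfrac23+\tfrac23(\tfrac23-1)\bigr]=\tfrac12\bigl[\tfrac29-\tfrac29\bigr]=0.
\]
Hence $\MC(Q)=0$.

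Second, I enumerate the four deterministic predictors $p$ with values in $\{1/3,2/3\}$.
\begin{itemize}
\item If $p(a)=p(b)=v$, the single bias is $v-\tfrac12$, of absolute value $1/6$.
\item If $p(a)=1/3$ and $p(b)=2/3$, the biases are $\tfrac12\cdot\tfrac13=\tfrac16$ and $\tfrac12\cdot(-\tfrac13)=-\tfrac16$, so the total is $1/3$.
\item If $p(a)=2/3$ and $p(b)=1/3$, the biases are $\tfrac12\cdot\tfrac23=\tfrac13$ and $\tfrac12\cdot(-\tfrac23)=-\tfrac13$, so the total is $2/3$.
\end{itemize}
The minimum over the four is $1/6$, as claimed.

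Third, I treat the mean predictor. It outputs $\mu_a=\tfrac23\cdot\tfrac13+\tfrac13\cdot\tfrac23=\tfrac49$ at $a$ and $\mu_b=\tfrac13\cdot\tfrac13+\tfrac23\cdot\tfrac23=\tfrac59$ at $b$. These values are distinct, so the biases are $\tfrac12\cdot\tfrac49=\tfrac29$ and $\tfrac12\cdot(\tfrac59-1)=-\tfrac29$. Their absolute values sum to $4/9$.

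There is no real obstacle here. The only care needed is to use the $\ell_1$ sum over values, so that opposite-sign biases at distinct values do not cancel.
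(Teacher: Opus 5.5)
Your proof is correct and is the same direct computation the paper gives: both biases of $Q$ vanish; the constant predictors have error $|v-1/2|=1/6$; the two non-constant predictors have error $\frac12(p(a)+1-p(b))\in\{1/3,2/3\}$; and the mean predictor's values $4/9$ and $5/9$ give error $4/9$. The only difference is that you enumerate the two non-constant predictors separately, where the paper handles them with that single formula, and your mention of \cref{lem:signed} is unnecessary, since with $\cH=\{1\}$ the error is just $\sum_v|B_1(Q,v)|$ by definition.
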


\begin{proof}
At $v=1/3$ the bias is $\tfrac12\cdot\tfrac23\cdot\tfrac13+\tfrac12\cdot
\tfrac13\cdot(\tfrac13-1)=0$, and similarly at $v=2/3$.  If $p(a)=p(b)=v$,
the error is $|v-1/2|=1/6$.  If $p(a)\ne p(b)$, the error is
$\tfrac12(p(a)+1-p(b))\ge1/3$.  The mean predictor takes the values $4/9$ at
$a$ and $5/9$ at $b$, so its error is $\tfrac12(4/9+1-5/9)=4/9$.
\end{proof}

The example rules out converting an arbitrary randomized predictor by fixing its
random choices.  It does not rule out learning a different predictor, such as
one predicting $0$ at $a$ and $1$ at $b$.

\subsection{Cell rounding for VC classes}
\label{sec:det-rounding}

Rounding with one shared seed per cell changes the multicalibration error by
roughly the square root of the product of two quantities: a variance budget,
which is essentially the sum of squared cell masses, and the complexity of
the class plus $\log(1/\delta)$ (\cref{thm:vc-rounding}).

\paragraph{The rounding procedure.}
Let $Q$ be a randomized predictor with values in $\Lambda$.  Let $O\subseteq\cX$ be
a finite set of inputs, each with an interval $I_x=[\ylo_x,\yhi_x]$ of
length at most $2r_x$ that contains $\fs(x)$, and suppose that $Q_x$ is
supported within distance $1/K$ of $I_x$ for each $x\in O$.  Let $\cP$ be a
partition of $\cX$ consisting of the singletons $\{x\}$, $x\in O$, and of
finitely many further measurable cells $\cP_{\mathrm{rest}}$ that cover
$\cX\setminus O$.  Write $\omega_x=\cD_X(\{x\})$ and define the
\emph{variance budget}
\[
  V_{\cP}=\sum_{x\in O}\omega_x^2r_x^2+\sum_{R\in\cP_{\mathrm{rest}}}\cD_X(R)^2.
\]
Let $\Xi$ be a probability space and let $G_x:\Xi\to\Lambda$ be maps such that
$G_x(\xi)\sim Q_x$ when $\xi\sim\Xi$; for example, $G_x$ is the inverse
distribution function of $Q_x$ and $\Xi$ is uniform on $[0,1]$.  Draw
independent seeds $\xi_R\sim\Xi$, one for each cell $R\in\cP$, and set
\[
  p(x)=G_x(\xi_{R(x)}),
\]
where $R(x)$ is the cell containing $x$.  The seeds are drawn once, so $p$ is
a deterministic function.

The rounding error is a sum of independent contributions, one for each
cell.  \citet{noarov2026optimal} control it for a finite family of tests by
a union bound.  The next lemma controls it for a VC class of tests directly,
by chaining, so that no cover of the class is needed.

\begin{lemma}[Rounding against a VC class of tests]
\label{lem:cell-tests}
Let $\cG$ be a class of functions $g:\cX\times\Lambda\to\{-1,+1\}$ of VC
dimension $d_{\cG}\ge1$.  Let $\varphi:\cX\times\Lambda\to\R$ be a fixed weight function,
and for each cell $R\in\cP$ let $\varphi_R\ge|\varphi(x,v)|$ for all $x\in R$ and
$v\in\Lambda$.  Let $V_\varphi=\sum_{R\in\cP}\varphi_R^2\,\cD_X(R)^2$.  With probability
at least $1-\delta$ over the seeds,
\[
  \sup_{g\in\cG}\Bigl|\E\bigl[g(X,p(X))\,\varphi(X,p(X))\bigr]
  -\E\Bigl[\sum_{v}Q_X(v)\,g(X,v)\,\varphi(X,v)\Bigr]\Bigr|
  \le C\sqrt{V_\varphi\bigl(d_{\cG}+\log(2/\delta)\bigr)}.
\]
\end{lemma}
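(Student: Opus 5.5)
Fix $g\in\cG$ and decompose the quantity inside the supremum as a sum of independent, mean-zero contributions, one for each cell. For a cell $R$, set
\[
  U_R(g)=\E\Bigl[\mathbf 1\{X\in R\}\Bigl(g(X,G_X(\xi_R))\,\varphi(X,G_X(\xi_R))-\sum_vQ_X(v)\,g(X,v)\,\varphi(X,v)\Bigr)\Bigr],
\]
where the expectation is over $X\sim\cD_X$ with the seed $\xi_R$ held fixed. The deviation for $g$ is $\sum_RU_R(g)$. Each $U_R(g)$ depends only on $\xi_R$. Because $G_x(\xi)\sim Q_x$, it has mean zero over $\xi_R$, and $|U_R(g)|\le2\varphi_R\cD_X(R)$. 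Hoeffding's inequality therefore makes $\sum_RU_R(g)$ subgaussian with variance proxy $C V_\varphi$, uniformly in $g$. The same holds for increments: $U_R(g)-U_R(g')$ is mean zero and bounded by $2\varphi_R\cD_X(R)\,\rho_R(g,g')$, where $\rho_R(g,g')$ is the conditional disagreement mass of $g,g'$ on $R$ at the relevant pair $(X,G_X(\xi_R))$ or under $Q_X$. This gives a subgaussian process indexed by $\cG$.

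To make the supremum uniform without a cover of $\cG$ in the analysis, use chaining in a natural pseudometric. Let $\mu$ be the probability measure on $\cX\times\Lambda$ that draws a cell $R$ with probability proportional to $\varphi_R^2\cD_X(R)^2$, then $X\sim\cD_X(\cdot\mid R)$, then $v$ under a mixture of $Q_X$ and the laws of $G_X(\xi)$. Since $G_X(\xi)\sim Q_X$, this mixture is simply $Q_X$. Cauchy--Schwarz gives
\[
  \sum_R\bigl(U_R(g)-U_R(g')\bigr)^2\le4V_\varphi\,\|g-g'\|_{L_2(\mu)}^2/4\cdot C .
\]
More carefully, $|U_R(g)-U_R(g')|\le\varphi_R\cD_X(R)\bigl(\E[|g-g'|(X,G_X(\xi_R))\mid R]+\E_{Q}[|g-g'|\mid R]\bigr)$. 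The first term depends on $\xi_R$. Handle it by bounding its square's expectation over $\xi_R$, or by conditioning. The increments are then subgaussian with proxy $C V_\varphi\, \E_\mu|g-g'|$, so the process is subgaussian with respect to $\sqrt{V_\varphi}\,d_\mu$, where $d_\mu(g,g')=\sqrt{\mu(g\ne g')}$. Haussler's packing bound for VC classes gives $\log N(\cG,d_\mu,u)\le Cd_\cG\log(e/u)$. Dudley's entropy integral together with a tail version of chaining, such as the generic chaining tail bound, then yields
\[
  \sup_g\Bigl|\sum_RU_R(g)\Bigr|\le C\sqrt{V_\varphi}\Bigl(\int_0^1\sqrt{d_\cG\log(e/u)}\,du+\sqrt{\log(2/\delta)}\Bigr)\le C\sqrt{V_\varphi(d_\cG+\log(2/\delta))}.
\]
The entropy integral is $O(\sqrt{d_\cG})$ with no logarithmic loss, which is what produces the clean form of the bound.

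The main obstacle is the increment bound. The random term $\E[|g-g'|(X,G_X(\xi_R))\mid R]$ is not deterministic, so Hoeffding with a fixed range does not directly give a subgaussian proxy proportional to $d_\mu^2$. I would first try a Bernstein-type increment bound. Each summand is bounded by $2\varphi_R\cD_X(R)$, and its variance is at most $\varphi_R^2\cD_X(R)^2\,\E_{\xi_R}\E[|g-g'|\mid R]\cdot C$, which equals $C\varphi_R^2\cD_X(R)^2\mu(g\ne g'\mid R)$ because $G_X(\xi_R)\sim Q_X$. The total variance is then $C V_\varphi\, d_\mu(g,g')^2$. Chaining with Bernstein increments produces a mixed tail. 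Its subexponential part involves $\max_R\varphi_R\cD_X(R)\le\sqrt{V_\varphi}$, so it contributes at most $\sqrt{V_\varphi}\,(d_\cG+\log(2/\delta))$ after the entropy sums.

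That extra linear term is too large. To avoid it, I would fall back on the cruder but sufficient route. For each $g$, the increment $U_R(g)-U_R(g')$ is bounded in absolute value by $c_R:=\varphi_R\cD_X(R)\bigl(a_R+b_R\bigr)$, where $a_R=\sup_\xi\E[|g-g'|(X,G_X(\xi))\mid R]$. One then needs $\sum_Rc_R^2$ controlled by a metric that still has VC entropy. Taking the metric to be the supremum over $\xi$, equivalently the worst-case deterministic selection, does. The class $\{x\mapsto g(x,G_x(\xi))\}$ indexed by $g$ has the same VC dimension for each fixed $\xi$. Failing that, I would discretize the seeds to $\Lambda$-valued quantiles, at most $m$ choices per input, and run chaining under the mixture over these choices, which still has VC entropy $O(d_\cG)$.
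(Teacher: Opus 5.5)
Your per-cell decomposition and the pointwise Hoeffding bound are fine. The proof breaks at the step you yourself call the main obstacle, and none of your three remedies closes it.

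For the realized seed, $x\mapsto G_x(\xi_R)$ is one fixed selection, not a draw from $Q_x$; only its average over $\xi_R$ has law $Q_x$. So the range of $U_R(g)-U_R(g')$ is governed by the seed-dependent disagreement mass under that selection. Consequently your displayed bound in terms of the fixed measure $\mu$ fails pathwise, and Hoeffding gives no sub-Gaussian increment bound in $d_\mu$.

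Your fallback metric, built from $a_R=\sup_\xi(\cdots)$, does not have VC entropy. Knowing that $\{x\mapsto g(x,G_x(\xi))\}$ has VC dimension $d_\cG$ for each fixed $\xi$ controls covering numbers under each fixed law, not under a supremum over laws. For example, take a single cell, let $Q_x$ be uniform on $\Lambda$ for every $x$, and let $\cG=\{(x,v)\mapsto 2\mathbf 1\{v=w\}-1: w\in\Lambda\}$, which has VC dimension one. Then $G_x(\xi)$ does not depend on $x$. For any $w\ne w'$, a suitable $\xi$ makes the two tests disagree everywhere, so all $m$ tests are pairwise at distance of order one. The entropy at a constant scale is therefore at least $\log m$, whereas Haussler's bound there is a constant.

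The domination $\sup_\xi\Pr\{(X,G_X(\xi))\in A\mid R\}\le\sum_{v\in\Lambda}\Pr\{(X,v)\in A\mid R\}$, which is what your ``at most $m$ choices per input'' amounts to, does give a fixed measure, but at the price of a factor $m$. Dudley's integral then yields $C\sqrt{V_\varphi(d_\cG\log(em)+\log(2/\delta))}$, which is weaker than the statement. Your Bernstein route also fails, as you noted.

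The missing idea is symmetrization, which is how the paper proceeds.
\begin{itemize}
\item Write $A_R(g)=\int_R g(x,G_x(\xi_R))\,\varphi(x,G_x(\xi_R))\,d\cD_X(x)$. Take an independent copy $\xi'_R$ of each seed and independent Rademacher signs $s_R$. Bound the expected supremum of $\sum_R(A_R(g)-\E A_R(g))$ by that of $Z_g=\sum_R s_R(A_R(g)-A'_R(g))$.
\item Conditionally on both seed families, $(Z_g)$ is a Rademacher process. Cauchy--Schwarz bounds its increment metric by $2\sqrt{V_\varphi}\,\|g-g'\|_{L_2(\nu)}$. Here $\nu$ is a probability measure on $\cX\times\Lambda$ supported on the realized points $(x,G_x(\xi_R))$ and $(x,G_x(\xi'_R))$, with density proportional to $\varphi_R^2\cD_X(R)$ relative to $\cD_X$ on cell $R$.
\item That $\nu$ depends on the seeds is harmless. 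Haussler's bound $\log N(u,\cG,L_2(\nu))\le Cd_\cG\log(C/u)$ holds for every probability measure. Dudley's integral therefore gives $C\sqrt{V_\varphi d_\cG}$ for every realization of the seeds, with no dependence on $m$.
\item The deviation term comes from the bounded-differences inequality rather than a chaining tail bound. Changing one seed $\xi_R$ moves the supremum by at most $2\varphi_R\cD_X(R)$, so the supremum exceeds its mean by more than $\sqrt{2V_\varphi\log(1/\delta)}$ with probability at most $\delta$.
\end{itemize}
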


We apply \cref{lem:cell-tests} with $\varphi$ the residual $v-\fs(x)$, modified at
the inputs of $O$, and with the tests $g(x,v)=h(x)\sigma_v$ for ordinary
multicalibration and $g(x,v)=h_v(x)$, with a separate auditor at each grid
value, for swap multicalibration.  The tests appear only in the analysis.

\begin{theorem}[Cell rounding for VC classes]
\label{thm:vc-rounding}
Under the conditions above, with probability at least $1-\delta$ over the
seeds,
\begin{align*}
  \MC(p)&\le\MC(Q)+C\sqrt{V_{\cP}\bigl(d+m+\log(2/\delta)\bigr)}+\frac2K,\\
  \SMC(p)&\le\SMC(Q)+C\sqrt{V_{\cP}\bigl(md+\log(2/\delta)\bigr)}+\frac2K,
\end{align*}
and the terms $2/K$ can be omitted when $O=\emptyset$.
\end{theorem}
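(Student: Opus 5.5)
We plan to derive \cref{thm:vc-rounding} from \cref{lem:cell-tests}, applied once with a class of tests encoding sign patterns and once with a class encoding per-value auditors. The first step handles the residual at inputs of $O$. Since $\fs(x)$ is unknown only up to $I_x$, we write the bias of $p$ at $v$ as $\E[h(X)\mathbf 1\{p(X)=v\}(v-\fs(X))]$ and split the residual as $\varphi(x,v)+(\varphi'(x,v)-\varphi(x,v))$. Here $\varphi(x,v)=v-\fs(x)$ off $O$. At $x\in O$ we use $\varphi(x,v)=v-\clip(v,I_x)$ shifted by a fixed point of $I_x$, so that $|\varphi(x,v)|\le 2r_x+1/K$ on the support of $Q_x$. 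More concretely, take $\varphi(x,v)=(v-m_x)$ with $m_x$ the midpoint of $I_x$, and note that $|v-m_x|\le r_x+1/K$ on the support. The remainder $(m_x-\fs(x))$ at atoms is a term $a_x$ that does not depend on $v$. Its contribution to each bias is $\E[h(X)\mathbf 1\{p(X)=v\}a_X]$ for $p$ and $\E[h(X)Q_X(v)a_X]$ for $Q$; summed over $v$ with signs, this is the bounded-by-$r_x$ term that must be compared.

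The main obstacle is this comparison at atoms. A naive bound gives $\sum_x\omega_xr_x$, which is not small. We plan instead to absorb it into the rounding lemma as well: since $a_x$ is fixed and $|a_x|\le r_x$, we include it in $\varphi$ by setting $\varphi(x,v)=v-\fs(x)$ everywhere. The point is that only the size bound $\varphi_R$ matters, and at $x\in O$ the quantity $|v-\fs(x)|\le 2r_x+1/K$ holds on the support of $Q_x$ (and $p(x)$ lies in that support). Thus $\varphi_{\{x\}}\le 2r_x+1/K$. The excess $1/K$ contributes to $V_\varphi$ the quantity $\sum_{x\in O}\omega_x^2/K^2$, whose square root, multiplied by the complexity factor, is not obviously at most $2/K$. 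To fix this, we truncate: define $\varphi(x,v)=\clip$ of $v-\fs(x)$ to $[-2r_x,2r_x]$ for $x\in O$. The truncation changes each residual by at most $1/K$ for both $p$ and $Q$; summing over $v$ with $|h|=1$ gives at most $1/K$ for each, hence the $2/K$ term. This truncation is absent when $O=\emptyset$.

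With this $\varphi$ we have $\varphi_{\{x\}}=2r_x$ and $\varphi_R\le1$ on the other cells, so $V_\varphi\le4V_{\cP}$. For ordinary multicalibration we apply \cref{lem:signed} to write $\MC$ as a supremum over $h\in\cH$ and $\sigma\in\{-1,+1\}^{\Lambda}$, and take $\cG=\{(x,v)\mapsto h(x)\sigma_v\}$. This class has at most $(\text{growth of }\cH)\cdot 2^m$ labelings, so $d_{\cG}=O(d+m)$ (via a growth-function version of the lemma's chaining, or because the VC dimension of a product is bounded by the sum up to logs). For swap multicalibration, $\SMC$ equals the supremum over tuples $(h_v)\in(\Hpm)^{\Lambda}$ of $\sum_v$ of the bias, so we take $\cG=\{(x,v)\mapsto h_v(x)\}$, whose VC dimension is $O(md)$ because each $v$-slice is a copy of $\Hpm$. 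In each case \cref{lem:cell-tests} bounds the difference of the truncated objectives uniformly by $C\sqrt{V_{\cP}(d_{\cG}+\log(2/\delta))}$. Taking suprema and adding the two truncation costs of $1/K$ yields both inequalities.
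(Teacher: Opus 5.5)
Your final argument is correct and is essentially the paper's proof. You replace $Y$ by $\fs(X)$ and modify the residual at inputs of $O$ so that it is bounded by $2r_x$ for every grid value, at a cost of at most $1/K$ in each of the two expectations. You clip $v-\fs(x)$ to $[-2r_x,2r_x]$ where the paper uses $\clip(v,I_x)-\fs(x)$, and either works. You then obtain $V_\varphi\le4V_{\cP}$ and apply \cref{lem:cell-tests} to $\{(x,v)\mapsto h(x)\sigma_v\}$ and to $\{(x,v)\mapsto h_v(x)\}$ with $h_v\in\Hpm$, whose VC dimensions are $O(d+m)$ and $O(md)$.

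Two small cleanups. The midpoint detour at the start can be dropped. Of your two justifications for the $O(d+m)$ bound, use the direct count: a shattered set of size $n$ forces $2^n\le2^m(en/d)^d$, hence $n\le C(d+m)$. The alternative ``bounded by the sum up to logs'' would introduce a logarithmic factor that the stated bound does not have.
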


\paragraph{Proof sketch.}
At an input of $O$, replacing the residual $v-\fs(x)$ by
$\clip(v,I_x)-\fs(x)$ costs at most $1/K$ in each expectation, because the
forecast is supported within $1/K$ of $I_x$, and makes the residual at most
$2r_x$ in absolute value; elsewhere it is at most one.  So $V_\varphi\le4V_{\cP}$.
The tests $h(x)\sigma_v$ form a class of VC dimension $O(d+m)$, since their
number of patterns on any set of points is at most $2^m$ times that of
$\cH$, and the tests $h_v(x)$ form a class of VC dimension $O(md)$.  By
\cref{lem:signed}, the largest deterministic expectation over these tests is
the ordinary (respectively, swap) error of $p$, and the corresponding
randomized expectation is at most that of $Q$; \cref{lem:cell-tests} bounds
the difference.

\begin{corollary}[Inputs without atoms]
\label{cor:atomless}
Suppose that $\cX$ carries an order as in \cref{rem:order} and that $\cD_X$
has no atoms.  For every $\varepsilon,\delta\in(0,1)$, any randomized
predictor $Q$ with values in $\Lambda$ can be converted, using no labels or
oracle calls and
\[
  O\Bigl(\frac{d+m+\log(1/\delta)}{\varepsilon^2}
  \log\frac{e(d+m+\log(1/\delta))}{\varepsilon\delta}\Bigr)
\]
unlabeled inputs, into a deterministic predictor $p$ that satisfies
$\MC(p)\le\MC(Q)+\varepsilon$ with probability at least $1-\delta$.  For swap
multicalibration, the same holds with $md$ in place of $d+m$.
\end{corollary}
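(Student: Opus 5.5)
We apply \cref{thm:vc-rounding} with $O=\emptyset$. In that case the terms $2/K$ disappear, and the variance budget reduces to $V_{\cP}=\sum_{R\in\cP}\cD_X(R)^2$. It therefore suffices to build, from unlabeled inputs only, a partition $\cP$ of $\cX$ into finitely many measurable cells with $V_{\cP}\le\rho$, where
\[
  \rho=\frac{\varepsilon^2}{C^2\bigl(d+m+\log(4/\delta)\bigr)}.
\]
We then draw one seed per cell and set $p(x)=G_x(\xi_{R(x)})$. By \cref{thm:vc-rounding}, applied with confidence $\delta/2$, this gives $\MC(p)\le\MC(Q)+\varepsilon$. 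For swap multicalibration the same argument works with $\rho$ defined using $md$ in place of $d+m$, using the second inequality of the theorem. No labels or oracle calls enter, since $p$ is computed from $Q$, the seeds, and the partition.

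To build the partition, draw $n$ unlabeled inputs $x_{(1)}<\cdots<x_{(n)}$, sorted in the order of \cref{rem:order}. Since $\cD_X$ has no atoms, the sample points are almost surely distinct and each has mass zero. Take the cells to be the open order intervals between consecutive sample points, the two unbounded end intervals, and the sample points themselves. Each sample-point cell is a null cell and contributes nothing to $V_{\cP}$; these cells can be merged into a neighboring interval without changing any cell's mass. Every cell is then an order interval containing no sample point in its interior. Open order intervals have VC dimension at most two, so by the $\rho'$-net fact of \cref{sec:prelims-tools}, with probability at least $1-\delta/2$ every such cell has mass at most $\rho'$ once
\[
  n=O\Bigl(\frac{\log(1/\rho')+\log(1/\delta)}{\rho'}\Bigr).
\]
This uses the order intervals rather than the class $\cH$, which is why $d$ does not enter the sample size directly. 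On this event, since the masses sum to one,
\[
  V_{\cP}=\sum_R\cD_X(R)^2\le\max_R\cD_X(R)\cdot\sum_R\cD_X(R)\le\rho'.
\]
Choosing $\rho'=\rho$ gives
\[
  n=O\Bigl(\frac{d+m+\log(1/\delta)}{\varepsilon^2}\,\log\frac{e(d+m+\log(1/\delta))}{\varepsilon\delta}\Bigr),
\]
which is the claimed count. A union bound over the net event and the rounding event completes the proof.

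The main point to check is that the rounding analysis of \cref{thm:vc-rounding} is applied conditionally on the partition. The partition is built from unlabeled samples that are independent of the seeds, and the theorem holds for every fixed partition. Conditioning on $\cP$ is therefore legitimate, and this is where the no-atom assumption matters: it makes every cell's mass small, including those that contain sample points. The remaining work is routine bookkeeping: confirming that the cells are measurable and finite in number, and tracking the constants so that $C\sqrt{V_{\cP}(\cdot)}\le\varepsilon$.
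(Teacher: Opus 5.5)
Your proposal is correct and matches the paper's proof: both take $O=\emptyset$, use the gaps of a sorted unlabeled sample as cells, apply the net bound for order intervals (VC dimension at most two) to bound every cell's mass, use $V_{\cP}\le\max_R\cD_X(R)$, and invoke \cref{thm:vc-rounding} at failure probability $\delta/2$ without the $2/K$ term. The only differences are cosmetic. You fix the target budget $\rho$ first and derive the sample size, whereas the paper fixes the sample size and derives the mass bound. You also optionally merge the null singleton cells, which changes nothing.
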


The cells are the gaps between consecutive points of a sorted unlabeled
sample (\cref{app:det}).  More generally, the conversion works whenever
$\sum_x\cD_X(\{x\})^2$ is small; large atoms are exactly the obstruction of
\cref{prop:two-atoms}.

\subsection{Confidence intervals and the partition}
\label{sec:det-intervals}

The intervals and the partition are those of \citet{noarov2026optimal}.
For a failure parameter $\delta_0\in(0,1)$, the intervals are valid with
probability $1-\delta_0$ (\cref{lem:coverage}), and the variance budget of
the resulting partition is $\wtO(1/\ncf+1/\npt)$ with probability
$1-2\delta_0$
(\cref{lem:budget}).  Training restricts the forecasts to the intervals,
with the same guarantee as \cref{thm:batch} (\cref{prop:batch-hints}).

\paragraph{Confidence intervals.}
Draw a confidence sample of $\ncf$ labeled examples, independent of all other
data, and let $O$ be the set of inputs that occur in it.  For $x\in O$, let
$n_x$ be its number of occurrences and $\hat y_x$ the average of its outcomes.
Let $\iota=\max\{1,\log(8\ncf/\delta_0)\}$, and set
$r_x=\min\{1,\sqrt{\iota/n_x}\}$ and
$I_x=[\hat y_x-r_x,\hat y_x+r_x]\cap[0,1]$; since $\iota\ge1$, $I_x=[0,1]$
when $n_x=1$.

\begin{lemma}[Coverage]
\label{lem:coverage}
With probability at least $1-\delta_0$, $\fs(x)\in I_x$ for every $x\in O$.
\end{lemma}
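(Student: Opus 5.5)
We need to prove the Coverage lemma: with probability at least $1-\delta_0$, $\fs(x)\in I_x$ for every $x\in O$, where $O$ is the set of distinct inputs that occur in the confidence sample.

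The plan is to condition on the inputs of the confidence sample and then use Hoeffding's inequality per input, with a union bound over the at most $\ncf$ distinct inputs of $O$.

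\emph{Conditioning.} Fix the realized inputs $X_1,\ldots,X_{\ncf}$. Given these, the outcomes are independent, and the outcome at index $i$ has mean $\fs(X_i)$ and lies in $[0,1]$. Hence, for each $x\in O$, $\hat y_x$ is an average of $n_x$ independent $[0,1]$-valued variables with mean $\fs(x)$, and $n_x$ is fixed under the conditioning.

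\emph{Per-input bound.} If $r_x=1$ (in particular when $n_x\le\iota$), then $I_x=[0,1]$: it is $[\hat y_x-1,\hat y_x+1]\cap[0,1]$ and $\hat y_x\in[0,1]$. So $I_x$ contains $\fs(x)$ trivially. Otherwise $r_x=\sqrt{\iota/n_x}$, and Hoeffding's inequality gives
\[
\Pr\bigl\{|\hat y_x-\fs(x)|>r_x\bigr\}\le2\exp(-2n_xr_x^2)=2e^{-2\iota}.
\]
Since $\fs(x)\in[0,1]$, the event $|\hat y_x-\fs(x)|\le r_x$ implies $\fs(x)\in I_x$ even after intersecting with $[0,1]$.

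\emph{Union bound.} Because $|O|\le\ncf$, the conditional failure probability is at most $2\ncf e^{-2\iota}$. Since $\iota\ge\log(8\ncf/\delta_0)$, we have $e^{-\iota}\le\delta_0/(8\ncf)$, and since $\iota\ge1$, also $e^{-2\iota}\le e^{-\iota}$. The failure probability is therefore at most $2\ncf\cdot\delta_0/(8\ncf)=\delta_0/4\le\delta_0$. Integrating over the inputs gives the unconditional bound.

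\emph{Main subtlety.} The only delicate point is the conditioning. $O$ and the counts $n_x$ are random, so Hoeffding's inequality has to be applied conditionally on the input sequence. This is valid because the outcomes are conditionally independent given the inputs, with $\E[Y\mid X=x]=\fs(x)$. Beyond that, the argument is routine.
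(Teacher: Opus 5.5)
Your proposal is correct and follows the paper's proof. Both condition on the confidence-sample inputs, dispose of $r_x=1$ because then $I_x=[0,1]$, apply Hoeffding's inequality to get $2e^{-2\iota}\le\delta_0/\ncf$, and finish with a union bound over the at most $\ncf$ elements of $O$.
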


\begin{proof}
Conditional on the inputs of the confidence sample, the outcomes at $x$ are
independent with mean $\fs(x)$.  If $r_x=1$ there is nothing to prove;
otherwise Hoeffding's inequality gives
$\Pr\{|\hat y_x-\fs(x)|>\sqrt{\iota/n_x}\}\le2e^{-2\iota}\le\delta_0/\ncf$.  A
union bound over the at most $\ncf$ elements of $O$ completes the proof.
\end{proof}

\paragraph{The partition.}
Draw $\npt$ unlabeled inputs, independent of all other data, and sort their
distinct values $z_1<\cdots<z_k$ using the order of \cref{rem:order}.  The
cells of $\cP_{\mathrm{rest}}$ are the sets $\{z_i\}\setminus O$, the open
order intervals $(z_i,z_{i+1})\setminus O$, and the two unbounded intervals
before $z_1$ and after $z_k$, minus $O$; empty sets are discarded.  Together
with the singletons of $O$, this gives a partition $\cP$ with at most
$\ncf+2\npt+1$ cells.

\begin{lemma}[Variance budget, adapted from \citealp{noarov2026optimal}]
\label{lem:budget}
For this construction, with probability at least $1-2\delta_0$,
\[
  V_{\cP}\le C\Bigl(\frac{\log(e\ncf/\delta_0)}{\ncf}
  +\frac{\log(e\npt/\delta_0)}{\npt}\Bigr).
\]
\end{lemma}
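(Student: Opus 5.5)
The variance budget splits into the atom part $\sum_{x\in O}\omega_x^2r_x^2$ and the rest part $\sum_{R\in\cP_{\mathrm{rest}}}\cD_X(R)^2$. I will bound each with probability at least $1-\delta_0$ and take a union bound, which gives the failure probability $2\delta_0$.

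\emph{Rest part.} Every cell of $\cP_{\mathrm{rest}}$ is contained in one of the following: a singleton $\{z_i\}$, an open gap $(z_i,z_{i+1})$, or one of the two unbounded intervals. Removing $O$ only lowers masses, so it suffices to bound these sets.

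For the open intervals, the order intervals have VC dimension at most two, so I will use the $\rho$-net fact of \cref{sec:prelims-tools}. With $\rho=C\log(e\npt/\delta_0)/\npt$, with probability $1-\delta_0$ every open interval of mass greater than $\rho$ contains a sample point. Each gap and each unbounded interval contains no sample point, so each has mass at most $\rho$. These sets are disjoint, so their total mass is at most one, and $\sum\cD_X(R)^2\le\rho\sum\cD_X(R)\le\rho$.

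For the singleton cells $\{z_i\}$, the masses are atoms $\omega_{z_i}$, so I need $\sum_i\omega_{z_i}^2\lesssim\log(\cdot)/\npt$. This is the step where large atoms could be a problem. The key observation is that a heavy atom $x$ of the partition sample is, with high probability, also present in the confidence sample, and then $\{x\}\setminus O$ is empty. Only atoms missed by the confidence sample survive. An atom with $\omega_x>\iota'/\ncf$, where $\iota'=\log(e\ncf/\delta_0)$, is missed with probability at most $e^{-\iota'}$. There are fewer than $\ncf/\iota'$ such heavy atoms, so a union bound shows that all of them lie in $O$ with probability $1-\delta_0$. The surviving singletons then have $\omega\le\iota'/\ncf$ and total mass at most one, so their squared masses sum to at most $\iota'/\ncf$. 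This term is absorbed into the first summand of the bound. To save probability, I would fold this event into the one used for the atom part.

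\emph{Atom part.} For $x\in O$ we have $r_x^2\le\iota/n_x$, so the atom part is at most $\iota\sum_{x\in O}\omega_x^2/n_x$. Conditionally, $n_x$ is roughly Binomial$(\ncf,\omega_x)$. I split into two cases. Atoms with $\omega_x\le C\iota'/\ncf$ contribute at most $\omega_x\cdot C\iota'/\ncf$, using $r_x\le1$, and summing over these atoms gives at most $C\iota'/\ncf$. Atoms with $\omega_x>C\iota'/\ncf$ satisfy $n_x\ge\ncf\omega_x/2$ by a multiplicative Chernoff bound with a union bound over the fewer than $\ncf$ such atoms; then $\omega_x^2/n_x\le2\omega_x/\ncf$, and these sum to at most $2/\ncf$. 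Multiplying by $\iota\lesssim\log(e\ncf/\delta_0)$ leaves an extra logarithmic factor.

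The main obstacle is that this factor makes the crude bound $\log^2/\ncf$ rather than $\log/\ncf$. I would first try to avoid it by using the uncapped bound $r_x^2\le\iota/n_x$ only for heavy atoms and $r_x\le1$ for light ones. I would also calibrate the threshold so that one logarithm is shared between $\iota$ and the Chernoff threshold. This is the point where I would check the statement's constants carefully, possibly accepting that $\iota$ and $\log(e\ncf/\delta_0)$ are comparable up to the absolute constant $C$.
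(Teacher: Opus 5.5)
Your argument is correct and is essentially the paper's proof. It uses the same split of atoms at a threshold of order $\log(e\ncf/\delta_0)/\ncf$ with a multiplicative Chernoff bound, which also puts every heavy atom in $O$, so the surviving singleton cells are light. It also uses the same $\rho$-net bound for order intervals to control the gap cells.

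The $\log^2$ obstacle in your last paragraph is not real. The heavy atoms contribute $\iota\sum_x 2\omega_x/\ncf\le 2\iota/\ncf$, and the light ones contribute at most $C\log(e\ncf/\delta_0)/\ncf$ via $r_x\le 1$, as you already do. Each term carries a single logarithm, and $\iota=\max\{1,\log(8\ncf/\delta_0)\}\le 3\log(e\ncf/\delta_0)$, so the stated bound already follows from what you wrote.
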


The first term accounts for the atoms observed in the confidence sample,
whose interval widths shrink with their counts, and for the atoms not
observed, all of which are light; the second accounts for the gaps of the
sorted partition sample, none of which has large mass.

\paragraph{Forecasts restricted to the intervals.}
The rounding analysis of \cref{sec:det-rounding} needs every forecast at an
input $x\in O$ to be supported within $1/K$ of $I_x=[\ylo_x,\yhi_x]$.  Let
\[
  \Lambda_x=\Lambda\cap[\vlo_x,\vhi_x],\qquad
  \vlo_x=\lfloor K\ylo_x\rfloor/K,\quad \vhi_x=\lceil K\yhi_x\rceil/K,
\]
a nonempty set of consecutive grid values whose hull $[\vlo_x,\vhi_x]$
contains $I_x$ and lies within $1/K$ of it, and let
$\Lambda_x=\Lambda$ for $x\notin O$.  The root rule extends to such sets.

\begin{lemma}[Root rule on consecutive grid values]
\label{lem:root-sub}
Let $\Lambda'=\Lambda\cap[\vlo,\vhi]$ for grid values $\vlo\le\vhi$, and let
$a\in[-1,1]^{\Lambda'}$.  One can compute, in $O(m)$ time, a distribution
$P=\Root_{\Lambda'}(a)\in\Delta(\Lambda')$ with adjacent support such that,
with $\mu=\sum_vP(v)\,v$,
\[
  (\mu-y)\sum_{v\in\Lambda'}P(v)\,a_v\le0
  \qquad\text{for every }y\in[\vlo,\vhi].
\]
\end{lemma}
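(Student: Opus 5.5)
The plan is to mirror the proof of \cref{lem:root}, with the endpoints $0$ and $1$ replaced by $\vlo$ and $\vhi$. Write the elements of $\Lambda'$ as $\vlo=v_0<v_1<\cdots<v_k=\vhi$, consecutive grid values at distance $1/K$; when $k=0$, $\Lambda'$ is the single value $\vlo=\vhi$. We scan the coefficients $a_{v_0},\ldots,a_{v_k}$ once, in $O(k+1)\le O(m)$ time, and split into three cases according to the signs at the endpoints.

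If $a_{\vlo}\ge0$, we take $P$ to be the point mass at $\vlo$. Then $\mu=\vlo$, and for every $y\in[\vlo,\vhi]$ we have $\mu-y\le0$, so $(\mu-y)a_{\vlo}\le0$. This is the only place the restriction $y\ge\vlo$ is used, in place of $y\ge0$ in \cref{lem:root}. Otherwise, if $a_{\vhi}\le0$, we take the point mass at $\vhi$. Then $\mu-y=\vhi-y\ge0$ for $y\le\vhi$, and the product is again nonpositive. When $k=0$ one of these two cases must occur, since $a_{\vlo}<0$ forces $a_{\vhi}=a_{\vlo}\le0$.

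In the remaining case $a_{v_0}<0<a_{v_k}$, so $k\ge1$. Let $i$ be the first index with $a_{v_{i+1}}>0$; then $a_{v_i}\le0<a_{v_{i+1}}$, with $v_i,v_{i+1}$ adjacent and both in $\Lambda'$. As in \cref{lem:root}, we set
\[
  \lambda=\frac{a_{v_{i+1}}}{a_{v_{i+1}}-a_{v_i}}\in(0,1]
\]
and put mass $\lambda$ on $v_i$ and $1-\lambda$ on $v_{i+1}$. Then $\sum_vP(v)a_v=0$, so the product vanishes for every $y$, in particular for every $y\in[\vlo,\vhi]$.

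In all three cases $P$ has adjacent support inside $\Lambda'$, as required. There is no real obstacle here. The only point to check is that the endpoint cases use exactly the constraint $y\in[\vlo,\vhi]$; this is why the guarantee is stated only for outcomes in the hull and not for all $y\in[0,1]$.
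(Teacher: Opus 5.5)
Your proposal is correct and follows the paper's proof: it reruns the three-case argument of \cref{lem:root} with $0$ and $1$ replaced by $\vlo$ and $\vhi$. Your handling of the degenerate case $\vlo=\vhi$ is the one the paper gives, and the explicit formula for $\lambda$ is a fine way to fill in the interpolation step.
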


\begin{proof}
The proof of \cref{lem:root} applies with $0$ and $1$ replaced by $\vlo$ and
$\vhi$.  If $\vlo=\vhi$, the first case applies when $a_{\vlo}\ge0$ and the
second otherwise, since then $a_{\vhi}=a_{\vlo}<0$.
\end{proof}

We define \textsc{SignedTests} \emph{with interval hints} to be \cref{alg:signed} with
the forecast $Q_t(x)=\Root\bigl(\widehat A_t(x)\bigr)$ replaced by
$Q_t(x)=\Root_{\Lambda_x}\bigl(\widehat A_t(x)|_{\Lambda_x}\bigr)$; nothing else
changes.  Its forecasts at each $x\in O$ are supported in $\Lambda_x$.  The
intervals are \emph{valid} if $\fs(x)\in I_x$ for every $x\in O$.

\begin{proposition}[Batch guarantee with interval hints]
\label{prop:batch-hints}
Under the conditions of \cref{thm:batch}, suppose that the intervals are
fixed independently of the examples $(X_t,Y_t)_{t\le T}$ and are valid.
Then, conditionally on the intervals, the conclusion of \cref{thm:batch}
holds for \cref{alg:signed} with interval hints.
\end{proposition}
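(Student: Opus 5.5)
The plan is to rerun the proof of \cref{thm:batch} (through \cref{thm:online}) and isolate the single place where the forecast rule enters. That place is the root-rule step. There, \cref{lem:root} gives
\[
(\mu_t(X_t)-Y_t)\sum_vQ_t(X_t)(v)\widehat A_t(X_t)_v\le0
\]
for every outcome in $[0,1]$. All other ingredients are unchanged:
\begin{itemize}
\item the hybrid regret bounds of \cref{thm:hybrid};
\item exponential weights over $\cS$ with Azuma;
\item the estimation of $A_t$ by $\widehat A_t$ within $\alpha$;
\item the distortion conversion;
\item \cref{lem:adjacent}, since $\Root_{\Lambda_x}$ still has adjacent support;
\item \cref{lem:martingale}(a).
\end{itemize}
These never use the specific form of $Q_t$ beyond adjacency of support, $|Z_{t,\sigma}|\le1$, and predictability. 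Since the intervals are fixed independently of $(X_t,Y_t)_{t\le T}$, we condition on them throughout. Then each $Q_t$ is still a predictable map, determined by the past, the auxiliary draws and the fixed intervals, and the inputs remain independent draws from $\cD_X$. So \cref{def:hybrid} and \cref{lem:martingale} apply verbatim.

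The key step is to re-establish nonpositivity of the forecaster's term with $\Root_{\Lambda_x}$. We extend $Q_t(x)$ by zero outside $\Lambda_x$, so that
\[
\sum_{v\in\Lambda}Q_t(x)(v)\widehat A_t(x)_v=\sum_{v\in\Lambda_x}P(v)a_v
\quad\text{with } a=\widehat A_t(x)|_{\Lambda_x}.
\]
\cref{lem:root-sub} gives nonpositivity only for outcomes $y\in[\vlo_x,\vhi_x]$, whereas $Y_t$ is arbitrary in $[0,1]$. For $x\notin O$ there is no issue, because $\Lambda_x=\Lambda$. For $x\in O$ I would not work pathwise but in conditional expectation. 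The root-rule term is linear in $y$, and the coefficient $c_t(x)=\sum_vP(v)a_v$ and the mean $\mu_t(x)$ are determined by the past and the round-$t$ auxiliary draws. Hence, conditionally on $X_t=x$ and on these quantities, the expected term is $(\mu_t(x)-\fs(x))c_t(x)$. Validity gives $\fs(x)\in I_x\subseteq[\vlo_x,\vhi_x]$, so this conditional mean is at most $0$.

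This last point requires that, in the batch setting, $Y_t$ given $X_t$ has mean $\fs(X_t)$ independently of the auxiliary draws, which holds because examples are drawn independently of the algorithm's randomness. Thus the summed root-rule terms form a supermartingale with increments bounded by $2$. Azuma adds $C\sqrt{T\log(1/\delta)}$, which is absorbed into the existing $C\sqrt{T\log(8N/\delta)}$ term after adjusting the constant and splitting $\delta$. The rest of the exponential-weights argument then goes through unchanged: the $w_t$-weighted reward is bounded above by a martingale plus $T\alpha$.

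The main obstacle is exactly this shift from a pathwise to an in-expectation root-rule guarantee. I would check carefully that the reward accounting in the proof of \cref{thm:online} only needs an upper bound on $\sum_t\sum_\sigma w_t(\sigma)s_{t,\sigma}$, so that a supermartingale bound suffices. I would also check that the probability budget, with failure probability split among the regret, exponential-weights, estimation, martingale and new Azuma events, still yields the stated constants up to the absolute $C$.
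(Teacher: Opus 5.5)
Your proposal is correct and follows the paper's proof essentially step for step. The only change to the proof of \cref{thm:batch} is the current-round cancellation. Like the paper, you replace the pathwise bound by $\E[(\mu_t-Y_t)c_t\mid\text{past},\text{auxiliary draws},X_t]=(\mu_t-\fs(X_t))c_t\le0$, using \cref{lem:root-sub} with validity when $X_t\in O$ and \cref{lem:root} otherwise. You then control the deviation by Azuma's inequality, with the extra failure probability absorbed into the exponential-weights term, while every other ingredient carries over for the reasons you give.
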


The restricted rule makes the estimated $w_t$-weighted reward
$(\mu_t-y)\sum_vQ_t(X_t)(v)\widehat A_t(X_t)_v$ nonpositive only for outcomes
$y\in[\vlo_{X_t},\vhi_{X_t}]$.  So this step, which holds pathwise in
\cref{thm:batch}, now holds only in conditional expectation: the reward is
linear in the outcome, and conditioning on $X_t$ replaces $Y_t$ by
$\fs(X_t)\in I_{X_t}$.  Azuma's inequality controls the difference within
the existing terms of the bound; the proof is in \cref{app:det}.

\subsection{The complete algorithm}
\label{sec:det-main}

\Cref{alg:main} runs \cref{alg:signed} with Hadamard tests.  For a
randomized output it returns the averaged forecast.  For a deterministic
output it first builds confidence intervals, trains with forecasts restricted to these intervals, builds the partition, and then
rounds with one seed per cell.  The seed of a cell $R$ is a
pair $(t_R,\xi_R)$ of a uniformly random round and a uniform number in
$[0,1]$; the $\xi_R$-quantile of $Q_{t_R}(x)$ is then a draw from the
averaged forecast $\Qbar_x$ that requires evaluating a single round.

\begin{algorithm}[t]
\caption{$\textsc{OEMC}(\varepsilon,\delta,\textsf{mode})$: oracle-efficient multicalibration}
\label{alg:main}
\begin{algorithmic}[1]
\Require accuracy $\varepsilon$; confidence $\delta$; \textsf{mode} $\in\{$randomized, deterministic$\}$
\State set $K,T,L,\eta,\ncf,\npt$ as in \cref{app:params} (in deterministic mode,
  for accuracy $\varepsilon/2$ and confidence $\delta/4$)
\If{\textsf{mode} $=$ randomized}
  \State run $\textsc{SignedTests}(\Shad,K,T,\eta,L)$ on $T$ examples
  \State \Return the randomized predictor $\Qbar=\frac1T\sum_tQ_t$
\EndIf
\State draw $\ncf$ examples; build $O$ and the intervals $I_x$, $x\in O$,
  with failure parameter $\delta_0=\delta/8$ (\cref{sec:det-intervals})
\State run $\textsc{SignedTests}(\Shad,K,T,\eta,L)$ with interval hints
  $\{I_x\}_{x\in O}$ (\cref{sec:det-intervals}) on $T$ fresh examples
\State draw $\npt$ fresh inputs and build the partition $\cP$
  (\cref{sec:det-intervals})
\For{each cell $R\in\cP$}
  \State draw $t_R$ uniformly from $[T]$ and $\xi_R$ uniformly from $[0,1]$
\EndFor
\State \Return $p(x)=$ the $\xi_{R(x)}$-quantile of $Q_{t_{R(x)}}(x)$
\end{algorithmic}
\end{algorithm}

We restate \cref{thm:main} and prove it.

\begin{restatement}{\Cref{thm:main}}
\mbox{}
\begin{enumerate}[label=(\alph*),leftmargin=2em]
\item \emph{Upper bound.} There is an algorithm (\cref{alg:main}) that,
  given $\varepsilon,\delta\in(0,1)$ and a weighted ERM oracle for $\cH$,
  uses $\wtO((d+\log(1/\delta))/\varepsilon^{3})$ samples and
  $\wtO((d+\log(1/\delta))/\varepsilon^{6})$ oracle calls, and with
  probability at least $1-\delta$ outputs a randomized predictor with
  multicalibration error at most $\varepsilon$.  If $\cX$ carries a
  computable total order (\cref{rem:order}), the output can instead be a
  deterministic predictor, with the same guarantees.
\item \emph{Lower bound.} When $\widetilde\Omega(1/\varepsilon)\le
  d\le\mathrm{poly}(1/\varepsilon)$, let $A$ be a black-box ERM learner that
  makes $\mathrm{poly}(d/\varepsilon)$ oracle calls and, for every class of
  VC dimension at most $d$, outputs a randomized predictor with
  multicalibration error at most $\varepsilon$.  Then $A$ uses
  $\widetilde\Omega(d/\varepsilon^{3})$ samples.
\end{enumerate}
In part~(b), the constants hidden by $\widetilde\Omega$ depend on the degrees
of the polynomials; \cref{cor:erm-lb} gives the explicit form.
\end{restatement}

\begin{proof}
\emph{Part~(a), randomized output.}  This is \cref{cor:batch-rates}(i).

\emph{Part~(a), deterministic output: training.}  Build the intervals of \cref{sec:det-intervals} with $\delta_0=\delta/8$.
By \cref{lem:coverage}, the intervals are valid with probability at least
$1-\delta/8$.  The confidence sample is independent of the training examples, and the
intervals depend only on it, so on this event \cref{prop:batch-hints}
applies with $\cS=\Shad$.  Its bound is that of \cref{thm:batch}, so by
\cref{cor:batch-rates}(i) with
$\varepsilon/2$ and $\delta/4$ in place of $\varepsilon$ and $\delta$, the
parameters of \cref{app:params} give $\MC(\Qbar)\le\varepsilon/2$ with
probability at least $1-\delta/4$.  Each forecast $Q_t(x)$ is supported in
$\Lambda_x$, which lies within $1/K$ of $I_x$; hence so is $\Qbar_x$, and the
conditions of \cref{sec:det-rounding} hold for $\Qbar$, the intervals, and
any partition as in \cref{sec:det-intervals}.

\emph{Part~(a), deterministic output: rounding.}  Build the partition of
\cref{sec:det-intervals}.  With $\Xi$ uniform on $[T]\times[0,1]$, the seed $(t_R,\xi_R)$ for cell $R$,
and $G_x(t,\xi)$ the $\xi$-quantile of $Q_t(x)$, we have
$G_x(t_R,\xi_R)\sim\Qbar_x$, so $p$ is the rounding of
\cref{sec:det-rounding} applied to $\Qbar$.  By \cref{lem:budget} with $\delta_0=\delta/8$, with probability at least
$1-2\delta_0=1-\delta/4$, the variance budget is small enough that
the sample sizes of \cref{app:params} make
$C\sqrt{V_{\cP}D_{\mathrm{par}}}\le\varepsilon/4$, where $C$ is the constant of
\cref{thm:vc-rounding} and $D_{\mathrm{par}}=d+m+\log(8/\delta)$ is its
complexity term at failure probability $\delta/4$.  \Cref{thm:vc-rounding} with failure
probability $\delta/4$ then gives, since $K\ge16/\varepsilon$,
\[
  \MC(p)\le\underbrace{\MC(\Qbar)}_{\le\varepsilon/2}
  +\underbrace{C\sqrt{V_{\cP}D_{\mathrm{par}}}}_{\le\varepsilon/4}
  +\underbrace{2/K}_{\le\varepsilon/8}
  \le\varepsilon.
\]
The total failure probability, from coverage, training, the variance
budget, and rounding, is at most $\delta/8+\delta/4+\delta/4+\delta/4<\delta$.
The sample sizes are $T=\wtO((d+\log(1/\delta))\varepsilon^{-3})$ for
training, and
$\ncf,\npt=\wtO(\varepsilon^{-3}+(d+\log(1/\delta))\varepsilon^{-2})$ for
the intervals and the partition, which total
$\wtO((d+\log(1/\delta))\varepsilon^{-3})$.  ERM calls are made only in
training, $T(|\Shad|+L)=\wtO((d+\log(1/\delta))\varepsilon^{-6})$ of them.

\emph{Part~(b).}  Let $k\ge1$ be such that $d\le\varepsilon^{-k}$ and the
learner makes at most $(d/\varepsilon)^k$ oracle calls.  The learner is then
a $\lfloor(d/\varepsilon)^k\rfloor$-query black-box ERM learner
(\cref{def:black-box}), and \cref{cor:erm-lb} shows that it uses more than
$c_k\,d/(\varepsilon^3\log^5(1/\varepsilon))=\widetilde\Omega(d/\varepsilon^3)$
samples once $\varepsilon<c_k$ and $d\ge\varepsilon^{-1}\log(1/\varepsilon)/c_k$.
\end{proof}

\begin{remark}[Computational model and randomness]
\label{rem:model}
In the oracle-call counts, the Gaussian follow-the-perturbed-leader learner
of \cref{thm:hybrid} draws each hypothesis with one weighted ERM call on at
most $T$ weighted points.  The deterministic predictor is a fixed function
after training and needs no randomness or oracle calls at prediction time;
its evaluation cost is given in \cref{rem:representation}.  The training
algorithm is randomized.
\end{remark}

\begin{remark}[Representation of the output]
\label{rem:representation}
Every stored hypothesis is an oracle output and hence a member of $\cH$,
also with an approximate oracle.  The randomized predictor $\Qbar$ is
represented by the $TL$ retained hypotheses and their tests, with the
evaluation costs of \cref{sec:batch}: $L$ hypothesis evaluations per draw
and $TL=\wtO((d+\log(1/\delta))\varepsilon^{-6})$ for the distribution
$\Qbar_x$.  The deterministic predictor is represented by the hypotheses
and tests of the at most $\min\{T,|\cP|\}$ rounds $t_R$, the seeds, the sorted
partition sample, and the set $O$ with its intervals; the last two consist
of $\ncf+\npt=\wtO(\varepsilon^{-3}+(d+\log(1/\delta))\varepsilon^{-2})$
sample inputs.  To evaluate it at an input, one locates the cell of the
input by a lookup in $O$ and a binary search among the sorted partition
inputs, and then computes $Q_{t_R}(x)$; this takes $O(\log(\ncf+\npt))$
order comparisons and equality tests, $L$ hypothesis evaluations, and $O(mL)$ other
operations.  Its representation is polynomial in $d$, $1/\varepsilon$, and
$\log(1/\delta)$, but, unlike that of $\Qbar$, it uses the order on $\cX$ and
stored inputs in addition to members of $\cH$.
\end{remark}

\section{Discussion}
\label{sec:discussion}

Ordinary multicalibration asks for one auditor that sees all prediction
values, with a sign at each value.  Our algorithm keeps the search for the
auditor inside ordinary ERM by fixing the signs in advance, and replaces the
exponentially many sign patterns by the rows of a Hadamard matrix.  Hadamard tests lose a factor $\sqrt m=\Theta(\varepsilon^{-1/2})$, and
because residuals are measured at the mean of each forecast, this factor
multiplies only the regret term, not the grid term $1/(2K)$.  The resulting algorithm has the optimal $\varepsilon^{-3}$ dependence on
the accuracy.  Its sample complexity $\wtO(d\varepsilon^{-3})$ is also
optimal, up to polylogarithmic factors, among black-box ERM learners that
make polynomially many ERM calls, when $d$ is between about $1/\varepsilon$
and a polynomial in $1/\varepsilon$.  For swap multicalibration, $\widetilde\Theta(d\varepsilon^{-3})$ samples are
necessary and sufficient with unlimited computation (\cref{thm:swap}(a)),
and our oracle-efficient algorithm uses $\wtO(d\varepsilon^{-4})$
(\cref{thm:swap}(b)).  For
estimating the error of a given predictor, the computational picture is
reversed: the swap error can be estimated with $2m$ ERM calls, whereas the ordinary error
cannot be approximated efficiently unless $\mathrm{NP}\subseteq\mathrm{BPP}$.

Several questions remain.  First, our algorithm makes $\wtO(d\varepsilon^{-6})$
ERM calls, and its deterministic predictor is produced by randomized
training.  Can the number of ERM calls be reduced, and can a deterministic
predictor for VC classes be obtained oracle-efficiently by a deterministic
training algorithm, as \citet{noarov2026optimal} do for finite classes?  Second, the lower bound leaves two questions open.
When $d\ll1/\varepsilon$, our upper bound $\wtO(d\varepsilon^{-3})$ and the
lower bound $\widetilde\Omega(\varepsilon^{-3})$ of \citet{collina2026sample}
differ by a factor of up to $d$.  Moreover, the hard classes of \cref{thm:erm-lb} depend on hidden
parameters, so it remains open whether the additive rate
$\wtO(\varepsilon^{-3}+d\varepsilon^{-2})$ is attainable with
polynomially many oracle calls for explicitly represented classes, or with
access beyond an ERM oracle.  Third, our deterministic predictor uses a computable order on $\cX$
(\cref{rem:order}) and stored sample inputs (\cref{rem:representation}).  Can a deterministic predictor
learned from $\wtO(d\varepsilon^{-3})$ samples be represented as a
polynomial-size function of members of $\cH$ alone?  Fourth, can swap
multicalibration be achieved oracle-efficiently with
$\wtO(d\varepsilon^{-3})$ samples?  \Cref{rem:quartic}
explains why our approach loses a factor $1/\varepsilon$.  Finally, our
error-estimation bounds leave gaps: for the swap error, between
$\wtO(md/\varepsilon^2)$ and $\Omega(\sqrt{md}/\varepsilon^2)$ samples; for
the ordinary error, between $\wtO((d+m)/\varepsilon^2)$ and
$\Omega(\sqrt d/\varepsilon^2)$ samples, and, for efficient approximation, between the factor $\sqrt m$ achieved by
Hadamard tests and the factor $1\pm c_0$ that \cref{thm:np-hard} shows to be
NP-hard.

\paragraph{Use of AI tools.}
The author came across the results of \citet{buzaglo2026oracle} and
recognized that they imply improved oracle-efficient sample bounds for
multicalibration.  ChatGPT Astra~6 was used to refine the argument and
tighten the bounds.  Claude (Anthropic) was used to assist with drafting,
editing, and restructuring the text, and in developing and checking parts
of the arguments, including the high-probability regret bound of
\cref{thm:hybrid} and the lower bound of \cref{thm:erm-lb}.  The author has
verified all results and proofs and takes full responsibility for the
content of the paper.


\appendix

\section{High-probability regret of Gaussian follow-the-perturbed-leader}
\label{app:hybrid}

This appendix proves \cref{thm:hybrid}.  The algorithm, the potential
argument, and the two disagreement estimates of \cref{lem:emp-dis,lem:pop-dis}
come from the expected-regret analysis of \citet{buzaglo2026oracle}.  The new
ingredients are \cref{lem:approx-stability}, which allows an approximate
oracle, \cref{lem:cond-dis}, which bounds the disagreement mass
conditional on the realized history, and the martingale argument in
\cref{app:hybrid-proof}.  \Cref{app:hybrid-stability} reduces the regret to a
sum of stability terms, \cref{app:hybrid-dis} bounds their conditional means,
and \cref{app:hybrid-proof} completes the proof.

\paragraph{Conventions.}
Fix $\cG$ with VC dimension at most $d=d_{\cG}\ge1$ and a horizon $T$.  If
$T=1$, the regret is at most $2$, which is at most $\Reg(1,d,\zeta)$ once
$C_0\ge5$; so assume $T\ge2$.  Let $\gamma=4\sqrt{\log(2T)}$, so that
$\gamma\ge\sqrt{2\pi}$ and $\gamma\ge2$.  The lemmas below are stated for any
$\gamma$ satisfying their lower bounds and are applied with this value.  We write $\gs$ for a standard
Gaussian variable or vector, always independent of everything else, and
$\Pi_{\cG}(n)$ for the growth function of $\cG$ on $n$ points.  By the
Sauer--Shelah lemma \citep{sauer1972density},
$\log\Pi_{\cG}(n)\le d\log(en)$ for every $n\ge1$.  Let
$\Upsilon_n=\log(2\Pi_{\cG}(n))$, which is nondecreasing in $n$.  The oracle
is $\xi$-approximate for some $\xi\in[0,1]$ (\cref{def:erm}); an exact
oracle has $\xi=0$.  We write $\xi'=\xi/2\le1/2$ for the resulting slack in
the scores $\Psi$ defined below.

\paragraph{Deterministic environments.}
Each input $X_t$ is independent of everything before it, so the
environment's additional randomness and the learner's draws can be generated
from random seeds that are independent of the inputs.  The regret in
\cref{thm:hybrid} depends on the learner only through the distributions
$\pi_t$, which are determined by $(X_s,Z_s)_{s<t}$, and not through its
realized draws.  We condition on all seeds.  The inputs remain independent
draws from $\cD_X$, and each feedback value $Z_t$ becomes a fixed function of
$X_1,\ldots,X_t$.  It suffices to prove the bound with probability at least
$1-\zeta$ for every such deterministic environment.  Let $\cF_n$ be the
$\sigma$-field generated by $X_1,\ldots,X_n$.

\subsection{Reduction to stability terms}
\label{app:hybrid-stability}

\paragraph{Binary reformulation.}
Map each $g\in\cG$ to $h_g=(1+g)/2$, and let
$\cH_{\cG}=\{h_g:g\in\cG\}\subseteq\{0,1\}^{\cX}$, which has the same growth
function as $\cG$.  On round $t$, define the loss
$\ell_t(a)=(1-(2a-1)Z_t)/2\in[0,1]$ of the label $a\in\{0,1\}$, so that
$\ell_t(h_g(x))=(1-g(x)Z_t)/2$.  The mixed loss of the learner is
\[
  \bar\ell_t=\E_{g\sim\pi_t}\bigl[\ell_t(h_g(X_t))\bigr]=\frac{1-\bh_t(X_t)Z_t}{2},
\]
and therefore
\begin{equation}
  \label{eq:cls-reg}
  \sup_{g\in\cG}\sum_{t=1}^TZ_t\bigl(g(X_t)-\bh_t(X_t)\bigr)=2\Rcls_T,
  \qquad
  \Rcls_T=\sum_{t=1}^T\bar\ell_t-\min_{h\in\cH_{\cG}}\sum_{t=1}^T\ell_t(h(X_t)).
\end{equation}
The minimum exists because only the finitely many labelings of
$X_1,\ldots,X_T$ matter.  Since
$\ell_s(h_g(X_s))-\gamma\gs_{t,s}h_g(X_s)
=(1-\gamma\gs_{t,s})/2-(Z_s+\gamma\gs_{t,s})g(X_s)/2$, the rule
\eqref{eq:ftpl} with an exact oracle is equivalent to
\begin{equation}
  \label{eq:ftpl-binary}
  h_{g_t}\in\argmax_{h\in\cH_{\cG}}\ \Bigl\{-\sum_{s<t}\ell_s(h(X_s))
    +\gamma\sum_{s<t}\gs_{t,s}h(X_s)\Bigr\}.
\end{equation}

\paragraph{Scores, near leaders, and the potential.}
For $n\ge0$ let $\cL_n(h)=\sum_{s\le n}\ell_s(h(X_s))$.  For a vector
$\mathsf g\in\R^n$, define the perturbed score and, for $r\ge0$, the set of
nearly leading hypotheses
\[
  \Psi_{n,\mathsf g}(h)=-\cL_n(h)+\gamma\sum_{s\le n}\mathsf g_sh(X_s),
  \qquad
  \cN_{n,r}(\mathsf g)=\Bigl\{h\in\cH_{\cG}:
    \Psi_{n,\mathsf g}(h)\ge\max_{h'\in\cH_{\cG}}\Psi_{n,\mathsf g}(h')-r\Bigr\}.
\]
The sets $\cN_{n,r}(\mathsf g)$ grow with $r$.  The disagreement region of a
set $V$ of hypotheses is
$\DIS(V)=\{x\in\cX:h(x)\ne h'(x)\text{ for some }h,h'\in V\}$.  The
\emph{potential} is $\Phi_n=\E_{\gs}\max_h\Psi_{n,\gs}(h)$ with
$\gs\sim N(0,I_n)$, and $\Phi_0=0$.  Since
$\Psi_{t-1,\mathsf g}(h_g)=\sum_{s<t}(\gamma\mathsf g_s-1)/2
+\frac12\sum_{s<t}(Z_s+\gamma\mathsf g_s)g(X_s)$, a $\xi$-approximate oracle
call on the weights $Z_s+\gamma\mathsf g_s$ returns a hypothesis $h_g$ that
maximizes $\Psi_{t-1,\mathsf g}$ up to $\xi'$.

\begin{lemma}[Binary stability]
\label{lem:binary-stability}
Let $\Psi:\cH_{\cG}\to\R$ take finitely many values, let $x\in\cX$, let
$\ell:\{0,1\}\to[0,1]$, and let $\gamma\ge\sqrt{2\pi}$.  Let $h^*$ be any
maximizer of $\Psi$, and let
$\cN_r(\Psi)=\{h:\Psi(h)\ge\max\Psi-r\}$.  Then
\begin{multline*}
  \ell(h^*(x))+\E_{\gs}\max_{h}\bigl\{\Psi(h)-\ell(h(x))+\gamma\gs h(x)\bigr\}
  -\max_h\Psi(h)\\
  \le2\int_0^\infty\Pr\{\gs\ge s/\gamma\}\,
    \mathbf 1\{x\in\DIS(\cN_{s+1}(\Psi))\}\,ds.
\end{multline*}
\end{lemma}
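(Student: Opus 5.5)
The plan is to reduce everything to the behaviour of $\Psi$ on the two labels at the single point $x$. For $a\in\{0,1\}$, let $m_a=\max\{\Psi(h):h(x)=a\}$, with $m_a=-\infty$ if no hypothesis gives label $a$ at $x$. Let $a^*=h^*(x)$, so that $m_{a^*}=\max\Psi$, and let $\Delta=\max\Psi-m_{1-a^*}\in[0,\infty]$. The key observation is that $x\in\DIS(\cN_r(\Psi))$ if and only if $\Delta\le r$. The indicator on the right side is therefore $\mathbf 1\{\Delta\le s+1\}$, and the right side equals
\[
  2\int_{\max\{0,\Delta-1\}}^\infty\Pr\{\gamma\gs\ge s\}\,ds .
\]
If $\Delta=\infty$, the other label is never attained, and the left side is $\E[\gamma\gs a^*]=0$; so assume $\Delta<\infty$.

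\emph{Computing the left side.} The inner maximum only sees $h$ through $h(x)$, so it equals
\[
  \max_a\bigl\{m_a-\ell(a)+\gamma\gs a\bigr\}.
\]
Write this as $A+(B-A)_+$ with $A=m_{a^*}-\ell(a^*)+\gamma\gs a^*$ and $B$ the term for the other label. The left side then becomes
\[
  \E[\gamma\gs a^*]+\E\bigl(\gamma\gs(1-2a^*)-\Delta+c\bigr)_+,
  \qquad c=\ell(a^*)-\ell(1-a^*)\in[-1,1].
\]
The first term vanishes. In the second, $1-2a^*=\pm1$ and $\gs$ is symmetric, so it equals $\E(\gamma\gs-\Delta+c)_+$. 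Since $c\le1$, this is at most $\E(\gamma\gs-u)_+$ with $u=\Delta-1\ge-1$. By the layer-cake formula,
\[
  \E(\gamma\gs-u)_+=\int_u^\infty\Pr\{\gamma\gs\ge s\}\,ds .
\]

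\emph{Comparing the two sides.} If $u\ge0$, this integral is half the right side, so the bound holds with room to spare. If $u\in[-1,0)$, split the integral at $0$. The part over $[u,0)$ is at most $|u|\le1$. The part over $[0,\infty)$ equals $\E(\gamma\gs)_+=\gamma/\sqrt{2\pi}$, and this is at least $1$ because $\gamma\ge\sqrt{2\pi}$. Hence the left side is at most
\[
  2\int_0^\infty\Pr\{\gamma\gs\ge s\}\,ds,
\]
which is the right side when $\Delta\le1$.

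This last case is the only delicate point, and it is exactly where the hypothesis $\gamma\ge\sqrt{2\pi}$ and the factor $2$ enter. The rest is the bookkeeping of replacing the maximum over hypotheses by a maximum over the two labels at $x$, which needs only that $\Psi$ takes finitely many values, so that $m_0$ and $m_1$ are attained. No earlier result of the paper is needed beyond the definitions of $\cN_r$ and $\DIS$.
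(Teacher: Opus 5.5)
Your proof is correct and follows the paper's route: you reduce to the two labels at $x$, identify the indicator as $\mathbf 1\{\Delta\le s+1\}$, use symmetry of $\gs$ to write the left side as $\E(\gamma\gs-\Delta+c)_+$ with $c\le 1$, and use $\gamma\ge\sqrt{2\pi}$ to absorb the extra mass at most $1$ when $\Delta<1$. The only cosmetic difference is that you split the layer-cake integral at $0$, whereas the paper uses the pointwise inequality $(\gamma\gs-\varsigma+1)_+\le(\gamma\gs-(\varsigma-1)_+)_++(1-\varsigma)_+$.
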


\begin{proof}
If all hypotheses agree at $x$, the left side is zero.  Otherwise let
$a=h^*(x)$ and $\varsigma=\Psi(h^*)-\max_{h:h(x)=1-a}\Psi(h)\ge0$.  The
inner maximum is the larger of the best scores among hypotheses with label
$a$ and with label $1-a$ at $x$, so the left side equals
$\E\max\{\gamma\gs a,\ \gamma\gs(1-a)-\varsigma+\ell(a)-\ell(1-a)\}$.  For
$a=0$ this is $\E(\gamma\gs-\varsigma+\ell(a)-\ell(1-a))_+$, and for $a=1$
it equals the same expression with $\gs$ replaced by $-\gs$, which has the
same law.  Since $\ell(a)-\ell(1-a)\le1$ and
$(\gamma\gs-\varsigma+1)_+\le(\gamma\gs-(\varsigma-1)_+)_++(1-\varsigma)_+$,
the left side is at most
\[
  \int_{(\varsigma-1)_+}^\infty\Pr\{\gs\ge s/\gamma\}\,ds+(1-\varsigma)_+.
\]
If $\varsigma<1$, then $(1-\varsigma)_+\le1\le\gamma/\sqrt{2\pi}
=\int_0^\infty\Pr\{\gs\ge s/\gamma\}\,ds$, and the integral starts at $0$;
if $\varsigma\ge1$, the last term vanishes.  In both cases the left side is
at most $2\int_{(\varsigma-1)_+}^\infty\Pr\{\gs\ge s/\gamma\}\,ds$.
Finally, $h^*\in\cN_r(\Psi)$, so $x\in\DIS(\cN_r(\Psi))$ if and only if
some hypothesis with label $1-a$ at $x$ lies in $\cN_r(\Psi)$, that is, if
and only if $\varsigma\le r$.  Hence $x\in\DIS(\cN_{s+1}(\Psi))$ if and only
if $s\ge\varsigma-1$.
\end{proof}

An approximate maximizer costs only a larger radius.

\begin{lemma}[Approximate stability]
\label{lem:approx-stability}
Under the conditions of \cref{lem:binary-stability}, let $\xi'\ge0$, and let
$\widehat h$ satisfy $\Psi(\widehat h)\ge\max\Psi-\xi'$.  Then the bound of
\cref{lem:binary-stability} holds with $\ell(\widehat h(x))$ in place of
$\ell(h^*(x))$, with the factor $3$ in place of $2$, and with
$\cN_{s+1+\xi'}(\Psi)$ in place of $\cN_{s+1}(\Psi)$.
\end{lemma}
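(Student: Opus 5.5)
The plan is to reduce to \cref{lem:binary-stability} by splitting off the difference $\ell(\widehat h(x))-\ell(h^*(x))$, where $h^*$ is an exact maximizer of $\Psi$. We write the left side with $\widehat h$ as
\[
  \bigl[\ell(h^*(x))+\E_{\gs}\max_h\{\Psi(h)-\ell(h(x))+\gamma\gs h(x)\}-\max\Psi\bigr]
  +\bigl[\ell(\widehat h(x))-\ell(h^*(x))\bigr].
\]
\cref{lem:binary-stability} bounds the first bracket by $2\int_0^\infty\Pr\{\gs\ge s/\gamma\}\,\mathbf 1\{x\in\DIS(\cN_{s+1}(\Psi))\}\,ds$. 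Since the sets $\cN_r(\Psi)$ grow with $r$, we have $\cN_{s+1}(\Psi)\subseteq\cN_{s+1+\xi'}(\Psi)$, so we may replace the radius $s+1$ by $s+1+\xi'$ in this bound.

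Next we bound the second bracket. Since $\ell$ takes values in $[0,1]$, the difference is at most $1$. It vanishes unless $\widehat h(x)\ne h^*(x)$. Both hypotheses lie in $\cN_{\xi'}(\Psi)$: $h^*$ because it is a maximizer, and $\widehat h$ by assumption. Hence the difference is at most $\mathbf 1\{x\in\DIS(\cN_{\xi'}(\Psi))\}$.

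We then dominate this indicator by one more copy of the integral, using the same device as in the proof of \cref{lem:binary-stability}. Because $\gamma\ge\sqrt{2\pi}$,
\[
  \int_0^\infty\Pr\{\gs\ge s/\gamma\}\,ds=\frac{\gamma}{\sqrt{2\pi}}\ge1 .
\]
Moreover, for every $s\ge0$ we have $\cN_{\xi'}(\Psi)\subseteq\cN_{s+1+\xi'}(\Psi)$, so $\mathbf 1\{x\in\DIS(\cN_{\xi'})\}\le\mathbf 1\{x\in\DIS(\cN_{s+1+\xi'})\}$ pointwise in $s$. Combining these two facts,
\[
  \mathbf 1\{x\in\DIS(\cN_{\xi'}(\Psi))\}\le\int_0^\infty\Pr\{\gs\ge s/\gamma\}\,\mathbf 1\{x\in\DIS(\cN_{s+1+\xi'}(\Psi))\}\,ds .
\]

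Adding the bounds on the two brackets gives the claimed inequality with factor $3$ and radius $s+1+\xi'$. No step is a real obstacle. The only point needing care is the monotonicity of $\cN_r$ in $r$, which is what lets both the exact-oracle bound and the extra indicator be expressed through the common enlarged sets $\cN_{s+1+\xi'}(\Psi)$.
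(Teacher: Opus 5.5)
Your proposal is correct and matches the paper's proof: split off $\ell(\widehat h(x))-\ell(h^*(x))$ and bound it by $\mathbf 1\{x\in\DIS(\cN_{\xi'}(\Psi))\}$. You then dominate that indicator by one more copy of the integral, using $\gamma/\sqrt{2\pi}\ge1$ and monotonicity of $\cN_r$ in $r$, and add it to the bound of \cref{lem:binary-stability} with the radius enlarged to $s+1+\xi'$.
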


\begin{proof}
Since $\ell$ takes values in $[0,1]$ and $\widehat h,h^*\in\cN_{\xi'}(\Psi)$,
we have $\ell(\widehat h(x))-\ell(h^*(x))\le\mathbf 1\{x\in\DIS(\cN_{\xi'}(\Psi))\}$.
Since $\int_0^\infty\Pr\{\gs\ge s/\gamma\}\,ds=\gamma/\sqrt{2\pi}\ge1$ and the
sets $\cN_r(\Psi)$ grow with $r$, this indicator is at most
$\int_0^\infty\Pr\{\gs\ge s/\gamma\}\,
\mathbf 1\{x\in\DIS(\cN_{s+1+\xi'}(\Psi))\}\,ds$.  Adding this to the bound
of \cref{lem:binary-stability}, whose right side only grows when $s+1$ is
replaced by $s+1+\xi'$, proves the claim.
\end{proof}

\begin{lemma}[Reduction to stability terms]
\label{lem:stability-sum}
For $2\le t\le T$ let
\[
    \Theta_t=3\int_0^\infty\Pr\{\gs\ge s/\gamma\}\,
  \Pr_{\gs'}\bigl\{X_t\in\DIS(\cN_{t-1,s+1+\xi'}(\gs'))\bigr\}\,ds,
  \qquad\gs'\sim N(0,I_{t-1}).
\]
Then $0\le\Theta_t\le2\gamma$, and $\Rcls_T\le2\gamma+\sum_{t=2}^T\Theta_t$.
\end{lemma}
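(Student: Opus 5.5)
We bound the regret with the standard "be-the-leader" potential telescoping for follow-the-perturbed-leader, using $\Phi_n=\E_{\gs}\max_h\Psi_{n,\gs}(h)$ as the potential.

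\emph{Step 1: the last term.} First handle $\Phi_T$ against the comparator. By Jensen's inequality (or simply by choosing the fixed best $h$ inside the expectation),
\[
  \Phi_T\ge\max_h\E_{\gs}\Psi_{T,\gs}(h)=-\min_h\cL_T(h),
\]
because the Gaussian perturbation has mean zero. Hence
\[
  \Rcls_T\le\sum_{t=1}^T\bar\ell_t+\Phi_T-\Phi_0
  =\sum_{t=1}^T\bigl(\bar\ell_t+\Phi_t-\Phi_{t-1}\bigr).
\]

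\emph{Step 2: one round.} Condition on $\gs'\sim N(0,I_{t-1})$, the perturbation used on round $t$. Write $\Psi=\Psi_{t-1,\gs'}$, and let $\widehat h$ be the oracle's ($\xi'$-approximate) maximizer, so that $\bar\ell_t=\E_{\gs'}\ell_t(\widehat h(X_t))$. The perturbation vector for round $t$ is $(\gs',\gs_t)$ with a fresh coordinate $\gs_t$, and
\[
  \Psi_{t,(\gs',\gs_t)}(h)=\Psi(h)-\ell_t(h(X_t))+\gamma\gs_th(X_t).
\]
So $\Phi_t-\Phi_{t-1}$ equals the expectation over $\gs'$ of $\E_{\gs_t}\max_h\{\Psi(h)-\ell_t(h(X_t))+\gamma\gs_th(X_t)\}-\max\Psi$. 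Adding $\ell_t(\widehat h(X_t))$ gives exactly the left side of \cref{lem:approx-stability} with $x=X_t$ and $\ell=\ell_t$. That lemma bounds it by
\[
  3\int_0^\infty\Pr\{\gs\ge s/\gamma\}\,\mathbf 1\{X_t\in\DIS(\cN_{s+1+\xi'}(\Psi))\}\,ds.
\]
Taking $\E_{\gs'}$ and using Fubini turns the indicator into the probability in $\Theta_t$. Thus $\bar\ell_t+\Phi_t-\Phi_{t-1}\le\Theta_t$ for $t\ge2$.

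\emph{Step 3: round 1 and the range of $\Theta_t$.} For $t=1$, with $n=0$ the score $\Psi$ is constant, and the same computation gives at most $3\gamma/\sqrt{2\pi}\le2\gamma$. Alternatively, bound it directly: $\bar\ell_1\le1$ and $\Phi_1\le\E(\gamma|\gs|)\le\gamma$, giving at most $1+\gamma\le2\gamma$ since $\gamma\ge2$. The bound $0\le\Theta_t\le3\gamma/\sqrt{2\pi}\le2\gamma$ follows because the inner probability lies in $[0,1]$ and $\int_0^\infty\Pr\{\gs\ge s/\gamma\}\,ds=\gamma/\sqrt{2\pi}$.

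\emph{Main obstacle.} The delicate point is the bookkeeping in Step 2. The fresh coordinate $\gs_t$ must be independent of $\gs'$, so that $\Phi_t$ factorizes as an iterated expectation. The learner's mixed loss must be evaluated under the same perturbation $\gs'$ that appears in $\Phi_{t-1}$; the two agree because both use i.i.d.\ Gaussians of dimension $t-1$. One must also check measurability, so that Fubini applies to the $\gs'$-probability of the disagreement event. Finiteness of the set of labelings of $X_1,\dots,X_T$ makes all the maxima finite and takes care of this.
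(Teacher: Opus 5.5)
Your proof is correct and follows the paper's argument essentially step for step. You bound the comparator by $\Phi_T$ via $\E\max\ge\max\E$ and telescope. You then write $\bar\ell_t+\Phi_t-\Phi_{t-1}$ as an expectation over $\gs'$ of the left side of \cref{lem:approx-stability}, apply Fubini, and handle round $1$ through the constant score $\Psi_{0,\gs}\equiv0$. Your alternative direct bound $1+\gamma\le2\gamma$ for round $1$ is also valid. The only detail the paper adds concerns an oracle that breaks ties at random: the expectation defining $\bar\ell_t$ then also averages over that randomness, and your argument already covers this because \cref{lem:approx-stability} holds for every $\xi'$-approximate maximizer.
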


\begin{proof}
If $h^\circ$ minimizes $\cL_T$, then
$\Phi_T\ge\E\Psi_{T,\gs}(h^\circ)=-\cL_T(h^\circ)$.  Since $\Phi_0=0$,
\[
  \Rcls_T\le\sum_{t=1}^T\bigl(\bar\ell_t+\Phi_t-\Phi_{t-1}\bigr).
\]
Fix $t$.  For $\mathsf g\in\R^{t-1}$, let $h^*_{\mathsf g}$ be the
hypothesis returned by the oracle on the perturbation $\mathsf g$; it
maximizes $\Psi_{t-1,\mathsf g}$ up to $\xi'$, and
$\bar\ell_t=\E_{\gs'}\ell_t(h^*_{\gs'}(X_t))$.  (If the oracle breaks ties at
random, the expectation also covers its randomness; the argument below holds
for every hypothesis within $\xi'$ of the maximum of $\Psi_{t-1,\mathsf g}$.)  Since
$\Psi_{t,(\mathsf g,\mathsf g_t)}(h)=\Psi_{t-1,\mathsf g}(h)-\ell_t(h(X_t))
+\gamma\mathsf g_th(X_t)$,
\[
  \bar\ell_t+\Phi_t-\Phi_{t-1}
  =\E_{\gs'}\Bigl[\ell_t(h^*_{\gs'}(X_t))
   +\E_{\gs}\max_h\bigl\{\Psi_{t-1,\gs'}(h)-\ell_t(h(X_t))+\gamma\gs h(X_t)\bigr\}
   -\max_h\Psi_{t-1,\gs'}(h)\Bigr].
\]
\Cref{lem:approx-stability}, applied inside the expectation with
$\Psi=\Psi_{t-1,\gs'}$, $x=X_t$, $\ell=\ell_t$, and
$\widehat h=h^*_{\gs'}$, and Fubini's theorem
bound this by $\Theta_t$ for $t\ge2$, and by
$3\int_0^\infty\Pr\{\gs\ge s/\gamma\}\,ds=3\gamma/\sqrt{2\pi}<2\gamma$ for
$t=1$.  The same computation shows $0\le\Theta_t\le3\gamma/\sqrt{2\pi}<2\gamma$.
\end{proof}

\subsection{Disagreement estimates}
\label{app:hybrid-dis}

For $1\le n<T$, $r\ge0$, and an auxiliary vector $\gs\sim N(0,I_n)$, let
\[
  \widehat\dis_{n,r}=\sum_{i=1}^n\mathbf 1\bigl\{X_i\in\DIS(\cN_{n,r}(\gs))\bigr\},
  \qquad
  \dis_{n,r}=\cD_X\bigl(\DIS(\cN_{n,r}(\gs))\bigr),
  \qquad
  \overline\dis_{n,r}=\E\bigl[\dis_{n,r}\bigm|\cF_n\bigr].
\]
The first is the number of observed inputs at which nearly leading
hypotheses disagree, the second is the population mass of their disagreement
region, and the third averages the second over the perturbation with the
inputs held fixed.  The following two estimates are due to
\citet{buzaglo2026oracle}.  The first holds for every fixed sequence of
inputs and feedback values.

\begin{lemma}[Empirical disagreement, \citealp{buzaglo2026oracle}]
\label{lem:emp-dis}
For every realization of $X_1,\ldots,X_n$, every $r\ge0$, and every
$\gamma>0$,
\[
  \E\bigl[\widehat\dis_{n,r}\bigm|\cF_n\bigr]
  \le2\sqrt{n\Upsilon_n}\,\exp\Bigl(\frac{r^2}{2\gamma^2}\Bigr).
\]
\end{lemma}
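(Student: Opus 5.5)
The plan is to condition on $X_1,\ldots,X_n$ throughout, so that the only randomness is the auxiliary Gaussian vector $\gs\in\R^n$. I then bound each summand $\Pr_{\gs}\{X_i\in\DIS(\cN_{n,r}(\gs))\}$ separately and sum over $i$. The losses $\ell_s(h(X_s))$ are fixed numbers once the inputs and feedback are fixed. Only the labelings of $X_1,\ldots,X_n$ matter, and there are at most $\Pi_{\cG}(n)$ of them. So $\Psi_{n,\gs}$ is a maximum of at most $\Pi_{\cG}(n)$ affine functions of $\gs$, and $\Upsilon_n$ enters only through a maximal inequality for Gaussians over these finitely many labelings.

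\emph{Step 1: one-coordinate reduction.} Fix $i$ and, for $a\in\{0,1\}$, let $M_a(\gs)=\max\{\Psi_{n,\gs}(h):h(X_i)=a\}$. The coordinate $\gs_i$ enters $\Psi_{n,\gs}(h)$ only through $\gamma\gs_ih(X_i)=\gamma\gs_ia$. Hence $M_1-M_0=\gamma\gs_i+W_i$, where $W_i$ is a function of $\gs_{-i}$ alone. As in the proof of \cref{lem:binary-stability}, $X_i\in\DIS(\cN_{n,r}(\gs))$ holds if and only if both $M_0$ and $M_1$ are within $r$ of the maximum, that is, if and only if $|\gamma\gs_i+W_i|\le r$. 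Repeated inputs cause no problem: they only add further terms that do not depend on $\gs_i$.

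\emph{Step 2: Gaussian integration by parts.} Let $h^*$ be a maximizer of $\Psi_{n,\gs}$ and set $S=\sum_i\gs_i(2h^*(X_i)-1)$. Since $2h^*(X_i)-1=\sgn(\gamma\gs_i+W_i)$ almost surely, and $\E[\gs\,\mathbf 1\{\gs>-w\}]=\phi(w)$ for the standard normal density $\phi$, conditioning on $\gs_{-i}$ gives $\E[\gs_i(2h^*(X_i)-1)]=2\E\,\phi(W_i/\gamma)$. On the other hand, $S\le\max_h\sum_i\gs_i(2h(X_i)-1)$ is a maximum of at most $\Pi_{\cG}(n)$ Gaussians, each with variance $n$. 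The maximal inequality therefore gives $\E S\le\sqrt{2n\log\Pi_{\cG}(n)}\le\sqrt{2n\Upsilon_n}$, and so
\[
  \sum_{i=1}^n\E\,\phi(W_i/\gamma)\le\tfrac12\sqrt{2n\Upsilon_n}.
\]

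\emph{Step 3: comparing the disagreement probability with $\phi(W_i/\gamma)$; this is the main obstacle.} Conditionally on $W_i=w$, the disagreement probability is $\Pr\{|\gamma\gs+w|\le r\}$, the Gaussian mass of an interval of length $2r/\gamma$ centred at $-w/\gamma$. The pointwise ratio of this mass to $\phi(w/\gamma)$ is not bounded uniformly in $w$, so a per-$w$ comparison will not work directly. What I would try first is a Cameron--Martin tilt in the single coordinate $\gs_i$. Shifting $\gs_i$ by $\pm r/\gamma$ turns a near-tie with gap at most $r$ into a configuration where the sign of $\gamma\gs_i+W_i$ is forced. The likelihood ratio of the shift is $\exp(\pm\gs_ir/\gamma-r^2/(2\gamma^2))$. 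Writing the indicator as a difference of the two shifted sign indicators and applying the integration by parts of Step 2 to the tilted measure should give
\[
  \Pr\{X_i\in\DIS(\cN_{n,r}(\gs))\}\le C\,e^{r^2/(2\gamma^2)}\,\E\,\phi(W_i/\gamma),
\]
where the factor $e^{r^2/(2\gamma^2)}$ comes from the Gaussian normalisation of the shift. Summing over $i$ and using Step 2 then yields $\E[\widehat\dis_{n,r}\mid\cF_n]\le C'\sqrt{n\Upsilon_n}\,e^{r^2/(2\gamma^2)}$. Tracking the constants to obtain exactly $2$ is the part I expect to need care. If the tilt argument fails to be sharp enough, I would fall back to citing the corresponding bound of \citet{buzaglo2026oracle}, whose proof follows this route.
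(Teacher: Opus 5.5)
\textbf{There is a genuine gap in Step~3.} The paper does not prove this lemma; it imports it from \citet{buzaglo2026oracle}, so your fallback of citing them is what the paper does. As a self-contained argument, however, your plan breaks at Step~3. Steps~1 and~2 are correct. The per-coordinate inequality $\Pr\{X_i\in\DIS(\cN_{n,r}(\gs))\}\le C\,e^{r^2/(2\gamma^2)}\,\E\,\phi(W_i/\gamma)$ that Step~3 aims for is false, so no tilt can prove it.

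Here is a counterexample. Take distinct inputs and all $Z_s=0$, so that $\ell_s\equiv1/2$. Let the class's labelings of the inputs be the zero labeling and the indicators of $\{X_i,X_s\}$ for $s\ne i$; this class has VC dimension one. Then $W_i/\gamma=G:=\max_{s\ne i}\gs_s$. With $r=\gamma$, the left side is $\E[\Phi(1-G)-\Phi(-1-G)]$, which is at least $\frac12e^{-1/8}\,\E[e^{G/2}\phi(G)\mathbf 1\{G\ge1\}]$; this uses $\Phi(1-w)-\Phi(-1-w)\ge\frac12\phi(w-\frac12)$ for $w\ge1$. Since $\Pr\{G\le t\}=\Phi(t)^{n-1}$ is doubly exponentially small for $t\le\sqrt{\log n}$, the event $\{G<\sqrt{\log n}\}$ contributes negligibly to $\E\phi(G)$. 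Hence the ratio of the two sides is at least of order $e^{\sqrt{\log n}/2}$, which is unbounded in $n$. This is the fixed-$w$ obstruction you noticed: the class can make $W_i/\gamma$ nearly deterministic and far from zero.

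\textbf{How to repair it.} The first half of your Step~3 is the right move; the mistake is going back to $\phi$. Let $\rho=r/\gamma$, and let $\chi_i(\mathsf g)$ be the label at $X_i$ of the maximizer of $\Psi_{n,\mathsf g}$. Step~1 gives $\mathbf 1\{X_i\in\DIS(\cN_{n,r}(\gs))\}=\chi_i(\gs+\rho e_i)-\chi_i(\gs-\rho e_i)$ almost surely. The Cameron--Martin formula, together with $\E\sinh(\rho\gs_i)=0$, then gives
\[
  \Pr\{X_i\in\DIS(\cN_{n,r}(\gs))\}=e^{-\rho^2/2}\,\E\bigl[(2\chi_i(\gs)-1)\sinh(\rho\gs_i)\bigr].
\]
Now sum over $i$ before bounding anything, and replace the leader by a maximum over the at most $\Pi_{\cG}(n)$ labelings. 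The signs of $\sinh(\rho\gs_i)$ are independent Rademacher variables, independent of the magnitudes. So Massart's lemma applied conditionally on the $|\gs_i|$, followed by Jensen's inequality, gives
\[
  \E\bigl[\widehat\dis_{n,r}\bigm|\cF_n\bigr]
  \le e^{-\rho^2/2}\sqrt{2\log\Pi_{\cG}(n)}\sqrt{n\,\E\sinh^2(\rho\gs_1)}
  =\sqrt{n\log\Pi_{\cG}(n)}\sqrt{e^{\rho^2}-e^{-\rho^2}}
  \le\sqrt{n\Upsilon_n}\,e^{r^2/(2\gamma^2)}.
\]
This is the lemma with a better constant. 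Your Step~2 is the $\rho\to0$ limit of this computation and is not needed.
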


\begin{lemma}[Population-disagreement tail, \citealp{buzaglo2026oracle}]
\label{lem:pop-dis}
Let $\gamma\ge2$ and $\varpi=\Pr\{\gs>1/\gamma\}$.  For every deterministic
environment, every $n<T$, every $r\ge0$, every $k\in\{0,\ldots,n\}$, and
every $\mathsf s\in(0,1)$,
\[
  \Pr\bigl\{\widehat\dis_{n,r}=k,\ \dis_{n,r}>\mathsf s\bigr\}
  \le\binom nk\,\Pi_{\cG}(n)\,\exp\Bigl(\frac k{2\gamma^2}\Bigr)
  \bigl(1-\varpi+\varpi\sqrt{1-\mathsf s}\bigr)^{n-k},
\]
where the probability is over $X_1,\ldots,X_n$ and $\gs$.
\end{lemma}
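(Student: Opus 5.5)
The plan is a "leave-the-points-out" argument.  Each input $X_i$ off the disagreement count is shown, with probability $\varpi$ over its Gaussian coordinate, to act like a fresh draw that must avoid a region of mass more than $\mathsf s$.  The combinatorial factors $\binom nk$ and $\Pi_{\cG}(n)$ pay for fixing which $k$ inputs lie in the disagreement region and what labeling the near leaders induce on the sample.

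\emph{Step 1 (union bounds).}  On the event in question, let $S\subseteq[n]$ with $|S|=k$ be the set of indices with $X_i\in\DIS(\cN_{n,r}(\gs))$.  Off $S$, all nearly leading hypotheses agree, so they induce a single labeling $\lambda$ of $(X_i)_{i\notin S}$, and $\lambda$ is the restriction of some member of $\cH_{\cG}$.  I would union bound over $S$, which gives $\binom nk$, and over the restriction to the sample, which gives at most $\Pi_{\cG}(n)$.  The data-dependence of "realized labelings" would be handled by bounding the expected number of realized patterns, i.e.\ fixing a pattern as a function of the sample.  It then suffices to show, for fixed $S$ and $\lambda$, that the event
\[
  E=\{\text{indices in }S\text{ disagree, the leaders follow }\lambda\text{ off }S,\ \dis_{n,r}>\mathsf s\}
\]
has probability at most $e^{k/(2\gamma^2)}(1-\varpi+\varpi\sqrt{1-\mathsf s})^{n-k}$.

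\emph{Step 2 (Gaussian change of measure on $S$).}  For $i\in S$, I would shift the Gaussian coordinate $\gs_i$ by a bounded amount of order $1/\gamma$.  The aim is that the near-leader set no longer depends on the labels at $X_i$ in a way that correlates with the rest.  Each such shift costs a likelihood-ratio factor whose second moment is at most $e^{1/\gamma^2}$.  Via Cauchy--Schwarz this gives $e^{k/(2\gamma^2)}$ and simultaneously converts probabilities $p$ into $\sqrt p$.

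\emph{Step 3 (points off $S$).}  For $i\notin S$, with probability $1-\varpi$ we have $\gs_i\le1/\gamma$, and I would bound the factor trivially by one.  With probability $\varpi$ we have $\gs_i>1/\gamma$.  Then $\gamma\gs_ih(X_i)$ exceeds the loss $\ell_i\in[0,1]$ in favour of label $\lambda_i$.  So deleting point $i$ from the score leaves $\lambda$ consistent, and the near-leader set computed without $X_i$ still requires $X_i\notin\DIS$.  That set is determined by the other points and perturbations, and $X_i$ is independent of them.  Its disagreement region has mass greater than $\mathsf s$ (monotonicity of $\cN$ in $r$ controls the radius change), so this has conditional probability at most $1-\mathsf s$, which becomes $\sqrt{1-\mathsf s}$ after the Cauchy--Schwarz step.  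Multiplying over $i\notin S$ by peeling one index at a time, each time conditioning on the rest, gives $(1-\varpi+\varpi\sqrt{1-\mathsf s})^{n-k}$.

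\emph{Main obstacle.}  The hard part is Step 3's independence claim.  The feedback $Z_s$ for $s>i$ depends on $X_i$ through the deterministic environment, so removing $X_i$ does not literally make the region independent of it.  I would first try to condition on the ordering and use exchangeability of $(X_i,\gs_i)$ only through the score's dependence on $X_i$ at its own coordinate.  If that fails, I would follow the exact decoupling device of \citet{buzaglo2026oracle}, whose statement this lemma reproduces, and cite their proof.
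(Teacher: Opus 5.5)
You should know that the paper does not prove this lemma. It imports it from \citet{buzaglo2026oracle} and adds a one-sentence gloss attributing $\binom nk$, $\Pi_{\cG}(n)$ and $e^{k/(2\gamma^2)}$ to the choice of $S$, the leading labeling, and a Gaussian mean shift, exactly as you do. So falling back on the citation is what the paper does. The sketch you offer in its place, however, has two genuine gaps.

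\textbf{Step~3 uses the wrong tail event.} Write $\psi_i(a)=-\ell_i(a)+\gamma\gs_i a$ for the contribution of round $i$ to $\Psi=\Psi_{n,\gs}$, and $\Psi^{-i}$ for $\Psi$ without it. If the Gaussian pushes toward the common label $\lambda_i$, that push alone can cause the unanimity at $X_i$. Suppose $\Psi^{-i}$ has maximizers with both labels at $X_i$ and $\psi_i(\lambda_i)-\psi_i(1-\lambda_i)>r$. Then $X_i\notin\DIS(\cN_{n,r}(\gs))$, yet $X_i$ lies in the disagreement region of every near-leader set of $\Psi^{-i}$. Moreover, $\cN_{n,r}(\gs)$ is in general contained in a near-leader set of $\Psi^{-i}$ only at radius $r+\psi_i(\lambda_i)-\psi_i(1-\lambda_i)$, which is unbounded, so monotonicity in $r$ rescues neither claim.

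The event that works is the opposite one, $(1-2\lambda_i)\gs_i\ge1/\gamma$. It also has probability $\varpi$, and it gives $\psi_i(1-\lambda_i)\ge\psi_i(\lambda_i)$. Let $R$ be the set of all such $i\notin S$, deleted simultaneously. Then:
\begin{itemize}
\item a leader of $\Psi$ is a leader of $\Psi^{-R}$;
\item the radius-$r$ near-leader set of $\Psi^{-R}$ contains $\cN_{n,r}(\gs)$, so its disagreement mass exceeds $\mathsf s$, with no radius change;
\item each of its members labels every $X_i$, $i\in R$, by $\lambda_i$, since otherwise it would be an $r$-near leader of $\Psi$ disagreeing with the leader at $X_i$.
\end{itemize}

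\textbf{The decoupling you defer is exactly what the mean shift is for, and your Step~2 aims the shift at the wrong thing.} Since $-\ell_s(h(X_s))=Z_sh(X_s)-(1+Z_s)/2$, up to a term independent of $h$ we have $\Psi_{n,\gs}(h)=\gamma\sum_{s\le n}(\gs_s+Z_s/\gamma)h(X_s)$. Near-leader sets therefore see $(Z_s,\gs_s)$ only through $\gs_s+Z_s/\gamma$. Change measure so that $\gs_s+Z_s/\gamma$, for the $k$ indices $s\in S$, are standard Gaussians independent of the inputs. This is a mean shift of size $|Z_s|/\gamma\le1/\gamma$ per coordinate, whose likelihood ratio has second moment at most $e^{k/\gamma^2}$; that is where $e^{k/(2\gamma^2)}$ comes from.

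Off $S$, no feedback term survives. Indices in $R$ are deleted as above. Each $i\notin S\cup R$ can be replaced by the hard constraint $h(X_i)=\lambda_i$, which every near leader already satisfies, without changing the near-leader set. The disagreement region is then a function of the inputs outside $R$, the shifted Gaussians, and the labels. Under the new measure, the inputs in $R$ therefore avoid it with probability at most $(1-\mathsf s)^{|R|}$ all at once, with no peeling.

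To get the stated factor $(1-\varpi+\varpi\sqrt{1-\mathsf s})^{n-k}$, apply Cauchy--Schwarz conditionally on $(\gs_i)_{i\notin S}$, which together with the labels determine $R$, and only then average. A global Cauchy--Schwarz gives only the weaker $(1-\varpi\mathsf s)^{(n-k)/2}$.

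\textbf{Step~1 also needs care.} The union over labelings cannot be justified by selecting a pattern as a function of the whole sample. Such a pattern depends on the inputs in $R$, which is precisely the dependence the avoidance step must exclude.
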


\Cref{lem:pop-dis} says that the nearly leading hypotheses cannot disagree
on few observed inputs while their disagreement region has large mass:
otherwise many observed inputs would have avoided a set of large mass.  The
factors $\binom nk$, $\Pi_{\cG}(n)$, and $\exp(k/(2\gamma^2))$ pay for the
choice of the empirical disagreement set, for the leading labeling, and for a
change in the mean of the Gaussian perturbation that decouples the
disagreement region from the inputs used to test it.  Its probability is
joint over the inputs and the perturbation, whereas the stability terms of
\cref{lem:stability-sum} average over the perturbation with the history
fixed.  The next lemma bridges the two.

\begin{lemma}[Disagreement conditional on the history]
\label{lem:cond-dis}
Let $\gamma\ge2$, let $\Gamma\subset[0,\infty)$ be finite, let
$\zeta'\in(0,1)$, and let $\mathsf u=\log(T^6|\Gamma|/\zeta')$.  With
probability at least $1-\zeta'$ over the inputs, for every $1\le n<T$ and
every $r\in\Gamma$,
\begin{equation}
  \label{eq:cond-dis}
  \overline\dis_{n,r}\le C\Bigl[(1+\log n)\sqrt{\frac{\Upsilon_n}{n}}\,
  \exp\Bigl(\frac{r^2}{2\gamma^2}\Bigr)+\frac{\Upsilon_n+\mathsf u}{n}+T^{-4}\Bigr].
\end{equation}
\end{lemma}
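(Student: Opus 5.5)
We will bound $\overline\dis_{n,r}$ through a tail bound for the joint law of the inputs and the perturbation, and then pass to conditional expectations by Markov's inequality. Fix $n<T$ and $r\in\Gamma$. We first bound the joint probability that $\dis_{n,r}$ exceeds a threshold $\mathsf s$ by splitting on the value $k$ of $\widehat\dis_{n,r}$.

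For large $k$, we use \cref{lem:emp-dis} and Markov's inequality over $\gs$. Let $k_0=\lceil e\cdot2\sqrt{n\Upsilon_n}\exp(r^2/(2\gamma^2))\cdot\mathsf v\rceil$, with a parameter $\mathsf v\ge1$. Then $\Pr\{\widehat\dis_{n,r}>k_0\}\le1/(e\mathsf v)$, pathwise in the history. This polynomial tail is only of order $1/\mathsf v$, so we handle large $\mathsf s$ by the second estimate.

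For $k\le k_0$, we use \cref{lem:pop-dis}. Since $\gamma\ge2$ we have $\varpi\ge c$, so
\[
  1-\varpi+\varpi\sqrt{1-\mathsf s}\le e^{-c\mathsf s}.
\]
The $k$-th term is then at most
\[
  \exp\bigl(k\log(en/k)+\Upsilon_n+k/8-c\mathsf s(n-k)\bigr).
\]
Summing over $k\le k_0$ and taking
\[
  \mathsf s\approx\frac{C\bigl(k_0\log(en)+\Upsilon_n+\mathsf u'\bigr)}{n}
\]
(with $k_0\le n/2$, otherwise the claim is trivial) makes the joint probability at most $e^{-\mathsf u'}$.

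Next we integrate these tails to obtain a bound on $\E[\dis_{n,r}\mathbf 1\{\cdot\}]$. We would rather control the conditional mean directly. For a nonnegative quantity bounded by $1$,
\[
  \overline\dis_{n,r}\le\mathsf s+\Pr\{\dis_{n,r}>\mathsf s\mid\cF_n\}.
\]
By Markov's inequality over the inputs, the conditional probability $\Pr\{\dis_{n,r}>\mathsf s,\ \widehat\dis_{n,r}\le k_0\mid\cF_n\}$ exceeds $T^{-4}$ with probability at most $e^{-\mathsf u'}T^4$. Choosing $\mathsf u'=\mathsf u+4\log T$ (the extra logarithm is absorbed into $\mathsf u$, using $T^6$ in its definition), this happens with probability at most $\zeta'/(T|\Gamma|)$. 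On the complement,
\[
  \overline\dis_{n,r}\le\mathsf s+T^{-4}+\Pr\{\widehat\dis_{n,r}>k_0\mid\cF_n\}.
\]

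The main obstacle is the last term, which is only $1/\mathsf v$ and not small enough for a fixed $\mathsf v$. The remedy is a dyadic peeling over $\mathsf v=2^j$, $j=0,\ldots,\lceil\log_2 n\rceil$. On the event $\widehat\dis_{n,r}\in(k_0(2^{j-1}),k_0(2^j)]$, which has conditional probability at most $2^{-j+1}$, we apply the joint tail bound at level $\mathsf s_j\propto k_0(2^j)\log(en)/n$. Each level then contributes at most
\[
  2^{-j+1}\mathsf s_j+T^{-4}\lesssim\frac{k_0(1)\log(en)}{n}+T^{-4}.
\]
Summing over the $O(\log n)$ levels gives the factor $(1+\log n)$. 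The logarithm inside $\mathsf s_j$ should be handled by using the sharper bound $\binom nk\le(en/k)^k$ with the Gaussian factor; if it cannot be absorbed, we accept one extra log, since the constant $C$ hides it only if it is bounded. This is the step to check carefully: keeping the factor $\log(en/k)$ from compounding with the peeling. Failing that, we would instead use the tail $\Pr\{\widehat\dis>k\mid\cF_n\}$ integrated against $\mathsf s(k)$ directly.

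Finally, we take a union bound over $n<T$, $r\in\Gamma$, and the $O(\log T)$ peeling levels. The total failure probability is at most $T\cdot|\Gamma|\cdot\log T\cdot\zeta'/(T^2|\Gamma|)\le\zeta'$, which gives \eqref{eq:cond-dis}.
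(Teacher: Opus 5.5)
Your peeling step loses a logarithm, so as written the argument proves \eqref{eq:cond-dis} only with $(1+\log n)^2$ in place of $(1+\log n)$. On level $j$ you bound the conditional level probability by $2^{-j+1}$ and multiply it by $\mathsf s_j\asymp2^jk_0(1)\log(en)/n$. Each of the roughly $\log_2n$ levels therefore contributes about $k_0(1)\log(en)/n\asymp\log(en)\sqrt{\Upsilon_n/n}\,e^{r^2/(2\gamma^2)}$. The sum carries two logarithms: one from the binomial coefficient and one from the number of levels. An absolute constant $C$ cannot absorb $\log n$. The sharper bound $\binom nk\le(en/k)^k$ does not help either: at the scale $k\asymp\sqrt{n\Upsilon_n}$ of the lower levels, $\log(en/k)$ is still of order $\log(n/\Upsilon_n)$. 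The loss matters downstream: in the proof of \cref{thm:hybrid} it would raise $\log^2(2T)$ to $\log^3(2T)$ in $\Reg$. A smaller point is that the $T^{-4}$ terms also add up over the levels. Using the threshold $T^{-5}$ in the Markov step fixes this and changes $\mathsf u'$ only by a constant factor.

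The repair is to use the actual level probabilities rather than their Markov bounds. On level $j\ge1$ you have $k_0(2^j)\le2\widehat\dis_{n,r}$, and $\mathsf s_j$ is affine in $k_0(2^j)$. Hence $\sum_j\mathsf s_j\Pr\{\text{level }j\mid\cF_n\}\le C[\Upsilon_n+\mathsf u'+\log(en)(k_0(1)+\E[\widehat\dis_{n,r}\mid\cF_n])]/n$, and \cref{lem:emp-dis} then gives a single logarithm.

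The paper does this more directly, by letting the threshold depend on the realized count. For $k\le n/2$ it sets $\mathsf s_k=\min\{1,\frac4{\varpi n}(\log\Pi_{\cG}(n)+\log\binom nk+\frac k{2\gamma^2}+\mathsf u)\}$. Summing \cref{lem:pop-dis} over $k$ bounds the probability of $\{\widehat\dis_{n,r}\le n/2,\ \dis_{n,r}>\mathsf s_{\widehat\dis_{n,r}}\}$ by $Te^{-\mathsf u}$. It then applies Markov's inequality over the inputs once per pair $(n,r)$ and concludes
\[
  \overline\dis_{n,r}\le\E\bigl[\mathsf s_{\widehat\dis_{n,r}}\mathbf 1\{\widehat\dis_{n,r}\le n/2\}\bigm|\cF_n\bigr]+\Pr\bigl\{\widehat\dis_{n,r}>n/2\bigm|\cF_n\bigr\}+T^{-4}.
\]
With $\varpi_0=\Pr\{\gs>1/2\}$, one has $\mathsf s_k\le\frac4{\varpi_0n}(\Upsilon_n+\mathsf u+2k(1+\log n))$, which is linear in $k$. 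The middle term is at most $\frac2n\E[\widehat\dis_{n,r}\mid\cF_n]$ by Markov's inequality. So everything reduces to $\E[\widehat\dis_{n,r}\mid\cF_n]$, which \cref{lem:emp-dis} bounds. No Markov step over the perturbation and no peeling are needed. Your fallback of integrating the tail of $\widehat\dis_{n,r}$ against $\mathsf s(k)$ is exactly this argument, and you should make it the main line.
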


\begin{proof}
Fix $n$ and $r$, and let $\varpi_0=\Pr\{\gs>1/2\}$, so that
$\varpi\ge\varpi_0>0$ since $\gamma\ge2$.  For $k\le n/2$, let
\[
  \mathsf s_k=\min\Bigl\{1,\ \frac4{\varpi n}\Bigl(\log\Pi_{\cG}(n)
    +\log\binom nk+\frac k{2\gamma^2}+\mathsf u\Bigr)\Bigr\}.
\]
If $\mathsf s_k=1$, the event $\dis_{n,r}>\mathsf s_k$ is empty.
Otherwise, since $1-\sqrt{1-\mathsf s}\ge\mathsf s/2$,
$\log(1-z)\le-z$, and $n-k\ge n/2$, \cref{lem:pop-dis} gives
\[
  \Pr\bigl\{\widehat\dis_{n,r}=k,\ \dis_{n,r}>\mathsf s_k\bigr\}
  \le\exp\Bigl(\log\Pi_{\cG}(n)+\log\binom nk+\frac k{2\gamma^2}
    -\frac{\varpi n\mathsf s_k}4\Bigr)\le e^{-\mathsf u}.
\]
Summing over the at most $T$ values of $k\le n/2$, the event
$\mathrm{Bad}_{n,r}=\{\widehat\dis_{n,r}\le n/2,\
\dis_{n,r}>\mathsf s_{\widehat\dis_{n,r}}\}$ has probability at most
$Te^{-\mathsf u}$.

Let $\mathsf b_{n,r}=\Pr\{\mathrm{Bad}_{n,r}\mid\cF_n\}$, the probability over the
perturbation alone.  Then $\E\mathsf b_{n,r}=\Pr(\mathrm{Bad}_{n,r})$, and Markov's
inequality gives $\Pr\{\mathsf b_{n,r}>T^{-4}\}\le T^5e^{-\mathsf u}$.  A union
bound over the fewer than $T|\Gamma|$ pairs $(n,r)$ shows that, with
probability at least $1-T^6|\Gamma|e^{-\mathsf u}=1-\zeta'$,
$\mathsf b_{n,r}\le T^{-4}$ for all of them.  Fix inputs for which this holds.
Since $0\le\dis_{n,r}\le1$, and $\dis_{n,r}\le\mathsf s_{\widehat\dis_{n,r}}$
whenever $\widehat\dis_{n,r}\le n/2$ outside $\mathrm{Bad}_{n,r}$,
\[
  \overline\dis_{n,r}
  \le\E\bigl[\mathsf s_{\widehat\dis_{n,r}}\mathbf 1\{\widehat\dis_{n,r}\le n/2\}
    \bigm|\cF_n\bigr]
  +\Pr\bigl\{\widehat\dis_{n,r}>n/2\bigm|\cF_n\bigr\}+T^{-4}.
\]
For $k\le n/2$ we have $\log\binom nk\le k\log(en)=k(1+\log n)$ and
$k/(2\gamma^2)\le k$, so
$\mathsf s_k\le\frac4{\varpi_0n}(\Upsilon_n+\mathsf u+2k(1+\log n))$.  By
Markov's inequality,
$\Pr\{\widehat\dis_{n,r}>n/2\mid\cF_n\}\le\frac2n\E[\widehat\dis_{n,r}\mid\cF_n]$.
Bounding $\E[\widehat\dis_{n,r}\mid\cF_n]$ by \cref{lem:emp-dis} in both terms
gives~\eqref{eq:cond-dis}.
\end{proof}

\subsection{Proof of \texorpdfstring{\cref{thm:hybrid}}{Theorem 2.5}}
\label{app:hybrid-proof}

\begin{proof}
Let $r_{\max}=\gamma^2/2=8\log(2T)$ and
$\Gamma=\{2+\xi',3+\xi',\ldots,\lceil r_{\max}\rceil+2+\xi'\}$, so that
$|\Gamma|\le r_{\max}+3\le12\log(2T)\le12T$.  Let $E_1$ be the event of
\cref{lem:cond-dis} with $\zeta'=\zeta/2$, for which
$\mathsf u=\log(2T^6|\Gamma|/\zeta)\le\log(24T^7/\zeta)$.

\emph{Conditional means of the stability terms.}  For $2\le t\le T$, the set
$\cN_{t-1,r}(\gs')$ depends only on $X_1,\ldots,X_{t-1}$ and $\gs'$, and
$X_t$ is a fresh draw from $\cD_X$.  By Fubini's theorem,
\[
  \vartheta_t:=\E[\Theta_t\mid\cF_{t-1}]
    =3\int_0^\infty\Pr\{\gs\ge s/\gamma\}\,\overline\dis_{t-1,s+1+\xi'}\,ds.
\]
Fix $n=t-1$ and work on $E_1$.  For $s>r_{\max}$ we use
$\overline\dis_{n,s+1+\xi'}\le1$ and the Gaussian tail bound
$\Pr\{\gs\ge z\}\le e^{-z^2/2}$:
\[
  \int_{r_{\max}}^\infty e^{-s^2/(2\gamma^2)}\,ds
  \le\frac{\gamma^2}{r_{\max}}e^{-r_{\max}^2/(2\gamma^2)}
  =2e^{-\gamma^2/8}=\frac1{2T^2}.
\]
For $s\in[0,r_{\max}]$, let $j=\lfloor s\rfloor$.  Then $s+1+\xi'\le j+2+\xi'\in\Gamma$,
so $\overline\dis_{n,s+1+\xi'}\le\overline\dis_{n,j+2+\xi'}$ by monotonicity in
the radius, and, since $\xi'\le1/2$ and
$\Pr\{\gs\ge s/\gamma\}\le e^{-s^2/(2\gamma^2)}$,
\[
  \Pr\{\gs\ge s/\gamma\}\exp\Bigl(\frac{(j+2+\xi')^2}{2\gamma^2}\Bigr)
  \le\exp\Bigl(\frac{(s+3)^2-s^2}{2\gamma^2}\Bigr)
  \le\exp\Bigl(\frac{6r_{\max}+9}{2\gamma^2}\Bigr)\le e^3.
\]
Applying~\eqref{eq:cond-dis} at radius $j+2+\xi'$, integrating over
$s\in[0,r_{\max}]$, and using
$\int_0^\infty\Pr\{\gs\ge s/\gamma\}\,ds=\gamma/\sqrt{2\pi}$ for the
remaining terms gives
\[
  \vartheta_{n+1}\le C\Bigl[\gamma^2(1+\log n)\sqrt{\frac{\Upsilon_n}{n}}
  +\gamma\,\frac{\Upsilon_n+\mathsf u}{n}+\gamma T^{-4}+T^{-2}\Bigr].
\]
Summing over $1\le n<T$ with $\Upsilon_n\le\Upsilon_T$,
$\sum_{n<T}(1+\log n)/\sqrt n\le C\sqrt T\log(2T)$,
$\sum_{n<T}1/n\le C\log(2T)$, and $\gamma=4\sqrt{\log(2T)}$, we obtain on
$E_1$
\begin{equation}
  \label{eq:pred-sum}
  \sum_{t=2}^T\vartheta_t\le\bar A,
  \qquad
  \bar A:=C_1\Bigl[\sqrt{T\Upsilon_T}\log^2(2T)
    +(\Upsilon_T+\mathsf u)\log^{3/2}(2T)\Bigr]
\end{equation}
for an absolute constant $C_1$, where we absorbed the last two terms using
$\mathsf u\ge1$.

\emph{From conditional means to realized values.}  We use Freedman's
inequality \citep{freedman1975tail}: if $(D_t)$ is a martingale difference
sequence with respect to $(\cF_t)$ and $D_t\le b$, then for all $\mathsf a,\mathsf v>0$,
\[
  \Pr\Bigl\{\sum_tD_t\ge\mathsf a\ \text{ and }\ \sum_t\E[D_t^2\mid\cF_{t-1}]\le\mathsf v\Bigr\}
  \le\exp\Bigl(-\frac{\mathsf a^2}{2(\mathsf v+b\mathsf a/3)}\Bigr).
\]
Let $D_t=\Theta_t-\vartheta_t$ for $2\le t\le T$; $\Theta_t$ is
$\cF_t$-measurable, and $D_t\le2\gamma$ by \cref{lem:stability-sum}.  Since
$0\le\Theta_t\le2\gamma$, we have
$\E[D_t^2\mid\cF_{t-1}]\le\E[\Theta_t^2\mid\cF_{t-1}]\le2\gamma\vartheta_t$,
so on $E_1$ the sum of conditional variances is at most $\mathsf v=2\gamma\bar A$.  Let
$\mathsf x=\log(2/\zeta)$ and $\mathsf a=\sqrt{2\mathsf v\mathsf x}+4\gamma\mathsf x/3$;
then $\mathsf a^2\ge2\mathsf v\mathsf x+(4\gamma\mathsf x/3)\mathsf a$, so the right side
above is at most $e^{-\mathsf x}=\zeta/2$.  It is essential here that $\bar A$ is deterministic: the
sum of conditional variances is random, and we control it only on $E_1$.
Hence, with probability at least $1-\zeta$, the event $E_1$ holds and
\[
    \sum_{t=2}^T\Theta_t\le\bar A+\sqrt{4\gamma\bar A\mathsf x}+\frac{4\gamma\mathsf x}3
  \le\frac32\bar A+4\gamma\mathsf x,
\]
using $\sqrt{4\gamma\bar A\mathsf x}\le\bar A/2+2\gamma\mathsf x$.

\emph{Conclusion.}  On this event, \cref{lem:stability-sum} gives
$\Rcls_T\le2\gamma+\frac32\bar A+4\gamma\log(2/\zeta)$.  Since
$\log(2T)\ge1$, we have $\gamma\le4\log^{3/2}(2T)$, and
$\log(2/\zeta)\le\mathsf u$, so the first and last terms are absorbed into
$\bar A$.  By the Sauer--Shelah bound,
$\Upsilon_T\le\log2+d\log(eT)\le2d\log(eT)$, and
$\mathsf u\le\log24+7\log T+\log(1/\zeta)\le11d\log(eT)+\log(1/\zeta)$.
Writing $M_\zeta=d\log(eT)+\log(1/\zeta)$, we obtain
\[
  \Rcls_T\le C\Bigl[\sqrt{TM_\zeta}\,\log^2(2T)+M_\zeta\log^{3/2}(2T)\Bigr].
\]
By~\eqref{eq:cls-reg}, the correlation regret is $2\Rcls_T$.  If
$M_\zeta\le T$, then $M_\zeta\le\sqrt{TM_\zeta}$, and the correlation regret
is at most $C\sqrt{TM_\zeta}\log^2(2T)$.  If $M_\zeta>T$, the correlation
regret is at most $2T<2\sqrt{TM_\zeta}$, since $|Z_t|\le1$.  In both cases it
is at most $\Reg(T,d,\zeta)$ for a suitable absolute constant $C_0$.  The
oracle cost was shown after~\eqref{eq:ftpl}.
\end{proof}

\section{Proofs of the online and batch guarantees}
\label{app:mc}

This appendix proves the online guarantee (\cref{thm:online}) in
\cref{app:reduction}, and the martingale lemma (\cref{lem:martingale}) and
the batch guarantee (\cref{thm:batch}) in \cref{app:batch}.  The other
deferred proofs for \cref{sec:prelims,sec:mc} are in \cref{app:hybrid}
(\cref{thm:hybrid}) and \cref{app:erm-lb} (\cref{thm:erm-lb}).

\subsection{The online guarantee}
\label{app:reduction}

We use two standard tools.  Exponential weights with rewards in $[-1,1]$
and learning rate $\eta=\min\{1,\sqrt{\log N/T}\}$ satisfy, for every
adaptively chosen reward sequence,
\begin{equation}
  \label{eq:hedge}
  \max_{\sigma\in\cS}\sum_{t=1}^T\hat s_{t,\sigma}
  \le\sum_{t=1}^T\sum_{\sigma\in\cS}w_t(\sigma)\hat s_{t,\sigma}
  +C\sqrt{T\log N}
\end{equation}
\citep{freund1997decision,cesa2006prediction}.  Azuma's inequality
\citep{azuma1967weighted} bounds sums of martingale differences bounded by
$c$ by $c\sqrt{2T\log(2/\zeta)}$ with probability at least $1-\zeta$.

\begin{proof}[Proof of \cref{thm:online}]
Let $\cF_t$ denote everything that has happened before the draws
$h_{t,\sigma}$ of round $t$: the past, the auxiliary draws of round $t$, and
$X_t$ and $Y_t$.  Let $s_{t,\sigma}=\bh_{t,\sigma}(X_t)Z_{t,\sigma}$.

\emph{Estimation.}  Conditional on the past and the auxiliary draws, $X_t$
is a fresh draw from $\cD_X$; conditional on the past and $X_t$, each
coordinate of $\widehat A_t(X_t)$ is an average of $L$ independent terms in
$[-1,1]$ with mean $A_{t,v}(X_t)$, because $\E[\sigma_vh(X_t)]$ with
$\sigma\sim w_t$ and $h\sim\pi_{t,\sigma}$ is
$\sum_\sigma w_t(\sigma)\sigma_v\bh_{t,\sigma}(X_t)$.  Hoeffding's
inequality and a union bound over $t\le T$ and $v\in\Lambda$ show that,
with probability at least $1-\delta/8$,
\begin{equation}
  \label{eq:approx}
  \max_{t\le T,\,v\in\Lambda}\bigl|\widehat A_{t,v}(X_t)-A_{t,v}(X_t)\bigr|
  \le\alpha.
\end{equation}

\emph{Current-round cancellation.}  Write $Q_t=Q_t(X_t)$ and $\mu_t=\mu_t(X_t)$.
On the event~\eqref{eq:approx},
\begin{align*}
  \sum_{\sigma}w_t(\sigma)s_{t,\sigma}
  &=(\mu_t-Y_t)\sum_vQ_t(v)\sum_\sigma w_t(\sigma)\sigma_v\bh_{t,\sigma}(X_t)
   =(\mu_t-Y_t)\sum_vQ_t(v)A_{t,v}(X_t)\\
  &\le(\mu_t-Y_t)\sum_vQ_t(v)\widehat A_{t,v}(X_t)+\alpha
   \le\alpha,
\end{align*}
where the last step is \cref{lem:root}.

\emph{Exponential weights.}  Given $\cF_t$, the draw $h_{t,\sigma}$ has
distribution $\pi_{t,\sigma}$, so $\hat s_{t,\sigma}-s_{t,\sigma}$ has
conditional mean zero and absolute value at most two.  By Azuma's
inequality and a union bound, with probability at least $1-\delta/4$,
\[
  \max_{\sigma}\Bigl|\sum_t(\hat s_{t,\sigma}-s_{t,\sigma})\Bigr|
  \le C\sqrt{T\log\frac{4N}{\delta}},
  \qquad
  \Bigl|\sum_t\sum_\sigma w_t(\sigma)(\hat s_{t,\sigma}-s_{t,\sigma})\Bigr|
  \le C\sqrt{T\log\frac{4}{\delta}}.
\]
Combining these with~\eqref{eq:hedge} and the cancellation gives
\begin{equation}
  \label{eq:mixed-max}
  \max_{\sigma\in\cS}\sum_ts_{t,\sigma}\le T\alpha+C\sqrt{T\log\frac{4N}{\delta}}.
\end{equation}

\emph{Hybrid learners.}  The feedback $Z_{t,\sigma}$ is a function of the past, the auxiliary draws,
$X_t$, and $Y_t$, none of which depends on the draw $h_{t,\sigma}$; so \cref{thm:hybrid} applies to each $\cA_\sigma$.
With failure probability $\delta/(4N)$ for each test, with probability at
least $1-\delta/4$, for all $\sigma$ simultaneously,
\[
  \sup_{h\in\cH}\sum_th(X_t)Z_{t,\sigma}
  \le\sum_ts_{t,\sigma}+\Reg\Bigl(T,d,\frac{\delta}{4N}\Bigr).
\]

\emph{Conversion.}  For $h\in\cH$, let $\phi^h\in\R^{\Lambda}$ be the
cumulative virtual residual vector,
$\phi^h_v=\sum_th(X_t)Q_t(v)(\mu_t-Y_t)$.  Then
$\langle\sigma,\phi^h\rangle=\sum_th(X_t)Z_{t,\sigma}$, so by the
distortion of $\cS$ and the last two displays,
\[
  \|\phi^h\|_1\le\kappa\max_{\sigma}\langle\sigma,\phi^h\rangle
  \le\kappa\Bigl[\Reg\Bigl(T,d,\frac{\delta}{4N}\Bigr)
     +C\sqrt{T\log\frac{4N}{\delta}}+T\alpha\Bigr].
\]
By \cref{lem:adjacent}, the cumulative actual residual vector of $h$ differs
from $\phi^h$ by at most $T/(2K)$ in $\ell_1$.  Taking the supremum over $h$
proves the bound; the failure probabilities sum to at most $\delta$.

\emph{Cost.}  Each auxiliary draw and each draw $h_{t,\sigma}$ is one ERM
call, $N+L$ in total per round.  Forming $\widehat A_t(X_t)$ takes $O(mL)$
operations, and forming the feedback, rewards, and weights takes $O(mN)$.
\end{proof}

\subsection{The martingale lemma and the batch guarantee}
\label{app:batch}

\begin{proof}[Proof of \cref{lem:martingale}]
(a) Assume $T\ge2$; for $T=1$ the bound holds after enlarging $C$.  Let
$\cH_{1/T}\subseteq\cH$ be a $1/T$-cover of $\cH$ with respect to $\cD_X$,
with $\log|\cH_{1/T}|\le Cd\log(eT)$.  The cover depends only on $\cD_X$ and
is used only in the analysis.  For fixed $h'\in\cH_{1/T}$ and $i\in\cI$, the
terms $h'(X_t)z_{t,i}(X_t,Y_t)-\E_\cD[h'(X)z_{t,i}(X,Y)]$ are martingale
differences bounded by two, because $z_{t,i}$ is predictable and
$(X_t,Y_t)$ is a fresh draw.  Azuma's inequality and a union bound over
$\cH_{1/T}\times\cI$ give the bound for all $h'$ in the cover, with failure
probability $\zeta/2$.

For arbitrary $h$, choose $h'\in\cH_{1/T}$ with $\cD_X(h\ne h')\le1/T$.
Since $|z_{t,i}|\le1$, the two martingale sums differ by at most
$2\sum_t\mathbf 1\{h(X_t)\ne h'(X_t)\}+2T\cD_X(h\ne h')$.  The second term is
at most two.  The sets $\{h\ne h'\}$ with $h,h'\in\cH$ have growth function
at most the square of that of $\cH$, so by uniform convergence for these
sets, with probability at least $1-\zeta/2$, for all pairs simultaneously,
$\sum_t\mathbf 1\{h(X_t)\ne h'(X_t)\}\le T\cD_X(h\ne h')+
C\sqrt{T(d\log(eT)+\log(2/\zeta))}$.  This step uses only the independence
of the inputs, not any property of the multipliers.  Combining the two
events and adjusting $C$ proves~(a).

(b) Let $\cG_\star$ be a $1/(mT)$-cover of $\Hpm$ with respect to $\cD_X$,
with $\log|\cG_\star|\le Cd\log(emT)$.  For a fixed tuple in
$\cG_\star^{\Lambda}$, the terms are martingale differences bounded by two,
since $\sum_v|z_{t,v}|\le1$; Azuma's inequality and a union bound over the
$|\cG_\star|^m$ tuples give the bound for tuples in the cover.  For an
arbitrary tuple $(h_v)$ with representatives $(h'_v)$, let $U(x)$ indicate
that $h_v(x)\ne h'_v(x)$ for some $v$; then $\cD_X(U)\le1/T$, and the two
martingale sums differ by at most $2\sum_tU(X_t)+2T\cD_X(U)$, since
$\sum_v|h_v-h'_v||z_{t,v}|\le2U$.  Unions of $m$ disagreement sets have
growth function at most the $2m$-th power of that of $\Hpm$, so uniform
convergence bounds this difference by
$O(1+\sqrt{T(md\log(emT)+\log(1/\zeta))})$ for all tuples simultaneously.
\end{proof}

\begin{proof}[Proof of \cref{thm:batch}]
We follow the proof of \cref{thm:online}, in two steps.  First, the
estimation event~\eqref{eq:approx} again has probability at least
$1-\delta/8$, because the auxiliary draws precede $X_t$, and on it the
current-round cancellation
$\sum_\sigma w_t(\sigma)s_{t,\sigma}\le\alpha$ holds as before.  Combining with
the exponential weights step gives
$\max_\sigma\sum_ts_{t,\sigma}\le T\alpha+C\sqrt{T\log(16N/\delta)}$, and
\cref{thm:hybrid} with failure probability $\delta/(8N)$ per test gives,
with probability at least $1-\delta/2$ overall (failure probabilities
$\delta/8$ each for estimation, exponential weights, and the hybrid
learners, so $3\delta/8\le\delta/2$),
\begin{equation}
  \label{eq:restricted}
  \sup_{h\in\cH,\sigma\in\cS}\sum_th(X_t)Z_{t,\sigma}
  \le T\alpha+C\sqrt{T\log\frac{16N}{\delta}}
     +\Reg\Bigl(T,d,\frac{\delta}{8N}\Bigr).
\end{equation}

Second, we pass to the population.  The maps
$z_{t,\sigma}(x,y)=(\mu_t(x)-y)\sum_v\sigma_vQ_t(x)(v)$ are predictable,
since $Q_t$ is determined by the past and the auxiliary draws, and take
values in $[-1,1]$; also $Z_{t,\sigma}=z_{t,\sigma}(X_t,Y_t)$.  By
\cref{lem:martingale}(a) with $\cI=\cS$ and $\zeta=\delta/2$, and
by~\eqref{eq:restricted}, with probability at least $1-\delta$, for all
$h$ and $\sigma$,
\[
  \frac1T\sum_t\E_\cD[h(X)z_{t,\sigma}(X,Y)]
  \le\frac1T\Reg\Bigl(T,d,\frac{\delta}{8N}\Bigr)
     +C\sqrt{\frac{d\log(eT)+\log(8N/\delta)}{T}}+\alpha.
\]
For $h\in\cH$ let $\bar\phi^h\in\R^{\Lambda}$ have coordinates
$\bar\phi^h_v=\frac1T\sum_t\E_\cD[h(X)Q_t(X)(v)(\mu_t(X)-Y)]$, so that the left
side is $\langle\sigma,\bar\phi^h\rangle$.  By the distortion of $\cS$,
$\|\bar\phi^h\|_1$ is at most $\kappa$ times the right side.  Finally,
$B_h(\Qbar,v)=\frac1T\sum_t\E_\cD[h(X)Q_t(X)(v)(v-Y)]$, and
\cref{lem:adjacent} inside the expectation gives
$\sum_v|B_h(\Qbar,v)-\bar\phi^h_v|\le1/(2K)$.  Taking the supremum over $h$
completes the proof.
\end{proof}

\section{Proof of the black-box lower bound}
\label{app:erm-lb}

This appendix proves \cref{thm:erm-lb}.  We use the parameters of
\cref{sec:erm-lb} and assume~\eqref{eq:erm-lb-cond}.  Inputs are written
$x=(j,i)\in[J]\times[D]$, and $j(x)=j$ is the block of $x$.  The input
distribution is uniform, the block centers are $c_j=1/4+j\Delta_c$, which
lie in $(1/4,1/2]$ since $J\Delta_c\le1/4$, and for signs
$\tau\in\{-1,+1\}^{\cX}$ the mean outcome is
$\fs(x)=c_{j(x)}+\beta\tau_x$.  For a finitely supported randomized predictor
$Q$ with value set $\cV$, its \emph{displacement} is
\[
  \mathrm{Disp}(Q)=\E_X\Bigl[\sum_{v\in\cV}Q_X(v)\,|v-c_{j(X)}|\Bigr].
\]
\Cref{app:erm-lb-anchor} shows that anchors control the displacement,
\cref{app:erm-lb-code} constructs the code, \cref{app:erm-lb-instance}
defines the random instance and shows that queries rarely reveal a planted
auditor, \cref{app:erm-lb-error} shows that a predictor that has not found a
planted auditor has large error, and \cref{app:erm-lb-proof} combines these.

\subsection{Anchors}
\label{app:erm-lb-anchor}

The anchors are the block-constant sign functions
$\cH_{\mathrm{anc}}=\{x\mapsto a_{j(x)}:a\in\{-1,+1\}^J\}$.

\begin{lemma}[Anchor control]
\label{lem:anchor}
For every $\tau$ and every finitely supported randomized predictor $Q$,
\[
  \mathrm{Disp}(Q)\le\kappa_{\mathrm{anc}}\bigl(\MC(Q;\cH_{\mathrm{anc}},\cD)+\beta\bigr).
\]
\end{lemma}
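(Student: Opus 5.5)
The plan is to reduce the displacement to a sum of signed biases against block-constant sign vectors, and then write those sign vectors through $O(\log(1/\beta))$ anchors.

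\emph{Bookkeeping.} For $j\in[J]$ and $v\in\cV$, let $w_{j,v}=\E[\mathbf 1\{j(X)=j\}Q_X(v)]$ be the mass that block $j$ sends to $v$. Let $r_{j,v}=\E[\mathbf 1\{j(X)=j\}Q_X(v)(v-\fs(X))]$ be the corresponding residual. For an anchor $x\mapsto a_{j(x)}$ we have $B_a(Q,v)=\sum_ja_jr_{j,v}$. Since $\fs=c_{j}+\beta\tau$ on block $j$,
\[
  \bigl|r_{j,v}-w_{j,v}(v-c_j)\bigr|\le\beta w_{j,v}.
\]
With $s_j(v)=\sgn(v-c_j)$, this gives
\[
  \mathrm{Disp}(Q)=\sum_{j,v}s_j(v)\,w_{j,v}(v-c_j)\le\sum_v\sum_js_j(v)r_{j,v}+\beta,
\]
because $\sum_{j,v}w_{j,v}=1$. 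It therefore suffices to show
\[
  \sum_v\Bigl|\sum_js_j(v)r_{j,v}\Bigr|\le\kappa_{\mathrm{anc}}'\,\MC(Q;\cH_{\mathrm{anc}},\cD)
\]
for some $\kappa_{\mathrm{anc}}'\le\kappa_{\mathrm{anc}}$. The leftover $\beta$ is then multiplied by $\kappa_{\mathrm{anc}}\ge1$ harmlessly.

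\emph{Dyadic decomposition.} For each $v$, the vector $s(v)\in\{-1,+1\}^J$ is a threshold in $j$: it is $+1$ for $j\le k(v)$ and $-1$ after, with $k(v)$ the number of centers below $v$. We write $s(v)=2\cdot\mathbf 1_{[1,k(v)]}-\mathbf 1$. The constant part is the constant anchor. Pad $J$ to a power of two and decompose the prefix $[1,k]$ into at most one dyadic interval per level $\ell\le\lceil\log_2J\rceil$. The aim is to express each level's contribution through a bounded number of fixed anchors: the sign vectors that alternate on the level-$\ell$ dyadic intervals and on the intervals of the next level, together with the constant. Each coefficient should lie in $[-1,1]$ and may depend on $v$. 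Then the triangle inequality gives
\[
  \sum_v\Bigl|\sum_js_j(v)r_{j,v}\Bigr|\le\sum_{\text{anchors }a\text{ used}}\ \sum_v\Bigl|\sum_ja_jr_{j,v}\Bigr|\le(\#\text{anchors})\cdot\MC(Q;\cH_{\mathrm{anc}},\cD).
\]
This is what should produce the factor $\kappa_{\mathrm{anc}}=2\lceil\log_2(1/\beta)\rceil+4$, since $J\le1/\beta$: two anchors per level, plus a constant slack of four.

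\emph{Main obstacle.} The difficulty is that a single indicator $\mathbf 1_{I_\ell(v)}$ of a dyadic interval is not a fixed combination of a few anchors. Alternating anchors also pick up the other intervals of the same level. My first attempt is a Haar-type identity: write the prefix indicator as the partial sum $\sum_\ell\pm\tfrac12(\text{Rademacher/Walsh vector of level }\ell)$, which is a fixed $\pm1$ vector per level with $v$-dependent signs in $\{-1,0,+1\}$. This is exact for binary expansions of $k$: the threshold at $k$ equals a signed combination of the level functions $x\mapsto(-1)^{\lfloor (j-1)/2^{\ell}\rfloor}$ with coefficients $\pm\tfrac12$ chosen from the bits of $k$, up to a constant. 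If that identity fails to be exact, the fallback uses the spacing $\Delta_c=128\kappa_{\mathrm{anc}}\beta$. Blocks at dyadic distance at least $2^\ell$ from $k(v)$ have $|v-c_j|\ge2^\ell\Delta_c$, so the spurious terms in wrong intervals carry residuals of a definite sign. Their total would then be charged back to a small fraction of $\mathrm{Disp}(Q)$ and absorbed on the left side.
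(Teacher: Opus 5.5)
You set up the reduction correctly: $\mathrm{Disp}(Q)\le\sum_{j,v}s_j(v)r_{j,v}+\beta$, so it suffices to bound $\sum_v|\sum_js_j(v)r_{j,v}|$ by roughly $2\log_2J$ times $\MC(Q;\cH_{\mathrm{anc}},\cD)$. The gap is in the step meant to prove this bound. You cannot write every threshold vector $s(v)$ as $\sum_{a\in\mathcal A}\lambda_a(v)\,a$ with a \emph{fixed} family $\mathcal A$ of $O(\log J)$ anchors; this is impossible, not merely hard. The lemma must hold for every $Q$, including one with values in all $J+1$ gaps between the centers. For such a $Q$ the vectors $s(v)$ span $\R^J$, since consecutive thresholds differ by $2e_k$, whereas $\mathcal A$ spans at most $|\mathcal A|$ dimensions.

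In particular, your Rademacher identity already fails for $J=4$: $(+1,-1,-1,-1)$ is not in the span of $\mathbf 1$, $(+1,-1,+1,-1)$ and $(+1,+1,-1,-1)$. The reason is that ``$j\le k$'' is a lexicographic function of the bits of $j$, not a linear one. The fallback does not repair this. With a fixed alternating anchor, the other dyadic intervals of the same level contribute terms $a_jr_{j,v}$ whose signs $a_j$ are unrelated to $s_j(v)$, so they can cancel the intended terms. The spacing $\Delta_c$ only makes the $\beta$-noise relatively small. The spurious terms are full shares of $\mathrm{Disp}(Q)$ itself, not a small fraction of it that could be absorbed.

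The missing idea is to decouple the dyadic intervals of each level with independent random signs. In effect you use exponentially many anchors per level, each with a small weight, which is exactly what the rank obstruction demands. Fix a level $\ell$, give each level-$\ell$ dyadic interval $I$ an independent uniform sign $\eta_I$, and let $I(j)$ be the interval containing $j$. Then $\mathbf 1\{j\in I_\ell(v)\}=\E_\eta[\eta_{I_\ell(v)}\eta_{I(j)}]$. For each $\eta$, the map $j\mapsto\eta_{I(j)}$ is one anchor shared by all values. Hence, for every $z\in[-1,1]^{\cV}$,
\[
  \sum_{v}z_v\sum_{j\in I_\ell(v)}r_{j,v}
  =\E_\eta\Bigl[\sum_v z_v\eta_{I_\ell(v)}\sum_j\eta_{I(j)}r_{j,v}\Bigr]
  \le\MC(Q;\cH_{\mathrm{anc}},\cD).
\]
Apply this to each of the $\lceil\log_2J\rceil+1$ levels of the prefix decomposition of $[1,k(v)]$, multiplying $z_v$ by the indicator that level $\ell$ occurs for $v$. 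Include the factor $2$ from $s(v)=2\mathbf 1_{[1,k(v)]}-\mathbf 1$, and add the constant anchor. This gives the factor $2\lceil\log_2J\rceil+3<\kappa_{\mathrm{anc}}$, and your reduction then finishes the proof.

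This is the paper's argument in substance. It writes the sign matrix on the merged positions of centers and values as a sum over depths of $\sum_u(\mathbf 1_{\mathrm{left}(u)}\mathbf 1_{\mathrm{right}(u)}^{\top}-\mathbf 1_{\mathrm{right}(u)}\mathbf 1_{\mathrm{left}(u)}^{\top})$. It then represents each depth's sum as $\E_\eta[(\sum_u\eta_u\mathbf 1_{\mathrm{left}(u)})(\sum_u\eta_u\mathbf 1_{\mathrm{right}(u)})^{\top}]$. Finally it bounds every realization by the maximum of the bilinear form over cubes, which is at most $\MC+\beta$.
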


\begin{proof}
Let $\cV$ be the value set of $Q$, and let
$\mathsf B_{j,v}=\E[\mathbf 1\{j(X)=j\}Q_X(v)(v-c_j)]$ for $j\in[J]$ and
$v\in\cV$.  Each $\mathsf B_{j,v}$ has the sign of $v-c_j$, so
$\mathrm{Disp}(Q)=\sum_{j,v}|\mathsf B_{j,v}|
=\sum_{j,v}\sgn(v-c_j)\mathsf B_{j,v}$.  Let
\[
  \|\mathsf B\|_{\pm}=\max_{a\in\{-1,+1\}^J,\ z\in\{-1,+1\}^{\cV}}
  \sum_{j,v}a_jz_v\mathsf B_{j,v}
  =\max_{x\in[-1,1]^J,\ y\in[-1,1]^{\cV}}x^{\top}\mathsf By,
\]
where the second equality holds because a bilinear form attains its maximum
over a product of cubes at a vertex.  For the anchor $a$,
$\E[a_{j(X)}Q_X(v)(v-Y)]=\sum_ja_j\mathsf B_{j,v}
-\beta\E[a_{j(X)}\tau_XQ_X(v)]$, and
$\sum_v|\beta\E[a_{j(X)}\tau_XQ_X(v)]|\le\beta$.  Hence
$\|\mathsf B\|_\pm\le\MC(Q;\cH_{\mathrm{anc}},\cD)+\beta$.

It remains to show $\sum_{j,v}\sgn(v-c_j)\mathsf B_{j,v}\le
\kappa_{\mathrm{anc}}\|\mathsf B\|_\pm$.  Merge the centers and the values in
increasing order.  Each center, and each value, occupies one of at most
$P=2J+1$ \emph{positions}: one of the $J$ centers or one of the $J+1$ open
gaps between and around them; $\sgn(v-c_j)$ depends only on the positions of
$c_j$ and $v$.  Let $H=\lceil\log_2P\rceil$, and identify the positions with
leaves of a complete binary tree of depth $H$, leaving some leaves unused.
For an internal node $u$, let $\mathbf 1_{\mathrm{left}(u)}$ and
$\mathbf 1_{\mathrm{right}(u)}$ be the indicator vectors of the positions
below its left and right children.  Two distinct positions $p<p'$ separate
at exactly one node, with $p$ on the left, so the matrix
$S_{p,p'}=\sgn(p'-p)$ satisfies
\[
  S=\sum_{h=1}^{H}\sum_{u\text{ at depth }h-1}\Bigl(
  \mathbf 1_{\mathrm{left}(u)}\mathbf 1_{\mathrm{right}(u)}^{\top}
  -\mathbf 1_{\mathrm{right}(u)}\mathbf 1_{\mathrm{left}(u)}^{\top}\Bigr).
\]
For a fixed depth and independent uniform signs $\eta_u$,
\[
  \sum_u\mathbf 1_{\mathrm{left}(u)}\mathbf 1_{\mathrm{right}(u)}^{\top}
  =\E_\eta\Bigl[\Bigl(\sum_u\eta_u\mathbf 1_{\mathrm{left}(u)}\Bigr)
  \Bigl(\sum_u\eta_u\mathbf 1_{\mathrm{right}(u)}\Bigr)^{\top}\Bigr],
\]
and both factors have entries in $[-1,1]$ because the subtrees at a fixed
depth are disjoint; the same holds for the second sum.  Pulling the factors
back from positions to centers and values, $\sum_{j,v}\sgn(v-c_j)\mathsf
B_{j,v}$ is a sum of $2H$ expectations of bilinear forms
$x^{\top}\mathsf By$ with $x,y$ in cubes, so it is at most
$2H\|\mathsf B\|_\pm$.  Finally, $2J+1\le1/\beta$ because
$J\le1/(4\Delta_c)$, so $2H\le2\lceil\log_2(1/\beta)\rceil\le
\kappa_{\mathrm{anc}}$.
\end{proof}

\subsection{A code that answers every fixed query}
\label{app:erm-lb-code}

Let $N_{\mathrm{code}}=\lceil\exp(8d_0+8D)\rceil$, and let
$\cH_{\mathrm{code}}$ consist of $N_{\mathrm{code}}$ independent uniform
vectors in $\{-1,+1\}^{\cX}$.  Call a nonzero vector $u\in\R^{\cX}$
\emph{spread} if $\|u\|_\infty\le\|u\|_2/(2\sqrt D)$.

\begin{lemma}[Coverage by the code]
\label{lem:code}
With probability at least $0.99$ over the draw of $\cH_{\mathrm{code}}$, for
every $I\subseteq\cX$ with $|I|\le d_0$ and every $\pi\in\{-1,+1\}^I$:
\begin{enumerate}[label=(\alph*),leftmargin=2em]
\item some $h\in\cH_{\mathrm{code}}$ agrees with $\pi$ on $I$; and
\item for every spread vector $u$ supported on $\cX\setminus I$, some
  $h\in\cH_{\mathrm{code}}$ agrees with $\pi$ on $I$ and satisfies
  $\langle u,h\rangle\ge(\sqrt D/8)\|u\|_2$.
\end{enumerate}
In particular, $\VC(\cH_{\mathrm{code}})\ge d_0$.
\end{lemma}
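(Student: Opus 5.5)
The plan is a first-moment/union-bound argument over the random code. Part~(a) will follow from part~(b) whenever $|\cX\setminus I|\ge1$, but it is simpler to prove it directly. For fixed $I$ and $\pi$, a uniform vector agrees with $\pi$ on $I$ with probability $2^{-|I|}\ge2^{-d_0}$. The probability that none of the $N_{\mathrm{code}}$ vectors does is therefore at most $\exp(-N_{\mathrm{code}}2^{-d_0})$. The number of pairs $(I,\pi)$ is at most $(2|\cX|)^{d_0}\le e^{d_0\log(2|\cX|)}$, and $N_{\mathrm{code}}2^{-d_0}\ge e^{7d_0}$ dwarfs this. Once (a) holds, taking $|I|=d_0$ shows that $\cH_{\mathrm{code}}$ shatters a set of size $d_0$, which gives the VC claim.

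For part~(b), fix $(I,\pi)$ and first handle a single spread $u$ supported on $\cX\setminus I$, normalized so that $\|u\|_2=1$. For a uniform $h$, the coordinates of $h$ on $I$ are independent of $\langle u,h\rangle$, so the success probability of one code vector is $2^{-|I|}\Pr\{\langle u,h\rangle\ge\sqrt D/8\}$. I will lower bound this tail by an anti-concentration estimate for Rademacher sums with small $\ell_\infty$ norm. The goal is $\Pr\{\langle u,h\rangle\ge t\}\ge e^{-Ct^2}$ for $t=\sqrt D/8$, valid because $\|u\|_\infty\le1/(2\sqrt D)\le 1/(4t)$. To obtain it, I will partition the support into about $t^2$ groups of $\ell_2$ mass about $1/t^2$ each; spreadness makes this possible. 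In each group the sum exceeds its standard deviation with constant probability by Paley--Zygmund. Requiring all groups to succeed gives probability $c^{t^2}=e^{-O(D)}$ for $\langle u,h\rangle\gtrsim t^2\cdot(1/t)=t$, and the constants are tuned so that the threshold is $\sqrt D/8$. Hence a single code vector succeeds with probability at least $e^{-d_0-cD}$, and all $N_{\mathrm{code}}$ vectors fail with probability at most $\exp(-e^{8d_0+8D-d_0-cD})$, which is doubly exponentially small.

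The main obstacle is the uniformity over the continuum of spread vectors $u$. I will discretize with an $\ell_2$-net of the unit sphere restricted to $\cX\setminus I$, at a small constant scale $\rho$. Approximation costs at most $\langle u-u',h\rangle\le\rho\sqrt{|\cX|}$, which is far too much, so a pure net argument fails. I will instead use a net on directions with margin: it suffices that $h$ achieve $\langle u',h\rangle\ge\sqrt D/4$ for the net point $u'$ and that the error be bounded. Since $h$ is chosen depending on $u'$, this needs the error term $\langle u-u',h\rangle$ to be controlled for the chosen $h$.

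I expect to handle this by a different route that exploits the sign structure directly. Given spread $u$, let $S$ be the set of the $D$ coordinates of $\cX\setminus I$ with largest $|u_x|$. It suffices that some $h$ matches $\pi$ on $I$ and $\sgn(u)$ on a set carrying enough mass, while the remaining coordinates contribute nonnegatively in aggregate. Concretely, I would union bound over pairs (set $I'\supseteq I$ of size $\le d_0+D$, sign pattern on $I'$). There are $e^{O((d_0+D)\log|\cX|)}$ such pairs, and this is where the $\log$ in $d_0=16D\log(8|\cX|)$ and the constant $8$ in $N_{\mathrm{code}}$ are meant to absorb the count. The remaining coordinates outside $I'$ then form a Rademacher sum whose contribution is nonnegative with probability $1/2$ by symmetry.

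The matched part contributes $\sum_{x\in S}|u_x|$. By spreadness, this is at least about $\sqrt D/8$ only if the mass is not concentrated, so the choice of $S$ must be made carefully: take $S$ to be the top coordinates, and use $\|u\|_\infty\le1/(2\sqrt D)$ together with $\sum|u_x|\ge\|u\|_2^2/\|u\|_\infty$ to argue that matching signs on a random half is not needed. If the top-$D$ coordinates carry too little $\ell_1$ mass, I fall back to the Paley--Zygmund tail bound with a net restricted to the $\{-1,0,1\}$-valued quantized directions, of which there are at most $3^{|\cX|}$. Checking that these counts fit within $e^{8d_0}$ is the step I would verify most carefully.
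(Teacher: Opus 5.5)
There is a genuine gap in part~(b): uniformity over $u$ is never established. The pieces before it are sound. Your part~(a), the VC consequence, and your tail bound for a single spread $u$ all work. Grouping the coordinates into about $D$ blocks and applying Paley--Zygmund in each is a valid substitute for the exponential tilting the paper uses to get $\Pr\{\langle u,\upsilon\rangle\ge\sqrt D/4\}\ge\frac12e^{-D/2}$.

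The gap comes from abandoning the net argument too early. You computed the approximation cost $\rho\sqrt{|\cX|}$ at a constant mesh $\rho$, but nothing forces the mesh to be constant. The paper takes $r=\sqrt D/(8\sqrt{|\cX|})$ and an $r$-net of the spread unit vectors supported on $\cX\setminus I$. Its centers lie in that set, so the single-vector tail bound applies to them, and it has at most $(1+2/r)^{|\cX|}=\exp(O(|\cX|\log|\cX|))$ elements. By Cauchy--Schwarz, $|\langle u-u',h\rangle|\le r\sqrt{|\cX|}=\sqrt D/8$ for \emph{every} $h\in\{-1,+1\}^{\cX}$. So the dependence of the chosen code word on $u'$ is harmless, and it suffices that each center $u'$ has a code word agreeing with $\pi$ on $I$ with $\langle u',h\rangle\ge\sqrt D/4$.

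The failure probability for a fixed triple $(I,\pi,u')$ is doubly exponentially small: $\exp(-\tfrac12e^{7d_0+7D})$ in the paper's constants. It is still $\exp(-e^{\Omega(d_0)})$ with a worse constant in your tail bound, because $d_0\ge16D\log(8|\cX|)$. The number of triples is only $\exp(2|\cX|\log(8|\cX|))$, and $e^{7d_0}\ge(8|\cX|)^{112D}$, so the union bound is immediate. Your plan to check that the counts ``fit within $e^{8d_0}$'' compares the count with the wrong quantity. It must be compared with the reciprocal of the per-triple failure probability, not with the size of the code.

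The route you propose instead does not supply the uniformity, for three reasons.
\begin{itemize}
\item That the coordinates outside $I'$ contribute nonnegatively with probability $1/2$ is a statement about one fixed $u$. The union bound over pairs $(I',\text{pattern})$ only guarantees code words with prescribed patterns. Whether one of them has a nonnegative tail against a given $u$ is again a question about all $u$ at once.
\item Even granting that, matching $\sgn(u)$ on the top $D$ coordinates can gain far less than $\sqrt D/8$. If $|u_x|$ is constant on $\cX\setminus I$, then $u$ is spread whenever $|\cX\setminus I|\ge4D$, and the gain is $D/\sqrt{|\cX\setminus I|}$. This is below $\sqrt D/8$ once $|\cX\setminus I|>64D$, for instance when $J>128$, since $|\cX\setminus I|\ge|\cX|/2$. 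Reaching the target then requires the tail to contribute positively, which is exactly the anti-concentration event that must hold uniformly.
\item Multiples of $\{-1,0,1\}$-valued vectors do not form an $\ell_2$-net of mesh $O(\sqrt{D/|\cX|})$. A spread vector whose magnitudes range over many dyadic scales can stay at distance bounded below by an absolute constant from all of them. By your own estimate, the fallback therefore does not control $\langle u-u',h\rangle$.
\end{itemize}
Replace the second half of your part~(b) by the fine net described above.
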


\begin{proof}
Let $\upsilon$ be uniform on $\{-1,+1\}^{\cX}$, and fix a spread unit vector
$u$ supported on $\cX\setminus I$.  We first show
$\Pr\{\langle u,\upsilon\rangle\ge\sqrt D/4\}\ge\frac12e^{-D/2}$.  Let
$\lambda=\sqrt D/2$, so that $|\lambda u_x|\le1/4$, and tilt the law of
$\upsilon$ by the density proportional to $e^{\lambda\langle
u,\upsilon\rangle}$.  Under the tilted law the coordinates are independent
with means $\tanh(\lambda u_x)$, so the mean $\mu$ of
$\langle u,\upsilon\rangle$ satisfies $0.48\sqrt D\le\mu\le\sqrt D/2$ (using
$y\ge\tanh y\ge0.979y$ for $0\le y\le1/4$), and its variance is at most
$\|u\|_2^2=1$.  By Chebyshev's inequality and $D\ge128$, the tilted
probability of $\sqrt D/4\le\langle u,\upsilon\rangle\le\sqrt D$ is at least
$1/2$.  On this event the density of the original law with respect to the
tilted law is
$\exp(-\lambda\langle u,\upsilon\rangle+\sum_x\log\cosh(\lambda u_x))\ge
e^{-\lambda\sqrt D}=e^{-D/2}$, which proves the claim.  The restriction of
$\upsilon$ to $I$ is independent of $\langle u,\upsilon\rangle$ and equals
$\pi$ with probability $2^{-|I|}\ge2^{-d_0}$.

Let $r=\sqrt D/(8\sqrt{|\cX|})$, and for each $I$ fix an $r$-net, in
Euclidean distance, of the spread unit vectors supported on
$\cX\setminus I$, with centers in that set and at most
$(1+2/r)^{|\cX|}=(1+16\sqrt{|\cX|/D})^{|\cX|}$ elements.  For a fixed
$I$, $\pi$, and center $u'$, the probability that no codeword agrees with
$\pi$ on $I$ and has $\langle u',h\rangle\ge\sqrt D/4$ is at most
$\exp(-N_{\mathrm{code}}2^{-d_0}e^{-D/2}/2)\le\exp(-\frac12e^{7d_0+7D})$.
The number of triples $(I,\pi,u')$ is at most
$(d_0+1)(2|\cX|)^{d_0}(1+16\sqrt{|\cX|/D})^{|\cX|}$, whose logarithm is at
most $2|\cX|\log(8|\cX|)$ since $d_0\le|\cX|/2$.  Since
$d_0\ge16D\log(8|\cX|)$ and $D\ge128$, the union bound gives failure
probability below $0.005$.  On the complementary event, every spread unit
vector $u$ has a center $u'$ with $\|u-u'\|_2\le r$, and every codeword has
norm $\sqrt{|\cX|}$, so the codeword found for $u'$ satisfies
$\langle u,h\rangle\ge\sqrt D/4-r\sqrt{|\cX|}=\sqrt D/8$; scaling gives~(b).
Similarly, the probability that some $(I,\pi)$ is not realized is at most
$(d_0+1)(2|\cX|)^{d_0}\exp(-N_{\mathrm{code}}2^{-d_0})<0.005$, which
gives~(a).
\end{proof}

We fix a code satisfying \cref{lem:code} for the rest of the proof.  It is
public: the learner may know it.

\subsection{The random instance and the oracle}
\label{app:erm-lb-instance}

Draw $\tau\in\{-1,+1\}^{\cX}$ uniformly, and, for $\ell\in[16]$, draw
independent uniform block signs $\theta^{(\ell)}\in\{-1,+1\}^J$.  The
outcome $Y$ given $X=x$ is Bernoulli with mean $\fs(x)=c_{j(x)}+\beta\tau_x$.
The \emph{planted auditors} are $h^{(\ell)}(x)=\tau_x\theta^{(\ell)}_{j(x)}$,
and the class is
\[
  \cH=\cH_{\mathrm{code}}\cup\cH_{\mathrm{anc}}\cup\{h^{(1)},\ldots,h^{(16)}\}.
\]
After aggregating repeated inputs, a weighted ERM query is a vector
$u\in\R^{\cX}$.  The oracle returns the first maximizer of $\langle
u,h\rangle$ in a fixed order that lists $\cH_{\mathrm{code}}$, then
$\cH_{\mathrm{anc}}$, then the planted auditors.  Thus a planted auditor is
returned only if it strictly beats every code word and every anchor.  Since
$J\le D\le d_0$,
\begin{equation}
  \label{eq:erm-lb-vc}
  d_0\le\VC(\cH)\le\log_2\bigl(N_{\mathrm{code}}+2^J+16\bigr)\le24d_0.
\end{equation}

Let $\mathbf S$ denote the labeled sample, and let
$\bar\tau_x=\E[\tau_x\mid\mathbf S]$ and $\bar\tau_*=\max_x|\bar\tau_x|$.

\begin{lemma}[The sample reveals little]
\label{lem:posterior}
If $n\le c|\cX|/(\beta^2\mathsf A\mathsf t)$, then with probability at least
$0.99$ over the sample, $\bar\tau_*\le1/(64\sqrt{\mathsf A})$.  Conditional on
the sample, the signs $\tau_x$ are independent, and the block signs
$\theta^{(\ell)}$ are independent, uniform, and independent of $\tau$.
\end{lemma}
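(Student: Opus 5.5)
The plan is to compute the posterior of each hidden sign explicitly, and then control how far it moves from $0$ by a concentration bound on the count of its samples.

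\emph{Independence.} The prior on $(\tau,\theta^{(1)},\ldots,\theta^{(16)})$ is a product of uniforms. The likelihood of the sample factorizes over inputs, since the inputs have a uniform law that does not depend on any parameter, and the outcomes at $x$ are Bernoulli with mean $c_{j(x)}+\beta\tau_x$. So the likelihood equals $\prod_x L_x(\tau_x)$ and does not involve the $\theta^{(\ell)}$ at all. Bayes' rule then gives a posterior that is a product over $x$ of a law proportional to $L_x(\tau_x)$, times the uniform prior on each $\theta^{(\ell)}$. This proves the second sentence of the lemma.

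\emph{Size of $\bar\tau_x$.} Let $n_x$ be the number of samples at $x$, and let $k_x$ be the number of those with $Y=1$. Write $p_\pm=c_j\pm\beta$. The posterior gives
\[
  \bar\tau_x=\tanh\bigl(\tfrac12\lambda_x\bigr),\qquad
  \lambda_x=k_x\log\tfrac{p_+}{p_-}+(n_x-k_x)\log\tfrac{1-p_+}{1-p_-}.
\]
Since $c_j\in(1/4,1/2]$ and $\beta$ is small, both log-ratios are $O(\beta)$. Write $\lambda_x=a(k_x-n_xq)$, where $a=O(\beta)$ and $q$ lies between $p_-$ and $p_+$ (the natural centering point). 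Conditional on $n_x$ and $\tau_x$, $k_x$ is binomial with mean $n_x(c_j+\beta\tau_x)$. Hence
\[
  |\lambda_x|\le C\beta\bigl(|k_x-\E k_x|+\beta n_x\bigr).
\]
Two high-probability bounds then control $\lambda_x$:
\begin{itemize}
\item $n_x\sim\mathrm{Bin}(n,1/|\cX|)$, with mean $n/|\cX|\le c/(\beta^2\mathsf A\mathsf t)$. By a Chernoff bound and a union bound over the $|\cX|$ inputs, with probability $0.995$ every $n_x\le C(n/|\cX|+\mathsf t)$. This is where $\mathsf t=\log(1000|\cX|)$ enters.
\item By Hoeffding's inequality and another union bound, $|k_x-\E k_x|\le C\sqrt{n_x\mathsf t}$ for all $x$, with probability $0.995$.
\end{itemize}
Combining these,
\[
  |\lambda_x|\le C\bigl(\beta\sqrt{n_x\mathsf t}+\beta^2n_x\bigr)
  \le C\Bigl(\sqrt{c/\mathsf A}+\beta\mathsf t+c/(\mathsf A\mathsf t)+\beta^2\mathsf t\Bigr).
\]

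\emph{Main obstacle.} The delicate step is the low-count regime, where $n/|\cX|\ll\mathsf t$ and the count bound is dominated by $\mathsf t$. There the term $\beta\sqrt{n_x\mathsf t}\lesssim\beta\mathsf t$ must be below $1/(64\sqrt{\mathsf A})$. This is exactly what the assumption $\beta\mathsf t\sqrt{\mathsf A}\le c$ in \eqref{eq:erm-lb-cond} provides; it also gives $\beta^2\mathsf t\le\beta\mathsf t$. In the high-count regime, $\sqrt{c/\mathsf A}$ is small once $c$ is chosen small. Using $|\tanh(z/2)|\le|z|/2$, every term is then at most $1/(64\sqrt{\mathsf A})$ after shrinking $c$. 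The two bad events have total probability $0.01$, so the bound holds with probability at least $0.99$.
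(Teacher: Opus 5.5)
Your proof is correct and follows essentially the same route as the paper. You factor the likelihood to get the independence claims, write $\bar\tau_x=\tanh(\lambda_x/2)$, and bound the log-likelihood ratio by $O(\beta^2 n_x+\beta\sqrt{n_x\mathsf t})$ via Hoeffding. You then control $\max_x n_x\lesssim n/|\cX|+\mathsf t$ with a Chernoff (Bernstein in the paper) bound and a union bound, and absorb both count regimes using $n/|\cX|\le c/(\beta^2\mathsf A\mathsf t)$ and $\beta\mathsf t\sqrt{\mathsf A}\le c$. The only cosmetic difference is that you center the binomial count $k_x$ at the threshold $q\in[p_-,p_+]$, whereas the paper applies Hoeffding directly to the sum of bounded per-observation log-likelihood increments, whose means are $O(\beta^2)$.
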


\begin{proof}
The likelihood factors over inputs, which gives the conditional
independence; the block signs do not enter the likelihood.  Let $\mathsf n_x$
be the number of samples at $x$ and $\mathsf l_x$ the log-likelihood ratio of
$\tau_x=1$ against $\tau_x=-1$.  Since $c_j\in(1/4,1/2]$ and $\beta$ is
small, each observation changes $\mathsf l_x$ by at most $C_1\beta$, and
under either value of $\tau_x$ the expected change has absolute value at
most $C_1\beta^2$.  By Hoeffding's inequality and a union bound over $x$,
with probability at least $0.995$,
$|\mathsf l_x|\le C_1(\beta^2\mathsf n_x+\beta\sqrt{\mathsf n_x\mathsf t})$
for all $x$.  By Bernstein's inequality and a union bound,
$\max_x\mathsf n_x\le2n/|\cX|+C_1\mathsf t$ with probability at least
$0.995$.  Since $\bar\tau_x=\tanh(\mathsf l_x/2)$, on both events
\[
  \bar\tau_*\le C_2\Bigl[\beta^2\Bigl(\frac n{|\cX|}+\mathsf t\Bigr)
  +\beta\sqrt{\Bigl(\frac n{|\cX|}+\mathsf t\Bigr)\mathsf t}\Bigr].
\]
With $n/|\cX|\le c/(\beta^2\mathsf A\mathsf t)$ and
$\beta\mathsf t\sqrt{\mathsf A}\le c$, every term is at most a constant
multiple of $\sqrt c/\sqrt{\mathsf A}$, which is at most
$1/(64\sqrt{\mathsf A})$ for small $c$.
\end{proof}

\begin{lemma}[A fixed query rarely exposes a planted auditor]
\label{lem:exposure}
Condition on a sample with $\bar\tau_*\le1/(64\sqrt{\mathsf A})$.  Let
$u\in\R^{\cX}$ be determined by the sample, the public objects, and
randomness independent of $(\tau,\theta)$.  For each $\ell$, the conditional
probability that $h^{(\ell)}$ strictly beats every code word on $u$ is at
most
\[
  p_*=\exp\Bigl(-\frac D{128(1+D\bar\tau_*^2)}\Bigr)+e^{-D/2}.
\]
\end{lemma}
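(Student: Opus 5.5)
The plan is to compare, conditionally on the sample, the planted score $S_\ell=\langle u,h^{(\ell)}\rangle$ with the best code-word score. \Cref{lem:code} will guarantee a code word with a large score, and a sub-Gaussian bound for $S_\ell$ will show that $S_\ell$ rarely exceeds it. Since $u$ is independent of $(\tau,\theta)$ given the sample, we may treat $u$ as fixed. \Cref{lem:posterior} then lets us treat the $\tau_x$ as independent with means $\bar\tau_x$, and $\theta=\theta^{(\ell)}$ as uniform and independent of $\tau$.

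\emph{Step 1: sub-Gaussian tail for the planted score.} Write
\[
  S_\ell=\sum_j\theta_j\sum_{x\in\text{block }j}u_x\tau_x .
\]
Conditional on $\theta$, the terms $u_x\tau_x\theta_{j(x)}$ are independent. Their sum has conditional mean $m(\theta)=\sum_j\theta_j\sum_{x\in j}u_x\bar\tau_x$, and Hoeffding gives that $S_\ell-m(\theta)$ is sub-Gaussian with variance proxy $\|u\|_2^2$. The mean $m(\theta)$ is a Rademacher sum with proxy $\sum_j(\sum_{x\in j}u_x\bar\tau_x)^2$. By Cauchy--Schwarz within each block of $D$ inputs, this proxy is at most $D\bar\tau_*^2\|u\|_2^2$. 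Multiplying the two moment generating functions shows that $S_\ell$ is sub-Gaussian with proxy $\|u\|_2^2(1+D\bar\tau_*^2)$. Hence
\[
  \Pr\Bigl\{S_\ell\ge\tfrac{\sqrt D}{8}\|u\|_2\Bigr\}\le\exp\Bigl(-\frac{D}{128(1+D\bar\tau_*^2)}\Bigr),
\]
which is exactly the first term of $p_*$.

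\emph{Step 2: spread queries.} If $u$ is spread, \cref{lem:code}(b) with $I=\emptyset$ supplies a code word $h$ with $\langle u,h\rangle\ge(\sqrt D/8)\|u\|_2$. A planted auditor can strictly beat every code word only if $S_\ell>(\sqrt D/8)\|u\|_2$, so Step 1 finishes this case.

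\emph{Step 3: non-spread queries, the main obstacle.} Here I would peel off heavy coordinates. Sort $|u_x|$ in decreasing order, and let $I$ be the smallest set of top coordinates such that the remainder $u'=u|_{\cX\setminus I}$ is spread or zero.
\begin{enumerate}[leftmargin=2em]
\item If $|I|\le d_0$, apply \cref{lem:code}(b) with $\pi=\sgn(u_I)$ (or \cref{lem:code}(a) if $u'=0$). The resulting code word scores $\|u_I\|_1+\langle u',h\rangle$. The planted score is at most $\|u_I\|_1+\langle u',h^{(\ell)}\rangle$, so a strict win requires $\langle u',h^{(\ell)}\rangle>(\sqrt D/8)\|u'\|_2$. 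Step 1 applied to $u'$ bounds the probability of this (when $u'=0$ it gives a tie, and ties go to the code).
\item If no such $I$ with $|I|\le d_0$ exists, the peeling condition fails at every step. Each failure forces the next coordinate to exceed $\|\text{tail}\|_2/(2\sqrt D)$, so the tail norms decay geometrically. I would take $I$ to be the top $d_0$ coordinates and match signs via \cref{lem:code}(a). The remainder then has norm at most $e^{-\Omega(d_0/D)}$ times a single heavy coordinate, so the planted auditor's excess on it is dominated by the margin gained on $I$. It can win only on a Hoeffding event of probability at most $e^{-D/2}$, which is the second term of $p_*$.
\end{enumerate}
I expect the delicate part to be choosing this peeling threshold so that both $|I|\le d_0$ and the final comparison hold. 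If it fails, I would relax spreadness on the remainder to a constant-factor version and absorb the loss into the constant $128$.
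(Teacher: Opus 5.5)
Your Steps 1, 2 and 3(i) coincide with the paper's proof. The paper uses the same sub-Gaussian bound $\E e^{\lambda\langle v,h^{(\ell)}\rangle}\le\exp(\frac{\lambda^2}{2}(1+D\bar\tau_*^2)\|v\|_2^2)$, obtained from Hoeffding's lemma given $\theta$ and then averaging over $\theta$. It also sorts $|u_x|$ in the same way and uses \cref{lem:code}(a) when a tail vanishes, and \cref{lem:code}(b) plus a Chernoff bound when a tail is spread; allowing $|I|=d_0$ in the spread case is fine. Your case 3(ii) is the paper's third case, but your sketch does not yet produce the term $e^{-D/2}$. Write $u^\downarrow_l$ for the $l$-th largest $|u_x|$ and $\mathsf r_l$ for the $\ell_2$ norm of $u$ outside the top $l$ inputs.
\begin{itemize}
\item If the tail is dominated by only one heavy coordinate, $h^{(\ell)}$ is forced to agree with $\sgn(u)$ at only one input, an event of probability about $1/2$.
\item A Hoeffding-type tail bound for $\langle u_I,h^{(\ell)}\rangle$, which is what \emph{Hoeffding event} suggests, cannot give $e^{-D/2}$ either. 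Non-spread vectors can decay geometrically; for instance $u^\downarrow_l=2^{-l}$ is non-spread at every step. Then $\|u_I\|_1/\|u_I\|_2=O(1)$ and the sub-Gaussian bound is a constant.
\end{itemize}

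What is needed is forced agreement on $D$ inputs, followed by a product bound. Non-spreadness at every step gives $\mathsf r_{l+1}^2<(1-\frac1{4D})\mathsf r_l^2$, hence $\mathsf r_{d_0}\le e^{-(d_0-D+1)/(8D)}\mathsf r_{D-1}$. Non-spreadness at step $D-1$ gives $u^\downarrow_D>\mathsf r_{D-1}/(2\sqrt D)$. Together with $d_0\ge16D\log(8|\cX|)$ and $D\le|\cX|$, this yields $u^\downarrow_D>2\sqrt{|\cX|}\,\mathsf r_{d_0}$.

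Now compare scores. The code word of \cref{lem:code}(a) that matches $\sgn(u)$ on the top $d_0$ inputs scores at least $\sum_{l\le d_0}u^\downarrow_l-\sqrt{|\cX|}\,\mathsf r_{d_0}$. Here the factor $\sqrt{|\cX|}$ converts the tail's $\ell_2$ norm into a bound on its $\ell_1$ norm, a step your sketch omits. Any hypothesis that disagrees with $\sgn(u)$ at one of the top $D$ inputs scores at most $\sum_{l\le d_0}u^\downarrow_l-2u^\downarrow_D+\sqrt{|\cX|}\,\mathsf r_{d_0}$, which is strictly smaller. So $h^{(\ell)}$ can win only if it matches $\sgn(u)$ at all of the top $D$ inputs. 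Given $\theta$, the values $\tau_x\theta_{j(x)}$ are independent, and each takes a prescribed sign with probability at most $(1+\bar\tau_*)/2$. This event therefore has probability at most $((1+\bar\tau_*)/2)^D\le e^{-D/2}$; this is a product bound, not a tail bound.

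You also do not need to relax the spreadness threshold, and that fallback is not available anyway: \cref{lem:code}(b) is proved only for $\|u\|_\infty\le\|u\|_2/(2\sqrt D)$. The requirement $d_0\ge16D\log(8|\cX|)$ is exactly what closes this case.
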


\begin{proof}
Write $h=h^{(\ell)}$ and $\theta=\theta^{(\ell)}$.  For $v\in\R^{\cX}$ and
$\lambda\in\R$, conditional on $\theta$ the variables $\tau_x$ are
independent with means $\bar\tau_x$, so Hoeffding's lemma gives
$\E[e^{\lambda\langle v,h\rangle}\mid\theta]\le\exp(\lambda\sum_jm_j\theta_j
+\lambda^2\|v\|_2^2/2)$ with $m_j=\sum_{x:j(x)=j}v_x\bar\tau_x$.  Averaging
over $\theta$ and using $m_j^2\le D\bar\tau_*^2\sum_{x:j(x)=j}v_x^2$,
\begin{equation}
  \label{eq:plant-subg}
  \E\bigl[e^{\lambda\langle v,h\rangle}\bigr]
  \le\exp\Bigl(\frac{\lambda^2}2(1+D\bar\tau_*^2)\|v\|_2^2\Bigr).
\end{equation}
Order the inputs so that $|u_x|$ is nonincreasing, write $u^\downarrow_l$ for
the $l$-th largest absolute value and $\mathsf r_l^2=\sum_{l'>l}
(u^\downarrow_{l'})^2$ for the squared norm of the tail after the first $l$
inputs, and let $I_l$ be the set of the first $l$ inputs.  There are three
cases.

\emph{Some tail vanishes.}  If $\mathsf r_l=0$ for some $l\le d_0$, then $u$
is supported on $I_l$, and by \cref{lem:code}(a) some code word matches the
signs of $u$ on $I_l$ and attains the maximum $\|u\|_1$, so no hypothesis
beats it strictly.

\emph{Some tail is spread.}  Suppose that for some $l<d_0$ the tail
$u|_{\cX\setminus I_l}$ is nonzero and spread, that is,
$u^\downarrow_{l+1}\le\mathsf r_l/(2\sqrt D)$.  By \cref{lem:code}(b), some
code word matches the signs of $u$ on $I_l$ and has score at least
$\sum_{l'\le l}u^\downarrow_{l'}+(\sqrt D/8)\mathsf r_l$.  The score of $h$ on
$I_l$ is at most $\sum_{l'\le l}u^\downarrow_{l'}$, so $h$ wins only if its
score on the tail exceeds $(\sqrt D/8)\mathsf r_l$.  By~\eqref{eq:plant-subg}
and Chernoff's bound, this has probability at most the first term of $p_*$.

\emph{No tail is spread.}  Otherwise, for every $l<d_0$ the tail is nonzero
and $u^\downarrow_{l+1}>\mathsf r_l/(2\sqrt D)$.  Then $\mathsf r_{l+1}^2<
(1-1/(4D))\mathsf r_l^2$, so
$\mathsf r_{d_0}\le e^{-(d_0-D+1)/(8D)}\mathsf r_{D-1}$, while
$u^\downarrow_D>\mathsf r_{D-1}/(2\sqrt D)$.  Since
$d_0\ge16D\log(8|\cX|)$, this gives
$u^\downarrow_D>2\sqrt{|\cX|}\,\mathsf r_{d_0}$.  By \cref{lem:code}(a), some
code word matches the signs of $u$ on $I_{d_0}$; its score is at least
$\sum_{l\le d_0}u^\downarrow_l-\sqrt{|\cX|}\mathsf r_{d_0}$.  If $h$ disagrees
with the sign of $u$ at one of the first $D$ inputs, its score is at most
$\sum_{l\le d_0}u^\downarrow_l-2u^\downarrow_D+\sqrt{|\cX|}\mathsf r_{d_0}$,
which is smaller.  So $h$ wins only if it agrees with the signs of $u$ on
the first $D$ inputs.  Conditional on $\theta$, the values $h(x)=
\tau_x\theta_{j(x)}$ are independent and each takes a given sign with
probability at most $(1+\bar\tau_*)/2$, so this has probability at most
$((1+\bar\tau_*)/2)^D\le e^{-D/2}$.
\end{proof}

\begin{lemma}[Transcript coupling]
\label{lem:coupling}
Fix a sample with $\bar\tau_*\le1/(64\sqrt{\mathsf A})$ and the learner's
random seed.  Consider a second run of the learner, with the same sample and
seed, against the \emph{baseline oracle}, which uses the same order but
optimizes only over $\cH_{\mathrm{code}}\cup\cH_{\mathrm{anc}}$.  The two
runs make the same queries and produce the same output, except with
conditional probability at most $0.01$ over $(\tau,\theta)$.
\end{lemma}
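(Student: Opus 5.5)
We compare the two runs query by query and stop at the first divergence.  The runs share the sample and the seed, so they make the same first query.  As long as every answer so far agrees, the next query also agrees.  The two oracles use the same order, so they give the same answer to a query $u$ unless some planted auditor strictly beats every code word and every anchor on $u$.  In particular, the answers agree unless a planted auditor strictly beats every code word.  Thus the runs differ only if, at some step $k\le q$, the $k$-th query of the baseline run exposes a planted auditor in this sense.  If no baseline query exposes one, the answer sequences coincide, and so do the final outputs, since each output is a function of the sample, the seed, and the answers.

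It therefore suffices to bound the probability that some baseline query exposes a planted auditor.  The key point is that the baseline run is a function of the sample, the public objects (the code, the anchors, the order), and the learner's seed alone.  The baseline oracle never looks at $(\tau,\theta)$, so by induction every baseline query is determined by these quantities.  The seed is independent of $(\tau,\theta)$, so each baseline query $u_k$ satisfies the hypothesis of \cref{lem:exposure}.  The lemma gives, for each $k\le q$ and each $\ell\in[16]$, conditional exposure probability at most $p_*$.  A union bound gives total probability at most $16q\,p_*$.  Working with the baseline queries, rather than the actual run's queries, is what avoids any adaptivity issue: the actual run's queries may depend on $(\tau,\theta)$ after a planted auditor has been revealed.

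The remaining step, and the only quantitative one, is to check that $16q\,p_*\le0.01$ under \eqref{eq:erm-lb-cond}.  By hypothesis $\bar\tau_*^2\le1/(4096\mathsf A)$, and $\mathsf A\ge\kappa_{\mathrm{anc}}^2+\log(1000q)$.  If $D\bar\tau_*^2\le1$, the first term of $p_*$ is at most $e^{-D/256}$.  Otherwise it is at most $e^{-1/(256\bar\tau_*^2)}\le e^{-16\mathsf A}\le(1000q)^{-16}$.  The second term is $e^{-D/2}$.  Since $D\ge C\log(100q)$ with $C$ large, $e^{-D/256}\le(100q)^{-C/256}$, which is below $1/(3200q)$ once $C\ge 512$ (and $D\ge128$).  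Hence $16q\,p_*\le0.01$.

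The main obstacle is the legitimacy of the induction.  We must ensure that conditioning on the sample leaves $(\tau,\theta)$ with exactly the posterior used in \cref{lem:exposure}, and that nothing else in the baseline transcript carries information about $(\tau,\theta)$.  Both follow from \cref{lem:posterior} and the independence of the seed and the code from $(\tau,\theta)$.
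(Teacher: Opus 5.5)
Your proposal is correct and follows the paper's proof. The baseline run depends only on the sample, the seed, and the public objects, so \cref{lem:exposure} applies to each baseline query; a union bound over queries and planted auditors gives $16qp_*$; and by induction the two runs coincide when no baseline query exposes a planted auditor. Your numerical check splits on whether $D\bar\tau_*^2\le1$, whereas the paper bounds the exponent directly by $4\log(100q)$, but this difference is only cosmetic.
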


\begin{proof}
The baseline run, including its queries $u_1,\ldots,u_q$ and its output, is
determined by the sample, the seed, and the public objects, and is therefore
independent of $(\tau,\theta)$ given the sample.  If at no baseline query
does a planted auditor strictly beat every code word, the actual oracle
returns the same hypothesis as the baseline oracle at every query, and by
induction the two runs coincide.  By \cref{lem:exposure} and a union bound
over the $q$ queries and the $16$ planted auditors, the runs differ with
probability at most $16qp_*$.  Let $\mathsf b=\log(100q)$.  Since
$\mathsf A\ge\mathsf b$, $\bar\tau_*^2\le1/(4096\mathsf A)$, and
$D\ge C\mathsf b$ with $C\ge1024$,
\[
  \frac D{128(1+D\bar\tau_*^2)}
  =\Bigl(\frac{128}D+128\bar\tau_*^2\Bigr)^{-1}
  \ge\Bigl(\frac18+\frac1{32}\Bigr)^{-1}\mathsf b\ge4\mathsf b,
  \qquad\frac D2\ge4\mathsf b.
\]
Hence $16qp_*\le32q(100q)^{-4}<0.01$.
\end{proof}

\subsection{An undiscovered planted auditor forces error}
\label{app:erm-lb-error}

\begin{lemma}[Error of the baseline output]
\label{lem:plant-error}
Fix a sample with $\bar\tau_*\le1/(64\sqrt{\mathsf A})$ and the learner's
seed, and let $Q$ be the output of the baseline run.  Then, over
$(\tau,\theta)$ given the sample,
$\Pr\{\MC(Q;\cH,\cD)>\varepsilon\}\ge0.99\bigl(1-(11/12)^{16}\bigr)$.
\end{lemma}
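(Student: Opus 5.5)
The plan is to condition on the sample and the seed, so that $Q$ is a fixed predictor independent of $(\tau,\theta^{(1)},\ldots,\theta^{(16)})$. I will then split according to the displacement $\mathrm{Disp}(Q)$. The factor $0.99$ will come from a good event for $\tau$. The factor $1-(11/12)^{16}$ will come from the $16$ planted auditors: they are independent given $\tau$, and each detects error with conditional probability at least $1/12$.

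\emph{Case 1: large displacement.} Suppose $\mathrm{Disp}(Q)>\kappa_{\mathrm{anc}}(\varepsilon+\beta)$. Then \cref{lem:anchor} gives $\MC(Q;\cH,\cD)\ge\MC(Q;\cH_{\mathrm{anc}},\cD)>\varepsilon$ for every $\tau$, because anchors belong to $\cH$. This case needs no probability at all.

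\emph{Case 2: small displacement.} Suppose $\mathrm{Disp}(Q)\le\kappa_{\mathrm{anc}}(\varepsilon+\beta)=\kappa_{\mathrm{anc}}\cdot9\beta/8$. The spacing is $\Delta_c=128\kappa_{\mathrm{anc}}\beta$. By Markov's inequality, all but a small constant fraction of the mass of $Q$ lies within $\Delta_c/4$ of the center of the input's own block. Group the values $v$ by the nearest center. For the planted auditor $h=h^{(\ell)}$, write
\[
B_h(Q,v)=\sum_j\theta_jG_{j,v},\qquad G_{j,v}=\E\bigl[\mathbf 1\{j(X)=j\}\,Q_X(v)\,\bigl(\tau_X(v-c_j)-\beta\bigr)\bigr].
\]
This identity uses $\tau_X^2=1$. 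The $-\beta$ part is the signal. For a value $v$ near $c_j$, it is dominated by the single block $j$. So the vector $(G_{j,v})_j$ has a large coordinate of size about $\beta\,w_{j,v}$, where $w_{j,v}=\E[\mathbf 1\{j\}Q_X(v)]$. The $\tau(v-c_j)$ part is a sum over inputs of independent terms. Given the sample, these terms have means at most $\bar\tau_*\le1/(64\sqrt{\mathsf A})$ in relative size, and $Q$ is fixed. A second-moment bound over $\tau$ then shows that it is small compared with $\beta$ on an event of probability $0.99$. This is where the bounds $|\cX|\ge C\kappa_{\mathrm{anc}}^2$ and the smallness of $\bar\tau_*$ enter.

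On that event, I apply the Paley--Zygmund inequality to $S=\sum_v\bigl|\sum_j\theta_jG_{j,v}\bigr|$ over the uniform signs $\theta$. The first moment satisfies $\E S\ge c\sum_v\|G_{\cdot,v}\|_2\ge c'\beta$ by Khintchine's inequality. The second moment satisfies $\E S^2\le C(\sum_v\|G_{\cdot,v}\|_2)^2$, again by Khintchine's inequality and Cauchy--Schwarz. Together these give $\Pr_\theta\{S>2\varepsilon=\beta/4\}\ge1/12$ after tracking the constants. Since $\theta^{(1)},\ldots,\theta^{(16)}$ are independent given $\tau$ and the sample (\cref{lem:posterior}), all sixteen fail with probability at most $(11/12)^{16}$. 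This yields the stated bound.

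\emph{Main obstacle.} The delicate step is controlling the $\tau$-term uniformly over an arbitrary finitely supported $Q$. That $Q$ may have many values and place tiny masses near the centers. My first approach would be to bound $\E_\tau\sum_v\bigl|\sum_x\tau_xa_{x,v}\bigr|$ through its mean part, which is at most $\bar\tau_*\,\mathrm{Disp}(Q)$, plus its fluctuation. The fluctuation requires $\sum_v\bigl(\sum_xa_{x,v}^2\bigr)^{1/2}\le|\cX|^{-1/2}\,\mathrm{Disp}(Q)$, which follows from $\sqrt{\sum a^2}\le\sum|a|$ applied per input block with uniform weights $1/|\cX|$. Both contributions would then be $O(\beta/\sqrt{\mathsf A})$ or $O(\kappa_{\mathrm{anc}}\beta/\sqrt{|\cX|})$, which is small enough. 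If this per-value bound turns out to be too lossy, the fallback is to truncate at values within $\beta$ of a center, where the sign of $\tau(v-c_j)-\beta$ is forced to be negative.
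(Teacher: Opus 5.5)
Your outline matches the paper: condition on the sample and seed, handle large displacement with the anchors, write the planted bias as $\sum_j\theta_jG_{j,v}$, and finish with Paley--Zygmund and the independence of the sixteen block-sign vectors. However, the step you flag as the main obstacle genuinely fails, and so does your fallback.

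The $\tau$-part of $G_{j,v}$ cannot be made small value by value. The inequality $\sum_v(\sum_xa_{x,v}^2)^{1/2}\le|\cX|^{-1/2}\mathrm{Disp}(Q)$ is false: the bound $\sqrt{\sum a^2}\le\sum|a|$ only gives $\mathrm{Disp}(Q)$ on the right, and that is attained. Take $Q(x)=c_{j(x)}+\beta s_x$ with distinct $s_x\in[1,2]$.
\begin{itemize}
\item Its displacement is at most $2\beta$, so you are in Case~2.
\item Each value is used by a single input, so $\sum_v|\sum_x\tau_xa_{x,v}|=\mathrm{Disp}(Q)\ge\beta$ for every $\tau$.
\item $G_{j(x),v_x}=\beta(\tau_xs_x-1)/|\cX|$ is near $0$ whenever $\tau_x=1$ and $s_x\approx1$, so there is no coordinate of size about $\beta w_{j,v}$.
\end{itemize}
Truncating to values within $\beta$ of a center does not help either. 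Markov's inequality only puts most of the mass within distance of order $\kappa_{\mathrm{anc}}\beta$ of the centers, and $Q(x)=c_{j(x)}+2\beta$ puts no mass within $\beta$ of any center.

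The missing idea is to take absolute values only after summing over values and inputs, so that the $\tau$-part collapses into a single scalar. Use the disjoint windows $U_j=\{v:|v-c_j|\le\Delta_c/4\}$. Jensen's inequality over the other block signs gives $\E_\theta|\sum_{j'}\theta_{j'}G_{j',v}|\ge|G_{j,v}|$ for each $j$, hence
$\E_\theta S\ge\sum_j\sum_{v\in U_j}|G_{j,v}|\ge|\sum_j\sum_{v\in U_j}G_{j,v}|=|\beta\mathsf P-\mathsf R|$. Here $\mathsf P=\E_X[\sum_{v\in U_{j(X)}}Q_X(v)]\ge1-4\mathrm{Disp}(Q)/\Delta_c\ge1-9/256$ and $\mathsf R=\E_X[\tau_X\sum_{v\in U_{j(X)}}Q_X(v)(v-c_{j(X)})]$.

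Now $\mathsf R$ is one sum of independent terms, each at most $\Delta_c/(4|\cX|)$ in absolute value thanks to the restriction to $U_j$. It satisfies $|\E\mathsf R|\le\bar\tau_*\mathrm{Disp}(Q)\le\beta/16$ (using $\sqrt{\mathsf A}\ge\kappa_{\mathrm{anc}}$) and $\mathrm{Var}\,\mathsf R\le\Delta_c^2/(16|\cX|)$. Chebyshev's inequality with $|\cX|\ge C\kappa_{\mathrm{anc}}^2$ then gives $\beta\mathsf P-\mathsf R\ge\beta/2$ with probability at least $0.99$. This is exactly the cancellation over inputs that a per-value bound cannot see.

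Once you have $\E_\theta S\ge\beta/2$, the rest of your plan goes through. The paper bounds $\E S^2\le3(\E S)^2$ by the fourth-moment Khintchine inequality, H\"older, and Minkowski. Paley--Zygmund then gives $\Pr_\theta\{S\ge\beta/4\}\ge1/12$ for each planted auditor.
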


\begin{proof}
By \cref{lem:coupling}, $Q$ is independent of $(\tau,\theta)$ given the
sample.  If $\mathrm{Disp}(Q)>\kappa_{\mathrm{anc}}(\varepsilon+\beta)$, then
\cref{lem:anchor} gives $\MC(Q;\cH_{\mathrm{anc}},\cD)>\varepsilon$ for every
$\tau$, and we are done.  Otherwise
$\mathrm{Disp}(Q)\le\frac98\kappa_{\mathrm{anc}}\beta$, since
$\varepsilon=\beta/8$.  Let $U_j=\{v:|v-c_j|\le\Delta_c/4\}$, which are
disjoint, and let
\[
  \mathsf P=\E_X\Bigl[\sum_{v\in U_{j(X)}}Q_X(v)\Bigr],\qquad
  \mathsf R=\E_X\Bigl[\tau_X\sum_{v\in U_{j(X)}}Q_X(v)\,(v-c_{j(X)})\Bigr].
\]
By Markov's inequality, $\mathsf P\ge1-4\mathrm{Disp}(Q)/\Delta_c\ge
1-9/256$.  Given the sample, $|\E\mathsf R|\le\bar\tau_*\mathrm{Disp}(Q)
\le\beta/16$, because $\sqrt{\mathsf A}\ge\kappa_{\mathrm{anc}}$; and since
the $\tau_x$ are independent and each term of $\mathsf R$ is at most
$\Delta_c/(4|\cX|)$ in absolute value, $\mathrm{Var}(\mathsf R)\le
\Delta_c^2/(16|\cX|)$.  By Chebyshev's inequality,
\[
  \Pr\{\beta\mathsf P-\mathsf R<\beta/2\}
  \le\frac{\Delta_c^2/(16|\cX|)}{(3\beta/8)^2}
  =\frac{65536\,\kappa_{\mathrm{anc}}^2}{9|\cX|}\le0.01,
\]
using $|\cX|\ge C\kappa_{\mathrm{anc}}^2$.

Fix $\tau$ with $\beta\mathsf P-\mathsf R\ge\beta/2$.  For a planted auditor
with block signs $\theta$,
\begin{align*}
  \sum_v\bigl|\E[h^{(\ell)}(X)Q_X(v)(v-Y)]\bigr|
  &=\sum_v\Bigl|\sum_j\theta_j\mathsf C_{j,v}\Bigr|=:\mathsf F(\theta),
  \qquad\text{where}\\
  \mathsf C_{j,v}&=\E\bigl[\mathbf 1\{j(X)=j\}\tau_XQ_X(v)(v-c_j-\beta\tau_X)\bigr].
\end{align*}
For each $j$ and $v$, averaging over the other block signs and Jensen's
inequality give $\E_\theta|\sum_{j'}\theta_{j'}\mathsf C_{j',v}|\ge|\mathsf
C_{j,v}|$.  Since the sets $U_j$ are disjoint and $\tau_x^2=1$,
\begin{equation}
  \label{eq:plant-mean}
  \E_\theta\mathsf F\ge\sum_j\sum_{v\in U_j}|\mathsf C_{j,v}|
  \ge\Bigl|\sum_j\sum_{v\in U_j}\mathsf C_{j,v}\Bigr|
  =|\mathsf R-\beta\mathsf P|\ge\frac\beta2.
\end{equation}
Let $\mathsf Z_v=\sum_j\theta_j\mathsf C_{j,v}$.  Khintchine's inequality
gives $\E\mathsf Z_v^4\le3(\E\mathsf Z_v^2)^2$, so Hölder's inequality gives
$(\E\mathsf Z_v^2)^{1/2}\le\sqrt3\,\E|\mathsf Z_v|$, and Minkowski's
inequality gives $(\E\mathsf F^2)^{1/2}\le\sum_v(\E\mathsf Z_v^2)^{1/2}\le
\sqrt3\,\E\mathsf F$.  By the Paley--Zygmund inequality
and~\eqref{eq:plant-mean}, $\Pr_\theta\{\mathsf F\ge\beta/4\}\ge
\frac14(\E\mathsf F)^2/\E\mathsf F^2\ge1/12$.  The $16$ planted auditors have
independent block signs given $\tau$ and the sample, so with probability at
least $1-(11/12)^{16}$ one of them has error at least $\beta/4=2\varepsilon$.
\end{proof}

\subsection{Proof of \texorpdfstring{\cref{thm:erm-lb}}{the black-box lower bound}}
\label{app:erm-lb-proof}

\begin{proof}
Fix a code as in \cref{lem:code}, and draw $(\tau,\theta)$, the sample, and
the learner's seed.  By \cref{lem:posterior}, the sample satisfies
$\bar\tau_*\le1/(64\sqrt{\mathsf A})$ with probability at least $0.99$.  On
such a sample and for every seed, \cref{lem:coupling,lem:plant-error} give
\[
  \Pr\{\MC(Q;\cH,\cD)>\varepsilon\mid\text{sample, seed}\}
  \ge0.99\bigl(1-(11/12)^{16}\bigr)-0.01,
\]
where $Q$ is the actual output.  Hence the overall failure probability is at
least $0.99\bigl(0.99(1-(11/12)^{16})-0.01\bigr)>0.72>2/3$.  This probability
averages over $(\tau,\theta)$, so some fixed $(\tau,\theta)$ has failure
probability greater than $2/3$ over the sample and the seed alone.  It
defines a fixed class, a weighted ERM oracle (the ordered one above), and a
distribution over $\cX\times\{0,1\}$ with uniform input marginal; the
dimension bounds are~\eqref{eq:erm-lb-vc}.  The code, the anchors, the
centers, and the input distribution are the same for every $(\tau,\theta)$,
so the learner may know them.
\end{proof}

\section{Proofs for Section~\ref*{sec:swap}}
\label{app:swap-all}

This appendix proves the upper bounds of \cref{sec:swap}
(\cref{thm:swap-finite,thm:swap-vc,thm:swap-poly}) in \cref{app:swap}, and
the lower bounds (\cref{lem:lb-class}, \cref{thm:swap-lb}, and
\cref{prop:swap-lb-vc1}) in \cref{app:lb}.

\subsection{Upper bounds}
\label{app:swap}

\begin{proof}[Proof of \cref{thm:swap-finite}]
For each $v\in\Lambda$, run exponential weights $w_{t,v}\in\Delta(\cG)$ with a
common learning rate $\eta$, and let $\bar g_{t,v}(x)=\sum_gw_{t,v}(g)g(x)$.
Forecast $Q_t(x)=\Root((\bar g_{t,v}(x))_v)$, and after observing $Y_t$ set
$Z_{t,v}=Q_t(X_t)(v)(\mu_t(X_t)-Y_t)$ and update
$w_{t+1,v}(g)\propto w_{t,v}(g)\exp(\eta g(X_t)Z_{t,v})$.  Since the reward
$g(X_t)Z_{t,v}$ lies in an interval of length $2|Z_{t,v}|$, Hoeffding's lemma
gives, for each $v$,
\[
  \max_{g\in\cG}\sum_tg(X_t)Z_{t,v}\le\sum_t\bar g_{t,v}(X_t)Z_{t,v}
  +\frac{\log N}{\eta}+\frac\eta2\sum_tZ_{t,v}^2.
\]
Sum over $v$.  By \cref{lem:root}, $\sum_v\bar g_{t,v}(X_t)Z_{t,v}=
(\mu_t-Y_t)\sum_vQ_t(X_t)(v)\bar g_{t,v}(X_t)\le0$, and
$\sum_vZ_{t,v}^2\le(\sum_v|Z_{t,v}|)^2\le1$.  The virtual swap error, which
equals $\sum_v\max_{g\in\cG}\sum_tg(X_t)Z_{t,v}$ because $\cG$ is closed
under negation, is therefore at most $m\log N/\eta+\eta T/2$, and
$\eta=\sqrt{2m\log N/T}$ gives $\sqrt{2Tm\log N}$.  \Cref{lem:adjacent} adds
$T/(2K)$.

For the batch bound, index tests by tuples $\mathbf g=(g_v)_v\in\cG^{\Lambda}$
and let $z_{t,\mathbf g}(x,y)=(\mu_t(x)-y)\sum_vQ_t(x)(v)g_v(x)$, a
predictable map into $[-1,1]$.  For a fixed tuple, the empirical minus
expected sums form a martingale with differences bounded by two; Azuma's
inequality and a union bound over the $N^m$ tuples bound the deviation by
$C\sqrt{T(m\log N+\log(1/\delta))}$ for all tuples.  The supremum over tuples
of the empirical sum is the virtual swap error, and the supremum over tuples
of $\frac1T\sum_t\E_\cD[z_{t,\mathbf g}]$ is the virtual swap error of
$\Qbar$.  Dividing by $T$ and adding $1/(2K)$ from \cref{lem:adjacent}
completes the proof.
\end{proof}

\begin{proof}[Proof of \cref{thm:swap-vc}]
Let $K=\lceil C/\varepsilon\rceil$, $m=K+1$, and $\rho=\varepsilon/(8m)$.  Draw
$\ncv=O((d\log(1/\rho)+\log(2/\delta))/\rho)$ unlabeled inputs, obtained by
discarding the labels of examples.  The disagreement sets $\{h\ne h'\}$ have
VC dimension $O(d)$, so with probability at least $1-\delta/2$ the sample is a $\rho$-net for
them: any two auditors that agree on the sample disagree with probability at
most $\rho$.  Let $\cG$ consist of one auditor for each labeling of the
sample realized by $\cH$, together with the negations; then
$\log|\cG|\le\log2+d\log(e\ncv/d)$.  Such representatives can be found by
extending partial labelings one point at a time and testing consistency with
an ERM call, using $O(\ncv|\cG|)$ calls, which is exponential in $d$.

Run the forecaster of \cref{thm:swap-finite} on $\cG$ with an independent
training sample of size
$T=\wtO((m\log|\cG|+\log(1/\delta))/\varepsilon^2)
=\wtO((d+\log(1/\delta))\varepsilon^{-3})$, so that
$\SMC(\Qbar;\cG,\cD)\le\varepsilon/2$ with probability at least
$1-\delta/2$.
For any randomized predictor and any auditors $h_v\in\cH$, replacing each $h_v$ by
the element of $\cG$ that agrees with it on the sample changes the bias at
$v$ by at most $2\rho$; hence
$\SMC(\Qbar;\cH,\cD)\le\SMC(\Qbar;\cG,\cD)+2m\rho\le\varepsilon$.  This
approximation is uniform over predictors, so it applies to the learned one.
The unlabeled sample size $\ncv=\wtO((d+\log(1/\delta))\varepsilon^{-2})$ is
smaller than $T$.
\end{proof}

\begin{proof}[Proof of \cref{thm:swap-poly}]
For each $v$, the estimate $\hat h_{t,v}(X_t)$ averages $L$ independent draws
with mean $\bh_{t,v}(X_t)$, so by Hoeffding's inequality and a union bound,
with probability at least $1-\delta/2$,
$|\hat h_{t,v}(X_t)-\bh_{t,v}(X_t)|\le\alpha$ for all $t$ and $v$.  Let
$Z_{t,v}=Q_t(X_t)(v)(\mu_t(X_t)-Y_t)$.  On this event,
\[
  \sum_v\bh_{t,v}(X_t)Z_{t,v}\le(\mu_t-Y_t)\sum_vQ_t(X_t)(v)\hat h_{t,v}(X_t)
  +\alpha\le\alpha
\]
by \cref{lem:root}.  \Cref{thm:hybrid} for $\Hpm$, with failure probability
$\delta/(2m)$ for each $v$, gives
\[
  \sup_{h\in\Hpm}\sum_th(X_t)Z_{t,v}\le\sum_t\bh_{t,v}(X_t)Z_{t,v}+
  \Reg\Bigl(T,Cd,\frac{\delta}{2m}\Bigr).
\]
Summing over $v$ bounds the virtual swap error by
$T\alpha+m\Reg(T,Cd,\delta/(2m))$, since $\Hpm$ is closed under negation;
\cref{lem:adjacent} adds $T/(2K)$.

For the batch bound, halve the failure probabilities above, and apply
\cref{lem:martingale}(b) with $z_{t,v}(x,y)=Q_t(x)(v)(\mu_t(x)-y)$, which are
predictable because the auxiliary draws are made before the example, and
satisfy $\sum_v|z_{t,v}|\le1$; then add the population form of
\cref{lem:adjacent}.  With $K=\lceil C/\varepsilon\rceil$,
$\alpha=\varepsilon/C$, and
$T=\wtO((d+\log(1/\delta))m^2/\varepsilon^2)=\wtO((d+\log(1/\delta))\varepsilon^{-4})$
the error is at most $\varepsilon$.  Each round uses $m(2L+2)$ ERM calls over
$\cH$ with $L=\wtO(\varepsilon^{-2})$, for a total of
$\wtO((d+\log(1/\delta))\varepsilon^{-7})$.
\end{proof}

\subsection{Lower bounds}
\label{app:lb}

Throughout, we use the balanced family of \cref{sec:swap-lb}, write $j(x)$ for the
block of an input $x$, and $\fs=\fs_\theta$.

\begin{proof}[Proof of \cref{lem:lb-class}]
The local auditors of one block shatter its $d$ inputs.  If $J=1$ there are
only $d$ inputs.  Otherwise, any $d+1$ inputs span at least two blocks, so
their first and last inputs lie in different blocks; since $d+1\ge3$, choose
an interior input as well.  The pattern $(-,+,-)$ on these three inputs is
impossible: a threshold cannot change from $-1$ back to $+1$, and a local
auditor cannot be $-1$ in two blocks.

For weights $u_{j,k}$, the best threshold is found from prefix sums, and the
best local auditor of block $j$ has score
$\sum_{j'\ne j,k}u_{j',k}+\sum_k|u_{j,k}|$; scanning the $J$ blocks and
comparing with the best threshold takes $O(Jd)$ time.

Under $\cD_\emptyset$ every residual $c_j-\fs_\emptyset(j,k)$ is zero, so all
biases vanish.  Under $\cD_\theta$, the bias of $Q^0$ at $c_j$ with respect to
$h$ is
$(Jd)^{-1}\sum_kh(j,k)(c_j-\fs_\theta(j,k))=-\beta(Jd)^{-1}\sum_kh(j,k)\tau_{j,k}$.
Since $\sum_k\tau_{j,k}=0$, an auditor constant on block $j$ has zero bias
there.  A local auditor is constant on all blocks but its own, where its bias
is at most $\beta/J$ in absolute value, with equality for
$u=-\tau_{j,\cdot}$.  A threshold is constant on all blocks but one, where
its bias is at most $\beta/J$.  Hence $\MC(Q^0)=\beta/J$ and, since each block
attains $\beta/J$, $\SMC(Q^0)=J\cdot\beta/J=\beta$.
\end{proof}

Let $Q$ be a finitely supported randomized predictor, let $V\sim Q_X$ be drawn
independently of $Y$ given $X$, write $\varepsilon_Q=\SMC(Q)$, and let
$I_j=\{v:|v-c_j|<100\beta/3\}$.  These intervals are disjoint since
$c_{j+1}-c_j=100\beta$.

\begin{lemma}[Transport and decoding]
\label{lem:decode}
For every $\theta$,
\begin{align}
  \E|V-c_{j(X)}|&\le\varepsilon_Q+\beta,\label{eq:transport}\\
  \psi:=\frac1{Jd}\sum_{j,k}\sum_{v\in I_j}Q_{(j,k)}(v)\,|v-\fs(j,k)|
    &\le\varepsilon_Q.\label{eq:near}
\end{align}
Let $\Delta_{j,k}=\sum_{v\in I_j}Q_{(j,k)}(v)\clip(v-c_j,[-\beta,\beta])$, and
let $\hat\tau_{j,k}=+1$ if $\Delta_{j,k}\ge0$ and $-1$ otherwise.  The
fraction of inputs with $\hat\tau_{j,k}\ne\tau_{j,k}$ is at most
$\varepsilon_Q/\beta+3(\varepsilon_Q+\beta)/(100\beta)$.
\end{lemma}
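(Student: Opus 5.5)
The plan is to prove the three claims in order, each from a suitable choice of auditors in the swap error, which bounds $\sum_v|B_{h_v}(Q,v)|$ for every tuple $(h_v)_v$ of auditors in $\cH_{J,d}$. Throughout I use that $B_h(Q,v)=\E[h(X)Q_X(v)(v-\fs(X))]$, since $V$ is drawn independently of $Y$ given $X$.

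\emph{Transport~\eqref{eq:transport}.} For each value $v$ in the support of $Q$, take the threshold $h_v(j,k)=+1$ if $c_j\le v$ and $-1$ otherwise. This is an initial segment of the ordered inputs because the centers increase with $j$. Then $h_v(j,k)=\sgn(v-c_j)$, with the convention $\sgn(0)=+1$. Since $|\fs(x)-c_{j(x)}|=\beta$, we get
\[
  h_v(x)\bigl(v-\fs(x)\bigr)\ge|v-c_{j(x)}|-\beta .
\]
Hence $B_{h_v}(Q,v)\ge\E[Q_X(v)(|v-c_{j(X)}|-\beta)]$. Summing over $v$ gives $\E|V-c_{j(X)}|-\beta\le\sum_v|B_{h_v}(Q,v)|\le\varepsilon_Q$.

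\emph{Near-center bound~\eqref{eq:near}.} The main subtlety is that a local auditor $h_{j,u}$ is $+1$ outside block $j$, so the bias also carries an uncontrolled contribution from the other blocks. I remove it by comparing $u$ with $-u$. Fix $v$. The intervals $I_j$ are disjoint, so $v$ lies in at most one of them, say $I_j$. Put $u_k=\sgn(v-\fs(j,k))$. The outside-block parts of $B_{h_{j,u}}(Q,v)$ and $B_{h_{j,-u}}(Q,v)$ coincide, so
\[
  B_{h_{j,u}}(Q,v)-B_{h_{j,-u}}(Q,v)=\frac{2}{Jd}\sum_kQ_{(j,k)}(v)\,|v-\fs(j,k)| .
\]
Therefore $\max_h|B_h(Q,v)|\ge\frac1{Jd}\sum_kQ_{(j,k)}(v)|v-\fs(j,k)|$. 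Summing over $v$ gives $\psi\le\varepsilon_Q$.

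\emph{Decoding.} Let $q_{j,k}=\sum_{v\in I_j}Q_{(j,k)}(v)$. I split the inputs with $\hat\tau_{j,k}\ne\tau_{j,k}$ by charging each one $q_{j,k}+(1-q_{j,k})=1$.

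First I bound the total mass outside the intervals. Every $v\notin I_{j}$ has $|v-c_j|\ge100\beta/3$, so Markov's inequality with~\eqref{eq:transport} gives
\[
  \frac1{Jd}\sum_{j,k}(1-q_{j,k})\le\frac{3(\varepsilon_Q+\beta)}{100\beta}.
\]
Next I bound the mass inside the intervals at misdecoded inputs. Suppose $\tau_{j,k}=+1$, so $\fs=c_j+\beta$, while $\Delta_{j,k}<0$. For $v\in I_j$ we have $|v-c_j-\beta|\ge\beta-\clip(v-c_j,[-\beta,\beta])$. This is checked separately for $v-c_j\le\beta$ and for $v-c_j>\beta$. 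Summing with the weights $Q_{(j,k)}(v)$ over $v\in I_j$ gives
\[
  \sum_{v\in I_j}Q_{(j,k)}(v)|v-\fs(j,k)|\ge\beta q_{j,k}-\Delta_{j,k}\ge\beta q_{j,k}.
\]
The case $\tau_{j,k}=-1$ is symmetric. It uses $\Delta_{j,k}\ge0$ and the reflected inequality, where the tie $\Delta=0$ only helps.

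Summing over the misdecoded inputs and using~\eqref{eq:near} gives $\frac1{Jd}\sum_{\text{err}}q_{j,k}\le\psi/\beta\le\varepsilon_Q/\beta$. Adding this to the outside-interval bound gives the stated fraction.

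The step I expect to need the most care is the cancellation of the outside-block term in~\eqref{eq:near}, via the pair $u,-u$. The rest is bookkeeping with the clipped displacement.
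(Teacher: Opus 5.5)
Your proposal is correct and follows the paper's proof essentially step for step. It uses the same sign thresholds $x\mapsto\sgn(v-c_{j(x)})$ for the transport bound, then Markov's inequality for the mass outside the intervals $I_j$. For the near-center bound, your comparison of $u$ with $-u$ is just a rephrasing of the paper's observation that the maximum over $u$ of the absolute bias is $|C_v|$ plus the within-block term. For the decoding, you use the same inequality $|a-\beta\tau|\ge\beta-\tau\clip(a,[-\beta,\beta])$, which you simply verify by cases instead of stating it in unified form.
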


\begin{proof}
For a value $v$, the auditor $x\mapsto\sgn(v-c_{j(x)})$ (with $\sgn(0)=1$) is
a threshold, since $c_j$ increases with $j$.  Its bias at $v$ is
$\E[Q_X(v)\sgn(v-c_{j(X)})(v-c_{j(X)}-\beta\tau_X)]\ge
\E[Q_X(v)|v-c_{j(X)}|]-\beta\E[Q_X(v)]$.  Summing over $v$ gives
$\varepsilon_Q\ge\E|V-c_{j(X)}|-\beta$, which is~\eqref{eq:transport}.  By
Markov's inequality,
\begin{equation}
  \label{eq:outlier}
  \Pr\{V\notin I_{j(X)}\}\le\frac{3(\varepsilon_Q+\beta)}{100\beta}.
\end{equation}

Fix $v\in I_j$.  The bias of the local auditor $h_{j,u}$ at $v$ is
$C_v+(Jd)^{-1}\sum_ku_kQ_{(j,k)}(v)(v-\fs(j,k))$, where $C_v$, the
contribution of the other blocks, does not depend on $u$.  Its maximum
absolute value over $u$ is $|C_v|+(Jd)^{-1}\sum_kQ_{(j,k)}(v)|v-\fs(j,k)|$,
which is at least the second term.  Summing over $v\in I_j$ and $j$, which
are distinct values, gives~\eqref{eq:near}.

For $a\in\R$ and $\tau\in\{-1,+1\}$, one checks that
$|a-\beta\tau|\ge\beta-\tau\clip(a,[-\beta,\beta])$.  Applying this with
$a=v-c_j$, the near loss at $(j,k)$ satisfies
$\sum_{v\in I_j}Q_{(j,k)}(v)|v-\fs(j,k)|\ge\beta Q_{(j,k)}(I_j)-\tau_{j,k}
\Delta_{j,k}$.  At a mistaken input $\tau_{j,k}\Delta_{j,k}\le0$, so its near
loss is at least $\beta Q_{(j,k)}(I_j)$.  Writing
$1=Q_{(j,k)}(I_j)+Q_{(j,k)}(\R\setminus I_j)$ and averaging over mistaken
inputs, the fraction of mistakes is at most
$\psi/\beta+\Pr\{V\notin I_{j(X)}\}$, which gives the claim
by~\eqref{eq:near} and~\eqref{eq:outlier}.
\end{proof}

\begin{proof}[Proof of \cref{thm:swap-lb}]
The first two claims are \cref{lem:lb-class}, and $\beta/J\le\beta/20$ since
$J\ge21$.  Suppose a learner uses $n\le Jd/(6400\beta^2)$ samples and
succeeds with probability at least $2/3$ for every $\theta$.  Put the uniform
prior on $\theta$.  On success, $\varepsilon_Q\le\beta/20$, and
\cref{lem:decode} gives a mistake fraction at most
$1/20+3(1+1/20)/100=0.0815$.  Estimate each coordinate $\theta_{j,i}$ by
choosing one of the inputs $(j,2i-1)$ and $(j,2i)$ uniformly at random and
reading off $\hat\tau$ there, negated at the even input.  The average over
coordinates of the error probability of this estimate equals the expected
mistake fraction, which is at most $(2/3)(0.0815)+1/3<0.388$.

On the other hand, two parameters that differ in one coordinate change the
mean outcome only at two inputs, by $2\beta$.  For Bernoulli means $a,b\in
[1/8,5/8]$ with $|a-b|=2\beta$,
$\KL(\Ber(a)\,\|\,\Ber(b))\le(a-b)^2/(b(1-b))\le64\beta^2$, so the
Kullback--Leibler divergence between the laws of one example is at most
$128\beta^2/(Jd)$, and between the laws of $n$ examples at most
$128n\beta^2/(Jd)\le0.02$.  By Pinsker's inequality, the total variation
distance is at most $0.1$, so any estimate of the differing coordinate errs
with probability at least $(1-0.1)/2=0.45$ on average over the two
parameters.  Pairing the parameters that differ in each coordinate and
averaging over the prior (Assouad's lemma, \citealp{assouad1983deux,
yu1997assouad}), the average error over coordinates is at least $0.45$, also
for randomized estimates.  This contradicts the bound $0.388$.
\end{proof}

\begin{proof}[Proof of \cref{prop:swap-lb-vc1}]
With $d=2$ and only thresholds, the class has VC dimension one.  The
predictor $Q^0$ still has $\MC(Q^0)=\beta/J$ and $\SMC(Q^0)=\beta$: a
threshold cutting block $j$ between its two inputs has bias $\beta/J$ there.
The proof of~\eqref{eq:transport} used only thresholds.  For~\eqref{eq:near},
fix $v\in I_j$, let $u_a$ and $u_b$ be the contributions of the two inputs of
block $j$ to the bias at $v$ with auditor value $+1$, and let $C$ be the
contribution of the other blocks for thresholds that cut in or next to block
$j$.  The thresholds cutting just before, between, and just after the two
inputs have biases $C-u_a-u_b$, $C+u_a-u_b$, and $C+u_a+u_b$.  Half the
differences of consecutive biases are $u_a$ and $u_b$, so the largest
absolute bias is at least $\max\{|u_a|,|u_b|\}\ge(|u_a|+|u_b|)/2$.  Hence
$\psi\le2\varepsilon_Q$, and the mistake fraction is at most
$2\varepsilon_Q/\beta+3(\varepsilon_Q+\beta)/(100\beta)\le0.1315$ when
$\varepsilon_Q\le\beta/20$.  The average coordinate error is then at most
$(2/3)(0.1315)+1/3<0.422<0.45$, and the rest of the proof of
\cref{thm:swap-lb} applies unchanged.
\end{proof}

\section{Proofs for Section~\ref*{sec:audit}}
\label{app:audit}

This appendix proves the hardness of approximating the ordinary error
(\cref{thm:np-hard}) and the sample lower bounds for error estimation
(\cref{thm:audit-lb}).

\begin{proof}[Proof of \cref{thm:np-hard}]
\emph{Gap Max-Cut.}  By \citet{hastad2001some}, using the gadgets of
\citet{trevisan2000gadgets}, there is an absolute constant $c_g\in(0,1)$ such
that, given a graph $G$ with edge set $E$ and an integer $k$, it is NP-hard to
distinguish $\textsc{MaxCut}(G)\ge k$ from $\textsc{MaxCut}(G)<(1-c_g)k$.
Every graph has a cut with at least $|E|/2$ edges, so we may assume
$k>|E|/2$.

\emph{Construction.}  Let $G$ have vertex set $[n_G]$, $n_G\ge2$, and nonempty
edge set $E$.  Use the grid with $K=2n_G$, so $m=2n_G+1$, and the vertex values
$u/n_G$ and all midpoints $(u+u')/(2n_G)$ are grid values.  For an edge
$e=\{u,u'\}$ with $u<u'$, let $k_e=u'-u$ and $y_e=(u+u')/(2n_G)$.  The input
space consists of three inputs $(e,\mathrm{lo})$, $(e,\mathrm{hi})$,
$(e,\mathrm{mid})$ for each edge, with probabilities $1/(2k_e|E|)$,
$1/(2k_e|E|)$, and $(1-1/k_e)/|E|$; at each of them the outcome is Bernoulli
with mean $y_e$.  The deterministic predictor is $p(e,\mathrm{lo})=u/n_G$,
$p(e,\mathrm{hi})=u'/n_G$, and $p(e,\mathrm{mid})=y_e$.  The class $\cH$
consists of all functions of the edge alone, $x\mapsto g(e(x))$ for
$g\in\{-1,+1\}^E$, where $e(x)$ is the edge of input $x$.  Its VC dimension is
$d=|E|$, and weighted ERM chooses each edge's sign to match the sign of the
total weight of its three inputs, in linear time.

\emph{The error.}  The inputs $(e,\mathrm{mid})$ have zero residual.  The
input $(e,\mathrm{lo})$ contributes
$g(e)\cdot\frac1{2k_e|E|}(u/n_G-y_e)=-g(e)/(4n_G|E|)$ to the bias at $u/n_G$,
and $(e,\mathrm{hi})$ contributes $+g(e)/(4n_G|E|)$ to the bias at $u'/n_G$.
Writing $\sigma_u$ for the sign assigned to the value $u/n_G$,
\cref{lem:signed} gives
\[
  \MC(p)=\max_{g,\sigma}\frac1{4n_G|E|}\sum_{\{u,u'\}\in E}g(e)(\sigma_{u'}-\sigma_u)
  =\max_\sigma\frac1{4n_G|E|}\sum_{\{u,u'\}\in E}|\sigma_{u'}-\sigma_u|
  =\frac{\textsc{MaxCut}(G)}{2n_G|E|};
\]
signs at the other grid values multiply zero biases.  Computing $\MC(p)$
exactly therefore computes $\textsc{MaxCut}(G)$, which is NP-hard
\citep{karp1972reducibility}.

\emph{Approximation.}  Let $a=k/(2n_G|E|)$.  Since $k>|E|/2$ and $m=2n_G+1$,
we have $a>1/(4n_G)>1/(2m)$.  In the first case of the gap problem
$\MC(p)\ge a$, and in the second $\MC(p)<(1-c_g)a$.  Let $c=c_g/8$ and
$c_0=c_g/4$.  An estimate with $|\hat e-\MC(p)|\le c/m<c_ga/4$ satisfies
$\hat e>(1-c_g/4)a$ in the first case and $\hat e<(1-3c_g/4)a$ in the second.
An estimate with $(1-c_0)\MC(p)\le\hat e\le(1+c_0)\MC(p)$ satisfies
$\hat e\ge(1-c_g/4)a$ in the first case and
$\hat e<(1+c_g/4)(1-c_g)a<(1-3c_g/4)a$ in the second.  In both cases,
comparing $\hat e$ with $(1-c_g/2)a$ decides the gap problem.

\emph{Simulation.}  An algorithm as in the theorem can be run on this
instance in time polynomial in the size of $G$.  Indeed, $m$ and $d$ are
polynomial in it, and the oracle is implemented in linear time.  A sample is
generated by choosing a uniform edge, a uniform integer in $[2k_e]$ to select
the input ($1$ and $2$ select $\mathrm{lo}$ and $\mathrm{hi}$, and the other
values select $\mathrm{mid}$), and a uniform integer $\ell\in[2n_G]$ to set
$Y=\mathbf 1\{\ell\le u+u'\}$.  Uniform integers can be generated exactly in
expected polynomial time, so with probability at least $0.99$ the simulation
finishes within a polynomial time bound, and the resulting decision is
correct with probability at least $2/3-0.01>0.65$.  This places the gap
problem in BPP, so $\mathrm{NP}\subseteq\mathrm{BPP}$.
\end{proof}

\begin{proof}[Proof of \cref{thm:audit-lb}]
Write $|\cX|=Jd$, and let $P_\emptyset$ and $P_\theta$ be the laws of one example
under $\cD_\emptyset$ and $\cD_\theta$.  At an input $x$ in block $j$, the
likelihood ratio is $1+r_\theta(x,y)$ with
$r_\theta(x,1)=\beta\tau_x/c_j$ and $r_\theta(x,0)=-\beta\tau_x/(1-c_j)$.
Hence $\E_{P_\emptyset}r_\theta=0$, and for parameters $\theta,\theta'$
with signs $\tau,\tau'$,
\[
  \E_{P_\emptyset}[r_\theta r_{\theta'}]
  =\frac1{|\cX|}\sum_x\frac{\beta^2\tau_x\tau'_x}{c_{j(x)}(1-c_{j(x)})}
  =\frac2{|\cX|}\sum_{(j,i)}a_j\,\theta_{j,i}\theta'_{j,i},
  \qquad a_j=\frac{\beta^2}{c_j(1-c_j)}\le\frac{16\beta^2}{3},
\]
where $j(x)$ is the block of $x$, the last sum ranges over the $|\cX|/2$ pairs
$(j,i)$, and we used
$\tau_{j,2i-1}\tau'_{j,2i-1}=\tau_{j,2i}\tau'_{j,2i}=\theta_{j,i}\theta'_{j,i}$
and $c_j\in(1/4,1/2]$.  Let $\bar P^n=\E_\theta P_\theta^n$ for uniform
$\theta$.  For independent uniform $\theta,\theta'$, the products
$\upsilon_{j,i}=\theta_{j,i}\theta'_{j,i}$ are independent uniform signs, so
\begin{align*}
  1+\chi^2\bigl(\bar P^n\,\big\|\,P_\emptyset^n\bigr)
  &=\E_{\theta,\theta'}\bigl(1+\E_{P_\emptyset}[r_\theta r_{\theta'}]\bigr)^n
  \le\E\exp\Bigl(\frac{2n}{|\cX|}\sum_{(j,i)}a_j\upsilon_{j,i}\Bigr)\\
  &=\prod_{(j,i)}\cosh\Bigl(\frac{2na_j}{|\cX|}\Bigr)
  \le\exp\Bigl(\frac{256\,n^2\beta^4}{9|\cX|}\Bigr),
\end{align*}
using $1+z\le e^z$, $\cosh z\le e^{z^2/2}$, and the $|\cX|/2$ pairs.  If
$n\le0.1\sqrt{|\cX|}/\beta^2$, then $\chi^2\le e^{0.29}-1<0.34$, and the total
variation distance between $P_\emptyset^n$ and $\bar P^n$ is at most
$\sqrt{\chi^2}/2<0.3$.  For any algorithm, including randomized ones, the
probability of outputting $1$ under $P_\emptyset^n$ plus the average over
$\theta$ of the probability of outputting $0$ under $P_\theta^n$ is therefore
at least $0.7$.  The assumed guarantees make this sum at most $2/3$, so
$n>0.1\sqrt{Jd}/\beta^2$.

For~(a), let $J=m$ and $\beta=3\varepsilon$; then $\beta\le1/400$ and
$J\le1/(400\beta)$.  By \cref{lem:lb-class}, $Q^0$ has $m$ values, swap error
$0$ under $\cD_\emptyset$, and swap error $3\varepsilon$ under every
$\cD_\theta$.  An $\varepsilon$-estimator applied to $Q^0$, followed by the test
$\hat e>3\varepsilon/2$, satisfies the hypothesis, so it uses more than
$0.1\sqrt{md}/(9\varepsilon^2)$ samples.  For~(b), let $J=1$ and
$\beta=3\varepsilon$, and use the ordinary errors $0$ and
$\beta/J=3\varepsilon$ of the constant predictor $Q^0$.
\end{proof}

\section{Proofs for Section~\ref*{sec:det}}
\label{app:det}

This appendix proves the rounding lemma for VC classes of tests
(\cref{lem:cell-tests}), cell rounding for VC classes
(\cref{thm:vc-rounding}) and its corollary for inputs without atoms
(\cref{cor:atomless}), the variance budget (\cref{lem:budget}), and the
batch guarantee with interval hints (\cref{prop:batch-hints}).

\begin{proof}[Proof of \cref{lem:cell-tests}]
For a cell $R$ and a test $g$, let
\[
  A_R(g)=\int_Rg\bigl(x,G_x(\xi_R)\bigr)\,\varphi\bigl(x,G_x(\xi_R)\bigr)\,d\cD_X(x).
\]
The deterministic expectation is $\sum_RA_R(g)$.  Since $G_x(\xi)\sim Q_x$,
the randomized expectation is $\sum_R\E A_R(g)$, where the expectation is
over $\xi_R$.  The contributions of different cells are independent, and
$|A_R(g)|\le \varphi_R\cD_X(R)$.  If $V_\varphi=0$, every contribution vanishes, so
assume $V_\varphi>0$.

\emph{Expected supremum.}  Let $\xi'_R$ be independent copies of the seeds,
let $A'_R(g)$ be the corresponding contributions, and let $s_R$ be
independent uniform signs.  By symmetrization,
\[
  \E\sup_{g\in\cG}\Bigl|\sum_R\bigl(A_R(g)-\E A_R(g)\bigr)\Bigr|
  \le\E\sup_{g\in\cG}|Z_g|,
  \qquad
  Z_g=\sum_Rs_R\bigl(A_R(g)-A'_R(g)\bigr).
\]
Conditional on both families of seeds, $(Z_g)_{g\in\cG}$ is a Rademacher
process in the signs $s_R$, so its
increments are sub-Gaussian with respect to the metric $\varrho$ given by
$\varrho(g,g')^2=\sum_R\bigl(A_R(g)-A'_R(g)-A_R(g')+A'_R(g')\bigr)^2$.  By
the Cauchy--Schwarz inequality,
$(A_R(g)-A_R(g'))^2\le \varphi_R^2\cD_X(R)\int_R(g-g')^2(x,G_x(\xi_R))\,d\cD_X(x)$,
and similarly for $A'_R$, so
\begin{align*}
  \varrho(g,g')^2&\le2\sum_R\varphi_R^2\cD_X(R)\int_R\Bigl[(g-g')^2\bigl(x,G_x(\xi_R)\bigr)
  +(g-g')^2\bigl(x,G_x(\xi'_R)\bigr)\Bigr]\,d\cD_X(x)\\
  &=4V_\varphi\,\|g-g'\|_{L_2(\nu_{\mathrm{cell}})}^2,
\end{align*}
where $\nu_{\mathrm{cell}}$ is the probability measure on $\cX\times\Lambda$ that, for each
cell $R$ and $x\in R$, places density $\varphi_R^2\cD_X(R)/(2V_\varphi)$ with respect to
$\cD_X$ on each of the points $(x,G_x(\xi_R))$ and $(x,G_x(\xi'_R))$.  Since
$\cG$ has VC dimension $d_{\cG}$ and takes values $\pm1$, its covering
numbers satisfy $\log N(u,\cG,L_2(\nu_{\mathrm{cell}}))\le Cd_{\cG}\log(C/u)$ for every
probability measure on $\cX\times\Lambda$ and $0<u\le2$ \citep{haussler1995sphere}.
Dudley's entropy integral then bounds $\E\sup_g|Z_g-Z_{g_0}|$ by
$C\sqrt{V_\varphi}\int_0^2\sqrt{d_{\cG}\log(C/u)}\,du\le C\sqrt{V_\varphi d_{\cG}}$ for a
fixed $g_0\in\cG$, and $\E|Z_{g_0}|\le(\E Z_{g_0}^2)^{1/2}\le2\sqrt{V_\varphi}$.
Hence the expected supremum is at most $C\sqrt{V_\varphi d_{\cG}}$.

\emph{Concentration.}  Changing one seed $\xi_R$ changes the supremum by at
most $2\varphi_R\cD_X(R)$.  By the bounded-differences inequality, the supremum
exceeds its expectation by more than
$\sqrt{2V_\varphi\log(1/\delta)}\le\sqrt{2V_\varphi\log(2/\delta)}$ with
probability at most $\delta$, which proves the lemma.
\end{proof}

\begin{proof}[Proof of \cref{thm:vc-rounding}]
Both expectations below are linear in $Y$ given $X$, and neither the tests
nor $p$ depends on $Y$, so we may replace $Y$ by $\fs(X)$.  Let
$a(x,v)=\clip(v,I_x)-\fs(x)$ for $x\in O$ and $a(x,v)=v-\fs(x)$ otherwise.
Since $Q_x$, and hence $p(x)$, is supported within $1/K$ of $I_x$, using $a$
in place of the residual changes each of the two expectations by at most
$\cD_X(O)/K\le1/K$.  We apply \cref{lem:cell-tests} with $\varphi=a$.  Since $\clip(v,I_x)$ and
$\fs(x)$ both lie in $I_x$, we have $|a(x,v)|\le2r_x$ for $x\in O$, so we may
take $\varphi_{\{x\}}=2r_x$ for $x\in O$ and $\varphi_R=1$ for
$R\in\cP_{\mathrm{rest}}$, which gives $V_\varphi\le4V_{\cP}$.  When $O=\emptyset$, no
residual is modified.

For ordinary error, let $\cG_o=\{(x,v)\mapsto h(x)\sigma_v:h\in\cH,\
\sigma\in\{-1,+1\}^{\Lambda}\}$.  On any $n$ points of $\cX\times\Lambda$, a
test is determined by the labeling of their inputs by $\cH$ and by the signs
at the at most $m$ grid values that occur, so $\cG_o$ has at most
$2^m(en)^d$ patterns by the Sauer--Shelah lemma \citep{sauer1972density}.  If
$\cG_o$ shatters the $n$ points, then $2^n\le2^m(en)^d$, which gives
$n\le C(d+m)$.  For $g=h\sigma$, the deterministic expectation with the
true residual is $\sum_v\sigma_vB_h(p,v)$, and the randomized expectation is
$\sum_v\sigma_vB_h(Q,v)\le\MC(Q)$.  On the event of \cref{lem:cell-tests},
applied to $\cG_o$ and $\varphi$, every such test therefore satisfies
$\sum_v\sigma_vB_h(p,v)\le\MC(Q)+C\sqrt{V_{\cP}(d+m+\log(2/\delta))}+2/K$,
and by \cref{lem:signed} the supremum of the left side over $h\in\cH$ and
$\sigma$ is $\MC(p)$.

For swap error, let $\cG_s=\{(x,v)\mapsto h_v(x):(h_v)_{v\in\Lambda}\in
(\Hpm)^{\Lambda}\}$.  If $\cG_s$ shatters a set of points, then for each
$v$ the inputs of the points with grid value $v$ are shattered by $\Hpm$, so
the set has at most $m\VC(\Hpm)\le Cmd$ points.  For a tuple $(h_v)$, the
deterministic expectation with the true residual is $\sum_vB_{h_v}(p,v)$,
and the randomized expectation is $\sum_vB_{h_v}(Q,v)\le\SMC(Q)$.  Since
$\Hpm$ is closed under negation, the supremum of $\sum_vB_{h_v}(p,v)$ over
tuples is $\SMC(p)$, and \cref{lem:cell-tests} applied to $\cG_s$ gives the
claim.
\end{proof}

\begin{proof}[Proof of \cref{cor:atomless}]
Take $O=\emptyset$.  Draw $\npt$ independent unlabeled inputs, and let the
cells be the open order intervals between consecutive sample points, the two
unbounded intervals at the ends, and the singletons of the sample points,
which have mass zero.  The open order intervals of $\cX$ form a class of VC
dimension at most two, so by the net bound of \cref{sec:prelims-tools}
\citep{haussler1987epsilon}, with probability at least $1-\delta/2$, every
interval of mass greater than $\tau_p=\min\{1,C\log(e\npt/\delta)/\npt\}$ contains
a sample point.  On this event every cell has mass at most $\tau_p$, so
$V_{\cP}=\sum_R\cD_X(R)^2\le\tau_p\sum_R\cD_X(R)\le\tau_p$.  With $\npt$ as in the
statement and suitable constants,
$C\sqrt{\tau_p(d+m+\log(4/\delta))}\le\varepsilon$, and \cref{thm:vc-rounding},
with failure probability $\delta/2$ and without the term $2/K$, gives the
claim.  For swap error, replace $d+m$ by $md$.
\end{proof}

\begin{proof}[Proof of \cref{lem:budget}]
For a sufficiently large absolute constant $C$, let
\[
  \tau_c=\min\Bigl\{1,\frac{C\log(e\ncf/\delta_0)}{\ncf}\Bigr\},\qquad
  \tau_p=\min\Bigl\{1,\frac{C\log(e\npt/\delta_0)}{\npt}\Bigr\}.
\]
If $\tau_c=1$ or $\tau_p=1$, the claim is trivial: since $r_x\le1$,
$V_{\cP}\le\sum_{x\in O}\cD_X(\{x\})^2+\sum_{R\in\cP_{\mathrm{rest}}}\cD_X(R)^2\le1$.  So assume
$\tau_c,\tau_p<1$, and write $\omega_x=\cD_X(\{x\})$.

\emph{Heavy atoms.}  At most $1/\tau_c\le\ncf$ atoms have mass greater than
$\tau_c$.  For such an atom, $n_x\sim\mathrm{Bin}(\ncf,\omega_x)$, and the
Chernoff bound gives $\Pr\{n_x<\ncf\omega_x/2\}\le e^{-\ncf\omega_x/8}\le
e^{-\ncf\tau_c/8}\le\delta_0/\ncf$ for $C$ large.  By a union bound, with
probability at least $1-\delta_0$, every atom of mass greater than $\tau_c$ has
$n_x\ge\ncf\omega_x/2\ge2$.  On this event, each such atom is in $O$ and
has $r_x^2\le\iota/n_x\le2\iota/(\ncf\omega_x)$, so its term in $V_{\cP}$ is
$\omega_x^2r_x^2\le2\iota\omega_x/\ncf$; these terms sum to at most
$2\iota/\ncf$.

\emph{Light atoms and gaps.}  The atoms in $O$ of mass at most $\tau_c$
contribute at most $\sum_x\omega_x^2\le\tau_c$, since $r_x\le1$.  By the net bound of \cref{sec:prelims-tools} for the open order intervals, which form a class of
VC dimension at most two, with probability at least $1-\delta_0$ every open
order interval that contains no point of the partition sample has mass at
most $\tau_p$.  On both events, every cell of $\cP_{\mathrm{rest}}$ has mass at
most $\max\{\tau_c,\tau_p\}$: a singleton $\{z_i\}\setminus O$ either has mass zero or
is an atom not in $O$, which on the heavy-atom event has mass at most
$\tau_c$, and the other cells lie in intervals
that contain no sample point.  Hence
$\sum_{R\in\cP_{\mathrm{rest}}}\cD_X(R)^2\le\max\{\tau_c,\tau_p\}\le\tau_c+\tau_p$.

Adding the three contributions, and using
$\iota\le C\log(e\ncf/\delta_0)$, gives the claim with probability at least
$1-2\delta_0$.
\end{proof}

\begin{proof}[Proof of \cref{prop:batch-hints}]
We follow the proof of \cref{thm:batch}; only the current-round
cancellation changes.  Write $Q_t=Q_t(X_t)$, $\mu_t=\mu_t(X_t)$, and
$c_t=\sum_vQ_t(v)\widehat A_{t,v}(X_t)\in[-1,1]$.  On the estimation
event~\eqref{eq:approx}, the computation in the proof of \cref{thm:online}
gives $\sum_\sigma w_t(\sigma)s_{t,\sigma}\le(\mu_t-Y_t)c_t+\alpha$.
Conditional on the past, the auxiliary draws, and $X_t$, the outcome $Y_t$
has mean $\fs(X_t)$, and the intervals are fixed, so
\[
  \E\bigl[(\mu_t-Y_t)c_t\,\big|\,\text{past},\text{auxiliary draws},X_t\bigr]
  =(\mu_t-\fs(X_t))\,c_t\le0,
\]
by \cref{lem:root-sub} with $\Lambda'=\Lambda_{X_t}$ when $X_t\in O$, since
valid intervals give $\fs(X_t)\in I_{X_t}\subseteq[\vlo_{X_t},\vhi_{X_t}]$,
and by \cref{lem:root} otherwise.  By Azuma's inequality,
$\sum_t(\mu_t-Y_t)c_t\le C\sqrt{T\log(8/\delta)}$ with probability at least
$1-\delta/8$.  This term is absorbed in the term
$C\sqrt{T\log(16N/\delta)}$ of the exponential weights step, and adding
$\delta/8$ for Azuma's inequality to the $3\delta/8$ of the proof of
\cref{thm:batch} gives exactly $\delta/2$.  Hence~\eqref{eq:restricted}
holds with probability at least $1-\delta/2$, and the rest of the proof of
\cref{thm:batch} applies unchanged.
\end{proof}

\section{Proper-loss guarantees and multicalibration}
\label{app:proper}

This appendix records a standalone observation, which no other result in
the paper uses.  Deterministic omnipredictors for proper losses
\citep{gibbs2026sample} do not directly give multicalibration.  The
following bridge requires competing with predictors that depend on the
learned predictor's values, and loses a square root.

\begin{proposition}[Brier competition against value-dependent patches]
\label{prop:brier}
Let $p$ be a deterministic predictor with values in $\Lambda$, and suppose
that its squared error $\E[(p(X)-Y)^2]$ is at most $e_{\mathrm{sq}}$ larger than that of
every predictor of the form
$p'(x)=\clip\bigl(p(x)+\lambda_{p(x)}h_{p(x)}(x),[0,1]\bigr)$ with
$h_v\in\cH$ and $\lambda_v\in[-1,1]$.  Then $\SMC(p)\le\sqrt{e_{\mathrm{sq}}}$.  If the
condition holds only for $h_v$ equal to a common $h\in\cH$, then
$\MC(p)\le\sqrt{e_{\mathrm{sq}}}$.
\end{proposition}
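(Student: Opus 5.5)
The plan is a one-step improvement argument for each value separately: a single patch at one value already gains a lot in squared error if the bias there is large. Fix auditors $(h_v)_{v\in\Lambda}$ and coefficients $(\lambda_v)$, and let $p'$ be the patched predictor of the statement. Since $p(x)$ and $Y$ lie in $[0,1]$, clipping to $[0,1]$ never increases $(\cdot-Y)^2$, so the unclipped patch $\tilde p(x)=p(x)+\lambda_{p(x)}h_{p(x)}(x)$ satisfies $\E[(p'(X)-Y)^2]\le\E[(\tilde p(X)-Y)^2]$. Expanding the square and using $h^2=1$ gives
\[
  \E[(\tilde p(X)-Y)^2]-\E[(p(X)-Y)^2]
  =\sum_{v}\Bigl(2\lambda_vB_{h_v}(p,v)+\lambda_v^2\Pr\{p(X)=v\}\Bigr),
\]
because $B_{h}(p,v)=\E[h(X)\mathbf 1\{p(X)=v\}(v-Y)]$. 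The hypothesis says the left side, and hence this sum, is at least $-e_{\mathrm{sq}}$ for every choice of $h_v\in\cH$ and $\lambda_v\in[-1,1]$.

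Next, optimize coordinatewise. Write $b_v=B_{h_v}(p,v)$ and $q_v=\Pr\{p(X)=v\}$. Then $|b_v|\le q_v$, since $|h|=1$ and $|v-Y|\le1$. Take $\lambda_v=-b_v/q_v\in[-1,1]$ when $q_v>0$, and $\lambda_v=0$ otherwise, in which case $b_v=0$. This choice gives $\sum_v b_v^2/q_v\le e_{\mathrm{sq}}$. By Cauchy--Schwarz,
\[
  \sum_v|b_v|\le\Bigl(\sum_vb_v^2/q_v\Bigr)^{1/2}\Bigl(\sum_vq_v\Bigr)^{1/2}\le\sqrt{e_{\mathrm{sq}}}.
\]
Taking the supremum over tuples $(h_v)$ separately at each value bounds $\SMC(p)$ by $\sqrt{e_{\mathrm{sq}}}$. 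To attain the supremum of $|B_h(p,v)|$ we need not flip the auditor: the sign of $\lambda_v$ absorbs the sign of the bias, so $\cH$ itself suffices and $\Hpm$ is not needed. If the supremum is not attained, we take near-maximizers and pass to the limit.

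For the ordinary statement, restrict to $h_v=h$ for all $v$. The same computation with a common $h$ gives $\sum_v|B_h(p,v)|\le\sqrt{e_{\mathrm{sq}}}$ for every $h$, and taking the supremum over $h$ gives $\MC(p)\le\sqrt{e_{\mathrm{sq}}}$.

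The only delicate step is the clipping reduction, which rests on $Y\in[0,1]$. The key design choice is dividing by $q_v$, so that Cauchy--Schwarz loses only the square root and no factor of $m$.
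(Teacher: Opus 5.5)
Your proposal is correct and is essentially the paper's proof. You choose $\lambda_v=-B_{h_v}(p,v)/\Pr\{p(X)=v\}\in[-1,1]$, so the unclipped patch lowers the squared error by $\sum_v B_{h_v}(p,v)^2/\Pr\{p(X)=v\}$, and clipping only helps because $Y\in[0,1]$. Cauchy--Schwarz against $\sum_v\Pr\{p(X)=v\}=1$ then gives $\sum_v|B_{h_v}(p,v)|\le\sqrt{e_{\mathrm{sq}}}$, after which you take suprema value by value, or over a common $h$ for the ordinary error.
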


\begin{proof}
Let $b_v=\Pr\{p(X)=v\}$ and $A_v=\E[h_v(X)\mathbf 1\{p(X)=v\}(p(X)-Y)]$.  For
$b_v>0$ let $\lambda_v=-A_v/b_v\in[-1,1]$.  The unclipped update decreases
the squared error by $\sum_{v:b_v>0}A_v^2/b_v$, and clipping to $[0,1]$ can
only decrease it further since $Y\in[0,1]$.  By the Cauchy--Schwarz
inequality and $\sum_vb_v=1$, $\sum_vA_v^2/b_v\ge(\sum_v|A_v|)^2$.  Hence
$\sum_v|A_v|\le\sqrt{e_{\mathrm{sq}}}$; choosing auditors arbitrarily close to the
optimum at each value proves the first claim, and the same choice with a
single auditor proves the second.
\end{proof}

\section{Parameter choices}
\label{app:params}

We list the parameters of \cref{alg:main}.  Let $C$ be a sufficiently large
absolute constant.  The choice of $T$ below is explicit once the absolute
constants of \cref{thm:batch} and the constant $C_0$ of \cref{thm:hybrid}
are fixed.

\emph{Randomized mode.}  Let $K=\lceil C/\varepsilon\rceil$, $m=K+1$,
$\alpha=\varepsilon/(C\sqrt m)$, and choose $T$ as the smallest integer for
which the bound of \cref{thm:batch} with $\cS=\Shad$, $N=2\nu<4m$, and
$\kappa=\sqrt m$ is at most $\varepsilon$; by \cref{thm:hybrid} this is
$T=\wtO(m(d+\log(1/\delta))/\varepsilon^2)=\wtO((d+\log(1/\delta))\varepsilon^{-3})$.
Let $L=\lceil2\alpha^{-2}\log(16mT/\delta)\rceil=\wtO(\varepsilon^{-3})$ and
$\eta=\min\{1,\sqrt{\log N/T}\}$.  The number of ERM calls is $T(N+L)$.

\emph{Deterministic mode.}  Use the randomized-mode parameters for
accuracy $\varepsilon/2$ and confidence $\delta/4$; in particular
$K=\lceil2C/\varepsilon\rceil\ge16/\varepsilon$.  Let
$D_{\mathrm{par}}=d+m+\log(8/\delta)$, the complexity term of
\cref{thm:vc-rounding} at failure probability $\delta/4$.  Choose $\ncf$ and $\npt$ so that
the bound of \cref{lem:budget}, with $\delta_0=\delta/8$, is at
most $\varepsilon^2/(C\,D_{\mathrm{par}})$, which for $C$ large enough makes the
rounding term of \cref{thm:vc-rounding} at most $\varepsilon/4$; for
instance,
$\ncf=\npt=\lceil C'D_{\mathrm{par}}\varepsilon^{-2}
\log(eD_{\mathrm{par}}/(\varepsilon\delta))\rceil$ for a suitable absolute
constant $C'$.  With $m=\Theta(1/\varepsilon)$ these are
$\wtO(\varepsilon^{-3}+(d+\log(1/\delta))\varepsilon^{-2})$.

\end{document}